\newif\ifarxiv
\arxivtrue

\newif\ifopt
\optfalse 

\ifarxiv

\newif\ificlrfinal
\iclrfinaltrue

\documentclass{article}

\usepackage[margin=2.25cm]{geometry}
\usepackage{palatino}
\usepackage[longnamesfirst]{natbib}
\else
\ifopt

\documentclass[anon]{opt2026} 

\else

\documentclass{article} 
\usepackage{iclr2027_conference,times}

\fi
\fi

\usepackage{microtype}

\usepackage{array}
\usepackage{booktabs}
\usepackage{multirow}

\ifopt\else
\ifarxiv
\usepackage[pagebackref,colorlinks,linkcolor=purple,citecolor=teal,urlcolor=violet]{hyperref}
\else
\usepackage[pagebackref]{hyperref}
\fi
\usepackage{url}
\fi

\usepackage{doi}
\usepackage[shortlabels]{enumitem}

\usepackage{mismath}
\usepackage{amsfonts}
\ifopt\else
\usepackage{amssymb}
\usepackage{amsthm}
\fi
\usepackage{thm-restate}
\usepackage{yhmath}

\usepackage{cleveref}

\usepackage{pgfplots}
\pgfplotsset{compat=1.18}
\ifopt\else
\makeatletter
\@ifundefined{ICLRtikzexternal}{}{\iclrfinaltrue}
\makeatother
\fi
\usepgfplotslibrary{external}
\tikzset{
  external/system call={
    lualatex \tikzexternalcheckshellescape
    -halt-on-error
    -interaction=batchmode
    -jobname "\image"
    \ifopt
    "\texsource"
    \else
    "\string\def\string\ICLRtikzexternal{1}\texsource"
    \fi
  }
}
\tikzexternaldisable
\usepgfplotslibrary{fillbetween}

\pgfplotsset{
  colormap={warpedviridis}{
    rgb255=(68,1,84)
    rgb255=(72,32,113)
    rgb255=(71,45,123)
    rgb255=(69,53,129)
    rgb255=(67,61,132)
    rgb255=(65,66,135)
    rgb255=(63,72,137)
    rgb255=(61,78,138)
    rgb255=(59,82,139)
    rgb255=(57,86,140)
    rgb255=(55,90,140)
    rgb255=(53,94,141)
    rgb255=(52,97,141)
    rgb255=(50,100,142)
    rgb255=(49,103,142)
    rgb255=(48,106,142)
    rgb255=(46,109,142)
    rgb255=(45,112,142)
    rgb255=(44,114,142)
    rgb255=(43,117,142)
    rgb255=(42,119,142)
    rgb255=(41,122,142)
    rgb255=(40,124,142)
    rgb255=(39,126,142)
    rgb255=(38,129,142)
    rgb255=(38,130,142)
    rgb255=(37,132,142)
    rgb255=(36,135,142)
    rgb255=(35,137,142)
    rgb255=(34,139,141)
    rgb255=(34,141,141)
    rgb255=(33,143,141)
    rgb255=(33,145,140)
    rgb255=(32,146,140)
    rgb255=(31,148,140)
    rgb255=(31,150,139)
    rgb255=(31,152,139)
    rgb255=(31,154,138)
    rgb255=(30,156,137)
    rgb255=(30,157,137)
    rgb255=(31,159,136)
    rgb255=(31,161,136)
    rgb255=(31,162,135)
    rgb255=(32,163,134)
    rgb255=(33,165,133)
    rgb255=(34,167,133)
    rgb255=(35,169,131)
    rgb255=(36,170,131)
    rgb255=(37,172,130)
    rgb255=(39,173,129)
    rgb255=(40,174,128)
    rgb255=(42,176,127)
    rgb255=(44,177,126)
    rgb255=(46,179,124)
    rgb255=(47,180,124)
    rgb255=(50,182,122)
    rgb255=(52,182,121)
    rgb255=(55,184,120)
    rgb255=(58,186,118)
    rgb255=(59,187,117)
    rgb255=(61,188,116)
    rgb255=(64,189,114)
    rgb255=(66,190,113)
    rgb255=(70,192,111)
    rgb255=(72,193,110)
    rgb255=(76,194,108)
    rgb255=(78,195,107)
    rgb255=(80,196,106)
    rgb255=(84,197,104)
    rgb255=(86,198,103)
    rgb255=(90,200,100)
    rgb255=(92,200,99)
    rgb255=(94,201,98)
    rgb255=(99,203,95)
    rgb255=(101,203,94)
    rgb255=(103,204,92)
    rgb255=(108,205,90)
    rgb255=(110,206,88)
    rgb255=(112,207,87)
    rgb255=(115,208,86)
    rgb255=(119,209,83)
    rgb255=(122,209,81)
    rgb255=(124,210,80)
    rgb255=(127,211,78)
    rgb255=(132,212,75)
    rgb255=(134,213,73)
    rgb255=(137,213,72)
    rgb255=(139,214,70)
    rgb255=(144,215,67)
    rgb255=(147,215,65)
    rgb255=(149,216,64)
    rgb255=(152,216,62)
    rgb255=(155,217,60)
    rgb255=(160,218,57)
    rgb255=(162,218,55)
    rgb255=(165,219,54)
    rgb255=(168,219,52)
    rgb255=(170,220,50)
    rgb255=(173,220,48)
    rgb255=(178,221,45)
    rgb255=(181,222,43)
    rgb255=(184,222,41)
    rgb255=(186,222,40)
    rgb255=(189,223,38)
    rgb255=(192,223,37)
    rgb255=(194,223,35)
    rgb255=(197,224,33)
    rgb255=(200,224,32)
    rgb255=(205,225,29)
    rgb255=(208,225,28)
    rgb255=(210,226,27)
    rgb255=(213,226,26)
    rgb255=(216,226,25)
    rgb255=(218,227,25)
    rgb255=(221,227,24)
    rgb255=(223,227,24)
    rgb255=(226,228,24)
    rgb255=(229,228,25)
    rgb255=(231,228,25)
    rgb255=(234,229,26)
    rgb255=(236,229,27)
    rgb255=(239,229,28)
    rgb255=(241,229,29)
    rgb255=(244,230,30)
    rgb255=(246,230,32)
    rgb255=(248,230,33)
    rgb255=(251,231,35)
    rgb255=(253,231,37)
  }
}

\pgfplotsset{
  colormap={warped3viridis}{
    rgb255=(68,1,84)
    rgb255=(65,66,135)
    rgb255=(59,82,139)
    rgb255=(54,92,141)
    rgb255=(51,99,141)
    rgb255=(48,106,142)
    rgb255=(46,111,142)
    rgb255=(44,115,142)
    rgb255=(42,119,142)
    rgb255=(41,123,142)
    rgb255=(39,127,142)
    rgb255=(38,130,142)
    rgb255=(37,133,142)
    rgb255=(35,136,142)
    rgb255=(34,139,141)
    rgb255=(33,142,141)
    rgb255=(33,145,140)
    rgb255=(32,146,140)
    rgb255=(31,149,139)
    rgb255=(31,151,139)
    rgb255=(31,154,138)
    rgb255=(30,156,137)
    rgb255=(31,158,137)
    rgb255=(31,160,136)
    rgb255=(31,161,135)
    rgb255=(32,163,134)
    rgb255=(33,165,133)
    rgb255=(34,167,133)
    rgb255=(35,169,131)
    rgb255=(37,171,130)
    rgb255=(38,173,129)
    rgb255=(39,173,129)
    rgb255=(41,175,127)
    rgb255=(44,177,126)
    rgb255=(45,178,125)
    rgb255=(47,180,124)
    rgb255=(50,182,122)
    rgb255=(52,182,121)
    rgb255=(55,184,120)
    rgb255=(56,185,119)
    rgb255=(59,187,117)
    rgb255=(61,188,116)
    rgb255=(64,189,114)
    rgb255=(66,190,113)
    rgb255=(68,191,112)
    rgb255=(72,193,110)
    rgb255=(74,193,109)
    rgb255=(76,194,108)
    rgb255=(80,196,106)
    rgb255=(82,197,105)
    rgb255=(84,197,104)
    rgb255=(86,198,103)
    rgb255=(90,200,100)
    rgb255=(92,200,99)
    rgb255=(94,201,98)
    rgb255=(96,202,96)
    rgb255=(99,203,95)
    rgb255=(103,204,92)
    rgb255=(105,205,91)
    rgb255=(108,205,90)
    rgb255=(110,206,88)
    rgb255=(112,207,87)
    rgb255=(115,208,86)
    rgb255=(117,208,84)
    rgb255=(119,209,83)
    rgb255=(122,209,81)
    rgb255=(124,210,80)
    rgb255=(127,211,78)
    rgb255=(129,211,77)
    rgb255=(132,212,75)
    rgb255=(134,213,73)
    rgb255=(137,213,72)
    rgb255=(139,214,70)
    rgb255=(142,214,69)
    rgb255=(144,215,67)
    rgb255=(147,215,65)
    rgb255=(149,216,64)
    rgb255=(152,216,62)
    rgb255=(155,217,60)
    rgb255=(157,217,59)
    rgb255=(160,218,57)
    rgb255=(162,218,55)
    rgb255=(165,219,54)
    rgb255=(168,219,52)
    rgb255=(170,220,50)
    rgb255=(170,220,50)
    rgb255=(173,220,48)
    rgb255=(176,221,47)
    rgb255=(178,221,45)
    rgb255=(181,222,43)
    rgb255=(184,222,41)
    rgb255=(186,222,40)
    rgb255=(186,222,40)
    rgb255=(189,223,38)
    rgb255=(192,223,37)
    rgb255=(194,223,35)
    rgb255=(197,224,33)
    rgb255=(200,224,32)
    rgb255=(200,224,32)
    rgb255=(202,225,31)
    rgb255=(205,225,29)
    rgb255=(208,225,28)
    rgb255=(208,225,28)
    rgb255=(210,226,27)
    rgb255=(213,226,26)
    rgb255=(216,226,25)
    rgb255=(218,227,25)
    rgb255=(218,227,25)
    rgb255=(221,227,24)
    rgb255=(223,227,24)
    rgb255=(226,228,24)
    rgb255=(226,228,24)
    rgb255=(229,228,25)
    rgb255=(231,228,25)
    rgb255=(231,228,25)
    rgb255=(234,229,26)
    rgb255=(236,229,27)
    rgb255=(239,229,28)
    rgb255=(239,229,28)
    rgb255=(241,229,29)
    rgb255=(244,230,30)
    rgb255=(244,230,30)
    rgb255=(246,230,32)
    rgb255=(248,230,33)
    rgb255=(248,230,33)
    rgb255=(251,231,35)
    rgb255=(253,231,37)
    rgb255=(253,231,37)
  }
}

\tikzset{point/.style={circle,fill=black,inner sep=0mm,minimum size=1.75mm}}
\tikzset{plus/.pic={\draw[green!67!black,line width=1mm]
                    (-1mm, 0mm) -- (1mm, 0mm) (0mm, -1mm) -- (0mm, 1mm);}}
\usetikzlibrary{arrows.meta}
\tikzset{>={Stealth[length=3mm,width=2mm]}}

\expandafter\def\csname ver@etex.sty\endcsname{3000/12/31}

\usepackage{autonum}

\usepackage[tight,k-tight]{minitoc}
\mtcsetdepth{parttoc}{3}
\PassOptionsToPackage{toc,page,header}{appendix}

\mtcsetfeature{parttoc}{before}{}
\mtcsetfeature{parttoc}{pagestyle}{}
\mtcsetfeature{parttoc}{after}{}

\ifarxiv
\usepackage{parskip}
\usepackage{etoolbox}
\makeatletter
\patchcmd{\deferred@thm@head}{\addvspace{-\parskip}}{}{}{\PackageWarning{mydocument}{Patching deferred theorem head failed}}
\makeatother
\else
\fi

\newtheorem{ass}{Assumption}
\crefname{ass}{Assumption}{Assumptions}
\newtheorem{cor}{Corollary}
\crefname{cor}{Corollary}{Corollaries}
\newtheorem{lem}{Lemma}
\crefname{lem}{Lemma}{Lemmas}
\newtheorem{prop}{Proposition}
\crefname{prop}{Proposition}{Propositions}
\newtheorem{thm}{Theorem}
\crefname{thm}{Theorem}{Theorems}

\ifopt
\theorembodyfont{\upshape}
\theoremheaderfont{\itshape}
\else
\theoremstyle{remark}
\fi
\newtheorem{exl}{Example}
\crefname{exl}{Example}{Examples}
\newtheorem{rem}{Remark}
\crefname{rem}{Remark}{Remarks}

\crefformat{prob}{prob.~(#2#1#3)}
\Crefformat{prob}{Problem~(#2#1#3)}
\crefmultiformat{prob}{probs.~(#2#1#3)}{ and~(#2#1#3)}{, (#2#1#3)}{ and~(#2#1#3)}
\Crefmultiformat{prob}{Problems~(#2#1#3)}{ and~(#2#1#3)}{, (#2#1#3)}{ and~(#2#1#3)}

\DeclareMathOperator{\conv}{conv}

\allowdisplaybreaks[4]

\ifopt

\title[Minimal-norm univariate two-layer ReLU classification]{Minimal-norm univariate two-layer ReLU classification: \\ Exact solutions and global optimality with skip connections}

\else
\ifarxiv
\title{Minimal-Norm Univariate Two-Layer ReLU Classification: \\ Exact Solutions and Global Optimality with Skip Connections}
\else
\title{Minimal-Norm Univariate Two-Layer ReLU \\ Classification: Exact Solutions and \\ Global Optimality with Skip Connections}
\fi

\author{%
\begin{minipage}[t]{.9\textwidth}
\ifarxiv\centering\else\fi
Karolina Drabik\textsuperscript{1}\thanks{Equal contribution.} \hspace{1em}
Ben Lewis\textsuperscript{2}\footnotemark[\value{footnote}] \hspace{1em}
Antoni Puch\textsuperscript{1}\footnotemark[\value{footnote}] \hspace{1em}
Etienne Boursier\textsuperscript{3} \\[.2ex]
Piotr Hofman\textsuperscript{1} \hspace{1em}
Matthias Englert\textsuperscript{2} \hspace{1em}
Ranko Lazi\'c\textsuperscript{2}
\end{minipage} \\[\ifarxiv 4.2\else 2.7\fi ex]
\textsuperscript{1}
Department of Mathematics, Informatics and Mechanics,
University of Warsaw, Poland \\
\textsuperscript{2}
Department of Computer Science,
University of Warwick, UK \\
\textsuperscript{3}
INRIA \&
LMO, Universit\'e Paris-Saclay, France \\[\ifarxiv 1\else 0\fi ex]
\small
\texttt{\{kw.drabik2,a.puch,piotr.hofman\}@uw.edu.pl} \\
\small
\texttt{\{ben.lewis,m.englert,r.s.lazic\}@warwick.ac.uk} \\
\small
\texttt{etienne.boursier@inria.fr}}

\fi

\ifarxiv
\date{}
\else
\fi

\begin{document}

\doparttoc 
\faketableofcontents 

\maketitle

\begin{abstract}
We study minimal-norm interpolation and $\ell_2$-regularized logistic-loss minimization for binary classification by univariate two-layer ReLU networks. We give complete geometric characterizations of the optimal classifiers in function space, resolving how the solutions depend on whether hidden-layer biases are included in the parameter norm. When biases are unpenalized, the minimal-norm interpolators are exactly the continuous piecewise-affine functions that hug every label switch and have kinks of the appropriate convexity. When biases are penalized, the minimizer is unique in function space, has exactly one kink in each intermediate same-label segment, and is therefore a sparsest positive-margin classifier. We further show that adding a free affine skip connection leaves these function-space solutions unchanged but fundamentally improves the parameter-space landscape: every KKT point of the constrained problem becomes globally optimal, whereas suboptimal KKT points can occur without the skip connection. We establish analogous global-optimality and geometric results for sufficiently weak $\ell_2$-regularization of the logistic loss. In the unpenalized-bias case, we identify an additional sparsity-like restriction, implying that most minimal-norm interpolators cannot arise as small-regularization limits of margin-normalized logistic-loss minimizers. Numerical experiments across varying dataset complexity and network width support the predicted landscape and sparsity phenomena.
\end{abstract}

\section{Introduction}
\label{s:intro}

In practice, a key factor in the success of modern large language models is regularization of the norm of their parameters during training~\citep[e.g.,][]{WangA25,BergsmaDGGSH25}, by means of (decoupled) weight decay such as in the AdamW optimizer~\citep{LoshchilovH19}. 
Meanwhile, in theory, minimization of the $\mathcal{F}_1$~norm~\citep{KurkovaS01}, which is the counterpart in function space of the $\ell_2$~norm of two-layer networks in parameter space, has been proposed as of major significance in understanding generalization of modern neural networks~\citep{neyshabur2014search,bach2017breaking}.  Moreover, many prominent works studying implicit bias~\citep[cf.][]{Vardi23c} of gradient based algorithms on overparameterized neural networks have established that the algorithms often find solutions that generalize well precisely because they implicitly minimize the $\ell_2$~norm (or a closely related measure) of the model parameters, even when regularization is not explicitly enforced~\citep[e.g.,][]{LyuL20,JiT20,ChizatB20,BoursierPF22,CaiZWMLB25}.  For these and further reasons including its foundational role since the 1960s~\citep{tikhonov1963regularization}, research on optimization with $\ell_2$~norm minimization continues to attract considerable attention in the machine learning community.

A substantial part of that research has focused on the setting of univariate data (the basic case of low dimensionality) and two-layer ReLU networks (the canonical universal approximators~\citep[e.g.,][]{LeshnoLPS93}, with many challenging features of deep neural networks such as nonconvexity).  Two questions have been central:
\begin{enumerate}[label=\arabic*.,ref=\arabic*]
\item
\label{q:repr}
Given a function, what is the minimal%
\footnote{In this work, when we write just ``minimal'', ``minimum'', ``minimizer'', etc., we mean global; and we write ``local'' explicitly where that meaning is needed.}
parameter norm of a network that represents it?
\item
\label{q:inter}
Given a finite dataset, what functions are represented by its interpolators whose parameter norm is minimal?
\end{enumerate}
Considerable progress has been made in answering both, which has also identified two important distinctions, one optimizational and the other architectural:
\begin{itemize}
\item
whether biases are penalized (i.e., included in the parameter norm);
\item
whether the network has a skip connection (i.e., an additional affine term whose parameters are not penalized).
\end{itemize}
Studying all four combinations is well motivated, e.g.: the seminal implicit margin maximization results of \citet{LyuL20} and \citet{JiT20} involve penalizing the (first layer) biases, but do not apply in the presence of the skip connection (which breaks the homogeneity of the network); whereas, excluding the biases from weight decay has been standard in practice~\citep[cf.][Chapter~7]{Goodfellow-et-al-2016}; also, modern neural architectures such as the transformer~\citep{VaswaniSPUJGKP17} typically feature skip connections.

In this context, question~\ref{q:repr} has been resolved~\citep{savarese2019infinite,OngieWSS20,BoursierF23}, and question~\ref{q:inter} for regression to a large extent~\citep{parhi2021banach,Hanin21,DebarreDUF22,BoursierF23,kim2025exploring}.  Furthermore, building on these works, \citet{ZenoOBWS23} studied generalization of minimal-norm denoiser networks and \citet{JoshiVS24} characterized the nature of overfitting noisy data by minimal-norm interpolators.

For classification (where interpolation means $y \, f(x) \geq 1$ for each input~$x$ whose label is $y \in \{\pm 1\}$), convex-duality approaches have shown that in one dimension a globally optimal hinge-loss classifier can be chosen from a finite dictionary of data-aligned ReLU features~\citep{ErgenP21}, while \citet{SafranVL22}, \citet{BoursierF23}, and \citet{KornowskiYS23} obtained complementary structural and dynamical results. However, a complete characterization of all finite-sample minimum-norm classifier functions, including their dependence on bias penalization and skip connections, has remained open. In this work, we fill that gap, and do much more by answering also the following questions for classification:
\begin{enumerate}[resume]
\item
\label{q:skip}
How does adding a skip connection change the optimization landscape?
\item
\label{q:reg.loss.lim}
Given a finite dataset, what functions are represented by minimizers of the $\ell_2$-regularized logistic loss; and how are those functions, with margin normalization and small regularization strengths, related to those of the minimal-norm interpolators?
\end{enumerate}
More broadly, question~\ref{q:skip} is pertinent since, in spite of a great deal of research~\citep[e.g.,][]{HardtM17,OrhanP18,BarzilaiGGB23,MacDonaldVSL23,BelferGGB24}, theoretical understanding of skip connections is still a long way behind empirical demonstrations such as in the seminal works of \citet{HeZRS16cvpr} and \citet{Li0TSG18}.  Similarly, question~\ref{q:reg.loss.lim} addresses the theory-practice gap between minimal-norm interpolation and regularized-loss minimization, including how small regularization strengths~$\lambda$ need to be.  Here is a summary of our main contributions.
\begin{itemize}[wide,label=\textendash]
\item \textbf{Exact function space characterizations.}
We establish complete geometric characterizations of the minimal-norm interpolators and of the minimizers of the regularized loss in function space, show that they are not affected by the absence or presence of the skip connection, and show how they differ depending on whether the biases are penalized.
\item \textbf{Skip connections guarantee global optimality.}
We prove that, with a skip connection, and regardless of whether the biases are penalized, every KKT point of the interpolator norm minimization problem and every positive-margin stationary point of the regularized loss are (global) minimizers, and show that this is not the case without the skip connection.
\item \textbf{Regularized loss promotes sparsity.}
We prove that, when the biases are not penalized, the minimizers of the regularized loss (for any~$\lambda$ below a mild bound) in function space have a sparsity-like geometric property which minimal-norm interpolators do not necessarily have, and that therefore most minimal-norm interpolators cannot be obtained as limits of margin-normalized minimizers of the regularized loss by letting~$\lambda$ tend to~$0$.
\item \textbf{Validation by numerical experiments.}
For our theoretical results on how the optimization landscapes change with the skip connection, and on sparsity, we performed experiments which validate them on a range of synthetic datasets and with varying degrees of overparameterization.
\end{itemize}

Implications of our results include the following improvements and extensions of algorithms and results in prior works.
\begin{itemize}[wide,label=\textendash]
\item
The $32 r + 67$ bound of \citet[Theorem~4.2]{SafranVL22} on the number of linear regions of KKT points in function space when the biases are penalized reduces to~$r$ (where $r$~is the number of label switches in the dataset), provided either a skip connection is added or minimizers are considered instead of KKT points.
\item
We prove the conjecture of \citet[cf.][Corollary~1]{BoursierF23} that minimal-norm interpolators when the biases are penalized and with the skip connection are unique in function space, and show both uniqueness and optimal sparsity also without the skip connection.
\item
The privacy attack of \citet{Smorodinsky26} that extracts from any local minimizer, when the biases are penalized and without knowing the training dataset, a set of points at least $1 / 4$ of which are guaranteed to be in the dataset, can be improved to guarantee that all extracted points are in the dataset and moreover that every same-label segment of the dataset is represented, provided either a skip connection is added (in which case it suffices to consider KKT points) or global minimizers are considered.%
\footnote{Also, neither knowledge of the network's parameters nor existence of an omniactive neuron are needed.}
\end{itemize}

We provide proofs of all our results in the appendix, which also contains further results, background material, examples, figures, plots, remarks, and details of our experiments.

\section{Setting}

\paragraph{Networks.}

We consider two-layer ReLU networks with a skip connection and univariate inputs $x \in \mathbb{R}$, whose output is
\begin{equation}
\textstyle
f_{\theta, a_0, b_0}(x) \coloneqq
a_0 x + b_0 + \sum_{j =1}^m a_j \, \sigma(w_j x + b_j),
\label{eq:f}
\end{equation}  
where $\sigma(z) \coloneqq \max \{0, z\}$ is the ReLU nonlinearity, $m$~is the network width, $a_0$~is the weight and $b_0$~is the bias for the skip connection, and the remaining parameters $\theta = (a_j, w_j, b_j)_{j = 1}^m \in \mathbb{R}^{3 m}$ consist of output weights~$a_j$, hidden weights~$w_j$, and hidden biases~$b_j$.
When the skip connection is absent, instead of~$f_{\theta, 0, 0}$ we may write just~$f_\theta$.

\paragraph{Functions.}

We say that a function $g \colon \mathbb{R} \to \mathbb{R}$ is CPA if and only if it is continuous piecewise affine with a finite number of pieces.
We denote the slopes of its pieces by $(s_k)_{k = 0}^q$.  The pieces are separated by its kinks%
\footnote{The ``kinks'' are sometimes in the literature called ``knots'', ``breakpoints'', or ``inflection points''.}
$p_1 < \dots < p_q$.
For a kink~$p_k$, we say that it is convex if its incident slopes satisfy $s_{k - 1} < s_k$, and concave if $s_{k - 1} > s_k$.

CPA functions are exactly the class representable by univariate two-layer ReLU networks, and this characterization is not affected by whether the skip connection is present.  For simplicity of presentation and proximity to practice, in this work we do not consider infinite-width networks.

\paragraph{Datasets.}

We consider univariate binary classification datasets, i.e., $(x_i, y_i)_{i = 1}^n \in (\mathbb{R} \times \{\pm 1\})^n$, and without loss of generality we assume that the inputs are sorted, i.e., $x_1 < \dots < x_n$.

Let $\iota(0) \coloneqq 0$, and for each~$k$ with $\iota(k) < n$ let
$\iota(k + 1) \coloneqq
 \min \bigl(\{i > \iota(k) \mid y_i \neq y_{i + 1}\} \cup \{n\}\bigr)$.
This defines $\iota(0) < \dots < \iota(r + 1) = n$ such that the dataset has $r + 1$ maximal same-label segments and the indices between which the label switches are exactly $\iota(k)$ and $\iota(k) + 1$ for all $k \in [r]$.


\begin{ass}
\label{ass:data}
The inputs in the first (resp., last) two segments of the dataset are negative (resp., positive), i.e., $x_{\iota(2)} < 0 < x_{\iota(r - 1) + 1}$.
\end{ass}

This assumption avoids some special cases that would clutter the statements of our main results.  It combines requiring that the number~$r$ of label switches in the dataset is $\geq 3$ with a coordinate-dependent sign condition used only in the penalized-bias outer-kink arguments; see \cref{s:ass:data} for its precise role.  Also, this is the only assumption in this work; in particular, all our results apply regardless of how the dataset is sampled and for any cardinality~$n$.

\section{Main results}
\label{s:main}

\ifopt
For illustrations of some of the key functional properties that are defined in the following results, we refer the reader to \Cref{f:min.fun}.
\else
\fi
The complete geometric characterizations obtained in our first main theorem reveal the exact effect of penalizing the biases: among switch hugging and convexity correct CPAs (of which there are infinitely many in general), it picks the unique one that is single turning, which is also the sparsest possible.  In addition, the theorem shows that although the skip connection does not modify the sets of minimizers in function space, it changes the optimization landscape dramatically, making every KKT point a (global) minimizer.

\begin{table}[t]
\caption{We consider these four variants of the interpolator norm minimization problem for a given dataset $(x_i, y_i)_{i = 1}^n \in (\mathbb{R} \times \{\pm 1\})^n$.  The skip connection is always ``free'', i.e., its parameters $a_0$~and~$b_0$ are never penalized.  Note also that the network width~$m$ is optimized, being determined by the dimension of the parameters vector $\theta = (a_j, w_j, b_j)_{j = 1}^m \in \mathbb{R}^{3 m}$.}
\label{tab:inter}
\centering
\ifopt\else\vspace{2ex}\fi
\begin{tabular}{m{.12\textwidth}m{.36\textwidth}m{.43\textwidth}}
&
\multicolumn{1}{c}{\bfseries biases not penalized}
&
\multicolumn{1}{c}{\bfseries biases penalized}
\\[-\ifarxiv 1\else 2\fi ex]
{\bfseries\shortstack[r]{skip \\ connection \\ absent}}
&
\begin{equation}
\begin{gathered}
\inf_\theta
\frac{1}{2}
\sum_{j = 1}^m (a_j^2 + w_j^2)
\text{ s.t.} \\
\forall i \in [n], \,
y_i \, f_\theta(x_i) \geq 1
\end{gathered}
\tag{$\mathrm{P}$}\label[prob]{eq:P}
\end{equation}
&
\begin{equation}
\begin{gathered}
\inf_\theta
\frac{1}{2}
\sum_{j = 1}^m (a_j^2 + w_j^2 + b_j^2)
\text{ s.t.} \\
\forall i \in [n], \,
y_i \, f_\theta(x_i) \geq 1
\end{gathered}
\tag{$\mathrm{P^b}$}\label[prob]{eq:Pb}
\end{equation}
\\[-5ex]
{\bfseries\shortstack[r]{skip \\ connection \\ present}}
&
\begin{equation}
\begin{gathered}
\inf_{\theta, a_0, b_0}
\frac{1}{2}
\sum_{j = 1}^m (a_j^2 + w_j^2)
\text{ s.t.} \\
\forall i \in [n], \,
y_i \, f_{\theta, a_0, b_0}(x_i) \geq 1
\end{gathered}
\tag{$\mathrm{P_1}$}\label[prob]{eq:P1}
\end{equation}
&
\begin{equation}
\begin{gathered}
\inf_{\theta, a_0, b_0}
\frac{1}{2}
\sum_{j = 1}^m (a_j^2 + w_j^2 + b_j^2)
\text{ s.t.} \\
\forall i \in [n], \,
y_i \, f_{\theta, a_0, b_0}(x_i) \geq 1
\end{gathered}
\tag{$\mathrm{P^b_1}$}\label[prob]{eq:Pb1}
\end{equation}
\end{tabular}
\end{table}

\begin{restatable}{thm}{thinter}
\label{th:inter}
Under \cref{ass:data}, for the problems in \Cref{tab:inter}, all of the following hold.
\begin{enumerate}[(i)]

\item
\label{th:inter:P.P1}
\Cref{eq:P,eq:P1} have the same minimizers in function space, which are exactly all CPAs $g \colon \mathbb{R} \to \mathbb{R}$ that are:
\begin{description}
\item[(switch hugging)]
at every label switch in the dataset, $g$~passes through the two incident points, i.e., for all $l \in [r]$, we have $g(x_{\iota(l)}) = y_{\iota(l)}$ and $g(x_{\iota(l) + 1}) = y_{\iota(l) + 1}$;
\item[(convexity correct)]
every kink~$p_k$ of~$g$ is contained in an intermediate same-label segment of the dataset, and its convexity is opposite to the segment's label, i.e., there exists $l \in [r - 1]$ with $p_k \in [x_{\iota(l) + 1}, x_{\iota(l + 1)}]$ and $sgn(s_k - s_{k - 1}) = -y_{\iota(l) + 1}$.
\end{description}

\item
\label{th:inter:Pb.Pb1}
\Cref{eq:Pb,eq:Pb1} have the same unique minimizer~$g$ in function space, which is of the form as in part~\ref{th:inter:P.P1} and moreover:
\begin{description}
\item[(single turning)]
every intermediate same-label segment contains exactly one kink of~$g$.
\end{description}
Thus $g$~has $r - 1$ kinks in total, which makes it a sparsest interpolant of the dataset.

\item
\label{th:inter:KKT}
For \cref{eq:P1,eq:Pb1}, every KKT point $(\theta, a_0, b_0) \in \mathbb{R}^{3 m + 2}$ (with any network width~$m$ for which KKT points exist) is a minimizer (globally, including over all~$m$).
\end{enumerate}
\end{restatable}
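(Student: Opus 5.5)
The plan is to move everything to function space using the known representational costs of univariate CPA functions (question~1, resolved in the cited works), and then handle the four problems in two pairs. For unpenalized biases, the minimal cost of representing a CPA $g$ by $f_{\theta}$ or $f_{\theta,a_0,b_0}$ is $\sum_k |s_k - s_{k-1}|$, i.e.\ the total variation of $g'$. Without the skip connection one extra term may appear, and this is where \cref{ass:data} enters. For penalized biases, the cost is $\sum_k |s_k-s_{k-1}|\sqrt{1+p_k^2}$, again with a possible boundary term when there is no skip connection. So \cref{eq:P1} becomes: minimize $\mathrm{TV}(g')$ over CPAs $g$ with $y_i g(x_i)\ge 1$. \Cref{eq:Pb1} is the weighted analogue.

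\textbf{Part (i).} I would prove a lower bound. Around each label switch $l$, the line through $(x_{\iota(l)},y_{\iota(l)})$ and $(x_{\iota(l)+1},y_{\iota(l)+1})$ has a slope $\Delta_l$ whose sign alternates in $l$. Any interpolant has, by the mean value theorem, a slope of at least $|\Delta_l|$ in the correct direction somewhere in $[x_{\iota(l)},x_{\iota(l)+1}]$. Between consecutive switches, $g'$ must therefore travel from about $\Delta_l$ to about $\Delta_{l+1}$, which have opposite signs. This gives
\[
\mathrm{TV}(g') \;\ge\; \sum_{l=1}^{r-1}|\Delta_{l+1}-\Delta_l|.
\]
Equality forces three things. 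The slope achieved in each switch interval must be exactly $\Delta_l$, so $g$ is affine there and hugs both points. $g'$ must be monotone on each same-label stretch between switches, in the direction matching the label, which gives the convexity-correct condition. And $g'$ must be constant outside, so there are no outer kinks. Conversely, any switch-hugging, convexity-correct CPA is feasible: inside a $+1$ segment, a concave $g$ lies above the chords through the hugged endpoints, which themselves lie at level $\ge 1$. Such a $g$ also attains the bound. To show \cref{eq:P} has the same minimizers, I would check that these minimizers have zero outer slope-change contribution and are representable without skip. The affine part at the ends needs realizing by ReLUs at no extra cost; \cref{ass:data} (first two segments negative, last two positive) should let the outer affine pieces be generated by neurons active on the correct side with no additional norm.

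\textbf{Part (ii).} The cost becomes a weighted TV with the convex weight $\sqrt{1+p^2}$. The same argument gives a lower bound, and minimizers must still hug and be convexity correct. This needs a careful step: moving a kink must not reduce the cost via the weight. Within each intermediate segment, I would show that merging all kinks into one is strictly better. The slope change is fixed at $\Delta_{l+1}-\Delta_l$, and the single-kink placement is pinned by intersecting the two hugging lines. Splitting the change over several kinks can be rearranged, and strict convexity of $\sqrt{1+p^2}$ along with the hugging constraints should yield uniqueness. \textbf{This is the step I expect to be the main obstacle.} The intersection point of the two hugging lines must lie within the segment (feasibility), and one must compare against multi-kink alternatives whose kink positions are free. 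I would try parametrizing each segment's interpolant by its kinks and showing that the cost is minimized uniquely at the single intersection kink, via a convexity/Jensen-type inequality, using the sign condition of \cref{ass:data} for the outer segments. The kink count $r-1$ is then sparsest, since each intermediate segment needs at least one kink for the slope sign change.

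\textbf{Part (iii).} Write the KKT conditions with multipliers $\lambda_i\ge0$. Stationarity in $a_0,b_0$ gives $\sum_i\lambda_i y_i=0$ and $\sum_i\lambda_i y_i x_i=0$. Stationarity in each neuron gives the following: for every active neuron, $(a_j,w_j,b_j)$ is aligned with the correlation $\sum_i\lambda_i y_i(\sigma(w_jx_i+b_j),a_jx_i\mathbb 1,a_j\mathbb 1)$. Define the dual certificate $h(x)=\sum_i\lambda_iy_i\sigma(x-\cdot)$ style functional. The two skip equations make $h$ independent of the ReLU orientation and zero outside the data range. The neuron equations then give $|h|\le 1$, weighted by $\sqrt{1+p^2}$ in the penalized case, with equality at the kinks. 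This is exactly a dual-feasible certificate for the convex function-space problem of (i)/(ii), so complementary slackness yields global optimality. Without skip, the two orthogonality equations are missing, and this is where suboptimal KKT points can survive.
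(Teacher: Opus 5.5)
Your equality analysis of $\mathrm{TV}(g')\ge\sum_{l=1}^{r-1}(|\Delta_l|+|\Delta_{l+1}|)$ is sound for \cref{eq:P1}, and it is a more direct route than the paper's. The paper gets necessity from KKT conditions (MFCQ everywhere, then explicit descent directions) and sufficiency from the telescoping cost plus attainment (\cref{l:attain.int}).

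The serious gap is part~(iii). Write $h(p)=\sum_i\lambda_iy_i\,\sigma(x_i-p)$. At a KKT point, stationarity gives only local information at each active neuron: $|a_j|=|w_j|$ (resp.\ $a_j^2=w_j^2+b_j^2$), $h(p_j)=\pm1$ (resp.\ $\pm\sqrt{1+p_j^2}$) with the sign of $a_j$, and $0\in[h'(p_j^-),h'(p_j^+)]$ (resp.\ tangency). Nothing in the KKT system controls $h$ where no neuron sits, because inserting a new neuron there is a second-order perturbation. The skip equations make $h$ orientation-free and zero outside $[x_1,x_n]$, but they do not stop $h$ from exceeding $1$ between kinks. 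So ``the neuron equations then give $|h|\le1$'' assumes what part~(iii) asserts. Without skip, the suboptimal KKT points of \cref{th:KKT.not} satisfy all these local conditions, yet their multipliers cannot be dual feasible, or they would certify optimality.

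What is missing is the geometric step. You must first show that every KKT point of \cref{eq:P1} (resp.\ \cref{eq:Pb1}) is switch hugging and convexity correct (resp.\ also single turning). The paper does this with first-order descent directions that move existing neurons, the skip absorbing orientation reversals (\cref{l:rev}): \cref{l:cc,,l:all.P1,,l:sh,,l:tight.P1} (resp.\ \cref{l:st,,l:all.Pb1,,l:tight.Pb1}) with \cref{pr:dd.not.KKT}. You can then finish as the paper does: \cref{l:bal,l:opp} (resp.\ \cref{l:bal.b,,l:opp,,l:bias}) make the parameter norm equal $\mathrm{R_1}$ (resp.\ $\mathrm{R^b_1}$) of the function, and parts~(i)/(ii) apply. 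Alternatively, in the unpenalized case your certificate now works. The multipliers live on switch-incident points and flat tight stretches, so $h$ vanishes left of $x_{\iota(1)}$ and right of $x_{\iota(r)+1}$. It is concave on $[x_{\iota(l)},x_{\iota(l+1)+1}]$ around each $-1$ segment, convex around each $+1$ segment, and has extreme values $\pm1$ at the kinks.

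Two more steps fail as written. In part~(ii), the unweighted bound does not transfer to the cost $\sum_k\sqrt{1+p_k^2}\,|s_k-s_{k-1}|$. For that cost the bound is never attained, since the hugging, convexity-correct functions have at least two distinct kinks, not all at $0$. Its equality case therefore says nothing about weighted minimizers, which could a priori trade hugging for kinks nearer the origin. Hugging and the absence of outer kinks need their own argument, and this is where the sign condition of \cref{ass:data} is used. Rotating the first piece about $\widehat p\le p_1$ changes the weighted cost at rate $-w_j^2(1+p_1\widehat p)$, which is negative only because $p_1<0$ (\cref{l:all.Pb1,l:tight.Pb1}). Your merging step is fine; it is the paper's strict Cauchy--Schwarz inequality.

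For \cref{eq:P,eq:Pb}, equal infima are not enough; you need attainment without skip and without output bias. A network without skip whose norm equals $\sum_k|s_k-s_{k-1}|$ must be balanced, with all neurons at kinks of $g$ and output weights of that kink's convexity sign. For $r=2$ all kinks have one convexity, so $f_\theta$ has constant sign and the infimum of \cref{eq:P} is not attained at all. The idea you need is \cref{l:RR1.RbRb1}, which uses $r\ge3$ to find a convex kink where $g<0$ and a concave kink where $g>0$. Anchored at each such kink, represent $g$ minus its value there. Use left-facing neurons to its left and right-facing neurons to its right, and split the anchor kink into one of each; this is free because the incident slopes have opposite signs. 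Then combine the two representations convexly by scaling with $\sqrt{\xi}$ and $\sqrt{1-\xi}$. This also yields $\mathrm{R^b}=\mathrm{R^b_1}$, which you would otherwise have to extract from the formula for \cref{eq:Rb}. It is $r\ge3$, not the sign condition you invoke, that this step needs.
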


\newcommand{\fminfun}[1]{
\begin{figure}[#1]
\centering
\begin{tikzpicture}[xscale=\ifopt 1.6\else\ifarxiv 1.8\else 1.5\fi\fi,yscale=.75]
\draw         (0.5,  0) node[left]  {$0$} -- (9.3,  0);
\draw[dashed] (0.5,  1) node[left]  {$1$} -- (9.3,  1);
\draw[dashed] (0.5, -1) node[left] {$-1$} -- (9.3, -1);
\draw[yellow!90!black,very thick,dotted] (4.7, 1) -- (4.7, 2.75);
\draw[yellow!90!black,very thick,dotted] (5.2, 1) -- (5.2,    3);
\draw[every node/.append style={text=black},cyan,very thick,yshift=-1mm]
(0.5,     2) --
(2,      -1) node[below,xshift=-1mm]   {$+$} --
(3.2,    -1) node[below,xshift=1mm]    {$+$} --
(4,       1) node[above,xshift=-1.5mm] {$-$} --
(4.9,  2.25) node[above=-1mm]          {$-$} --
(5.5,     2) node[above=-1mm]          {$-$} --
(6.7,     1) node[above,xshift=1mm]    {$-$} --
(9.3, -2.47);
\draw[every node/.append style={text=black},lime!90!black,very thick]
(0.5,     2) --
(2,      -1) --
(2.8,  -1.4) node[below=-.5mm,xshift=-1mm]  {$+$} --
(3.2,    -1) --
(4.7,  2.75) node[above=-1mm,xshift=-1.5mm] {$-$} --
(5.1,  3.13) node[above=-1mm,xshift=1.5mm]  {$-$} --
(9.3, -2.47);
\draw[every node/.append style={text=black},pink,very thick,yshift=1mm]
(0.5,      2) --
(2.67, -2.33) node[below=-.5mm] {$+$} --
(4.94,  3.35) node[above=-1mm]  {$-$} --
(9.3,  -2.47);
\draw[dotted] (1,    1) node[point,color=brown!80!black] {} -- (1,   0) node[below] {$x_{1 = \iota(1)}$};
\draw[dotted] (2,   -1) node[point,color=brown!80!black] {} -- (2,   0) node[above] {$x_2$};
\draw[dotted] (3.2, -1) node[point,color=brown!80!black] {} -- (3.2, 0) node[above] {$x_{3 = \iota(2)}$};
\draw[dotted] (4,    1) node[point,color=brown!80!black] {} -- (4,   0) node[below] {$x_4$};
\draw[dotted] (4.7,  1) node[point]                      {} -- (4.7, 0) node[below] {$x_5$};
\draw[dotted] (5.2,  1) node[point]                      {} -- (5.2, 0) node[below] {$x_6$};
\draw[dotted] (5.8,  1) node[point]                      {} -- (5.8, 0) node[below] {$x_7$};
\draw[dotted] (6.7,  1) node[point,color=brown!80!black] {} -- (6.7, 0) node[below] {$x_{8 = \iota(3)}$};
\draw[dotted] (8.2, -1) node[point,color=brown!80!black] {} -- (8.2, 0) node[above] {$x_9$};
\draw[dotted] (8.8, -1) node[point]                      {} -- (8.8, 0) node[above] {$x_{10 = \iota(4)}$};
\end{tikzpicture}
\vspace{-1ex}
\caption{Here we depict a dataset with $n = 10$ points which are indicated by the bold dots, and three interpolating CPAs $g_\text{cyan}$, $g_\text{lime}$, and~$g_\text{pink}$ which are plotted with those colors respectively.  The dataset has $r = 3$ label switches, and its same-label segments end at indices $1 = \iota(1)$, $3 = \iota(2)$, $8 = \iota(3)$, and $10 = \iota(4)$.  Each convex kink is indicated by~$+$, and each concave kink by~$-$.  All of $g_\text{cyan}$, $g_\text{lime}$, and~$g_\text{pink}$ are switch hugging and convexity correct, however only~$g_\text{pink}$ is single turning (cf.\ \cref{th:inter}).  Also, $g_\text{lime}$~and~$g_\text{pink}$ are interval turning, but $g_\text{cyan}$~is not (cf.\ \cref{c:min.reg.loss.lim}).  For a switch hugging and convexity correct interpolating CPA to be interval turning, all its kinks in the same-label segment $[x_4, x_8]$ need to be in the interval $[x_5, x_6]$ between two consecutive inputs that contains the unique single turning kink (which is indicated by the yellow dotted lines).}
\label{f:min.fun}
\end{figure}
}

\ifopt
\else
\fminfun{t}
\fi

\ifopt
\else
\begin{proof}[Proof sketch]
For parts~\ref{th:inter:P.P1} and~\ref{th:inter:Pb.Pb1}, we first prove necessity of the geometric conditions, for every KKT point with the skip connection. Any violation yields an infinitesimal perturbation that decreases the parameter norm without decreasing any tight training margin, contradicting first-order optimality. The skip connection is crucial: since $\sigma(z) - \sigma(-z) = z$, reversing a neuron changes its contribution by an affine function, which the skip connection absorbs while preserving the relevant directional derivatives, allowing us to orient neurons conveniently when constructing descent directions.

When biases are not penalized (still with the skip connection, so \cref{eq:P1}), these directions exclude kinks of inappropriate convexity, outer kinks, and failures of switch hugging. Conversely, for every switch-hugging and convexity-correct CPA, the prescribed signs of the slope changes make the total variation telescope, so \cref{th:repr} gives the same function-space norm independently of the admissible kink locations and number. Hence these are exactly the minimizers with a skip connection.
When biases are penalized (\cref{eq:Pb1}), first-order rescaling forces balancedness, and the cost of a kink at~$p$ with slope change~$\Delta s$ is $\sqrt{1 + p^2} \, \lvert \Delta s \rvert$. The strict triangle inequality then rules out two distinct kinks of the required convexity in one same-label segment. Each intermediate segment therefore has one kink, uniquely determined by the adjacent switch points, giving the unique and sparsest minimizer.
For any positive-margin CPA with at least three label switches, a convex combination of two no-skip representations attains the same cost, so removing the skip connection (\cref{eq:P,eq:Pb}) does not change the minimizing functions.

For part~\ref{th:inter:KKT}, further descent directions ensure that every KKT parameterization with a skip connection attains the minimal representor norm of its function. Since that function is globally optimal, the parameterization is globally optimal over all widths. The constraint qualification holds throughout the feasible set, so every local minimizer is also covered.
\end{proof}
\fi

\newcommand{\flandnorm}[3]{
\begin{figure}[#1]
\centering
{\tikzexternalenable
\tikzsetnextfilename{cache_norm_a0_pq}
\tikzpicturedependsonfile{plot_dat_tex/obj_norm_a0_pq.dat}
\begin{tikzpicture}
\begin{axis}
[width=\ifopt .475\else\ifarxiv .475\else .5\fi\fi\textwidth,
 height=\ifopt .475\else\ifarxiv .475\else .5\fi\fi\textwidth,
 xmin=-3, xmax=3,
 ymin=-3, ymax=3,
 view={0}{90},
 colormap name=warpedviridis,
 point meta min=5,
 point meta max=95,
 unbounded coords=jump,
 title={skip connection absent}]
\addplot3
[surf,
 shader=interp,
 mesh/cols=101]
table
[x=p,
 y=q,
 z=value_capped]
{plot_dat_tex/obj_norm_a0_pq.dat};
\addplot3[red, line width=1.4pt, no marks] coordinates {(-3,-1,11) (-2,-1,11)};
\addplot3[red, line width=1.4pt, no marks] coordinates {( 1, 3,11) ( 1, 2,11)};
\addplot3[magenta, line width=1.4pt, no marks] coordinates {( 1,-3, 7) ( 1,-2, 7)};
\addplot3[magenta, line width=1.4pt, no marks] coordinates {( 3,-1, 7) ( 2,-1, 7)};
\addplot3[pink, only marks, mark=*, mark size=1.4pt] coordinates {( 1,-1, 6)};
\input{plot_dat_tex/boundary_a0_pq}
\input{plot_dat_tex/cont_norm_a0_pq}
\end{axis}
\end{tikzpicture}
\tikzsetnextfilename{cache_norm_opt_a_pq}
\tikzpicturedependsonfile{plot_dat_tex/obj_norm_opt_a_pq.dat}
\begin{tikzpicture}
\begin{axis}
[width=\ifopt .475\else\ifarxiv .475\else .5\fi\fi\textwidth,
 height=\ifopt .475\else\ifarxiv .475\else .5\fi\fi\textwidth,
 xmin=-3, xmax=3,
 ymin=-3, ymax=3,
 view={0}{90},
 colormap name=warpedviridis,
 colorbar,
 point meta min=5,
 point meta max=95,
 unbounded coords=jump,
 title={skip connection present}]
\addplot3
[surf,
 shader=interp,
 mesh/cols=101]
table
[x=p,
 y=q,
 z=value_capped]
{plot_dat_tex/obj_norm_opt_a_pq.dat};
\addplot3[pink, line width=1.4pt, no marks] coordinates {(-3,-1, 6) ( 3,-1, 6)};
\addplot3[pink, line width=1.4pt, no marks] coordinates {( 1,-3, 6) ( 1, 3, 6)};
\input{plot_dat_tex/boundary_opt_a_pq}
\input{plot_dat_tex/cont_norm_opt_a_pq}
\end{axis}
\end{tikzpicture}}
\caption{#2}
\label{#3}
\end{figure}
}

\ifopt
\else

Although \cref{th:inter} \ref{th:inter:P.P1}~and~\ref{th:inter:Pb.Pb1} show that adding the free skip connection does not change the optimal functions, its landscape conclusion is specific to the architectures with a skip connection. For the dataset in \Cref{f:land.norm:main}, \cref{eq:P} has plateaus of suboptimal KKT points (the corresponding interpolants have outer kinks and are therefore not convexity correct), which disappear from the plotted profile after adding the skip connection. \cref{th:KKT.not} in the appendix gives the explicit construction and an analogous penalized-bias counterexample.

\begin{figure}[t]
\centering
{\tikzexternalenable
\tikzsetnextfilename{cache_norm_a0_pq}
\tikzpicturedependsonfile{plot_dat_tex/obj_norm_a0_pq.dat}
\begin{tikzpicture}
\begin{axis}
[width=\ifopt .475\else\ifarxiv .475\else .5\fi\fi\textwidth,
 height=\ifopt .475\else\ifarxiv .475\else .5\fi\fi\textwidth,
 xmin=-3, xmax=3,
 ymin=-3, ymax=3,
 view={0}{90},
 colormap name=warpedviridis,
 point meta min=5,
 point meta max=95,
 unbounded coords=jump,
 title={skip connection absent}]
\addplot3
[surf,
 shader=interp,
 mesh/cols=101]
table
[x=p,
 y=q,
 z=value_capped]
{plot_dat_tex/obj_norm_a0_pq.dat};
\addplot3[red, line width=1.4pt, no marks] coordinates {(-3,-1,11) (-2,-1,11)};
\addplot3[red, line width=1.4pt, no marks] coordinates {( 1, 3,11) ( 1, 2,11)};
\addplot3[magenta, line width=1.4pt, no marks] coordinates {( 1,-3, 7) ( 1,-2, 7)};
\addplot3[magenta, line width=1.4pt, no marks] coordinates {( 3,-1, 7) ( 2,-1, 7)};
\addplot3[pink, only marks, mark=*, mark size=1.4pt] coordinates {( 1,-1, 6)};
\input{plot_dat_tex/boundary_a0_pq}
\input{plot_dat_tex/cont_norm_a0_pq}
\end{axis}
\end{tikzpicture}
\tikzsetnextfilename{cache_norm_opt_a_pq}
\tikzpicturedependsonfile{plot_dat_tex/obj_norm_opt_a_pq.dat}
\begin{tikzpicture}
\begin{axis}
[width=\ifopt .475\else\ifarxiv .475\else .5\fi\fi\textwidth,
 height=\ifopt .475\else\ifarxiv .475\else .5\fi\fi\textwidth,
 xmin=-3, xmax=3,
 ymin=-3, ymax=3,
 view={0}{90},
 colormap name=warpedviridis,
 colorbar,
 point meta min=5,
 point meta max=95,
 unbounded coords=jump,
 title={skip connection present}]
\addplot3
[surf,
 shader=interp,
 mesh/cols=101]
table
[x=p,
 y=q,
 z=value_capped]
{plot_dat_tex/obj_norm_opt_a_pq.dat};
\addplot3[pink, line width=1.4pt, no marks] coordinates {(-3,-1, 6) ( 3,-1, 6)};
\addplot3[pink, line width=1.4pt, no marks] coordinates {( 1,-3, 6) ( 1, 3, 6)};
\input{plot_dat_tex/boundary_opt_a_pq}
\input{plot_dat_tex/cont_norm_opt_a_pq}
\end{axis}
\end{tikzpicture}}
\caption{A free skip connection removes suboptimal KKT plateaus from a two-dimensional landscape profile without changing the global optimum. For the dataset $(-2, 1), (-1, -1), (1, 1), (2,-1)$, we plot the norm objective over interpolating symmetric width-four networks with unpenalized biases. The axes give two kink positions $p_\dag, p_\ddag$; the corresponding slope changes are optimized subject to interpolation, and orange marks the feasibility boundary. \textbf{Left:} Without a skip connection, the profile contains suboptimal plateaus corresponding to KKT points at objective values~$11$ (colored red) and~$7$ (colored magenta), whereas the global minimum is~$6$ (colored pink). \textbf{Right:} With a free skip connection, these plateaus disappear from the profile, while the global minimum remains~$6$. See \cref{th:KKT.not} for the KKT construction and \cref{rem:KKT.not} for additional local-minimality observations.}
\label{f:land.norm:main}
\end{figure}

\fi

Next we state a corollary of two theorems fully given in the appendix. It characterizes regularized-loss minimizers in function space as \cref{th:inter} does minimal-norm interpolators, but with far less flexibility when biases are unpenalized: all CPA kinks in each dataset segment must lie between two specific consecutive training inputs.
This discontinuity between regularized and constrained optimization, more pronounced as the dataset grows denser (more samples per same-label segment), identifies a sparsity-driving mechanism beyond bias penalization. The result also shows that a skip connection makes every positive-margin stationary point a (global) minimizer, and that most minimal-\hspace{0em}norm interpolators are not limits of margin-normalized regularized-loss minimizers as $\lambda \to 0$.

\begin{cor}[of \cref{th:min.reg.loss,th:lim}]
\label{c:min.reg.loss.lim}
Under \cref{ass:data}, and for every sufficiently small regularization strength~$\lambda$ (depending only on the dataset), all of the following hold.
\begin{enumerate}[(i)]

\item
\label{c:min.reg.loss.lim:L}
The characterizations in \cref{th:inter}~\ref{th:inter:P.P1} and~\ref{th:inter:Pb.Pb1} of the minimizers of \cref{eq:P,,eq:P1,,eq:Pb,,eq:Pb1} in function space hold also for the minimizers~$g$ of the corresponding $\ell_2$-regularized logistic losses (cf.\ \cpageref{eq:L}), except that the switch hugging property is replaced by
\begin{description}
\item[(switch optimal)]
for all $l \in [r]$, the outputs $g(x_{\iota(l)})$ and~$g(x_{\iota(l) + 1})$ at the inputs incident to the $l$-th label switch are uniquely determined by the dataset and~$\lambda$;
\end{description}
and when the biases are not penalized, the CPA~$g$ additionally satisfies
\begin{description}
\item[(interval turning)]
for all $l \in [r - 1]$, letting~$p$ and~$p'$ be the first and last (respectively) kinks of~$g$ in the $l$-th intermediate same-label segment $[x_{\iota(l) + 1}, x_{\iota(l + 1)}]$, the interval $(p, p')$ contains no inputs from the dataset.
\end{description}

\item
\label{c:min.reg.loss.lim:stat}
If the skip connection is present, then every Clarke stationary point of the $\ell_2$-regularized logistic loss (without or with penalizing the biases) whose margin is positive is a minimizer (globally, including over all~$m$).

\item
\label{c:min.reg.loss.lim:lim}
For all four options of whether the skip connection is present and whether the biases are penalized, the limits of margin-normalized Clarke stationary points of the $\ell_2$-regularized logistic loss as $\lambda$~tends to~$0$ are KKT points of the corresponding interpolator norm minimization problem.  If the skip connection is present and the biases are not penalized, then the latter KKT points are necessarily interval turning in function space.
\end{enumerate}
\end{cor}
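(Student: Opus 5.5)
The plan is to reduce the regularized problem to a constrained one by fixing the outputs at the switch points, then reuse the machinery behind \cref{th:inter}. The $\ell_2$-regularized logistic loss is $\sum_i \ell(y_i f(x_i)) + \lambda \|\theta\|^2/2$. In function space it splits into a data-fit term and a representation cost. The representation cost is given by \cref{th:repr}: the total variation of the slope in the unpenalized-bias case, and $\sum_k \sqrt{1+p_k^2}\,|\Delta s_k|$ with balanced neurons in the penalized-bias case.

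\textbf{Part~\ref{c:min.reg.loss.lim:L}.} First I would show that for $\lambda$ below a dataset-dependent threshold every minimizer has positive margin. The comparison function is the (scaled) minimal-norm interpolator: it achieves loss $O(\lambda \log(1/\lambda))$, whereas a nonpositive margin costs at least $\log 2$.

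Next, with positive margin, I would rerun the perturbation arguments from the proof of \cref{th:inter}. A kink of the wrong convexity, or an outer kink, can be removed in a way that decreases the norm without decreasing any margin. Since the logistic loss is monotone, this strictly decreases the objective. Hence the minimizers are convexity correct. In the penalized case the strict triangle inequality again forces them to be single turning.

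Given the switch outputs $(u_l, v_l) = (g(x_{\iota(l)}), g(x_{\iota(l)+1}))$, the norm then telescopes as before into an explicit convex function of these values. The interior points of each segment have margins at least as large as the switch points, and a strictly convex optimization in $(u_l, v_l)$ yields unique optimal switch values, which gives switch optimal.

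\textbf{Interval turning (unpenalized bias).} This is the new ingredient, and the step I expect to be hardest. With unpenalized biases, moving kinks within a segment leaves the norm unchanged once the switch values are fixed. However, it changes the values $g(x_i)$ at interior points of the segment. Suppose some input $x_i$ lies strictly between the first and last kinks $p, p'$ of the segment. Then I would merge mass from $p$ toward the kink nearer the extremum: shift the kinks so that $g$ becomes more extreme in the label direction, i.e. more concave for $+1$ segments. The slope total variation is preserved, the switch values are preserved, and $y_i g(x_i)$ strictly increases. Because the logistic loss is strictly decreasing, the objective strictly decreases.

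The delicate point is constructing this rearrangement explicitly. Concretely, for a $+1$ segment I would replace $g$ on $[p,p']$ by the pointwise maximum of its two outer affine pieces. This function lies above $g$ (by concavity within the segment) and has the same total variation, and it is strictly above $g$ at $x_i$.

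\textbf{Part~\ref{c:min.reg.loss.lim:stat}.} With the skip connection present, I would adapt the KKT argument of \cref{th:inter}~\ref{th:inter:KKT} to Clarke stationarity. The neuron-reversal trick (the identity $\sigma(z)-\sigma(-z)=z$ absorbed by the skip connection) supplies descent directions at any stationary point whose function violates convexity correctness, switch optimality or interval turning, or whose parameterization is not norm-minimal for its function. A stationary point is therefore of the characterized form. All such functions attain the minimal objective, because the objective depends only on the switch values and these are the optimal ones.

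\textbf{Part~\ref{c:min.reg.loss.lim:lim}.} I would normalize stationary points by their margin $\gamma_\lambda$ and take limits. The stationarity equations for the loss have multipliers proportional to $-\ell'(y_i f(x_i))$. After normalizing by $\sum_i -\ell'$, these multipliers become nonnegative, and they vanish in the limit on points whose margin is not minimal, since $\gamma_\lambda\to\infty$. This yields the KKT conditions in the limit, following Lyu–Li style arguments.

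Interval turning passes to the limit because it is a closed condition: no input lies strictly between the extreme kinks. Kinks converging onto an input are still allowed.
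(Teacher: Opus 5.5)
Your outline follows the paper's architecture: norm-decreasing, margin-nondecreasing perturbations give convexity correctness and single/interval turning, the problem reduces to the strictly convex problems \cref{eq:O1,eq:Ob1} in the switch values, and a Lyu--Li argument handles the limit. There are, however, two genuine gaps.

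\textbf{First gap: the skip-free losses in part~(i).} Your argument only covers the losses with a skip connection. The costs you quote are $\mathrm{R_1}$ and $\mathrm{R^b_1}$. Without the skip connection, \cref{th:repr} instead gives $\mathrm{R}(g)=\max\{\sum_k\lvert s_k-s_{k-1}\rvert,\lvert s_0+s_q\rvert\}$ and a much more complicated $\mathrm{R^b}(g)$, and the reversal trick is unavailable.

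What is missing is that $\mathrm{R}(g)=\mathrm{R_1}(g)$ and $\mathrm{R^b}(g)=\mathrm{R^b_1}(g)$ for every positive-margin CPA when $r\ge 3$ (\cref{l:RR1.RbRb1}). The construction is as follows.
\begin{itemize}
\item Take a kink $p_{k^\dag}$ in a negative excursion where the slope passes from $\le 0$ to $\ge 0$, and a kink $p_{k^\ddag}$ in a positive excursion with the reverse pattern.
\item Write $g-g(p_{k^\dag})$ and $g-g(p_{k^\ddag})$ as skip-free networks of cost $\mathrm{R_1}(g)$ (resp.\ $\mathrm{R^b_1}(g)$).
\item Combine them with weights $\xi$ and $1-\xi$, where $\xi=g(p_{k^\ddag})/(g(p_{k^\ddag})-g(p_{k^\dag}))$.
\end{itemize}
Since $\mathrm{R}\ge\mathrm{R_1}$ and $\mathrm{R^b}\ge\mathrm{R^b_1}$ always hold, and the skip minimizers have positive margin, the minimizer sets then coincide.

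Your proposal never uses \cref{ass:data}. Besides $r\ge3$ here, the sign condition is what makes removing a penalized outer kink profitable: the first-order cost change is $-w_j^2(1+p_1\widehat{p})$, which is negative because $p_1<0$.

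\textbf{Second gap: the switch-optimality descent direction in part~(ii).} There you only know stationarity, so you cannot compare globally with $g_{\tau^\star,\pi^\star}$. The reversal identity does not by itself produce a descent direction when switch optimality fails. The missing idea (\cref{l:so}) is to lift $(\tau^\star-\tau,\pi^\star-\pi)$ to parameter space.
\begin{itemize}
\item Perturb the neurons at the first and last kinks of each intermediate segment (one neuron if these coincide), together with $a_0,b_0$.
\item Choose the perturbation so that the first-order change of $f$ at every input equals $\mathrm{D}g_{\tau,\pi}[\tau^\star-\tau,\pi^\star-\pi]$. This needs interval turning, because between the two perturbed kinks the induced change is only an affine interpolation.
\item By balancedness, the derivative of $\frac12\sum_j(a_j^2+w_j^2)$ then equals $\mathrm{D}\,\mathrm{R_1}(g_{\tau,\pi})[\tau^\star-\tau,\pi^\star-\pi]$. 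With penalized biases, using $a_j^2=w_j^2+b_j^2$ and the kink at $\gamma^{(k)}_{\tau,\pi}$, it equals the derivative of $\mathrm{R^b_1}$.
\end{itemize}
Convexity of \cref{eq:O1} (resp.\ \cref{eq:Ob1}) then makes the Clarke derivative of the loss negative. Your global replacement for interval turning must likewise be made infinitesimal (\cref{l:L1}). Finally, the ``exactly'' in part~(i) needs existence of minimizers, which the paper obtains by attainment at each fixed width (\cref{l:attain.r.l}).

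\textbf{Smaller points.}
\begin{itemize}
\item For a $+1$ segment, the replacement must be the pointwise \emph{minimum} of the two outer pieces: it is concave, has one kink, and lies above $g$ on $[p,p']$. The maximum would be discontinuous at $p$ and $p'$.
\item In part~(iii), $M_k\to\infty$ must be derived. The paper gets it from the Euler-type bound $\sum_i\mu_i^{(k)}\le\frac12\lVert\theta^{(k)}\rVert_2^2/M_k+O(\varepsilon_k)$, which bounds the multipliers $\mu_i^{(k)}=1/\bigl(n\lambda_k(1+\exp(y_if_k(x_i)))\bigr)$ while $\lambda_k\to0$.
\end{itemize}

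\textbf{Your route to interval turning of the limit differs from the paper's.} For exact stationary points, and once (ii) holds, it is valid. Each term is then convexity correct and interval turning. Fixing, along a subsequence, the pair of consecutive inputs bracketing each segment's kinks shows the limit is interval turning. Convexity correctness is essential here: interval turning alone is not closed, since kinks could enter a segment from an adjacent gap.

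The paper instead proves \cref{th:lim}~\ref{th:lim:it} for $\zeta_k$-approximate stationary points under $\zeta_k/(\lambda_k^F\sqrt{M_k})\to0$. It compares an exponentially small loss improvement with the stationarity error. That argument does not rely on part~(ii) and covers the approximate stationarity reached in training. Your route is shorter, but it inherits the gap in (ii) and applies only to exact stationary points.
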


\ifopt
\else

\begin{proof}[Proof sketch]
The proof of \cref{th:min.reg.loss} parallels that of \cref{th:inter}, with two changes: every training point contributes to the objective, and the outputs at label switches are selected by a finite-dimensional convex optimization problem rather than fixed at~$\pm 1$. For $\tau, \pi \in \mathbb{R}_{> 0}^r$, let $g_{\tau, \pi}$ be the canonical CPA whose $k$-th crossing piece joins $(x_{\iota(k)}, y_{\iota(k)} \tau_k)$ and $(x_{\iota(k) + 1}, y_{\iota(k) + 1} \pi_k)$. Linear inequalities characterize the compatibility of these pieces across intermediate same-label segments. On this convex feasible set, the empirical logistic loss of~$g_{\tau, \pi}$ plus the corresponding minimal representor norm regularizer is strictly convex because of the terms at the switch inputs. Under the stated upper bound on~$\lambda$ (see \cpageref{th:min.reg.loss}), each objective has a unique minimizer with strictly positive coordinates, defining the switch-optimal values.

The geometric descent arguments from \cref{th:inter} exclude failures of convexity correctness and, when biases are penalized, multiple kinks in a same-label segment, giving single turning. Without bias penalization, if two same-convexity kinks enclose a training input, the graph between them can be moved toward the correct label without increasing the norm and while decreasing the logistic loss. Stationarity therefore requires all kinks in each same-label segment to lie between the same pair of consecutive training inputs, which is exactly interval turning.

Suppose a positive-margin stationary point has the appropriate turning and convexity properties but is not switch optimal. We perturb one or two neurons in each intermediate segment, together with the skip connection, so that the first-order change in the switch margins points from $(\tau, \pi)$ toward the unique convex minimizer $(\tau^\star, \pi^\star)$. The directional derivative of the network regularizer equals that of the corresponding function-space norm, so strict convexity makes the derivative of the full objective negative, contradicting stationarity. This proves the characterizations in \cref{th:min.reg.loss}. As in \cref{th:inter}, balancing and alignment make the parameter norm equal the function's minimal representor norm; hence every positive-margin stationary point with a skip connection is (globally) minimizing.

For \cref{th:lim}, let $\lambda_k \to 0$ and margin-normalize a sequence of approximate stationary points. Stationarity yields approximate KKT conditions for the corresponding unit-margin interpolation problem, with multipliers $\mu_i^{(k)} = 1 / n \lambda_k \bigl(1 + \exp(y_i \, f_k(x_i))\bigr)$. The assumed relation between the stationarity error and~$\lambda_k$ makes the KKT equilibrium error vanish and the unnormalized margins diverge. The multipliers at normalized margins strictly larger than~$1$ therefore vanish, giving asymptotic complementary slackness, so every convergent normalized subsequence limits to a KKT point.

In the unpenalized-bias case with a skip connection, a non-interval-turning limit would yield, uniformly along the sequence, a direction improving a training margin without increasing the quadratic regularizer. The resulting logistic-loss improvement has an exponential scale determined by the margin, and the stronger error condition in \cref{th:lim}~\ref{th:lim:it} ensures that it dominates the stationarity error, a contradiction.
\end{proof}

\Cref{f:land.reg} in the appendix shows that the small-$\lambda$ regularized-loss landscape undergoes the same qualitative transformation as the constrained landscape in \Cref{f:land.norm:main} when the skip connection is added.

\fi

\newcommand{\ExperimentSeedCount}{60}
\newcommand{\LossSymlogThreshold}{0.001}
\newcommand{\SparsitySymlogThreshold}{0.10000000000000001}

\pgfplotsset{
  shared bounds/loss/all/.style={ymin=-0.00038908503539490386,ymax=1.0451640899287564},
  shared bounds/sp/small/.style={ymin=-0.028724660641816252,ymax=15.32665764110862},
  shared bounds/sp/medium/.style={ymin=-0.028724660641816252,ymax=15.32665764110862},
  shared bounds/sp/large/.style={ymin=-0.028724660641816252,ymax=15.32665764110862},
  shared bounds/unsuccessful/all/.style={ymin=0,ymax=0.34999999999999998},
}

\pgfplotsset{
  loss symlog y/.style={
    y coord trafo/.code={\pgfmathparse{sign(##1)*(
      min(abs(##1)/\LossSymlogThreshold,1)
      +ln(max(abs(##1)/\LossSymlogThreshold,1)))}},
    y coord inv trafo/.code={\pgfmathparse{sign(##1)*\LossSymlogThreshold*(
      min(abs(##1),1)+max(exp(abs(##1)-1)-1,0))}},
    ytick={-1,-0.1,-0.01,-0.001,0,0.001,0.01,0.1,1},
    yticklabels={{$-10^0$},{$-10^{-1}$},{$-10^{-2}$},{$-10^{-3}$},{$0$},
                 {$10^{-3}$},{$10^{-2}$},{$10^{-1}$},{$10^0$}},
    minor ytick={-0.9,-0.8,-0.7,-0.6,-0.5,-0.4,-0.3,-0.2,
      -0.09,-0.08,-0.07,-0.06,-0.05,-0.04,-0.03,-0.02,
      -0.009,-0.008,-0.007,-0.006,-0.005,-0.004,-0.003,-0.002,
       0.002,0.003,0.004,0.005,0.006,0.007,0.008,0.009,
       0.02,0.03,0.04,0.05,0.06,0.07,0.08,0.09,
       0.2,0.3,0.4,0.5,0.6,0.7,0.8,0.9},
  },
}

\definecolor{clrNoskip}{HTML}{B2182B}
\definecolor{clrSkip}{HTML}{2166AC}

\pgfplotsset{
  experiment axis/.style={
    unbounded coords=discard, filter discard warning=false,
    width=\ifopt .25\else\ifarxiv .25\else .26\fi\fi\textwidth,
    height=\ifopt .18\else .19\fi\textheight,
    xlabel={$r_0$}, xtick={4,6,8,10,12}, xmin=3.4, xmax=12.6,
    grid=major, major grid style={draw=black!12,line width=0.2pt},
    axis line style={black!65}, tick style={black!65},
    xticklabel style={font=\footnotesize}, yticklabel style={font=\footnotesize},
    xlabel style={font=\footnotesize}, ylabel style={font=\footnotesize},
    title style={font=\footnotesize,yshift=-4pt},
    every axis plot/.append style={line width=0.75pt}, clip mode=individual,
  },
  hide row y axis/.style={yticklabels={},ylabel={}},
}


\newcommand{\rangeSeries}[5]{%
  \addplot[draw=none,name path=#5-upper,forget plot]
    table[col sep=comma,x=num_class_breaks,y=max_#2] {#1};%
  \addplot[draw=none,name path=#5-lower,forget plot]
    table[col sep=comma,x=num_class_breaks,y=min_#2] {#1};%
  \addplot[draw=none,fill=#3,fill opacity=0.18,forget plot]
    fill between[of=#5-upper and #5-lower];%
  \addplot[#3,#4]
    table[col sep=comma,x=num_class_breaks,y=mean_#2] {#1};%
}

\newcommand{\skipLegend}{%
  \begingroup\small\hspace{5.4em}
  \tikz[baseline=-0.55ex]{\draw[clrNoskip,solid,line width=0.8pt]
    plot[mark=*,mark size=1.35pt] coordinates {(0,0) (0.65,0)};}
  \ skip connection absent \hspace{3em}
  \tikz[baseline=-0.55ex]{\draw[clrSkip,dashed,line width=0.8pt]
    plot[mark=square*,mark size=1.25pt] coordinates {(0,0) (0.65,0)};}
  \ skip connection present
  \endgroup
}

\DeclareRobustCommand{\twoEquationTitle}[1]{%
  \(\begin{aligned} n&=(r_0/2)^2\\m&=#1\cdot r_0\end{aligned}\)%
}

\newcommand{\lossPanel}[5]{%
  \begin{tikzpicture}
    \begin{axis}[
      experiment axis, loss symlog y, scaled y ticks=false,
      ylabel={\shortstack{regularized loss (relative excess)\\biases #4 regularized}},
      title={\twoEquationTitle{#5}},
      /pgfplots/shared bounds/loss/all/.try,
      /pgfplots/manual bounds/loss/all/.try,
      #3,
    ]
      \rangeSeries{plot_np_data/panels/loss_#1_F#2.csv}{relative_excess_regularized_objective_noskip}{clrNoskip}
        {solid,mark=*,mark size=1.35pt}{loss-#1-#2-noskip}
      \rangeSeries{plot_np_data/panels/loss_#1_F#2.csv}{relative_excess_regularized_objective_skip}{clrSkip}
        {dashed,mark=square*,mark size=1.25pt}{loss-#1-#2-skip}
      \addplot[black!55,densely dotted,mark=none,forget plot]
        coordinates {(4,0) (12,0)};
    \end{axis}
  \end{tikzpicture}%
}

\newcommand{\fregloss}[2]{
\begin{figure}[t]
  \centering\skipLegend\par\medskip
  \setlength{\tabcolsep}{0.2pt}\renewcommand{\arraystretch}{0.90}
  \begin{tabular}{@{}ccccc@{}}
    \lossPanel{adam_weights}{1}{}{not}{1}&
    \lossPanel{adam_weights}{2}{hide row y axis}{not}{2}&
    \lossPanel{adam_weights}{3}{hide row y axis}{not}{3}&
    \lossPanel{adam_weights}{4}{hide row y axis}{not}{4}&
    \lossPanel{adam_weights}{5}{hide row y axis}{not}{5}\\
    \lossPanel{adam_weights_biases}{1}{}{}{1}&
    \lossPanel{adam_weights_biases}{2}{hide row y axis}{}{2}&
    \lossPanel{adam_weights_biases}{3}{hide row y axis}{}{3}&
    \lossPanel{adam_weights_biases}{4}{hide row y axis}{}{4}&
    \lossPanel{adam_weights_biases}{5}{hide row y axis}{}{5}
  \end{tabular}
  \caption{{#1}We plot the relative excess of the $\ell_2$-regularized logistic loss~$L$ at the end of training, i.e., $L / L^\star - 1$, where $L^\star$~is the global minimum for the experiment setting, computed via the convex problem characterized in \cref{th:min.reg.loss}.  Rows indicate whether biases are regularized, columns give the overparameterization factor $\omega = m / r_0$, and the two curves indicate whether the skip connection is absent or present.  For each point, we use the runs, among 60 RNG seeds, that reached a positive margin: the curve gives their mean, and the shaded region spans their smallest and largest outcomes (positive-margin failure rates are reported in \Cref{f:inter.fail}).  The shared vertical axis is logarithmic above~$10^{-3}$ and linear below it.  \textbf{Observations:}  With a skip connection, training consistently reaches or closely approaches the global minimum; without one, the best runs reach the minimum but the means and worst outcomes typically remain substantially above it. This is consistent with our theoretical result that a skip connection rules out suboptimal positive-margin stationary points, and with the counterexamples showing that such points can occur without it.}
  \label{#2}
\end{figure}
}

\ifopt
\else

\section{Experiments}

The main outcomes of our numerical experiments are plotted in \Cref{f:reg.loss:main}.  In the interval $(-1, 1)$, we sample $r_0$~class breaks and use them to label a random dataset of $n \coloneqq (r_0 / 2)^2$ inputs, while satisfying \cref{ass:data}.  For an overparameterization hyperparameter~$\omega$, we then randomly initialize a network of width $m \coloneqq \omega \cdot r_0$, and train it by Adam~\citep{KingmaB14} to get close to a Clarke stationary point of the $\ell_2$-regularized logistic loss with small regularization strength~$\lambda$.
\Cref{s:exp:app} gives full details and additional sparsity and positive-margin results.  Since the global minimum is computed independently through the convex characterization in \cref{th:min.reg.loss}, the reported excess loss directly tests whether training reaches the theoretically characterized optimum.

\fregloss{}{f:reg.loss:main}

\fi

\ifopt
\else

\section{Discussion}
\label{s:disc}

A function-space view of norm-controlled shallow ReLU networks begins with characterizations of the minimal parameter norm needed to represent a prescribed function. For univariate inputs, \citet{savarese2019infinite} related this cost to the total variation of the derivative, while \citet{OngieWSS20} developed a multivariate analogue using Radon-transform methods. \citet{parhi2021banach} placed such results in a broader Banach-space representor-theorem framework, under which shallow networks arise as ridge-spline solutions of variational problems. These works characterize the regularizer induced in function space, but do not by themselves determine which function is selected by finite classification constraints. In one dimension, \citet{BoursierF23} showed that including hidden biases in the parameter norm spatially weights the variation penalty and promotes sparse, and in their regression setting generically unique, interpolants, whereas omitting the biases permits nonsparse solutions. Our work builds on these representor-norm formulas, but resolves the corresponding classification problems for both choices of bias regularization and both with and without a free affine skip connection.

For finite-sample interpolation, \citet{Hanin21} showed that minimal-norm univariate ReLU regression exhibits a nearest-neighbor curvature extrapolation rule and established associated generalization guarantees. \citet{DebarreDUF22} studied the related, but distinct, objective of obtaining sparsest one-dimensional piecewise-linear regressors, while \citet{kim2025exploring} connected regularized neural-network landscapes to convex formulations. In complementary multivariate directions, \citet{ArdeshirHS23} investigated the intrinsic dimensionality and generalization properties of the $\mathrm{R}$-norm inductive bias, and \citet{ParkPW23} established an asymptotic connection between regularized empirical-risk minimization and minimal-norm shallow-network interpolation as the sample size and width grow and the regularization vanishes. The latter contrasts with our fixed-dataset characterization: in our classification setting, margin-normalized small-regularization solutions may select only a strict subset of the minimal-norm interpolators.

\Citet{ErgenP21} gave a convex-geometric treatment of norm-regularized finite-width two-layer ReLU networks. For univariate binary classification with hinge loss, they showed that an optimum can be chosen from data-aligned extreme-point ReLU features, reducing training to a finite-dimensional minimal-$\ell_1$-norm SVM. Our results are complementary: we characterize the entire set of globally optimal classifier functions, rather than the existence of a sparse extreme-point representative, and show that in the unpenalized-bias case optimal functions may have kinks away from training inputs. We also treat penalized hidden biases, a free affine skip connection, the stationary-point landscape of the original nonconvex parameterization, regularized logistic loss, and its small-regularization limit.

Among works focused on gradient dynamics and the complexity of the learned classifier, the closest is \citet{SafranVL22}: they studied gradient flow for shallow univariate ReLU classifiers and bounded the number of linear regions of the limiting classifier under distributional and initialization assumptions. \citet{KornowskiYS23} related the piecewise-linear complexity of learned ReLU functions to tempered and benign overfitting. Other works use minimal-norm structure to study statistical or algorithmic consequences: \citet{ZenoOBWS23} derived closed-form shallow denoisers and associated univariate and multivariate geometric properties; \citet{JoshiVS24} showed that minimal-norm univariate ReLU regression can exhibit tempered or catastrophic overfitting depending on the risk criterion; \citet{Smorodinsky26} constructed provable training-data privacy attacks; and \citet{ElHarzliNKCGL26} gave sufficient conditions for the algorithmic stability of minimal-norm interpolating deep ReLU networks. Complementarily, \citet{NakhlehN25} showed that suitable $\ell_p$ quasi-norm objectives have globally sparsest interpolating shallow ReLU networks as their global minimizers. Our deterministic results isolate the positive-margin classification geometry underlying such questions: for arbitrary finite datasets satisfying only a mild assumption, we characterize all globally optimal functions, determine exactly when sparsity and uniqueness occur, and, with a skip connection, upgrade first-order stationarity to global optimality. Thus the univariate setting permits a simultaneous and exact treatment of function-space selection, parameter-space optimization, and the regularized-to-constrained limit that is not currently available in broader architectures.

Our landscape results also connect to theoretical explanations of residual architectures. \citet{HardtM17} showed that identity parameterizations can give deep linear residual networks favorable optimization properties; \citet{OrhanP18} argued that skip connections remove overlap, elimination, and linear-dependence singularities; and \citet{BarzilaiGGB23} and \citet{BelferGGB24} analyzed how residual connections modify convolutional and deep neural tangent kernels. \citet{MacDonaldVSL23} provided a layerwise framework in which skip connections propagate favorable curvature and regularity properties, and identified a mechanism for accelerating gradient optimization. These works explain conditioning, spectral behavior, or trainability in deep models. By contrast, we give an exact finite-network landscape statement: a free affine skip leaves the globally optimal functions unchanged but makes every KKT point of the constrained problem, and every positive-margin stationary point of the regularized problem, globally optimal. This separation of representational and optimizational effects provides a basic setting in which the benefit of a skip connection can be established globally rather than locally or asymptotically.

\fi

\section{Conclusion}

\ifopt

Although our main results are limited to univariate two-layer ReLU classification, this fundamental and clean setting has allowed us to obtain the complete characterizations of the minimizers in function space, and to prove that the skip connection eliminates bad local minima, both with only a very mild assumption on the dataset.

Important problems that remain open include characterizing KKT points and Clarke stationary points when the skip connection is absent, and extending our results to the multivariate case.  Both have been challenging, with only partial progress so far on the former~\citep[e.g.,][]{SafranVL22,KornowskiYS23,Smorodinsky26}, and on the latter in the regression setting~\citep[e.g.,][]{OngieWSS20,ArdeshirHS23,kim2025exploring}.

\else

The preceding discussion places our results within broader efforts to understand the function-space bias, statistical consequences, and optimization landscapes induced by norm control. Although we focus on univariate classification by two-layer ReLU networks, this tractable setting allows all three aspects to be resolved together: we completely characterize the optimal classifiers in function space; determine how penalizing biases changes their sparsity and uniqueness; and show that a free affine skip connection eliminates suboptimal KKT points, and hence local minima, without changing the globally optimal functions. These conclusions require only a mild assumption on the dataset.

Natural next steps are to characterize KKT and Clarke stationary points more fully when the skip connection is absent, and to extend the geometric and landscape results to multivariate inputs and deeper networks. The exact univariate theory developed here provides both a benchmark for such extensions and a collection of mechanisms---including label-switch geometry, bias-induced spatial weighting, and skip-enabled descent directions---whose higher-dimensional counterparts remain to be understood.

A complementary empirical direction is to study which of the possible no-skip stationary points are selected by gradient-based dynamics. A natural benchmark is unregularized logistic-loss gradient flow in the homogeneous, bias-penalized architecture, for which implicit-bias results relate long-time normalized trajectories to KKT points of the corresponding margin problem~\citep{LyuL20,JiT20}. Comparing these trajectories with our exact global minimizer could determine whether the dynamics select the global solution or one of the suboptimal KKT points that may occur without a skip connection. Such an experiment is challenging, however, because directional convergence is asymptotic and can be extremely slow.

\fi

\ifopt\else

\ifarxiv
\else
\clearpage
\fi
\phantomsection
\addcontentsline{toc}{section}{AI use statement}
\subsection*{AI use statement}

In this work, we used generative AI tools to: aid or polish writing in around 20\%\ of the paper; find several related works; create and edit the code for running the experiments and plotting their results, and for depicting the optimization landscapes.  We take responsibility for the final content of this work, including the code produced with the aid of generative AI, which we have reviewed and tested.

\phantomsection
\addcontentsline{toc}{section}{Reproducibility statement}
\subsection*{Reproducibility statement}

For the theoretical results in this work, we clarified all nonstandard notation, defined all new notions, stated all required assumptions, and provided all nontrivial proofs (either in the main part of the paper, or in the appendix).
For performing the experiments and plotting their results, and for producing the optimization landscape depictions in \Cref{f:land.norm:main,f:land.reg}, we provided complete code and instructions how to run it
\ificlrfinal
at {\footnotesize \url{https://github.com/englert-m/min-norm-univariate-classification}};
\else
in the supplementary materials;
\fi
moreover, doing so does not require specialized software or hardware, and uses a relatively small amount of compute.


\fi

\ifopt\else
\ificlrfinal
\phantomsection
\addcontentsline{toc}{section}{Acknowledgments}
\subsubsection*{Acknowledgments}

We thank Filip Mazowiecki for helpful discussions.

We acknowledge partial support by the European Research Council (ERC grants BUKA, number 101126229, and INFSYS, number 950398), the Engineering and Physical Sciences Research Council (EPSRC, project reference EP/W524645/1), the Polish National Science Centre (SONATA BIS-12 grant number 2022/46/E/ST6/00230), and the Centre for Discrete Mathematics and its Applications (DIMAP) at the University of Warwick.  Also we acknowledge the use of the Batch Compute System in the Department of Computer Science at the University of Warwick, and associated support services.
\fi
\fi

\phantomsection
\addcontentsline{toc}{section}{References}
\bibliography{main}
\ifopt\else\ifarxiv
\bibliographystyle{plainnat}
\else
\bibliographystyle{iclr2027_conference}
\fi\fi

\clearpage
\appendix

\addcontentsline{toc}{section}{Appendix} 
\part{Appendix} 
\parttoc 

\ifopt
\clearpage
\section{Illustrative figure for properties of datasets and functions}

\fminfun{h}
\else
\fi

\clearpage
\section{Overview diagram of proof dependencies}

\begin{figure}[h!]
\centering
\hyphenpenalty=10000\exhyphenpenalty=10000
\begin{tikzpicture}[x=1cm,y=1cm,
  grp/.style   = {draw=black,line width=.4pt,rounded corners=2pt,fill=white,
                  inner sep=3pt,align=center,font=\footnotesize,
                  text width=3.45cm,minimum height=1.85cm},
  ext/.style   = {grp,dashed},
  res/.style   = {grp,fill=black!4},
  panel/.style = {draw=black,dotted,line width=.4pt,rounded corners=2pt},
  ptit/.style  = {font=\footnotesize\itshape,align=center,inner sep=1pt},
  a/.style     = {draw=black,line width=.4pt,rounded corners=8pt,
                  -{Stealth[length=1.9mm,width=1.4mm]}},
  A/.style     = {a},
]
\draw[panel] (0,.75) rectangle (4.13,-10.094);
\draw[panel] (4.63,.75) rectangle (8.76,-10.094);
\draw[panel] (9.26,.75) rectangle (13.4,-10.094);
\node[ptit,text width=4cm] at (2.065,.13) {Only for the\\ interpolation problems};
\node[ptit,text width=4cm] at (6.695,.13) {Common machinery\\ (both main theorems)};
\node[ptit,text width=4cm] at (11.33,.13) {Only for the\\ regularized losses};
\node[ext] (mB) at (6.695,-1.735) {\textbf{\cref{th:repr}}\\[2pt] Prior work: \\ minimal representor norms};
\node[grp] (mD) at (6.695,-4.135) {\textbf{\cref{l:rev,,l:up.down,,l:RR1.RbRb1,,l:bal.b,,l:bal,,l:opp,,l:bias}}\\[2pt] Neuron normal forms and norm identities};
\node[grp] (mE) at (6.695,-6.535) {\textbf{\cref{l:cc,l:st,l:all.P1,l:all.Pb1}}\\[2pt] Failed geometry \\ $\Rightarrow$ descent direction};
\node[grp] (mF) at (6.695,-8.935) {\textbf{\cref{l:attain.int,l:unique.st}}\\[2pt] Attainment and uniqueness};
\draw[a] (mB) -- (mD);  \draw[a] (mD) -- (mE);
\node[ext] (lA) at (2.065,-1.735) {\textbf{\cref{th:loc.min.MFCQ.KKT}}\\[2pt] Prior work: \\ local min.\ $+$ MFCQ \\ $\Rightarrow$ KKT};
\node[grp] (lB) at (2.065,-4.135) {\textbf{\cref{pr:dd.not.KKT}}\\[2pt] Descent direction \\ $\Rightarrow$ not KKT};
\node[grp] (lC) at (2.065,-6.535) {\textbf{\cref{l:sh,l:tight.P1,l:tight.Pb1}}\\[2pt] Switch hugging: geometry and descent};
\node[grp] (lD) at (2.065,-8.935) {\textbf{\cref{l:feas.MFCQ}}\\[2pt] MFCQ holds at every feasible point};
\node[grp] (rB) at (11.33,-1.735) {\textbf{\cref{l:it,l:L1}}\\[2pt] Interval turning: geometry and descent};
\node[grp] (rA) at (11.33,-4.135) {\textbf{\cref{pr:dd.not.Clarke}}\\[2pt] Descent direction \\ $\Rightarrow$ not Clarke stationary};
\node[grp] (rD) at (11.33,-6.535) {\textbf{\cref{l:so}}\\[2pt] Not switch optimal \\ $\Rightarrow$ not Clarke stationary};
\node[grp] (rC) at (11.33,-8.935) {\textbf{\cref{l:O1.Ob1,c:pos.margin,l:attain.r.l}}\\[2pt] Uniqueness, attainment, margin positivity};
\draw[a] (rC) -- (rD);  \draw[a] (rA) -- (rD);
\draw[A] (4.63,-5.335) -- (4.13,-5.335);
\draw[A] (8.76,-5.335) -- (9.26,-5.335);
\node[res,text width=3.92cm,minimum height=2.20cm] (T1) at (2.065,-12.11)
  {\textbf{\cref{th:inter}}\\[2pt] Minimal-norm interpolators: exact characterizations; KKT $\Rightarrow$ minimizer};
\node[res,text width=3.92cm,minimum height=2.20cm] (T6) at (6.695,-12.11)
  {\textbf{\cref{th:min.reg.loss}}\\[2pt] Regularized-loss minimizers: exact characterizations; positive margin $+$ Clarke stationary $\Rightarrow$ minimizer};
\node[res,text width=3.92cm,minimum height=2.20cm] (T7) at (11.33,-12.11)
  {\textbf{\cref{th:lim}}\\[2pt] Margin-normalized approximate Clarke \\ stationary points tend to KKT points};
\draw[A] (2.065,-10.094) -- (2.065,-11.01);
\draw[A] (6.695,-10.094) -- (6.695,-10.552) -- (2.065,-10.552) -- (2.065,-11.01);
\draw[A] (6.695,-10.094) -- (6.695,-11.01);
\draw[A] (11.33,-10.094) -- (11.33,-10.552) -- (6.695,-10.552) -- (6.695,-11.01);
\draw[a] (11.33,-10.094) -- (11.33,-11.01);
\draw[a] (2.065,-13.21) -- (2.065,-13.68) -- (11.33,-13.68) -- (11.33,-13.21);
\node[grp,text width=3.92cm,minimum height=1.90cm] (PA) at (2.065,-15.06)
  {\textbf{\cref{c:inter}}\\[2pt] Consequence: \\ privacy attack};
\node[res,text width=3.92cm,minimum height=1.90cm] (LS) at (6.695,-15.06)
  {\textbf{\cref{th:KKT.not}}\\[2pt] Counterexamples: \\ landscapes without the skip connection};
\node[ext,text width=3.92cm,minimum height=1.90cm] (PW) at (11.33,-15.06)
  {\textbf{\cref{th:appr.KKT,pr:margin}}\\[2pt] Prior work and margin normalization};
\draw[a] (2.065,-13.21) -- (2.065,-14.11);
\draw[a] (2.065,-13.21) -- (2.065,-13.68) -- (6.695,-13.68) -- (6.695,-14.11);
\draw[a] (11.33,-14.11) -- (11.33,-13.21);
\end{tikzpicture}
\caption{Each box lists in bold the results that play a single role, followed by a description of that role.  The three dotted panels separate the machinery used for both \cref{th:inter} and \cref{th:min.reg.loss} from what is needed only for one of them.  An arrow means that what is at its head is proved using what is at its tail.  Dashed boxes indicate results that are quoted or derived from prior work, and shaded boxes indicate the main theorems.}
\end{figure}

\clearpage
\section{Background on the Clarke subdifferential, KKT points, and MFCQ}
\label{s:Clarke.KKT.MFCQ}

For the reader's convenience, in this section we review a number of standard definitions and results (and for completeness we prove some of them) from nonsmooth analysis and first-order optimality, that we use in this work.  The source of nonsmoothness in our setting is the ReLU nonlinearity, which is not differentiable at~$0$.

\paragraph{Clarke subdifferential and Clarke directional derivative.}

Suppose a function $F \colon \mathbb{R}^N \to \mathbb{R}$ is \emph{locally Lipschitz}, i.e., every point $v \in \mathbb{R}^N$ has a neighborhood~$U$ such that $F$~is Lipschitz continuous on~$U$.  By Rademacher's theorem \citep[see, e.g.,][Theorem~9.1.2]{borweinlewis}, then $F$~is differentiable almost everywhere.  The Clarke subdifferential of~$F$ at a point~$v$ is the convex hull
\[\partial F(v) \coloneqq
  \conv \Bigl\{\lim_{i \to \infty} \nabla F\bigl(u^{(i)}\bigr)
               \Bigm|
               \lim_{i \to \infty} u^{(i)} = v \text{ and }
               \nabla F\bigl(u^{(i)}\bigr) \text{ exists for all } i
        \Bigr\}.\]
Its members are the subgradients, it is nonempty and compact for all~$v$, and equals the singleton $\{\nabla F(v)\}$ if $F$~is continuously differentiable at~$v$ \citep{clarke1975generalized}.  The Clarke directional derivative of~$F$ at~$v$ along~$u$ is then the maximum
\[\mathrm{D}^\circ F(v)[u] \coloneqq
  \max \bigl\{\langle \gamma, u \rangle
              \bigm|
              \gamma \in \partial F(v)
       \bigr\},\]
and it equals the ordinary directional derivative $\mathrm{D} F(v)[u]$ if $F$~is continuously differentiable at~$v$.

For example, the Clarke subdifferential of the ReLU function is
\[\partial \sigma(z) =
  \begin{cases}
  \{0\}  & \text{if } z < 0, \\
  [0, 1] & \text{if } z = 0, \\
  \{1\}  & \text{if } z > 0,
  \end{cases}\]
so its Clarke directional derivatives at~$0$ along~$-1$ and~$1$ are
\begin{align}
\mathrm{D}^\circ \sigma(0)[-1] & = 0 &
\mathrm{D}^\circ \sigma(0)[1]  & = 1.
\end{align}

\paragraph{KKT points.}

Suppose also $G_1, \dots, G_J \colon \mathbb{R}^N \to \mathbb{R}$ are locally Lipschitz.  Following \citet[section~2.2]{DuttaDTA13}, we have that $v \in \mathbb{R}^N$ is a Karush-Kuhn-Tucker (KKT) point of the problem
\begin{equation}
\inf_v F(v)
\quad \text{such that }
\forall j \in [J], \, G_j(v) \leq 0
\tag{$\mathrm{Q}$}\label[prob]{eq:Q}
\end{equation}
if and only if there exist multipliers $\mu_1, \dots, \mu_J \geq 0$ such that:
\begin{description}[labelwidth=\widthof{\textbf{(complementary slackness)}},align=right]
\item[(feasibility)]
$\forall j \in [J], \, G_j(v) \leq 0$;
\item[(equilibrium inclusion)]
$\mathbf{0} \in \partial F(v) + \sum_{j = 1}^J \mu_j \, \partial G_j(v)$;
\item[(complementary slackness)]
$\forall j \in [J], \, \mu_j \, G_j(v) = 0$.
\end{description}

The following basic proposition provides a sufficient condition for impossibility that a feasible point is KKT: existence of a direction along which the derivative of the objective is negative and the derivative of each tight constraint is nonpositive.

\begin{prop}
\label{pr:dd.not.KKT}
Suppose $v \in \mathbb{R}^N$ is feasible for \cref{eq:Q}.  If there exists $u \in \mathbb{R}^N$ such that $\mathrm{D}^\circ F(v)[u] < 0$ and $\mathrm{D}^\circ G_j(v)[u] \leq 0$ for all $j \in [J]$ with $G_j(v) = 0$, then $v$~is not a KKT point of \cref{eq:Q}.
\end{prop}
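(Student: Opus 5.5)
We argue by contradiction. Suppose $v$ is a KKT point, so there are multipliers $\mu_1, \dots, \mu_J \geq 0$ satisfying the equilibrium inclusion and complementary slackness. The inclusion gives subgradients $\gamma_0 \in \partial F(v)$ and $\gamma_j \in \partial G_j(v)$ with $\mathbf{0} = \gamma_0 + \sum_{j=1}^J \mu_j \gamma_j$. Take the inner product of this identity with the given direction~$u$:
\[0 = \langle \gamma_0, u \rangle + \sum_{j = 1}^J \mu_j \, \langle \gamma_j, u \rangle.\]

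Next we bound each term using the Clarke directional derivative, which is the maximum of $\langle \gamma, u\rangle$ over the subdifferential.
\begin{itemize}
\item For the objective, $\langle \gamma_0, u \rangle \leq \mathrm{D}^\circ F(v)[u] < 0$.
\item For each $j$ with $G_j(v) < 0$, complementary slackness forces $\mu_j = 0$, so its term vanishes.
\item For each tight $j$ (that is, $G_j(v) = 0$), we have $\mu_j \geq 0$ and $\langle \gamma_j, u \rangle \leq \mathrm{D}^\circ G_j(v)[u] \leq 0$, so $\mu_j \langle \gamma_j, u\rangle \leq 0$.
\end{itemize}
Summing, the right-hand side of the display is strictly negative, contradicting that it equals~$0$. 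Hence $v$ is not a KKT point.

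There is no real obstacle; the argument uses only the definitions above. One point needs care: the inclusion involves the Minkowski sum of set-valued subdifferentials, so it must first be unpacked into explicit subgradients $\gamma_0, \gamma_j$ before taking inner products. Once that is done, each $\langle \gamma, u\rangle$ is bounded by the corresponding Clarke directional derivative, which is exactly what the hypotheses control. Feasibility of~$v$ is what makes the split into tight and non-tight constraints exhaustive.
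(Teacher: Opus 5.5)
Your proposal is correct and follows the paper's proof essentially step for step. The paper likewise unpacks the equilibrium inclusion into explicit subgradients, uses complementary slackness to discard the non-tight constraints, and bounds each inner product with $u$ by the corresponding Clarke directional derivative to reach the contradiction $0 < 0$.
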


\begin{proof}
For a contradiction, suppose that $v$~is a KKT point of \cref{eq:Q}.  Then there exist multipliers $\mu_1, \dots, \mu_J \geq 0$ such that $\mu_j \, G_j(v) = 0$ for all $j \in [J]$, and $\mathbf{0} \in \partial F(v) + \sum_{j = 1}^J \mu_j \, \partial G_j(v)$.  Hence $\mathbf{0} \in \partial F(v) + \sum_{G_j(v) = 0} \mu_j \, \partial G_j(v)$, so there exist a subgradient $\varphi \in \partial F(v)$ and subgradients $\gamma_j \in \partial G_j(v)$ for each $j \in [J]$ with $G_j(v) = 0$, such that $\mathbf{0} = \varphi + \sum_{G_j(v) = 0} \mu_j \gamma_j$.  But then 
\begin{align}
0 & =    \Biggl\langle \varphi + \sum_{G_j(v) = 0} \mu_j \gamma_j, u \Biggr\rangle \\
  & =    \langle \varphi, u \rangle
       + \sum_{G_j(v) = 0} \mu_j \langle \gamma_j, u \rangle \\
  & \leq \mathrm{D}^\circ F(v)[u]
       + \sum_{G_j(v) = 0} \mu_j \mathrm{D}^\circ G_j(v)[u] \\
  & <    0.
\ifopt\tag*{\jmlrBlackBox}\else\qedhere\fi
\end{align}
\ifopt\let\jmlrBlackBox\relax\else\fi
\end{proof}

\paragraph{MFCQ.}

Moreover, we say that a feasible point $v \in \mathbb{R}^N$ satisfies Mangasarian-Fromovitz constraint qualification (MFCQ) if and only if there exists $u \in \mathbb{R}^N$ such that $\mathrm{D}^\circ G_j(v)[u] < 0$ for all $j \in [J]$ with $G_j(v) = 0$.

\begin{thm}[corollary of, e.g., {\citealt[Theorem~10.61]{GiorgiJN2023}}]
\label{th:loc.min.MFCQ.KKT}
If a local minimizer of \cref{eq:Q} satisfies MFCQ, then it is a KKT point.
\end{thm}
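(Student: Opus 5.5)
The plan is to go through the nonsmooth Fritz John conditions and then use MFCQ to normalize the multiplier of the objective. Fix a local minimizer $v$ of \cref{eq:Q} that satisfies MFCQ, and let $A \coloneqq \{j \in [J] \mid G_j(v) = 0\}$ be the active set. Set $\mu_j \coloneqq 0$ for every $j \notin A$; complementary slackness then holds automatically, and feasibility is given. What remains is the equilibrium inclusion, with multipliers supported on~$A$.

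\textbf{Step 1 (reduction to an unconstrained max-function).} Define the locally Lipschitz function
\[\Phi(w) \coloneqq \max\bigl(\{F(w) - F(v)\} \cup \{G_j(w) \mid j \in A\}\bigr).\]
Then $\Phi(v) = 0$, and I claim that $v$ is a local minimizer of~$\Phi$. By continuity, the inactive constraints satisfy $G_j < 0$ on a neighbourhood $U$ of~$v$; shrink $U$ so that $F \ge F(v)$ on the feasible points of~$U$. If some $w \in U$ had $\Phi(w) < 0$, then $w$ would satisfy all active constraints strictly and the inactive ones by choice of~$U$, so $w$ would be feasible. It would also satisfy $F(w) < F(v)$, contradicting local minimality.

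\textbf{Step 2 (Fritz John).} Apply two standard rules of Clarke's calculus (Clarke 1983, Propositions~2.3.2 and~2.3.12). The Fermat rule gives $\mathbf{0} \in \partial\Phi(v)$. The max rule gives that $\partial\Phi(v)$ is contained in the convex hull of the subdifferentials of the pieces active at~$v$, and at $v$ every piece is active. Hence there exist $\mu_0, \mu_j \ge 0$ for $j \in A$, summing to~$1$, together with subgradients $\varphi \in \partial F(v)$ and $\gamma_j \in \partial G_j(v)$, such that
\[\mathbf{0} = \mu_0 \varphi + \sum_{j \in A} \mu_j \gamma_j .\]
This step relies on the max rule for Clarke subdifferentials. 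I expect it to be the only nontrivial ingredient, but it is classical and I would cite it rather than reprove it.

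\textbf{Step 3 (MFCQ forces $\mu_0 > 0$).} Suppose $\mu_0 = 0$. Then some $\mu_j > 0$ with $j \in A$. Take the direction $u$ provided by MFCQ and pair it with the identity above, exactly as in the proof of \cref{pr:dd.not.KKT}:
\[0 = \sum_{j \in A} \mu_j \langle \gamma_j, u \rangle \le \sum_{j \in A} \mu_j \, \mathrm{D}^\circ G_j(v)[u] < 0,\]
which is a contradiction. Therefore $\mu_0 > 0$. Dividing the identity by~$\mu_0$ yields multipliers $\mu_j / \mu_0 \ge 0$ that satisfy the equilibrium inclusion, so $v$ is a KKT point.
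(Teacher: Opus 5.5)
Your proof is correct. The paper gives no argument of its own for this statement. It imports the result directly from the textbook theorem cited in the statement.

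You instead derive it from two basic rules of Clarke's calculus:
\begin{enumerate}
\item the reduction to the max-function $\Phi$, whose local minimality at~$v$ you verify correctly, including the handling of inactive constraints by continuity and the degenerate case $A = \emptyset$;
\item the Fermat and max rules, which give a Fritz John relation with multipliers in the simplex;
\item MFCQ, which excludes the abnormal case $\mu_0 = 0$ via exactly the pairing inequality used in \cref{pr:dd.not.KKT}.
\end{enumerate}
The citation is more economical and delegates the precise form of the nonsmooth KKT conditions to the reference. Your route is self-contained modulo two standard facts. It also makes visible that MFCQ enters only to normalize the objective's multiplier, and it uses nothing beyond local Lipschitz continuity and MFCQ at the single point~$v$.

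One small point would make the chain airtight. The paper defines $\partial F(v)$ as the convex hull of limiting gradients, and $\mathrm{D}^\circ F(v)[u]$ as the support function of that set. Clarke's Propositions~2.3.2 and~2.3.12, by contrast, are stated for the generalized gradient defined through the generalized directional derivative. The two notions agree in $\mathbb{R}^N$ by Clarke's Theorem~2.5.1, together with Proposition~2.1.2 for the directional derivative. Citing these explicitly would justify applying Clarke's rules to the paper's definitions.
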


\paragraph{Approximate KKT points and their limits.}

Following \citet[appendix~C.1]{LyuL20}, supposing that $F, G_1, \dots, G_J \colon \mathbb{R}^N \to \mathbb{R}$ are locally Lipschitz and $\varepsilon, \delta > 0$, we say that $v \in \mathbb{R}^N$ is an $\varepsilon, \delta$-KKT point of \cref{eq:Q} if and only if there exist multipliers $\mu_1, \dots, \mu_J \geq 0$ such that:%
\footnote{We write $\mathbb{B}_2^N$~for the closed Euclidean unit ball in~$\mathbb{R}^N$.}
\begin{description}[labelwidth=\widthof{\textbf{(approximate complementary slackness)}},align=right]
\item[(feasibility)]
$\forall j \in [J], \, G_j(v) \leq 0$;
\item[(approximate equilibrium inclusion)]
$\varepsilon \mathbb{B}_2^N \cap
 \bigl(\partial F(v) + \sum_{j = 1}^J \mu_j \, \partial G_j(v)\bigr)
 \neq \emptyset$;
\item[(approximate complementary slackness)]
$\sum_{j = 1}^J \mu_j \, G_j(v) \geq -\delta$.
\end{description}

\begin{thm}[corollary of {\citealt[Theorem~3.6]{DuttaDTA13}}]
\label{th:appr.KKT}
Suppose that $v_k \in \mathbb{R}^N$ is an $\varepsilon_k, \delta_k$-KKT point of \cref{eq:Q} for all $k \in \mathbb{N}$, that $v_k \to v_\star$, $\varepsilon_k \to 0$, and $\delta_k \to 0$ as $k \to \infty$, and that $v_\star$~satisfies MFCQ.  Then $v_\star$~is a KKT point of \cref{eq:Q}.
\end{thm}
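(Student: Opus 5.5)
The plan is a direct compactness argument: extract a convergent subsequence of the multipliers, and use MFCQ at $v_\star$ to rule out their blow-up. Throughout I would use two standard facts about the Clarke subdifferential of a locally Lipschitz function \citep{clarke1975generalized}. First, it is locally bounded, since subgradients are bounded by the local Lipschitz constant. Second, its graph is closed: if $u^{(k)} \to v$, $\gamma_k \in \partial G(u^{(k)})$ and $\gamma_k \to \gamma$, then $\gamma \in \partial G(v)$.

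\textbf{Feasibility and inactive multipliers.} Feasibility of $v_\star$ follows from $G_j(v_k) \leq 0$ and continuity of each~$G_j$. Let $A \coloneqq \{j \mid G_j(v_\star) = 0\}$ be the active set. For $j \notin A$ we have $G_j(v_\star) < 0$, so by continuity there is $c > 0$ with $G_j(v_k) \leq -c$ for all large~$k$. Every term of $\sum_j \mu^{(k)}_j G_j(v_k)$ is $\leq 0$, so approximate complementary slackness gives $c \, \mu^{(k)}_j \leq \delta_k$. Hence $\mu^{(k)}_j \to 0$ for every $j \notin A$. By approximate equilibrium inclusion, I would fix $\varphi_k \in \partial F(v_k)$ and $\gamma_{j,k} \in \partial G_j(v_k)$ with
\[\Bigl\lVert \varphi_k + \sum_j \mu^{(k)}_j \gamma_{j,k} \Bigr\rVert \leq \varepsilon_k .\]
By local boundedness, the sequences $\varphi_k$ and $\gamma_{j,k}$ are bounded.

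\textbf{Boundedness of the multipliers (main step).} Suppose, for a contradiction, that $\lVert \mu^{(k)} \rVert \to \infty$ along a subsequence. I would divide the displayed inequality by $\lVert \mu^{(k)} \rVert$ and pass to a further subsequence on which $\mu^{(k)} / \lVert \mu^{(k)} \rVert \to \nu$ and $\gamma_{j,k} \to \gamma_j$. Here $\nu \geq 0$ and $\lVert \nu \rVert = 1$. Moreover $\nu_j = 0$ for $j \notin A$, since those multipliers tend to~$0$. The closed-graph property gives $\gamma_j \in \partial G_j(v_\star)$. The term $\varphi_k / \lVert \mu^{(k)} \rVert$ vanishes, and so does $\varepsilon_k / \lVert \mu^{(k)} \rVert$, so in the limit
\[\sum_{j \in A} \nu_j \gamma_j = \mathbf{0} .\]
Now take the MFCQ direction~$u$ at~$v_\star$. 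Then
\[0 = \sum_{j \in A} \nu_j \langle \gamma_j, u \rangle \leq \sum_{j \in A} \nu_j \, \mathrm{D}^\circ G_j(v_\star)[u] < 0 ,\]
where the strict inequality holds because some $\nu_j$ with $j \in A$ is positive. This contradiction shows that the multipliers are bounded. I expect this to be the only delicate step. The one subtlety is that $\nu$ must have nonzero mass on the active set, and the previous paragraph ensures this.

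\textbf{Passing to the limit.} With bounded multipliers, I would extract a subsequence along which $\mu^{(k)} \to \mu^\star \geq 0$, $\varphi_k \to \varphi$ and $\gamma_{j,k} \to \gamma_j$. Closedness of the graphs gives $\varphi \in \partial F(v_\star)$ and $\gamma_j \in \partial G_j(v_\star)$. Since $\varepsilon_k \to 0$, the limit satisfies $\mathbf{0} = \varphi + \sum_j \mu^\star_j \gamma_j$, which is the equilibrium inclusion at~$v_\star$. Complementary slackness $\mu^\star_j G_j(v_\star) = 0$ holds for every~$j$: for $j \in A$ because $G_j(v_\star) = 0$, and for $j \notin A$ because $\mu^\star_j = 0$ by the first step. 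Therefore $v_\star$ is a KKT point of \cref{eq:Q}.
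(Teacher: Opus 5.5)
Your proof is correct, but it takes a different route from the paper: the paper gives no argument of its own for \cref{th:appr.KKT}. Following \citet{LyuL20}, it simply imports the statement as a corollary of \citet[Theorem~3.6]{DuttaDTA13}. You instead supply the self-contained argument that underlies such results, in three steps.
\begin{itemize}
\item Approximate complementary slackness with $\delta_k \to 0$ forces the inactive multipliers to vanish.
\item MFCQ at $v_\star$, combined with local boundedness and closedness of the graph of the Clarke subdifferential, rules out unbounded multipliers. Normalizing would produce a nontrivial nonnegative combination of active subgradients equal to $\mathbf{0}$, which the MFCQ direction contradicts.
\item A further subsequence passes the approximate equilibrium inclusion to the limit.
\end{itemize}
The citation buys brevity. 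Your route buys independence from having to reconcile the $\varepsilon, \delta$-KKT notion used here with the approximate KKT notion of the cited source. It also makes visible where each hypothesis enters. In particular, $\delta_k \to 0$ is used twice: once to ensure that the normalized limit multiplier $\nu$ has mass on the active set, and once to give exact complementary slackness at $v_\star$.
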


\clearpage
\section{Background on minimal-norm representors}

\begin{table}[t]
\caption{Given a CPA $g \colon \mathbb{R} \to \mathbb{R}$, we consider these four variants of the representor norm minimization problem.  The skip connection is always ``free'', i.e., its parameters $a_0$~and~$b_0$ are never penalized.  Note also that the network width~$m$ is optimized, being determined by the dimension of the parameters vector $\theta = (a_j, w_j, b_j)_{j = 1}^m \in \mathbb{R}^{3 m}$.}
\label{tab:repr}
\centering
\ifopt\else\vspace{2ex}\fi
\begin{tabular}{m{.12\textwidth}m{.36\textwidth}m{.43\textwidth}}
&
\multicolumn{1}{c}{\bfseries biases not penalized}
&
\multicolumn{1}{c}{\bfseries biases penalized}
\\[-1ex]
{\bfseries\shortstack[r]{skip \\ connection \\ absent}}
&
\begin{equation}
\begin{gathered}
\inf_\theta
\frac{1}{2}
\sum_{j = 1}^m (a_j^2 + w_j^2) \\
\text{such that }
f_\theta = g
\end{gathered}
\tag{$\mathrm{R}$}\label[prob]{eq:R}
\end{equation}
&
\begin{equation}
\begin{gathered}
\inf_\theta
\frac{1}{2}
\sum_{j = 1}^m (a_j^2 + w_j^2 + b_j^2) \\
\text{such that }
f_\theta = g
\end{gathered}
\tag{$\mathrm{R^b}$}\label[prob]{eq:Rb}
\end{equation}
\\[-4ex]
{\bfseries\shortstack[r]{skip \\ connection \\ present}}
&
\begin{equation}
\begin{gathered}
\inf_{\theta, a_0, b_0}
\frac{1}{2}
\sum_{j = 1}^m (a_j^2 + w_j^2) \\
\text{such that }
f_{\theta, a_0, b_0} = g
\end{gathered}
\tag{$\mathrm{R_1}$}\label[prob]{eq:R1}
\end{equation}
&
\begin{equation}
\begin{gathered}
\inf_{\theta, a_0, b_0}
\frac{1}{2}
\sum_{j = 1}^m (a_j^2 + w_j^2 + b_j^2) \\
\text{such that }
f_{\theta, a_0, b_0} = g
\end{gathered}
\tag{$\mathrm{R^b_1}$}\label[prob]{eq:Rb1}
\end{equation}
\end{tabular}
\end{table}

After a couple of standard warm-up propositions on the functional expressive power of univariate two-layer ReLU networks, here we collate and state with the same notation the main known results on the representor norm minimization problems in \Cref{tab:repr} (cf.\ question~\ref{q:repr} in \cref{s:intro}), which we build on in this work.

\begin{prop}
\label{pr:skip.expr}
The class of functions representable by a network remains the same without a skip connection, i.e., if $a_0 = 0$ and $b_0 = 0$.
\end{prop}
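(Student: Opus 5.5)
One inclusion is immediate: a network without a skip connection is the special case $a_0 = 0$, $b_0 = 0$ of \eqref{eq:f}. So the plan is to show the converse, namely that every function $f_{\theta, a_0, b_0}$ equals $f_{\theta'}$ for some $\theta'$ of possibly larger width. The idea is to express the affine term $a_0 x + b_0$ itself by a few ReLU neurons and append them to~$\theta$.

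For the linear part I will use the identity $\sigma(z) - \sigma(-z) = z$, already mentioned in the proof sketch of \cref{th:inter}. Taking $z = x$, we have
\[a_0 x = a_0 \, \sigma(x) - a_0 \, \sigma(-x).\]
This is realised by the two neurons $(a, w, b) = (a_0, 1, 0)$ and $(-a_0, -1, 0)$.

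For the constant I will use a neuron with zero hidden weight and positive bias: $\sigma(0 \cdot x + 1) = 1$ for every~$x$. Hence $b_0 = b_0 \, \sigma(0 \cdot x + 1)$, realised by the neuron $(b_0, 0, 1)$.

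Concatenating these three neurons with $(a_j, w_j, b_j)_{j = 1}^m$ gives $\theta' \in \mathbb{R}^{3(m + 3)}$ with $f_{\theta'} = f_{\theta, a_0, b_0}$ pointwise, which proves the claim. No step is a genuine obstacle. The only point to note is that the statement concerns expressivity only, not the parameter norm, so the extra neurons' norm cost is irrelevant here. The width $m$ is free to grow, as in all problems of \Cref{tab:inter,tab:repr}.
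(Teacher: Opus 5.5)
Your proof is correct and uses the same identity $\sigma(z) - \sigma(-z) = z$ as the paper. The paper applies it with $z = a_0 x + b_0$, writing $a_0 x + b_0 = \sigma(a_0 x + b_0) - \sigma(-a_0 x - b_0)$, so the whole affine term is absorbed by two neurons and no separate zero-weight neuron is needed for the constant.
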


\begin{proof}
Observe that
$a_0 x + b_0 =
 \sigma( a_0 x + b_0) -
 \sigma(-a_0 x - b_0)$.
\end{proof}

\begin{prop}
\label{pr:repr}
A function from~$\mathbb{R}$ to~$\mathbb{R}$ is representable by a two-layer ReLU network if and only if it is CPA.
\end{prop}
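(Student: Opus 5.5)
The proof has two directions, and I would handle them separately.

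\emph{Every network output is CPA.} Each summand $a_j \, \sigma(w_j x + b_j)$ is continuous and piecewise affine:
\begin{itemize}
\item if $w_j \neq 0$, it has at most one kink, located at $-b_j / w_j$;
\item if $w_j = 0$, it is the constant $a_j \, \sigma(b_j)$.
\end{itemize}
The skip term $a_0 x + b_0$ is affine. A finite sum of continuous functions is continuous. The union of the finitely many kink locations splits $\mathbb{R}$ into finitely many intervals, and on each of them every summand is affine, so the sum is affine there too. Hence $f_{\theta, a_0, b_0}$ is CPA.

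\emph{Every CPA is a network output.} Let $g$ be CPA with kinks $p_1 < \dots < p_q$ and slopes $s_0, \dots, s_q$. I would take
\[
h(x) \coloneqq g(p_1) + s_0 \, (x - p_1) + \sum_{k = 1}^q (s_k - s_{k - 1}) \, \sigma(x - p_k),
\]
and if $q = 0$, simply $h(x) \coloneqq g(0) + s_0 x$.

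To check that $h = g$, I would compare them on each piece. Both are continuous. On each piece $(p_k, p_{k + 1})$, the slope of $h$ telescopes to
\[
s_0 + \sum_{k' \leq k} (s_{k'} - s_{k' - 1}) = s_k,
\]
which matches the slope of $g$. On $(-\infty, p_1]$ both functions are affine with slope $s_0$ and agree at $p_1$. Continuity then propagates this agreement across every kink, so $h = g$ everywhere.

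The formula for $h$ is a network with neurons $a_k = s_k - s_{k - 1}$, $w_k = 1$, $b_k = -p_k$, plus the affine part $s_0 x + g(p_1) - s_0 p_1$. That affine part can serve as the skip connection. If the skip connection must be absent, \cref{pr:skip.expr} lets me rewrite it as $\sigma(a_0 x + b_0) - \sigma(-a_0 x - b_0)$, i.e. two extra neurons.

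There is no real obstacle here. The only points needing care are the degenerate cases ($q = 0$ and neurons with $w_j = 0$) and noting that the rewriting in \cref{pr:skip.expr} removes the skip connection.
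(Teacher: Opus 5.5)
Your proof is correct and matches the paper's. The converse uses exactly the same representation $g(x) = s_0 (x - p_1) + g(p_1) + \sum_{k=1}^q (s_k - s_{k-1})\,\sigma(x - p_k)$, and your breakpoint-union argument for the forward direction is just an unpacked version of the paper's closure argument (constants and the identity are CPA, and CPAs are closed under linear combinations and under applying the ReLU), while your handling of $q = 0$ and of removing the skip connection via \cref{pr:skip.expr} fills in small details the paper leaves implicit.
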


\begin{proof}
For every two-layer ReLU network, the function it represents is CPA since this class contains constant functions and the identity function, and is closed under linear combinations and under applying the ReLU nonlinearity.

Suppose $g \colon \mathbb{R} \to \mathbb{R}$ is CPA, with kinks $(p_k)_{k = 1}^q$ and slopes $(s_k)_{k = 0}^q$.  Then we have, for all $x \in \mathbb{R}$,
\begin{align}
g(x) & =
s_0 (x - p_1) + g(p_1) +
\sum_{k = 1}^q (s_k - s_{k - 1}) \, \sigma(x - p_k).
\ifopt\tag*{\jmlrBlackBox}\else\qedhere\fi
\end{align}
\ifopt\let\jmlrBlackBox\relax\else\fi
\end{proof}

\begin{thm}%
\ifopt
\hspace{-.5em}
\textup{\textbf{(corollary of {\citealt[Theorem~3.1]{savarese2019infinite}}, {\citealt[Lemma~1]{OngieWSS20}}, and {\citealt[Theorem~4]{BoursierF23}})}}
\hspace{.5em}
\else%
[corollary of {\citealt[Theorem~3.1]{savarese2019infinite}}, {\citealt[Lemma~1]{OngieWSS20}}, and {\citealt[Theorem~4]{BoursierF23}}]
\fi
\label{th:repr}
The minimal norms of representors of a CPA $g \colon \mathbb{R} \to \mathbb{R}$ with kinks $(p_k)_{k = 1}^q$ and slopes $(s_k)_{k = 0}^q$ are:
\begin{description}
\item[\cref{eq:R}:]
$\max \bigl\{\sum_{k = 1}^q \lvert s_k - s_{k - 1} \rvert,
             \lvert s_0 + s_q \rvert\bigr\}$;
\item[\cref{eq:R1}:]
$\sum_{k = 1}^q \lvert s_k - s_{k - 1} \rvert$;
\item[\cref{eq:Rb}:]
$\begin{aligned}[t]
 & \textstyle \sum_{k = 1}^q \sqrt{1 + p_k^2} \, \lvert s_k - s_{k - 1} \rvert \\
 & \textstyle + \inf_{(\varphi_k)_{k = 1}^q \in [-1, 1]^q}
                h\Bigl(\sum_{k = 1}^q \varphi_k (s_k - s_{k - 1})
                       - (s_0 + s_q), \\
 & \textstyle \qquad\qquad\qquad\qquad\qquad
                       \sum_{k = 1}^q
                       (\lvert p_k \rvert - \varphi_k p_k) (s_k - s_{k - 1})
                       - 2 g(0)\Bigr) \\
& \text{where }
  h(z_1, z_2) \coloneqq
  \begin{cases}
  \sqrt{z_1^2 + z_2^2}
  & \text{if } \lvert z_1 \rvert \geq \sqrt{3} \, \lvert z_2 \rvert, \\
  \frac{\sqrt{3}}{2} \lvert z_1 \rvert + \frac{1}{2} \lvert z_2 \rvert
  & \text{if } \lvert z_1 \rvert <    \sqrt{3} \, \lvert z_2 \rvert;
  \end{cases}
 \end{aligned}$
\item[\cref{eq:Rb1}:]
$\sum_{k = 1}^q \sqrt{1 + p_k^2} \, \lvert s_k - s_{k - 1} \rvert$.
\end{description}
Moreover, for \cref{eq:R1,,eq:Rb,,eq:Rb1}, the minima are attained; and if \cref{eq:R} is modified to
\[\inf_{\theta, b_0}
  \frac{1}{2}
  \sum_{j = 1}^m (a_j^2 + w_j^2)
  \quad \text{such that }
  f_{\theta, 0, b_0} = g,\]
i.e., a free output bias is allowed, then the minimum is unchanged and it is attained.
\end{thm}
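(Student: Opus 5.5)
The four formulas would be derived from two elementary facts about single neurons, with the prior results cited in the heading of \cref{th:repr} supplying the delicate no-skip, penalized-bias case. The first fact is a per-neuron lower bound. A neuron $a_j\sigma(w_jx+b_j)$ with $w_j\neq 0$ places a kink at $p=-b_j/w_j$ with slope change $a_j\lvert w_j\rvert$. When biases are not penalized, AM--GM gives $\tfrac12(a_j^2+w_j^2)\ge\lvert a_jw_j\rvert$. When biases are penalized, the same argument gives $\tfrac12(a_j^2+w_j^2+b_j^2)\ge \lvert a_j\rvert\sqrt{w_j^2+b_j^2}=\sqrt{1+p^2}\,\lvert a_jw_j\rvert$. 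In both cases equality holds after balancing by the rescaling $(a_j,w_j,b_j)\mapsto(a_j/c,cw_j,cb_j)$, which leaves the function unchanged. Neurons with $w_j=0$ contribute only constants and can be deleted without increasing the norm.

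For \cref{eq:R1,eq:Rb1}, I would group the neurons of any representor by kink location. Since $f_{\theta,a_0,b_0}=g$, the slope changes of the neurons at $p_k$ must sum to $s_k-s_{k-1}$, and those at any other location must sum to zero. The triangle inequality together with the per-neuron bound then yields the lower bounds $\sum_k\lvert s_k-s_{k-1}\rvert$ and $\sum_k\sqrt{1+p_k^2}\lvert s_k-s_{k-1}\rvert$, respectively. For the upper bound I would take the representation from the proof of \cref{pr:repr}, balance each neuron, and let the free skip absorb the affine part $s_0(x-p_1)+g(p_1)$. This shows attainment.

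For \cref{eq:R}, the same construction works once the affine part is handled. Without a skip, a neuron may be reversed using $\sigma(z)=\sigma(-z)+z$, and each choice of orientation changes the affine residual. Summing the neurons shows that $s_0+s_q=\sum_j a_j\lvert w_j\rvert\operatorname{sgn}(w_j)$, hence $\lvert s_0+s_q\rvert\le\sum_j\lvert a_jw_j\rvert$, which gives the $\max$ lower bound. For the upper bound, one chooses orientations of the kink neurons, with a fractional split $\varphi_k\in[-1,1]$ between the two orientations, so that the leftover linear part is exactly representable. When the total variation already dominates, this can be done at no extra cost. Otherwise a pair of opposite neurons placed far away (with a free output bias) supplies the remaining linear part with cost exactly $\lvert s_0+s_q\rvert$ minus the variation. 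The far-away pair is the reason a free output bias is needed for attainment.

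For \cref{eq:Rb}, the residual also has a constant term. The quantity $\sum_k(\lvert p_k\rvert-\varphi_kp_k)(s_k-s_{k-1})-2g(0)$ records how the orientation choices move the value at $0$. The cheapest way to represent a residual (slope $z_1$, intercept $z_2$) by extra neurons, using a pair $\sigma(wx+b)$ and $\sigma(-wx-b)$ with penalized biases, is a two-dimensional optimization whose value is $h(z_1,z_2)$. Here I would rely on the computation in \citet[Theorem~4]{BoursierF23} rather than redo it. The main obstacle is showing that this orientation-plus-residual decomposition is optimal among all representors, in particular that splitting kinks or adding cancelling neurons elsewhere never helps. I would cite \citet[Theorem~4]{BoursierF23} for both this optimality and for attainment.
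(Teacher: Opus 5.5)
Your proposal is essentially sound, but it follows a different route from the paper. The paper gives no argument at all: \cref{th:repr} is imported wholesale from \citet{savarese2019infinite}, \citet{OngieWSS20} and \citet{BoursierF23}, including every attainment claim. You instead derive the \cref{eq:R1,eq:Rb1} formulas and their attainment directly at finite width. The lower bound combines a per-neuron AM--GM bound, tight exactly for balanced neurons, with the triangle inequality over neurons sharing a kink, tight exactly when their slope changes have one sign. You then get \cref{eq:R} from the identity $s_0+s_q=\sum_j a_jw_j$ together with the orientation split $\varphi$, and cite \citet[Theorem~4]{BoursierF23} only for \cref{eq:Rb}.

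Your route buys a self-contained treatment that never has to pass from the infinite-width statements to finite width. It also exposes exactly the equality conditions the paper later relies on, through \cref{l:bal,,l:opp,,l:bal.b,,l:bias}, to show that KKT points attain $\mathrm{R_1}$ or $\mathrm{R^b_1}$. The citation route buys brevity and, above all, the optimality of the decomposition for \cref{eq:Rb}, which you rightly do not attempt. One small imprecision there: a single opposite pair realizes only the branch $\sqrt{z_1^2+z_2^2}$ of $h$. The other branch needs the pair at $\lvert P\rvert=1/\sqrt{3}$ plus a zero-hidden-weight neuron carrying the leftover constant.

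Two points in your \cref{eq:R} step need repair.

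First, your upper-bound construction always uses a free output bias, so it only handles the modified problem. For the original \cref{eq:R} you still need the infimum to be at most the maximum. This holds because biases are unpenalized: a constant $c$ equals $\frac{c}{\beta}\sigma(\beta)$, a neuron with zero hidden weight whose cost $c^2/(2\beta^2)$ tends to $0$ as $\beta\to\infty$.

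Second, your last sentence reverses the roles. When $\lvert s_0+s_q\rvert$ strictly exceeds $\sum_k\lvert s_k-s_{k-1}\rvert$, the compensating pair is exactly affine, namely $\alpha\sigma(x-P)-\alpha\sigma(P-x)=\alpha(x-P)$ with $\alpha\neq0$. Choosing $P$ then cancels any leftover constant, so no output bias is needed, and the pair need not be far away. Attainment can fail only in the complementary case. There, any $\varphi$ with $\sum_k\varphi_k(s_k-s_{k-1})=s_0+s_q$ may leave a nonzero constant $\frac{1}{2}\sum_k(\lvert p_k\rvert-\varphi_kp_k)(s_k-s_{k-1})-g(0)$, as for $g(x)=\lvert x\rvert+1$. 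That is where the free output bias is what secures attainment.
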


\clearpage
\section{Role of the dataset assumption}
\label{s:ass:data}

\cref{ass:data} combines two logically distinct conditions:
\begin{equation}
r \geq 3
\qquad \text{and} \qquad
x_{\iota(2)} < 0 < x_{\iota(r - 1) + 1}.
\label{eq:ass:data}
\end{equation}
The first is a lower bound on the number of label switches. The second places the first two same-label segments strictly to the left of the origin and the last two strictly to its right. In this section, we record separately the roles of these two conditions.

\paragraph{At least three label switches.}

This condition is used in \cref{l:RR1.RbRb1}. A positive-margin classifier then has both a negative and a positive turning interval, which allows its free affine skip connection to be eliminated without increasing either representor norm. Consequently, $\mathrm{R}(g) = \mathrm{R_1}(g)$ and $\mathrm{R^b}(g) = \mathrm{R^b_1}(g)$ for every positive-margin CPA~$g$. This is the step that transfers the function-space characterizations obtained with a skip connection to the corresponding problems without one. The weaker condition $r \geq 2$ suffices for the compactness arguments in \cref{l:attain.int,l:attain.r.l} and for the finite-dimensional problems in \cref{l:O1.Ob1}.

\paragraph{The sign condition.}

The inequalities $x_{\iota(2)} < 0 < x_{\iota(r - 1) + 1}$ are used only when the biases are penalized. Their substantive use is in the outer-kink cases of \cref{l:all.Pb1} and, through the same argument, \cref{l:tight.Pb1}. For example, in the left outer-kink case one has $p_1 < 0$, and one chooses $\widehat{p} \leq p_1$ so that $-w_j^2 (p_1 \widehat{p} + 1) < 0$. The right outer-kink case is symmetric. No corresponding sign condition is needed when the hidden biases are not penalized, because the perturbations in \cref{l:all.P1,l:tight.P1} have regularizer derivative $-a_j^2 < 0$, independently of the kink location. Accordingly, subject only to $r \geq 3$, the conclusions concerning unpenalized biases are unaffected if the sign condition is dropped. These include \cref{th:inter}~\ref{th:inter:P.P1}, the part of \cref{th:inter}~\ref{th:inter:KKT} concerning \cref{eq:P1}, \cref{th:min.reg.loss}~\ref{th:min.reg.loss:L.L1}, the part of \cref{th:min.reg.loss}~\ref{th:min.reg.loss:stat} concerning~\hyperref[eq:L1]{$L\mathrm{^\lambda_1}$}, and the corresponding unpenalized-bias conclusions of \cref{c:min.reg.loss.lim,th:lim}.

By contrast, for penalized biases, the present proofs use the sign condition to exclude certain outer geometric irregularities. Without it, the interior arguments remain valid, including the exclusion of multiple kinks within an intermediate same-label segment, but the present proofs do not establish the full convexity-correct and switch-hugging characterizations. Thus, without additional treatment of the outer cases, the affected statements are \cref{th:inter}~\ref{th:inter:Pb.Pb1}, the part of \cref{th:inter}~\ref{th:inter:KKT} concerning \cref{eq:Pb1}, \cref{th:min.reg.loss}~\ref{th:min.reg.loss:Lb.Lb1}, the part of \cref{th:min.reg.loss}~\ref{th:min.reg.loss:stat} concerning~\hyperref[eq:Lb1]{$L\mathrm{^{\lambda, b}_1}$}, and the penalized-bias conclusions derived from them.

Finally, the sign condition is coordinate dependent when the biases are penalized. Translating~$x$ by~$c$ moves a kink from~$p$ to~$p - c$, and changes its contribution to the penalized representor norm from $\sqrt{1 + p^2} \, \lvert \Delta s \rvert$ to $\sqrt{1 + (p - c)^2} \, \lvert \Delta s \rvert$. Thus, in the penalized-bias setting, an additive translation can enforce the sign condition but is not a without-loss-of-generality reparameterization of the same optimization problem. In the unpenalized-bias setting, the representor cost depends only on the slope changes, and the corresponding conclusions are translation invariant.

\clearpage
\section{Proofs for minimal-norm interpolators}
\label{s:inter}

After restating \cref{th:inter} from \cref{s:main} and showing a privacy attack corollary, we develop and prove a sequence of lemmas culminating in a proof of the theorem.

\thinter*

As a corollary of the exact geometric characterizations and the skip connection's sanitizing effects in \cref{th:inter}, we obtain a powerful privacy attack which is able to extract reliably all training points that are incident to the label switches, given only the classifier in function space (knowledge of the network's parameters is not needed).

\begin{cor}
\label{c:inter}
Under \cref{ass:data}, suppose $\theta, a_0, b_0$ is either a minimizer of \cref{eq:P} or \cref{eq:Pb}, or a KKT point of \cref{eq:P1} or \cref{eq:Pb1}.  Let~$X$ be the set of all $x \in \mathbb{R}$ such that $f_{\theta, a_0, b_0}(x) = \pm 1$ and $f_{\theta, a_0, b_0}$~is not constant on any open interval containing~$x$.  Then all elements of~$X$ are training inputs.  More precisely, $X$~consists exactly of: the last input in the first same-label segment of the training dataset, both first and last inputs in every intermediate segment, and the first input in the last segment.  In particular, the number of switches of $f_{\theta, a_0, b_0}(x)$ as $x$~iterates through a sorted enumeration of~$X$ equals the number~$r$ of label switches in the training dataset.
\end{cor}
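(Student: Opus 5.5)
By \cref{th:inter}, the corollary reduces to an elementary analysis of the shape of one CPA. First I would note that in each of the four cases the function $g \coloneqq f_{\theta, a_0, b_0}$ is a minimizer in function space of the corresponding problem. For minimizers of \cref{eq:P} or \cref{eq:Pb} this is immediate. For KKT points of \cref{eq:P1} or \cref{eq:Pb1} it follows from \cref{th:inter}~\ref{th:inter:KKT}. Then \cref{th:inter}~\ref{th:inter:P.P1}, together with part~\ref{th:inter:Pb.Pb1} in the penalized case, gives that $g$~is switch hugging and convexity correct. This is all I would use. In particular, every kink of~$g$ lies in some intermediate segment $[x_{\iota(l) + 1}, x_{\iota(l + 1)}]$ with $l \in [r - 1]$.

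Next I would determine the level sets $\{g = \pm 1\}$ piece by piece. On $(-\infty, x_{\iota(1) + 1}]$ there are no kinks, so $g$~is affine there. It passes through $(x_{\iota(1)}, y_{\iota(1)})$ and $(x_{\iota(1) + 1}, -y_{\iota(1)})$, hence has nonzero slope, so it takes each value $\pm 1$ exactly once, at those two inputs, and is nowhere locally constant. The same argument applies on $[x_{\iota(r)}, \infty)$. It also applies on every switch gap $[x_{\iota(l)}, x_{\iota(l) + 1}]$, which contains no kinks except possibly at its endpoints. Thus, outside the closed intermediate segments, $g$~hits $\pm 1$ only at inputs incident to label switches.

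The main step is an intermediate segment $[a, b] = [x_{\iota(l) + 1}, x_{\iota(l + 1)}]$ with label~$y$; say $y = +1$, the other case being symmetric. By switch hugging, $g(a) = g(b) = 1$. By convexity correctness, all kinks of~$g$ in $[a, b]$ are concave. Since there are no other kinks between the adjacent switch gaps, $g$~is concave on a neighbourhood of $[a, b]$. Concavity and the endpoint values give $g \geq 1$ on $[a, b]$. If $g(c) = 1$ at some interior point~$c$, concavity forces $g \equiv 1$ on $[a, b]$, so every interior point is excluded from~$X$ by the local-constancy clause. In every case, the interior contributes nothing to~$X$ and $g \neq -1$ there.

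The endpoints do belong to~$X$, because the slope of~$g$ immediately to the left of~$a$ (on the switch gap) is nonzero, and similarly immediately to the right of~$b$. The degenerate case $a = b$ of a single-point segment is handled the same way, and then contributes one element rather than two. This yields exactly the stated description of~$X$.

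For the switch count, I would list $X$ in sorted order. It is $x_{\iota(1)}$, then the endpoints of intermediate segment $l$ for $l = 1, \dots, r - 1$, then $x_{\iota(r) + 1}$. The values of~$g$ are constant within each segment's block and alternate between consecutive blocks, giving exactly $r$ switches.

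I expect the only delicate point to be the concavity step. There one must check that no kink of the wrong convexity can sit at a segment endpoint, which holds because all kinks lie in $[a, b]$ with the prescribed convexity. One must also check that the constant-on-segment and singleton-segment cases are covered correctly by the definition of~$X$.
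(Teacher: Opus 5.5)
Your proposal is correct and takes essentially the same route as the paper. Both arguments obtain switch hugging and convexity correctness from \cref{th:inter}. Both note that the listed inputs lie in $X$ because the adjacent crossing pieces have nonzero slope. Both then exclude any other point of $X$, since such a point would have to lie strictly inside an intermediate segment, where the prescribed convexity forces $f_{\theta, a_0, b_0}$ to be constant. Your version just spells out the affine-outside and concave-inside bookkeeping, and the singleton-segment case, which the paper's short proof leaves implicit.
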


\begin{exl}
In \Cref{f:min.fun}, the set~$X$ of inputs extracted as in \cref{c:inter} from any of the three CPAs $g_\text{cyan}$, $g_\text{lime}$, or~$g_\text{pink}$, is shown by the bold dots colored brown.
\end{exl}

\begin{proof}
By \cref{th:inter}, $f_{\theta, a_0, b_0}$~is switch hugging, and so it equals~$\pm 1$ at each $x \in \mathbb{R}$ which is one of: the last input in the first same-label segment of the training dataset, both first and last inputs in every intermediate segment, and the first input in the last segment.  For each such~$x$, by the same property of~$f_{\theta, a_0, b_0}$, it is not constant on any open interval containing~$x$.  Hence it remains to show that the set~$X$ contains no other values, so for a contradiction suppose that $x \in X$ is none of: the last input in the first same-label segment of the training dataset, either first or last input in some intermediate segment, or the first input in the last segment.  Since by \cref{th:inter}, $f_{\theta, a_0, b_0}$~is also convexity correct, necessarily for some intermediate segment with first input~$x'$ and last input~$x''$ we have $x' < x < x''$ and $f_{\theta, a_0, b_0}(x') = f_{\theta, a_0, b_0}(x) = f_{\theta, a_0, b_0}(x'')$.  But then $f_{\theta, a_0, b_0}$~must be constant on the interval $[x', x'']$.
\end{proof}

We now start the sequence of lemmas that leads to the proof of \cref{th:inter}, the first of which shows that, for positive-margin CPAs on datasets with at least three label switches (which is implied by \cref{ass:data}), the skip connection does not lead to lower representational cost.  This fact, which is surprising in view of \cref{th:repr}, can be proved either by arguing directly in terms of the minimal costs given by \cref{th:repr}, or as we opt to do here, by showing how the skip connection can be eliminated through constructions at the network level that do not increase the costs.  We also remark that the constructions we provide are chosen for their clarity, and can be optimized further to reduce the number of additional neurons to a small constant.  Before stating and proving the lemma, we introduce notation for the costs and define the margin precisely.

For a CPA $g \colon \mathbb{R} \to \mathbb{R}$, we write $\mathrm{R}(g)$, $\mathrm{R_1}(g)$, $\mathrm{R^b}(g)$, and~$\mathrm{R^b_1}(g)$ for the minimal norms of its representors that are given in the four cases in \cref{th:repr}, respectively.

Also for a CPA $g \colon \mathbb{R} \to \mathbb{R}$, we write $M(g) \coloneqq \min_{i = 1}^n y_i \, g(x_i)$ for its margin over the dataset $(x_i, y_i)_{i = 1}^n$.  Then for a network $\theta, a_0, b_0$, we may write $M(\theta, a_0, b_0)$ for the margin~$M(f_{\theta, a_0, b_0})$ of its function; and when the skip connection is absent, we may write just~$M(\theta)$ instead of $M(\theta, 0, 0)$.

\begin{lem}
\label{l:RR1.RbRb1}
Suppose the dataset has at least three label switches, i.e., $r \geq 3$, and a CPA $g \colon \mathbb{R} \to \mathbb{R}$ has a positive margin.  Then $\mathrm{R}(g) = \mathrm{R_1}(g)$ and $\mathrm{R^b}(g) = \mathrm{R^b_1}(g)$.  Moreover, the minimum of \cref{eq:R} is attained.
\end{lem}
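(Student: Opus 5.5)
The plan is to argue at the level of the closed-form costs in \cref{th:repr} and to check the inequalities using the positive-margin geometry. For attainment, I will convert optimal skip representations into skip-free ones by the neuron-flipping identity $\sigma(z)-\sigma(-z)=z$ from \cref{pr:skip.expr}.

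Write $\Delta_k \coloneqq s_k-s_{k-1}$. The inequalities $\mathrm{R}(g)\ge\mathrm{R_1}(g)$ and $\mathrm{R^b}(g)\ge\mathrm{R^b_1}(g)$ are immediate, because setting $a_0=b_0=0$ is allowed in the skip problems. They can also be read off the formulas, since $h\ge 0$.

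\emph{The unpenalized case.} It suffices to show $\lvert s_0+s_q\rvert\le\sum_k\lvert\Delta_k\rvert$. Since $g$ has positive margin and $r\ge 3$, it changes sign across every label switch. Hence some piece of $g$ (a crossing piece) has strictly negative slope $s_c<0$, and some other piece has strictly positive slope $s_d>0$. By the triangle inequality, splitting the telescoping sum at index $c$,
\[
\textstyle\sum_k\lvert\Delta_k\rvert\ge\lvert s_c-s_0\rvert+\lvert s_q-s_c\rvert\ge(s_0-s_c)+(s_q-s_c)\ge s_0+s_q .
\]
Symmetrically, splitting at $d$ gives $\sum_k\lvert\Delta_k\rvert\ge -(s_0+s_q)$. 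This yields $\mathrm{R}(g)=\mathrm{R_1}(g)$.

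\emph{Attainment for \cref{eq:R}.} I will take an attained minimizer of \cref{eq:R1}, which exists by \cref{th:repr}. After the standard balancing, every neuron is a kink neuron $a_j\sigma(w_j(x-p_j))$ whose cost is $\lvert a_jw_j\rvert$. Reversing a fraction $t\in[0,1]$ of such a neuron, i.e.\ replacing $w_j$ by $-w_j$ on a rescaled copy, keeps the cost unchanged and adds the affine function $\pm t\,a_jw_j(x-p_j)$. The task is therefore to choose fractions at the kinks so that these added affine terms cancel $a_0x+b_0$.

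This is where $r\ge 3$ enters. Among the kinks of $g$ there are kinks in at least two distinct intermediate segments, located at distinct points $p\ne p'$, one in a negative and one in a positive turning interval. Their flipped affine pieces $(x-p)$ and $(x-p')$ span all affine functions. The remaining issue is that the required coefficients must have magnitude at most the available $\lvert\Delta\rvert$'s. I expect to control the slope coefficient by the same telescoping bound as above, and the intercept coefficient by using the values of $g$ at the switch points.

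\emph{The penalized case.} Here I must show that the infimum of $h$ in \cref{th:repr} is $0$. That is, I need some $\varphi\in[-1,1]^q$ with
\[
\textstyle\sum_k\varphi_k\Delta_k=s_0+s_q
\quad\text{and}\quad
\sum_k(\lvert p_k\rvert-\varphi_kp_k)\Delta_k=2g(0).
\]
The set of achievable pairs is a zonotope, the image of the cube under a linear map. I plan to show the target pair lies in it by exhibiting a feasible $\varphi$ constructed piece by piece. The construction sets $\varphi_k=\operatorname{sgn}(p_k)$ for all kinks except a chosen pair, which reduces the problem to the unpenalized-style estimates. The pair consists of one kink with convex and one with concave turning, in segments on opposite sides of the crossing pieces where $g$ changes sign. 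The remaining two free parameters are then used to solve the $2\times 2$ linear system, and the bounds from the positive margin must show that the solution lies in $[-1,1]^2$.

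I expect this last bound-checking step, for both the attainment construction and the penalized system, to be the main obstacle. If the direct estimate fails, the fallback is to split kink neurons further and use more than two kinks, which enlarges the zonotope.
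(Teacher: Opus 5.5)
Your preliminary steps are sound. The inequalities $\mathrm{R}\ge\mathrm{R_1}$ and $\mathrm{R^b}\ge\mathrm{R^b_1}$ are trivial, your telescoping bound gives $\mathrm{R}(g)=\mathrm{R_1}(g)$ (it only needs $r\ge2$), and the penalized equality is indeed equivalent to finding $\varphi\in[-1,1]^q$ that solves your two equations.

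With $t_k=(1-\varphi_k)/2$, this is literally the same feasibility problem as your flip-fraction problem for attainment in \cref{eq:R}. So both remaining claims rest on one step, which is where $r\ge3$ must enter, and that is exactly the step you leave open.

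Worse, the route you sketch for that step fails in general. Once all but two coordinates are pinned (to the sign of $p_k$, or to the orientations of an arbitrary \cref{eq:R1} minimizer), the $2\times2$ system has a unique solution. Its size is dictated by the affine function being cancelled, not by the slope changes at the two chosen kinks.

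For a concrete failure, take the data $(97,1),(99,-1),(101,1),(103,-1)$. Let $g$ pass through $(99,-1)$, with kinks at $99,99.5,100.5,101$ and successive slopes $-1,0,\tfrac32,0,-1$. It has margin $\tfrac12$, $\Delta=(1,\tfrac32,-\tfrac32,-1)$, $s_0+s_q=-2$ and $g(0)=98$. All kinks are positive, so your default is $\varphi_k=1$; this is also the all-right-facing \cref{eq:R1} minimizer, with skip $98-x$. For free kinks $p_i<p_j$, the equations force $(1-\varphi_i)\Delta_i=2(p_j-98)/(p_j-p_i)$. This puts $\varphi_i$ outside $[-1,1]$ for every one of the six pairs; for example, $1-\varphi_i=3$ for the pair $99,101$. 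Your fallback of using more kinks is where the whole argument would have to happen, and you give no rule for it.

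The missing idea is to pick two \emph{sign-appropriate local extrema}. Since $r\ge3$, a positive margin forces a bounded interval between zeros of $g$ on which $g<0$, and another on which $g>0$. Hence there is a kink $p_{k^\dag}$ with $g(p_{k^\dag})<0$ and $s_{k^\dag-1}\le0\le s_{k^\dag}$, and a kink $p_{k^\ddag}$ with $g(p_{k^\ddag})>0$ and $s_{k^\ddag-1}\ge0\ge s_{k^\ddag}$.

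Orient every kink before $k^\dag$ leftward and every kink after it rightward. Split the kink at $p_{k^\dag}$ into a leftward part with coefficient $-s_{k^\dag-1}$ and a rightward part with coefficient $s_{k^\dag}$. Because these have the same sign, this is a skip-free representation of $g-g(p_{k^\dag})$ costing exactly $\sum_k\lvert\Delta_k\rvert$, and $\sum_k\sqrt{1+p_k^2}\,\lvert\Delta_k\rvert$ after per-neuron rebalancing. In your notation it is an explicit $\varphi^\dag\in[-1,1]^q$ that satisfies the first equation, with second left-hand side equal to $2g(0)-2g(p_{k^\dag})$.

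The same construction at $p_{k^\ddag}$ gives $\varphi^\ddag$ for $g-g(p_{k^\ddag})$. Since $g(p_{k^\dag})<0<g(p_{k^\ddag})$, take $\xi=g(p_{k^\ddag})/\bigl(g(p_{k^\ddag})-g(p_{k^\dag})\bigr)$; then $\xi\varphi^\dag+(1-\xi)\varphi^\ddag$ solves both equations. At the network level, scale the two parameter vectors by $\sqrt{\xi}$ and $\sqrt{1-\xi}$ and concatenate. In the example this gives $\varphi=(-1,-\tfrac13,-\tfrac13,1)$.

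This single construction, which is the paper's proof, yields $\mathrm{R}\le\mathrm{R_1}$, $\mathrm{R^b}\le\mathrm{R^b_1}$ and attainment in \cref{eq:R} at once, regardless of where the origin lies.
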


\begin{figure}[t]
\centering
\begin{tikzpicture}[xscale=\ifopt .8\else\ifarxiv .85\else .7\fi\fi,yscale=.9]
\draw         (0,  0) node[left]  {$0$} -- (18,  0);
\draw[dashed] (0,  1) node[left]  {$1$} -- (18,  1);
\draw[dashed] (0, -1) node[left] {$-1$} -- (18, -1);

\draw[dotted] (3,1) node[point] {} -- (3,0);
\node[above=1mm] at (3,1) {$x_{\iota(1)}$};

\draw[dotted] (4.5,-1) node[point] {} -- (4.5,0);
\node[below=1mm] at (4.5,-1) {$x_{\iota(1)+1}$};

\draw[dotted] (7,-1) node[point] {} -- (7,0);
\node[above=1mm] at (6.5,-1) {$x_{\iota(2)}$};

\draw[dotted] (11,1) node[point] {} -- (11,0);
\node[above=1mm] at (11,1) {$x_{\iota(2)+1}$};

\draw[dotted] (13,1) node[point] {} -- (13,0);
\node[below=1mm] at (12.5,1) {$x_{\iota(3)}$};

\draw[dotted] (17,-1) node[point] {} -- (17,0);
\node[below=1mm] at (17,-1) {$x_{\iota(3)+1}$};

\draw[very thick]
  (0,-1.3)
  -- (1.2,.8)
  -- (2,-.6)
  -- (3,1)
  -- (5,-1)
  -- (5.7,-1.3)
  -- (6.5,-1.8)
  -- (7.4,-1.1)
  -- (8.8,.3)
  -- (9.2,-.3)
  -- (9.8,.3)
  -- (11,1)
  -- (11.8,.7)
  -- (12.7,1.8)
  -- (13.4,.9)
  -- (14,1)
  -- (15,-1)
  -- (15.6,.5)
  -- (16.2,-.5)
  -- (17,-1)
  -- (17.4,-.7)
  -- (18,-1.5);

\draw[very thin]
  (5.2,-2.65)
  node[left] {$s_{k^\dag-1}\leq0$}
  -- (6.1,-1.55);

\draw[very thin]
  (7.8,-2.65)
  node[right] {$s_{k^\dag}\geq0$}
  -- (6.9,-1.49);

\draw[very thin]
  (11.25,2.65)
  node[left] {$s_{k^\ddag-1}\geq0$}
  -- (12.35,1.37);

\draw[very thin]
  (14.15,2.65)
  node[right] {$s_{k^\ddag}\leq0$}
  -- (13.05,1.35);

\draw[very thin]
  (4,0) -- (4.3,1.28)
  node[above] {$z^\dag_1$};
\node[
  circle,
  draw,
  fill=white,
  inner sep=0mm,
  minimum size=1.75mm
] at (4,0) {};

\draw[very thin]
  (8.5,0) -- (8.05,1.28)
  node[above] {$z^\dag_2$};
\node[
  circle,
  draw,
  fill=white,
  inner sep=0mm,
  minimum size=1.75mm
] at (8.5,0) {};

\draw[very thin]
  (9.5,0) -- (9.95,-1.28)
  node[below] {$z^\ddag_1$};
\node[
  circle,
  draw,
  fill=white,
  inner sep=0mm,
  minimum size=1.75mm
] at (9.5,0) {};

\draw[very thin]
  (14.5,0) -- (13.95,-1.28)
  node[below] {$z^\ddag_2$};
\node[
  circle,
  draw,
  fill=white,
  inner sep=0mm,
  minimum size=1.75mm
] at (14.5,0) {};

\pic at (6.5,-1.8) {plus};
\node[below=1.5mm] at (6.5,-1.8) {$p_{k^\dag}$};

\pic at (12.7,1.8) {plus};
\node[above=1.5mm] at (12.7,1.8) {$p_{k^\ddag}$};

\node[above] at (6.25,0) {$g<0$};
\node[below] at (12.1,0) {$g>0$};
\end{tikzpicture}
\caption{Here we illustrate some of the named quantities in the proof of \cref{l:RR1.RbRb1} and their properties.  The bold dots indicate the training points incident to the first three label switches in the dataset, the thick line shows the CPA~$g$, the hollow dots indicate its identified crossings of the $x$-axis, and the green pluses indicate the two chosen kinks.}
\label{f:RR1.RbRb1}
\end{figure}

\begin{proof}
For an illustration of a part of this argument, we refer the reader to \Cref{f:RR1.RbRb1}.

Let $(p_k)_{k = 1}^q$ be the kinks of~$g$ and $(s_k)_{k = 0}^q$ be the slopes of~$g$.

Without loss of generality, $y_1 = 1$.  Since $r \geq 3$, we have at least the following three label switches:
\[ 1 = y_{\iota(1)} \neq y_{\iota(1) + 1} =
  -1 = y_{\iota(2)} \neq y_{\iota(2) + 1} =
   1 = y_{\iota(3)} \neq y_{\iota(3) + 1} =
  -1.\]
Recalling that $M(g) > 0$, there must exist points $z^\dag_1 \in (x_{\iota(1)}, x_{\iota(1) + 1})$ and $z^\dag_2 \in (x_{\iota(1) + 1}, x_{\iota(2) + 1})$ such that
\begin{align}
g(z^\dag_1) = 0 & = g(z^\dag_2) &
g(z) & < 0 \text{ for all } z \in (z^\dag_1, z^\dag_2),
\end{align}
and there must exist points $z^\ddag_1 \in (x_{\iota(2)}, x_{\iota(2) + 1})$ and $z^\ddag_2 \in (x_{\iota(2) + 1}, x_{\iota(3) + 1})$ such that
\begin{align}
g(z^\ddag_1) = 0 & = g(z^\ddag_2) &
g(z) & > 0 \text{ for all } z \in (z^\ddag_1, z^\ddag_2).
\end{align}
Then necessarily $z^\dag_2 \leq z^\ddag_1$.  Also it follows that $g$~must have kinks $p_{k^\dag} \in (z^\dag_1, z^\dag_2)$ and $p_{k^\ddag} \in (z^\ddag_1, z^\ddag_2)$ such that their incident slopes satisfy
\begin{align}
s_{k^\dag - 1}  & \leq 0 &
s_{k^\dag}      & \geq 0 &
s_{k^\ddag - 1} & \geq 0 &
s_{k^\ddag}     & \leq 0.
\end{align}

For each $k \in [q]$, let:
\begin{align}
\mu^\dag_k & \coloneqq
\begin{cases}
s_k - s_{k - 1}  & \text{if } k < k^\dag, \\
- s_{k^\dag - 1} & \text{if } k = k^\dag, \\
0                & \text{if } k > k^\dag;
\end{cases} &
\nu^\dag_k & \coloneqq
\begin{cases}
0               & \text{if } k < k^\dag, \\
s_{k^\dag}      & \text{if } k = k^\dag, \\
s_k - s_{k - 1} & \text{if } k > k^\dag.
\end{cases}
\end{align}
Then let $\theta^\dag$~consist of neuron parameters that give the network function
\[f_{\theta^\dag}(x)
= \sum_{k = 1}^{q}
  \Bigl(
  \sgn(\mu^\dag_k) \sqrt{\lvert \mu^\dag_k \rvert} \,
  \sigma\bigl(-\sqrt{\lvert \mu^\dag_k \rvert} (x - p_k)\bigr)
+ \sgn(\nu^\dag_k) \sqrt{\lvert \nu^\dag_k \rvert} \,
  \sigma\bigl( \sqrt{\lvert \nu^\dag_k \rvert} (x - p_k)\bigr)
  \Bigr).\]
It is straightforward to check that $f_{\theta^\dag}$~has the same kinks and slopes as~$g$, and that $f_{\theta^\dag}(p_{k^\dag}) = 0$, which means that $f_{\theta^\dag}(x) = g(x) - g(p_{k^\dag})$ for all $x \in \mathbb{R}$.

Analogously, for each $k \in [q]$, let:
\begin{align}
\mu^\ddag_k & \coloneqq
\begin{cases}
s_k - s_{k - 1}   & \text{if } k < k^\ddag, \\
- s_{k^\ddag - 1} & \text{if } k = k^\ddag, \\
0                 & \text{if } k > k^\ddag;
\end{cases} &
\nu^\ddag_k & \coloneqq
\begin{cases}
0                & \text{if } k < k^\ddag, \\
s_{k^\ddag}      & \text{if } k = k^\ddag, \\
s_k - s_{k - 1}  & \text{if } k > k^\ddag.
\end{cases}
\end{align}
Then let $\theta^\ddag$~consist of neuron parameters that give the network function
\[f_{\theta^\ddag}(x)
= \sum_{k = 1}^{q}
  \Bigl(
  \sgn(\mu^\ddag_k) \sqrt{\lvert \mu^\ddag_k \rvert} \,
  \sigma\bigl(-\sqrt{\lvert \mu^\ddag_k \rvert} (x - p_k)\bigr)
+ \sgn(\nu^\ddag_k) \sqrt{\lvert \nu^\ddag_k \rvert} \,
  \sigma\bigl( \sqrt{\lvert \nu^\ddag_k \rvert} (x - p_k)\bigr)
  \Bigr).\]
It is straightforward to check that $f_{\theta^\ddag}$~has the same kinks and slopes as~$g$, and that $f_{\theta^\ddag}(p_{k^\ddag}) = 0$, which means that $f_{\theta^\ddag}(x) = g(x) - g(p_{k^\ddag})$ for all $x \in \mathbb{R}$.

Now, recalling that $g(p_{k^\dag}) < 0$ and $g(p_{k^\ddag}) > 0$, let
\[\xi \coloneqq \frac{g(p_{k^\ddag})}{g(p_{k^\ddag}) - g(p_{k^\dag})} \in (0, 1).\]
Then we have $g = \xi f_{\theta^\dag} + (1 - \xi) f_{\theta^\ddag}$, and hence $g = f_{\theta^\star}$, where $\theta^\star \coloneqq \bigl(\sqrt{\xi} \, \theta^\dag, \sqrt{1 - \xi} \, \theta^\ddag\bigr)$, i.e., the convex combination of network functions $f_{\theta^\dag}$ and~$f_{\theta^\ddag}$ is implemented by multiplying the network parameters by~$\sqrt{\xi}$ and $\sqrt{1 - \xi}$ respectively, and concatenating the results.

Therefore
\begin{align}
\mathrm{R}(g)
& \leq \xi
       \sum_{k = 1}^q \bigl(\lvert \mu^\dag_k \rvert + \lvert \nu^\dag_k \rvert\bigr)
     + (1 - \xi)
       \sum_{k = 1}^q \bigl(\lvert \mu^\ddag_k \rvert + \lvert \nu^\ddag_k \rvert\bigr) \\
& =    \xi
       \sum_{k = 1}^q \lvert s_k - s_{k - 1} \rvert
     + (1 - \xi)
       \sum_{k = 1}^q \lvert s_k - s_{k - 1} \rvert \\
& =    \sum_{k = 1}^q \lvert s_k - s_{k - 1} \rvert \\
& =    \mathrm{R_1}(g),
\end{align}
where: the inequality is by the definitions of $\theta^\star$, $\theta^\dag$, and~$\theta^\ddag$; the first equality is by the definitions of $\mu^\dag_k$, $\nu^\dag_k$, $\mu^\ddag_k$, and~$\nu^\ddag_k$ as well as by recalling that $\lvert s_{k^\dag} - s_{k^\dag - 1} \rvert = \lvert s_{k^\dag - 1} \rvert + \lvert s_{k^\dag} \rvert$ and $\lvert s_{k^\ddag} - s_{k^\ddag - 1} \rvert = \lvert s_{k^\ddag - 1} \rvert + \lvert s_{k^\ddag} \rvert$; and the third equality is by \cref{th:repr}.

Hence $\mathrm{R}(g) = \mathrm{R_1}(g)$ also by \cref{th:repr}, so the last inequality above is an equality, and thus the minimum of \cref{eq:R} is attained by the network parameters~$\theta^\star$.

Finally, by considering the following alternative expressions
\begin{align}
f_{\theta^\dag}(x)
& = \sum_{k = 1}^{q}
    \left(
    \sgn(\mu^\dag_k) \sqrt[4]{1 + p_k^2} \sqrt{\lvert \mu^\dag_k \rvert} \,
    \sigma\Biggl(-\frac{\sqrt{\lvert \mu^\dag_k \rvert}}
                       {\sqrt[4]{1 + p_k^2}} (x - p_k)\Biggr)
    \right. \\
& \qquad\qquad\left.
  + \sgn(\nu^\dag_k) \sqrt[4]{1 + p_k^2} \sqrt{\lvert \nu^\dag_k \rvert} \,
    \sigma\Biggl( \frac{\sqrt{\lvert \nu^\dag_k \rvert}}
                       {\sqrt[4]{1 + p_k^2}} (x - p_k)\Biggr)
    \right) \\
f_{\theta^\ddag}(x)
& = \sum_{k = 1}^{q}
    \left(
    \sgn(\mu^\ddag_k) \sqrt[4]{1 + p_k^2} \sqrt{\lvert \mu^\ddag_k \rvert} \,
    \sigma\Biggl(-\frac{\sqrt{\lvert \mu^\ddag_k \rvert}}
                       {\sqrt[4]{1 + p_k^2}} (x - p_k)\Biggr)
    \right. \\
& \qquad\qquad\left.
  + \sgn(\nu^\ddag_k) \sqrt[4]{1 + p_k^2} \sqrt{\lvert \nu^\ddag_k \rvert} \,
    \sigma\Biggl( \frac{\sqrt{\lvert \nu^\ddag_k \rvert}}
                       {\sqrt[4]{1 + p_k^2}} (x - p_k)\Biggr)
    \right),
\end{align}
we obtain
\begin{align}
\mathrm{R^b}(g)
& \leq \xi
       \sum_{k = 1}^q
       \sqrt{1 + p_k^2} \,
       \bigl(\lvert \mu^\dag_k \rvert + \lvert \nu^\dag_k \rvert\bigr)
     + (1 - \xi)
       \sum_{k = 1}^q
       \sqrt{1 + p_k^2} \,
       \bigl(\lvert \mu^\ddag_k \rvert + \lvert \nu^\ddag_k \rvert\bigr) \\
& =    \xi
       \sum_{k = 1}^q \sqrt{1 + p_k^2} \, \lvert s_k - s_{k - 1} \rvert
     + (1 - \xi)
       \sum_{k = 1}^q \sqrt{1 + p_k^2} \, \lvert s_k - s_{k - 1} \rvert \\
& =    \sum_{k = 1}^q \sqrt{1 + p_k^2} \, \lvert s_k - s_{k - 1} \rvert \\
& =    \mathrm{R^b_1}(g),
\end{align}
so by \cref{th:repr} we have $\mathrm{R^b}(g) = \mathrm{R^b_1}(g)$.
\end{proof}

The next lemma shows that the optima of the interpolator norm minimization problems with the skip connection are attained, even for any fixed network width for which interpolation is possible (i.e., $m \geq r - 1$).

\begin{lem}
\label{l:attain.int}
For each of \cref{eq:P1,eq:Pb1} restricted to any network width $m \in \mathbb{N}$ that is feasible, the minimum is attained.
\end{lem}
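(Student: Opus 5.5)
The plan is a direct compactness argument: take a minimizing sequence for the width-$m$ problem, show that it can be replaced by one lying in a bounded set without increasing the objective or violating any constraint, and pass to a convergent subsequence. Since $f_{\theta,a_0,b_0}(x_i)$ is continuous in all parameters, the constraint set $\{y_i f_{\theta,a_0,b_0}(x_i)\ge 1\ \forall i\}$ is closed and the objective is continuous. Boundedness is therefore the only real issue. If $r\le 1$, the skip connection alone interpolates (for example $a_0x+b_0$ passing through a threshold between the two segments, scaled up), so the objective $0$ is attained with $\theta=0$. Hence assume $r\ge 2$. Let $(\theta^{(t)},a_0^{(t)},b_0^{(t)})$ be feasible with objective decreasing to the infimum $V$.

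\emph{Bounding the penalized neuron parameters.} The objective bounds $a_j,w_j$ (and $b_j$ for \cref{eq:Pb1}) uniformly by $\sqrt{2V+2}$. For \cref{eq:P1} the biases $b_j$ are free, so I would first normalize them. Let $W$ bound $\lvert w_j\rvert$ and $X\coloneqq\max_i\lvert x_i\rvert$, and set $B\coloneqq WX+1$. If $\lvert b_j\rvert>B$, then $w_jx_i+b_j$ has the sign of $b_j$ for every training input. If $b_j<-B$, the neuron vanishes on all data, so replace $b_j$ by $-B$. If $b_j>B$, the neuron equals $a_j(w_jx_i+b_j)$ on the data, so replace $b_j$ by $B$ and add $a_j(b_j-B)$ to $b_0$. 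Neither replacement changes the outputs at the $x_i$ or the objective. So without loss of generality all $\lvert b_j\rvert\le B$, and hence the neuron part $N^{(t)}(x)\coloneqq\sum_j a_j\sigma(w_jx+b_j)$ satisfies $\lvert N^{(t)}(x_i)\rvert\le C$ for some constant $C$ independent of $t$.

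\emph{Bounding the skip connection.} This is the step that needs the label-switch structure. Write $\ell(x)\coloneqq a_0x+b_0$. Feasibility gives $y_i\ell(x_i)\ge 1-C\ge -C$. Because $r\ge2$, there are inputs $x_\alpha<x_\beta<x_\gamma$ with labels $y,-y,y$; without loss of generality $y=1$. Then $\ell(x_\alpha),\ell(x_\gamma)\ge -C$, and $\ell(x_\beta)\le C$. Since $\ell(x_\beta)$ is a convex combination of $\ell(x_\alpha)$ and $\ell(x_\gamma)$, also $\ell(x_\beta)\ge -C$. If $a_0$ were large and positive, $\ell(x_\alpha)=\ell(x_\beta)-a_0(x_\beta-x_\alpha)$ would fall below $-C$. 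If $a_0$ were large and negative, $\ell(x_\gamma)$ would. So $\lvert a_0\rvert$ is bounded, and then $b_0=\ell(x_\beta)-a_0x_\beta$ is bounded.

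The normalized sequence therefore lies in a compact set. Extracting a convergent subsequence, its limit is feasible by closedness of the constraint set and attains $V$ by continuity of the objective, which proves \cref{l:attain.int}. I expect the main care to be needed in the unpenalized-bias bias-clamping step, where the affine part must be transferred to the skip connection correctly; the penalized case skips that step.
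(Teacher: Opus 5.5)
Your proposal is correct and follows essentially the same route as the paper: dispose of $r \leq 1$ via the skip connection, normalize a minimizing sequence so the hidden-layer parameters are bounded, bound $(a_0, b_0)$ using three ordered inputs with alternating labels and the convex-combination argument, and conclude by compactness, closedness of the feasible set, and continuity of the objective. The differences are only in technical details. For unpenalized biases you clamp $b_j$ to $[-B, B]$ and move the excess into $b_0$, whereas the paper absorbs neurons with kinks outside $[x_1, x_n]$ into the skip connection and balances the remaining ones. You also bound the skip parameters by an explicit inequality, whereas the paper normalizes them and derives a contradiction in the limit.
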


\begin{proof}
If the dataset has no label switch, then a constant skip connection interpolates it. If it has exactly one label switch, then an affine skip connection interpolates it. In either case, the value zero is attained in both problems. We may therefore assume that $r \geq 2$.

Fix a feasible width~$m$, and consider \cref{eq:P1}. Let $\bigl(\theta^{(k)}, a_0^{(k)}, b_0^{(k)}\bigr)_{k \in \mathbb{N}}$ be a minimizing sequence, then the maximum~$N$ of its objective values is finite. We first replace each term of the sequence by an equivalent parameterization, without changing its values at the training inputs and without increasing its objective, as follows.

For every neuron with $w_j^{(k)} \neq 0$, write $p_j^{(k)} \coloneqq -b_j^{(k)} / w_j^{(k)}$ for its kink. If $p_j^{(k)} \notin [x_1, x_n]$, then $x \mapsto a_j^{(k)} \, \sigma\bigl(w_j^{(k)} x + b_j^{(k)}\bigr)$ is affine on $[x_1, x_n]$, so its contribution on all training inputs can be absorbed into the free affine skip connection, after which the neuron can be replaced by the zero neuron. Similarly, if $w_j^{(k)} = 0$, then the neuron contributes a constant function, which can be absorbed into the skip bias. Thus, we may assume that every nonzero neuron satisfies $p_j^{(k)} \in [x_1, x_n]$.

By positive homogeneity of the ReLU, each nonzero neuron can also be rescaled, without changing the represented function, so that $\lvert a_j^{(k)} \rvert = \lvert w_j^{(k)} \rvert$. This rescaling minimizes $\frac{1}{2} \Bigl(\bigl(a_j^{(k)}\bigr)^2 + \bigl(w_j^{(k)}\bigr)^2\Bigr)$ over the positive-rescaling orbit of the neuron, and hence does not increase the objective. Consequently, $\lvert a_j^{(k)} \rvert, \lvert w_j^{(k)} \rvert \leq \sqrt{2 N}$, and also $\lvert b_j^{(k)} \rvert = \lvert w_j^{(k)} p_j^{(k)} \rvert \leq \sqrt{2 N} \, \max_{i = 1}^n \lvert x_i \rvert$.

It follows that the parameter vectors~$\theta^{(k)}$ lie in a bounded set. In particular, the pre-skip output vectors $h^{(k)} := \bigl(f_{\theta^{(k)}, 0, 0}(x_i)\bigr)_{i = 1}^n$ form a bounded sequence in~$\mathbb{R}^n$. We next show that the skip parameters are bounded.

Suppose, to the contrary, that after passing to a subsequence, $\bigl\lVert a_0^{(k)}, b_0^{(k)} \bigr\rVert_2 \to \infty$. Passing to a further subsequence, we may assume that $\bigl(a_0^{(k)}, b_0^{(k)}\bigr) / \bigl\lVert a_0^{(k)}, b_0^{(k)} \bigr\rVert_2 \to (\alpha_0, \beta_0)$ for some unit vector $(\alpha_0, \beta_0)$. Feasibility gives $y_i \bigl(a_0^{(k)} x_i + b_0^{(k)} + h_i^{(k)}\bigr) \geq 1$ for every $i \in [n]$. Dividing by $\bigl\lVert a_0^{(k)}, b_0^{(k)} \bigr\rVert_2$ and letting $k \to \infty$, using boundedness of $h^{(k)}$, yields $y_i (\alpha_0 x_i + \beta_0) \geq 0$ for every $i \in [n]$. Because $r \geq 2$, there exist three training inputs $x_{i_1} < x_{i_2} < x_{i_3}$ whose labels alternate. Multiplying the affine function $x \mapsto \alpha_0 x + \beta_0$ by the common label of the first and third points if necessary, we obtain
\begin{align}
\alpha_0 x_{i_1} + \beta_0 & \geq 0 &
\alpha_0 x_{i_2} + \beta_0 & \leq 0 &
\alpha_0 x_{i_3} + \beta_0 & \geq 0.
\end{align}
Since $x_{i_2}$~is a strict convex combination of $x_{i_1}$ and~$x_{i_3}$, the value $\alpha_0 x_{i_2} + \beta_0$ is the same strict convex combination of $\alpha_0 x_{i_1} + \beta_0$ and $\alpha_0 x_{i_3} + \beta_0$. It is therefore nonnegative, and hence it must equal zero. Since both outer values are nonnegative, equality of their strict convex combination to zero forces both of them to be zero. Thus the affine function vanishes at two distinct points and is consequently identically zero. Hence $\alpha_0 = \beta_0 = 0$, contradicting that $(\alpha_0, \beta_0)$ is a unit vector. It follows that the skip parameters are bounded.

We have therefore obtained a minimizing sequence contained in a compact subset of~$\mathbb{R}^{3 m + 2}$, so we may pass to a convergent subsequence. The feasible set is closed because the network outputs at the training inputs depend continuously on the parameters. Hence the limit is feasible. Also the objective is continuous, so its value at the limit is the infimum. Thus the minimum of \cref{eq:P1}, restricted to width~$m$, is attained.

The proof for \cref{eq:Pb1} is analogous and slightly simpler. By positive homogeneity, every nonzero neuron may be rescaled so that $\lvert a_j^{(k)} \rvert = \sqrt{\bigl(w_j^{(k)}\bigr)^2 + \bigl(b_j^{(k)}\bigr)^2}$. This rescaling minimizes $\frac{1}{2} \Bigl(\bigl(a_j^{(k)}\bigr)^2 + \bigl(w_j^{(k)}\bigr)^2 + \bigl(b_j^{(k)}\bigr)^2\Bigr)$ over the positive-rescaling orbit of the neuron. The bounded objective of a minimizing sequence therefore directly bounds all coordinates of~$\theta^{(k)}$. The pre-skip outputs on the finite training set are consequently bounded. The same normalization argument using three ordered inputs with alternating labels then shows that the skip parameters are bounded. Thus a minimizing sequence can again be chosen in a compact set, and closedness of the feasible set and continuity of the objective imply that the minimum of \cref{eq:Pb1}, restricted to width~$m$, is attained.
\end{proof}

The orientation of a hidden neuron is not intrinsic when a free affine skip connection is present. Reversing its hidden weight and bias changes its ReLU contribution only by an affine function, which can be absorbed exactly by the skip connection. The next lemma records this reparameterization and, crucially, shows that it also transports arbitrary parameter-space directions while preserving the Clarke directional derivatives of the network outputs, their negatives, and both regularizers considered in this work.%
\footnote{The two regularizers are everywhere continuously differentiable, so their Clarke directional derivatives equal their ordinary directional derivatives.}
Consequently, in the subsequent descent constructions we may choose the orientations of the relevant neurons without loss of generality, including when a training input coincides with one of their kinks.

\begin{lem}
\label{l:rev}
Suppose $J \subseteq [m]$, $\widehat{\theta} \coloneqq \bigl(\widehat{a}_j, \widehat{w}_j, \widehat{b}_j\bigr)_{j = 1}^m$ is obtained from~$\theta$ by reversing the orientations of the hidden neurons~$J$, i.e.,
\[\bigl(\widehat{a}_j, \widehat{w}_j, \widehat{b}_j\bigr) \coloneqq
  \begin{cases}
  (a_j, -w_j, -b_j) & \text{if } j \in    J, \\
  (a_j,  w_j,  b_j) & \text{if } j \notin J,
  \end{cases}\]
and
\begin{align}
\widehat{a}_0 & \coloneqq
a_0 + \sum_{j \in J} a_j w_j &
\widehat{b}_0 & \coloneqq
b_0 + \sum_{j \in J} a_j b_j.
\end{align}
\begin{enumerate}[(i)]
\item
\label{l:rev:f}
We have that $f_{\widehat{\theta}, \widehat{a}_0, \widehat{b}_0} = f_{\theta, a_0, b_0}$.
\item
\label{l:rev:D}
For every direction $\widehat{\theta}', \widehat{a}'_0, \widehat{b}'_0$, letting $\theta' \coloneqq (a'_j, w'_j, b'_j)_{j = 1}^m$ be obtained from~$\widehat{\theta}'$ by reversing the orientations at the hidden neurons~$J$, i.e.,
\[(a'_j, w'_j, b'_j\bigr) \coloneqq
  \begin{cases}
  \bigl(\widehat{a}'_j, -\widehat{w}'_j, -\widehat{b}'_j\bigr)
  & \text{if } j \in    J, \\[.5ex]
  \bigl(\widehat{a}'_j,  \widehat{w}'_j,  \widehat{b}'_j\bigr)
  & \text{if } j \notin J,
  \end{cases}\]
and
\begin{align}
a'_0 & \coloneqq
\widehat{a}'_0 + \sum_{j \in J} \bigl(\widehat{a}'_j \widehat{w}_j
                                    + \widehat{a}_j  \widehat{w}'_j\bigr) &
b'_0 & \coloneqq
\widehat{b}'_0 + \sum_{j \in J} \bigl(\widehat{a}'_j \widehat{b}_j
                                    + \widehat{a}_j  \widehat{b}'_j\bigr),
\end{align}
we have that, for all $x \in \mathbb{R}$,
\begin{align}
\mathrm{D}^\circ
\bigl(f_{\theta, a_0, b_0}(x)\bigr)
[\theta', a'_0, b'_0] & =
\mathrm{D}^\circ
\bigl(f_{\widehat{\theta}, \widehat{a}_0, \widehat{b}_0}(x)\bigr)
\bigl[\widehat{\theta}', \widehat{a}'_0, \widehat{b}'_0\bigr]
\\
\mathrm{D}^\circ
\bigl(-f_{\theta, a_0, b_0}(x)\bigr)
[\theta', a'_0, b'_0] & =
\mathrm{D}^\circ
\bigl(-f_{\widehat{\theta}, \widehat{a}_0, \widehat{b}_0}(x)\bigr)
\bigl[\widehat{\theta}', \widehat{a}'_0, \widehat{b}'_0\bigr]
\\
\mathrm{D}
\biggl(\frac{1}{2} \sum_{j = 1}^m (a_j^2 + w_j^2)\biggr)
[\theta', a'_0, b'_0] & =
\mathrm{D}
\biggl(\frac{1}{2} \sum_{j = 1}^m \bigl(\widehat{a}_j^2
                                      + \widehat{w}_j^2\bigr)\biggr)
\bigl[\widehat{\theta}', \widehat{a}'_0, \widehat{b}'_0\bigr]
\\
\mathrm{D}
\biggl(\frac{1}{2} \sum_{j = 1}^m (a_j^2 + w_j^2 + b_j^2)\biggr)
[\theta', a'_0, b'_0] & =
\mathrm{D}
\biggl(\frac{1}{2} \sum_{j = 1}^m \bigl(\widehat{a}_j^2
                                      + \widehat{w}_j^2
                                      + \widehat{b}_j^2\bigr)\biggr)
\bigl[\widehat{\theta}', \widehat{a}'_0, \widehat{b}'_0\bigr].
\end{align}
\end{enumerate}
\end{lem}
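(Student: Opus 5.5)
Part~\ref{l:rev:f} rests on the identity $\sigma(z) - \sigma(-z) = z$. For $j \in J$ it gives
\[a_j \, \sigma(w_j x + b_j) = a_j \, \sigma(-w_j x - b_j) + a_j w_j x + a_j b_j,\]
so $a_j \, \sigma(w_j x + b_j) = \widehat{a}_j \, \sigma(\widehat{w}_j x + \widehat{b}_j) + a_j w_j x + a_j b_j$. Summing over $j \in J$ and adding the unchanged terms shows that $f_{\theta, a_0, b_0}(x) = f_{\widehat{\theta}, \widehat{a}_0, \widehat{b}_0}(x)$ for all~$x$, because the affine remainders are exactly $\widehat{a}_0 - a_0$ and $\widehat{b}_0 - b_0$.

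For part~\ref{l:rev:D}, consider the map $\Phi$ that sends hatted parameters $(\widehat{\theta}, \widehat{a}_0, \widehat{b}_0)$ to unhatted ones. It flips the signs of $w_j, b_j$ for $j \in J$ and sets
\[a_0 = \widehat{a}_0 - \sum_{j \in J} \widehat{a}_j \, (-\widehat{w}_j) = \widehat{a}_0 + \sum_{j \in J} \widehat{a}_j \widehat{w}_j,\]
and similarly $b_0 = \widehat{b}_0 + \sum_{j \in J} \widehat{a}_j \widehat{b}_j$. Since reversal is an involution, this is the inverse construction. The map $\Phi$ is a polynomial diffeomorphism of $\mathbb{R}^{3m+2}$, its inverse is the reversal map, and by part~\ref{l:rev:f} $f_{\Phi(v)}(x) = f_{v}(x)$ identically in~$v$. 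The direction $(\theta', a'_0, b'_0)$ defined in the statement is exactly $\mathrm{D}\Phi(\widehat{v})[\widehat{v}']$: differentiating $\widehat{a}_j \widehat{w}_j$ gives $\widehat{a}'_j \widehat{w}_j + \widehat{a}_j \widehat{w}'_j$, and the signs of $w'_j, b'_j$ flip. I will check this product-rule computation explicitly.

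The main step is the chain rule for Clarke directional derivatives under a $C^1$ diffeomorphism. For locally Lipschitz $F$ and a $C^1$ diffeomorphism $\Phi$, the Clarke subdifferential transforms as
\[\partial(F \circ \Phi)(\widehat{v}) = \mathrm{D}\Phi(\widehat{v})^\top \, \partial F(\Phi(\widehat{v})).\]
Equality rather than mere inclusion holds here because $\Phi$ is a diffeomorphism: apply the inclusion to both $\Phi$ and $\Phi^{-1}$. One can also argue directly from the gradient-limit definition, since differentiability points correspond under $\Phi$ and gradients transform by the continuous Jacobian. Taking the maximum of $\langle \gamma, u \rangle$ then gives
\[\mathrm{D}^\circ(F \circ \Phi)(\widehat{v})[\widehat{v}'] = \mathrm{D}^\circ F(\Phi(\widehat{v}))[\mathrm{D}\Phi(\widehat{v})\widehat{v}'].\]
Apply this with $F = \pm f_{\cdot}(x)$, using $F \circ \Phi = \pm f_{\cdot}(x)$ in hatted variables. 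This yields the first two identities. This transformation rule is the one delicate point, since Clarke calculus generally gives only inclusions, but it is standard for smooth invertible changes of variables.

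The regularizer identities are direct computations. The norms do not involve $a_0, b_0$, and $\frac{1}{2}\sum(a_j^2 + w_j^2)$ and $\frac{1}{2}\sum(a_j^2 + w_j^2 + b_j^2)$ are invariant under the sign flips. Their derivatives are $\sum (a_j a'_j + w_j w'_j)$ and $\sum (a_j a'_j + w_j w'_j + b_j b'_j)$. For $j \in J$ we have $w_j w'_j = (-\widehat{w}_j)(-\widehat{w}'_j) = \widehat{w}_j \widehat{w}'_j$, and likewise for $b_j b'_j$. So the two derivatives agree term by term.
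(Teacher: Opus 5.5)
Your proof is correct. Part~(i) matches the paper's argument, but part~(ii) takes a genuinely different route.

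The paper argues by direct computation. It writes the Clarke directional derivative of the output explicitly as
\[
\sum_{j \colon w_j x + b_j = 0} \max\bigl\{0, a_j (w'_j x + b'_j)\bigr\} + \sum_{j \colon w_j x + b_j > 0} \bigl(a'_j (w_j x + b_j) + a_j (w'_j x + b'_j)\bigr) + a'_0 x + b'_0 .
\]
It then substitutes the reversed parameters and direction, and checks neuron by neuron that the added skip-connection terms cancel the flipped contributions. The only nonlinear step is $\max\{0, -c\} + c = \max\{c, 0\}$, for reversed neurons whose kink is at~$x$.

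You instead argue structurally:
\begin{itemize}
\item the reversal is a polynomial involution of $\mathbb{R}^{3m+2}$ that leaves $v \mapsto f_v(x)$ invariant;
\item the prescribed direction is exactly its Jacobian pushforward;
\item Clarke subdifferentials transform with equality under $C^1$ diffeomorphisms, which you justify by your two-sided inclusion argument (or by the equality case of Clarke's chain rule when the inner derivative is onto).
\end{itemize}

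Your route needs nothing about~$f$ beyond local Lipschitzness and invariance. It therefore avoids the explicit formula, whose derivation relies on a true but unstated fact: Clarke directional derivatives add across neurons because the neurons involve disjoint parameter blocks. It also applies verbatim to any invariant function of the parameters, such as the per-sample logistic losses or the full regularized objectives.

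The paper's route is self-contained once that formula is granted. The formula is also reused immediately in the proof of Lemma~\ref{l:up.down}.
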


\begin{proof}
Part~\ref{l:rev:f} follows by applying at each $j \in J$ the identity
\[a_j \, \sigma( w_j x + b_j)
- a_j \, \sigma(-w_j x - b_j)
= a_j w_j x + a_j b_j
\quad \text{for all } x \in \mathbb{R}.\]

In part~\ref{l:rev:D}, the equalities of the directional derivatives of the square $\ell_2$-norms (without and with the biases) are straightforward and do not involve the skip connection.

The remaining two equations are symmetric, so we show the former one, at an arbitrary $x \in \mathbb{R}$:
\begin{align}
& \mathrm{D}^\circ
  \bigl(f_{\theta, a_0, b_0}(x)\bigr)
  [\theta', a'_0, b'_0] \\
& = \sum_{j \mid w_j x + b_j = 0}
    \max \bigl\{0, a_j (w'_j x + b'_j)\bigr\} \\
& \quad
  + \sum_{j \mid w_j x + b_j > 0}
    \bigl(a'_j (w_j  x + b_j)
        + a_j  (w'_j x + b'_j)\bigr) \\
& \quad
  + a'_0 x + b'_0 \\
& = \sum_{j \in J \mid \widehat{w}_j x + \widehat{b}_j = 0}
    \Bigl(\max \Bigl\{0, \widehat{a}_j
                         \bigl(-\widehat{w}'_j x - \widehat{b}'_j\bigr)\Bigr\}
                       + \widehat{a}_j
                         \bigl( \widehat{w}'_j x + \widehat{b}'_j\bigr)\Bigr) \\
& \quad
  + \sum_{j \notin J \mid \widehat{w}_j x + \widehat{b}_j = 0}
    \max \Bigl\{0, \widehat{a}_j
                   \bigl(\widehat{w}'_j x + \widehat{b}'_j\bigr)\Bigr\} \\
& \quad
  + \sum_{j \in J \mid \widehat{w}_j x + \widehat{b}_j < 0}
    \Bigl(\widehat{a}'_j \bigl(-\widehat{w}_j  x - \widehat{b}_j\bigr)
        + \widehat{a}_j  \bigl(-\widehat{w}'_j x - \widehat{b}'_j\bigr)
        + \widehat{a}'_j \bigl( \widehat{w}_j  x + \widehat{b}_j\bigr)
        + \widehat{a}_j  \bigl( \widehat{w}'_j x + \widehat{b}'_j\bigr)\Bigr) \\
& \quad
  + \sum_{j \notin J \mid \widehat{w}_j x + \widehat{b}_j > 0}
    \Bigl(\widehat{a}'_j \bigl(\widehat{w}_j  x + \widehat{b}_j\bigr)
        + \widehat{a}_j  \bigl(\widehat{w}'_j x + \widehat{b}'_j\bigr)\Bigr) \\
& \quad
  + \sum_{j \in J \mid \widehat{w}_j x + \widehat{b}_j > 0}
    \Bigl(\widehat{a}'_j \bigl( \widehat{w}_j  x + \widehat{b}_j\bigr)
        + \widehat{a}_j  \bigl( \widehat{w}'_j x + \widehat{b}'_j\bigr)\Bigr) \\
& \quad
  + \widehat{a}'_0 x + \widehat{b}'_0 \\
& = \sum_{j \mid \widehat{w}_j x + \widehat{b}_j = 0}
    \max \Bigl\{0, \widehat{a}_j
                   \bigl(\widehat{w}'_j x + \widehat{b}'_j\bigr)\Bigr\} \\
& \quad
  + \sum_{j \mid \widehat{w}_j x + \widehat{b}_j > 0}
    \Bigl(\widehat{a}'_j \bigl( \widehat{w}_j  x + \widehat{b}_j\bigr)
        + \widehat{a}_j  \bigl( \widehat{w}'_j x + \widehat{b}'_j\bigr)\Bigr) \\
& \quad
  + \widehat{a}'_0 x + \widehat{b}'_0 \\
& = \mathrm{D}^\circ
    \bigl(f_{\widehat{\theta}, \widehat{a}_0, \widehat{b}_0}(x)\bigr)
    \bigl[\widehat{\theta}', \widehat{a}'_0, \widehat{b}'_0\bigr].
\ifopt\tag*{\jmlrBlackBox}\else\qedhere\fi
\end{align}
\ifopt\let\jmlrBlackBox\relax\else\fi
\end{proof}

Several of the perturbations used below move the kink of at most one neuron at any given input. For such directions, the Clarke directional derivative of the network output at an input~$x_0$ can be recovered from the ordinary parameter-directional derivatives evaluated at nearby inputs on the two sides of~$x_0$. The next lemma makes this reduction precise: provided zero hidden neurons are not displaced and no two hidden neurons whose kinks coincide at~$x_0$ are simultaneously moved, the Clarke directional derivative at~$x_0$ is the maximum of the two one-sided limits. This will allow the subsequent proofs to work directly with the affine expressions valid on either side of a kink, rather than repeatedly calculating directly with the Clarke subdifferential at the kink itself.

\begin{lem}
\label{l:up.down}
Suppose $x_0 \in \mathbb{R}$ and $\theta', a'_0, b'_0$ is a direction such that:
\begin{itemize}
\item
for all $j \in [m]$, if $w_j = 0 = b_j$ then $w'_j = 0 = b'_j$;
\item
for all $j, j^\dag \in [m]$ with $j \neq j^\dag$, if $w_j x_0 + b_j = 0 = w_{j^\dag} x_0 + b_{j^\dag}$ then either $w'_j = 0 = b'_j$ or $w'_{j^\dag} = 0 = b'_{j^\dag}$.
\end{itemize}
Then we have:
\begin{align}
& \mathrm{D}^\circ
  \bigl(f_{\theta, a_0, b_0}(x_0)\bigr)
  [\theta', a'_0, b'_0] \\
& = \max \Bigl\{\lim_{x \uparrow x_0}
                \mathrm{D}
                \bigl(f_{\theta, a_0, b_0}(x)\bigr)
                [\theta', a'_0, b'_0],
                \lim_{x \downarrow x_0}
                \mathrm{D}
                \bigl(f_{\theta, a_0, b_0}(x)\bigr)
                [\theta', a'_0, b'_0]\Bigr\} \\
& \mathrm{D}^\circ
  \bigl(-f_{\theta, a_0, b_0}(x_0)\bigr)
  [\theta', a'_0, b'_0] \\
& = \max \Bigl\{\lim_{x \uparrow x_0}
                \mathrm{D}
                \bigl(-f_{\theta, a_0, b_0}(x)\bigr)
                [\theta', a'_0, b'_0],
                \lim_{x \downarrow x_0}
                \mathrm{D}
                \bigl(-f_{\theta, a_0, b_0}(x)\bigr)
                [\theta', a'_0, b'_0]\Bigr\}.
\end{align}
\end{lem}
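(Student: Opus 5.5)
The plan is to compute both sides explicitly, neuron by neuron, and match them. The output $f_{\theta, a_0, b_0}(x_0)$ is a sum of the smooth skip term $a_0 x_0 + b_0$ and the terms $F_j(\theta) \coloneqq a_j \, \sigma(w_j x_0 + b_j)$. Each $F_j$ depends only on its own neuron's coordinates, so the Clarke subdifferential of the sum is the product of the subdifferentials of the summands, not merely contained in their sum. Hence the Clarke directional derivative is the sum of the individual ones:
\begin{align}
\mathrm{D}^\circ \bigl(f_{\theta, a_0, b_0}(x_0)\bigr)[\theta', a'_0, b'_0]
= a'_0 x_0 + b'_0 + \sum_{j = 1}^m \mathrm{D}^\circ F_j[a'_j, w'_j, b'_j].
\end{align}
This is the formula already used in the proof of \cref{l:rev}. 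For $w_j x_0 + b_j \neq 0$ the summand is the ordinary derivative. For $w_j x_0 + b_j = 0$ it is $\max\{0, a_j (w'_j x_0 + b'_j)\}$; here the $a'_j$-component of the subgradient vanishes because the ReLU factor is zero. The same computation for $-f$ gives $\max\{0, -a_j (w'_j x_0 + b'_j)\}$ at active kinks.

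Next I will compute the one-sided limits. For $x$ close to $x_0$ and different from it, no neuron is exactly at its kink, except neurons with $w_j = 0 = b_j$. Those neurons have zero derivative in all coordinates, because by the first hypothesis $w'_j = b'_j = 0$ and their ReLU factor is zero. So $\mathrm{D}(f(x))[\cdot]$ is the ordinary derivative. Neurons with $w_j x_0 + b_j \neq 0$ contribute continuous terms whose limits agree from both sides and equal their Clarke terms.

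A neuron with $w_j x_0 + b_j = 0$ and $w_j \neq 0$ is active on exactly one side of $x_0$. On that side its limit contribution is $a'_j \cdot 0 + a_j (w'_j x_0 + b'_j)$, and on the other side it is $0$. So at $x_0$ the two one-sided limits are the common part $C$ plus, respectively, $\sum_{j \in A_-} a_j (w'_j x_0 + b'_j)$ and $\sum_{j \in A_+} a_j (w'_j x_0 + b'_j)$. Here $A_\pm$ are the kink neurons active to the right or to the left.

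This last step is the main point, and it is where the second hypothesis enters. Among the neurons with kink at $x_0$, at most one has $(w'_j, b'_j) \neq 0$, and all others contribute $0$ to every expression. If there is none, both one-sided limits equal $C$, and the Clarke value is also $C$. If there is one, say $j^\ast$, with quantity $c \coloneqq a_{j^\ast}(w'_{j^\ast} x_0 + b'_{j^\ast})$, then the two one-sided limits are $C + c$ and $C + 0$ in some order. Their maximum is $C + \max\{0, c\}$, which is exactly the Clarke value. The statement for $-f$ follows identically, with $c$ replaced by $-c$ and $C$ by $-C$. The only delicate point is justifying the exact sum rule for Clarke derivatives, which needs the separable-variables structure rather than the generic sum inclusion. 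Without the second hypothesis the maximum of sums would differ from the sum of maxima, which is why that hypothesis is needed.
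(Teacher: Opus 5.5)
Your proposal is correct and takes essentially the same route as the paper. Both compute the two one-sided limits of the ordinary directional derivative and use the second hypothesis to isolate at most one moved kink neuron $j^\star$, so that the maximum equals the common part plus $\max\{0, a_{j^\star}(w'_{j^\star} x_0 + b'_{j^\star})\}$. Both then identify this with the neuron-wise Clarke formula $\sum_{j \mid w_j x_0 + b_j = 0} \max\{0, a_j (w'_j x_0 + b'_j)\} + \dots$. Your one addition is justifying that exact formula through the block-product structure of the Clarke subdifferential of a separable sum, which the paper uses without comment (as in the proof of \cref{l:rev}).
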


\begin{proof}
We show the first equation, the second can be proved analogously.  By the second assumption of the lemma, there exists at most one $j \in [m]$ such that $w_j x_0 + b_j = 0$ and either $w'_j \neq 0$ or $b'_j \neq 0$.  We consider the case where such a~$j$ exists, and we denote it~$j^\star$; the other case where such a~$j$ does not exist is simpler.  Without loss of generality, $w_{j^\star} > 0$.  Then we have:
\begin{align}
& \max \Bigl\{\lim_{x \uparrow x_0}
              \mathrm{D}
              \bigl(f_{\theta, a_0, b_0}(x)\bigr)
              [\theta', a'_0, b'_0],
              \lim_{x \downarrow x_0}
              \mathrm{D}
              \bigl(f_{\theta, a_0, b_0}(x)\bigr)
              [\theta', a'_0, b'_0]\Bigr\} \\
& = \max \left\{\begin{gathered}
                \lim_{x \uparrow x_0}
                \sum_{j \mid w_j x + b_j > 0}
                \bigl(a'_j (w_j  x + b_j)
                    + a_j  (w'_j x + b'_j)\bigr), \\
                \lim_{x \downarrow x_0}
                \sum_{j \mid w_j x + b_j > 0}
                \bigl(a'_j (w_j  x + b_j)
                    + a_j  (w'_j x + b'_j)\bigr)
                \end{gathered}\right\} \\
& \quad
  + a'_0 x_0 + b'_0 \\
& = \max \left\{\sum_{j \mid w_j < 0, w_j x_0 + b_j = 0}
                a_j  (w'_j x_0 + b'_j),
                \sum_{j \mid w_j > 0, w_j x_0 + b_j = 0}
                a_j  (w'_j x_0 + b'_j)\right\} \\
& \quad
  + \sum_{j \mid w_j x_0 + b_j > 0}
    \bigl(a'_j (w_j  x_0 + b_j)
        + a_j  (w'_j x_0 + b'_j)\bigr)
  + a'_0 x_0 + b'_0 \\
& = \max \Bigl\{0, a_{j^\star} \bigl(w'_{j^\star} x_0 + b'_{j^\star}\bigr)\Bigr\} \\
& \quad
  + \sum_{j \mid w_j x_0 + b_j > 0}
    \bigl(a'_j (w_j  x_0 + b_j)
        + a_j  (w'_j x_0 + b'_j)\bigr)
  + a'_0 x_0 + b'_0 \\
& = \sum_{j \mid w_j x_0 + b_j = 0}
    \max \bigl\{0, a_j (w'_j x_0 + b'_j)\bigr\} \\
& \quad
  + \sum_{j \mid w_j x_0 + b_j > 0}
    \bigl(a'_j (w_j  x_0 + b_j)
        + a_j  (w'_j x_0 + b'_j)\bigr)
  + a'_0 x_0 + b'_0 \\
& = \mathrm{D}^\circ
    \bigl(f_{\theta, a_0, b_0}(x_0)\bigr)
    [\theta', a'_0, b'_0].
\ifopt\tag*{\jmlrBlackBox}\else\qedhere\fi
\end{align}
\ifopt\let\jmlrBlackBox\relax\else\fi
\end{proof}

We now come to the central part of the work to prove \cref{th:inter}, which consists of a number of lemmas of three kinds: the first kind show that, if a CPA fails some of the properties that make up the geometric characterizations in \cref{th:inter}, then that is due to at least one of typically several possible local geometric irregularities; the second kind show that, for each such irregularity, there exists a direction for infinitesimally perturbing the network's parameters (so that, informally speaking, the irregularity is lessened) such that the signs of the directional derivatives of the network norm and the interpolation constraints mean that the network cannot be a KKT point; and the third kind is similar to the second, except that instead of handling local geometric irregularities, they apply to networks whose parameters have a local internal irregularity (e.g., a failure of balancedness or alignment).  We also remark that the statements of some of these lemmas are tailored not only for contributing to the proof of \cref{th:inter}, but additionally to the proof of \cref{th:min.reg.loss} (which we develop in \cref{s:min.reg.loss}), where all training points contribute to the objective's logistic loss term (in contrast to only those training points at which the interpolation constraints are tight being significant).

First, we identify the local geometric irregularities that capture the failures of convexity correctness for a positive-margin CPA\dots

\begin{lem}
\label{l:cc}
If a CPA $g \colon \mathbb{R} \to \mathbb{R}$ whose kinks are $(p_k)_{k = 1}^q$ has a positive margin, but it is not convexity correct, then at least one of the following holds.
\begin{enumerate}[(i)]
\item
\label{l:cc:opp.conv}
Some kinks $p_k$ and~$p_{k + 1}$ are of opposite convexity and such that either $y_i = -1$ for all $x_i \in [p_k, p_{k + 1}]$ or $y_i = 1$ for all $x_i \in [p_k, p_{k + 1}]$.
\item
\label{l:cc:m.infty}
Either $y_i = -1$ for all $x_i \in (-\infty, p_1]$ or $y_i = 1$ for all $x_i \in (-\infty, p_1]$.
\item
\label{l:cc:p.infty}
Either $y_i = -1$ for all $x_i \in [p_q, \infty)$ or $y_i = 1$ for all $x_i \in [p_q, \infty)$.
\end{enumerate}
\end{lem}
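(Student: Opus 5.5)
We argue by contraposition: assuming $g$ has positive margin and none of (i)--(iii) holds, we show that $g$ is convexity correct. So fix a kink $p_k$; we must find $l \in [r-1]$ with $p_k \in [x_{\iota(l)+1}, x_{\iota(l+1)}]$ and $\sgn(s_k - s_{k-1}) = -y_{\iota(l)+1}$. Positive margin gives $\sgn g(x_i) = y_i$ for every $i$. The plan is to partition the kinks into maximal runs of consecutive kinks of equal convexity. Between two adjacent runs there is a pair $p_k, p_{k+1}$ of opposite convexity. Since (i) fails, the closed interval $[p_k, p_{k+1}]$ contains inputs of both labels, hence contains a full label switch $x_{\iota(l)}, x_{\iota(l)+1}$. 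Similarly, the failure of (ii) puts a label switch in $(-\infty, p_1]$, and the failure of (iii) puts one in $[p_q, \infty)$.

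Next, consider one run, say of convex kinks, spanning $[p_a, p_b]$, so $g$ is convex on $[p_{a-1}, p_{b+1}]$ (with $p_0 = -\infty$ and $p_{q+1} = +\infty$). The previous step gives a label switch inside $[p_{a-1}, p_a]$ and another inside $[p_b, p_{b+1}]$. The main obstacle is to show that all kinks of the run lie in one intermediate same-label segment whose label is $+1$, i.e.\ opposite to the convexity, which is what the definition requires.

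To handle this, we would use convexity in a sign-pattern argument. A convex function is negative on at most one interval, and that interval is convex. Hence, restricted to inputs in the convexity interval, the labels follow the pattern $+\dots+\,-\dots-\,+\dots+$ with possibly empty outer blocks. The run's domain contains at least two label switches, one on each side of the run. Therefore the pattern must be exactly $+,-,+$, with the switches at its two borders being the two just found. So the labels read $+$ then $-$ up to somewhere in $[p_{a-1},p_a]$, and $-$ then $+$ somewhere in $[p_b,p_{b+1}]$. This seems backwards: it would place the run in a negative region, whereas a convex run should sit in a positive segment.

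To resolve this, we redo the count carefully. In $[p_{a-1},p_a]$ the function $g$ is affine, so at most one label switch, and thus exactly one, lies there. The same holds in $[p_b,p_{b+1}]$. Between $p_a$ and $p_b$ no switch can occur, since otherwise together with the outer two there would be at least three switches on the convex domain, which the pattern forbids. So the inputs in $[p_a,p_b]$ all share one label, forming part of one same-label segment, and this segment is intermediate because it is bounded by switches on both sides.

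It remains to determine the label. On the convex domain the labels run $+,-,+$ across exactly two switches. Suppose the segment containing the run had label $-$. Then $g$ would be negative at inputs on both sides of each kink, while labels $+$ occur just outside the run across the affine pieces. Chasing slopes rules this out: $g$ crosses downward on the affine piece to the left of the run and upward on the affine piece to the right, so $g<0$ throughout the run, which is consistent. I therefore expect that the definition of convexity correct forces convex kinks into negative segments. Rechecking, convex means $s_k>s_{k-1}$, so $\sgn(s_k-s_{k-1}) = +1 = -y$ gives $y=-1$. The definition thus matches the pattern: a convex run lies in a negative intermediate segment, and the argument closes. The concave case follows symmetrically by replacing $g$ with $-g$ and flipping the labels.
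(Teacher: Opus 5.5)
Your argument is correct, but it takes a different route from the paper. The paper argues directly from an offending kink. Up to symmetry, either some kink lies in $(-\infty, x_{\iota(1)+1})$, which immediately gives (ii), or a convex kink lies in the open hull $(x_{\iota(l-1)}, x_{\iota(l)+1})$ of an intermediate segment labelled $+1$. In the latter case $g<0$ at the two flanking switch inputs and $g>0$ on the segment, so that open hull must also contain a concave kink, and an adjacent pair of kinks of opposite convexity inside it witnesses (i). You instead argue by contraposition over maximal runs of kinks of equal convexity. The failure of (i)--(iii) puts a full label switch between consecutive runs and at both ends, and convexity of $\{g<0\}$ on a convex run's domain forces the label pattern $+,-,+$ there. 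This pins the whole run inside a single intermediate segment labelled $-1$, with concave runs handled by symmetry. Both proofs rest on the same fact: a function convex on an interval cannot have sign pattern $-,+,-$ on it. The paper uses it locally, which is shorter and directly exhibits the irregularity that the descent construction in Lemma~\ref{l:all.P1} then acts on. You use it globally, which gives a complete structural picture of any positive-margin CPA satisfying none of (i)--(iii). Your route also treats kinks lying inside a label-switch gap $(x_{\iota(l)}, x_{\iota(l)+1})$ explicitly, whereas the paper absorbs them into its open hulls and leaves the exhaustiveness of its case split to the reader.

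For the write-up, there are a few things to tidy.
\begin{enumerate}
\item Delete the false start about which label a convex run should carry. As you eventually check, $s_k > s_{k-1}$ and $-y = +1$ give $y=-1$.
\item Note that the two switches flanking a run are distinct even when the run is a single kink $p_a$. A switch consists of two distinct inputs, so it cannot lie in $\{p_a\}$.
\item Close directly from the pattern. The only two switches among inputs of the convex domain are the left one, say $(x_{\iota(l)}, x_{\iota(l)+1})$, and the right one. Hence the right one must be $(x_{\iota(l+1)}, x_{\iota(l+1)+1})$ with $l \in [r-1]$, and so $x_{\iota(l)+1} \le p_a \le p_b \le x_{\iota(l+1)}$.
\end{enumerate}
This last inequality chain is what actually places the kinks in the closed segment. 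Saying that the inputs in $[p_a,p_b]$ share one label is vacuous when $[p_a,p_b]$ contains no input. Also, your three-switch count as phrased does not mention a switch straddling $p_a$ or $p_b$, although the global pattern already excludes it.
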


\begin{proof}
Since $g$~is not convexity correct, without loss of generality (i.e., the remaining cases are symmetric):
\begin{itemize}
\item
either some convex kink~$p_k$ and some intermediate segment $\iota(l - 1) + 1, \dots, \iota(l)$ with label~$1$ are such that $p_k \in (x_{\iota(l - 1)}, x_{\iota(l) + 1})$,
\item
or some kink is in $(-\infty, x_{\iota(1) + 1})$.
\end{itemize}

In the former case, since we have $g(x_{\iota(l - 1)}) < 0$ and $g(x_{\iota(l) + 1}) < 0$, but $g(x_i) > 0$ for all $i \in \{\iota(l - 1) + 1, \dots, \iota(l)\}$, there must be a concave kink in $(x_{\iota(l - 1)}, x_{\iota(l) + 1})$, and so there exist kinks $p_{k'}$ and~$p_{k' + 1}$ of opposite convexity in $(x_{\iota(l - 1)}, x_{\iota(l) + 1})$.  Recalling that $y_i = 1$ for all $i \in \{\iota(l - 1) + 1, \dots, \iota(l)\}$, property~\ref{l:cc:opp.conv} holds.

In the latter case, we have $p_1 \in (-\infty, x_{\iota(1) + 1})$, and recalling that labels~$y_i$ are the same for all $i \in \{1, \dots, \iota(1)\}$, property~\ref{l:cc:m.infty} holds.
\end{proof}

\dots and for each of those irregularities, identify a direction along which the bias-free square $\ell_2$-norm of the network parameters has a negative derivative while, for each training point, its negative margin has a nonpositive Clarke derivative.

\begin{lem}
\label{l:all.P1}
If $g = f_{\theta, a_0, b_0}$, whose kinks are $(p_k)_{k = 1}^q$, satisfies at least one of properties \ref{l:cc:opp.conv}, \ref{l:cc:m.infty}, or~\ref{l:cc:p.infty} in \cref{l:cc}, then there exists a direction $\theta', a'_0, b'_0$ such that $\mathrm{D} \bigl(\frac{1}{2} \sum_{j = 1}^m (a_j^2 + w_j^2)\bigr)[\theta', a'_0, b'_0] < 0$ and $\mathrm{D}^\circ \bigl(-y_i \, f_{\theta, a_0, b_0}(x_i)\bigr)[\theta', a'_0, b'_0] \leq 0$ for all $i \in [n]$.
\end{lem}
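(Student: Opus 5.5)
The plan is to build the descent direction from one basic move. We shrink the output weights of the neurons realizing a chosen kink, and let the free skip connection absorb the resulting affine error, so that the function changes only on one side of that kink.

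\textbf{Preliminary reduction.} For a kink $p$ of $g$, let $J_p \coloneqq \{j \mid w_j \neq 0,\ -b_j/w_j = p\}$. By \cref{l:rev} we may reorient the neurons in $J_p$ freely, transporting the direction and preserving all the Clarke derivatives in the statement, so we may assume they all face the same side. The direction $a'_j = -a_j$ for $j \in J_p$, with all other hidden coordinates zero, has regularizer derivative $-\sum_{j \in J_p} a_j^2 < 0$. This is negative because $p$ is a genuine kink, so some $a_j w_j \neq 0$. Moving no hidden weight or bias keeps the hypotheses of \cref{l:up.down} vacuous, so each $\mathrm{D}^\circ(\pm f(x_i))$ is an ordinary one-sided derivative.

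\textbf{Basic move.} With the neurons of $J_p$ facing right, the first-order change of $f$ is $-\Delta\,\sigma(x-p)$, where $\Delta$ is the slope change at $p$. Adding the skip direction $a'_0 = \Delta$, $b'_0 = -\Delta p$ and using $\sigma(z) - z = \sigma(-z)$, the total change becomes $-\Delta\,\sigma(p-x)$. Hence we can flatten the kink from either side: the first-order change is $-\Delta\,\sigma(x-p)$ on the right or $-\Delta\,\sigma(p-x)$ on the left, and it vanishes identically on the other side.

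\textbf{Case \ref{l:cc:opp.conv}.} Here $p_k$ and $p_{k+1}$ have opposite convexity and all inputs in $[p_k,p_{k+1}]$ share a label $y$; say $y=1$, the other case being symmetric. We use two basic moves: at $p_k$ flatten toward the right, and at $p_{k+1}$ flatten toward the left, with weights $\varepsilon_k,\varepsilon_{k+1}>0$. On $(-\infty,p_k]$ and $[p_{k+1},\infty)$ the combined change should vanish. Between the kinks it is an affine function vanishing at both ends, since $\varepsilon_k\Delta_k\sigma(x-p_k)$ and $\varepsilon_{k+1}\Delta_{k+1}\sigma(p_{k+1}-x)$ should cancel as affine maps there. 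This forces $\varepsilon_k\Delta_k = -\varepsilon_{k+1}\Delta_{k+1}$, which is solvable with positive weights precisely because the convexities are opposite. The honest first-order change is then the tent $-\varepsilon_k\Delta_k\,\sigma(x-p_k) + \varepsilon_k\Delta_k\,\sigma(x-p_{k+1})$, supported in $[p_k,p_{k+1}]$. Its sign there is $-\sgn\Delta_k$; when $p_k$ is concave (so the tent bumps upward), it is nonnegative exactly on the points with label $1$. When $p_k$ is convex, we instead use the inward-pointing version: flatten $p_k$ toward the left and $p_{k+1}$ toward the right, with the skip absorbing the difference. Equivalently, we replace $g$ on $[p_k,p_{k+1}]$ by the chord-like modification that pushes the graph toward label $y$. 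Every margin derivative is then $\le 0$ and the norm derivative is $<0$.

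\textbf{Cases \ref{l:cc:m.infty}, \ref{l:cc:p.infty}.} These are symmetric, so take \ref{l:cc:m.infty}, with all inputs left of $p_1$ having label $y$. If $\sgn\Delta_1 = -y$, the left-flattening change $-\Delta_1\sigma(p_1-x)$ has sign $y$ on $(-\infty,p_1)$ and vanishes elsewhere, and we are done. The obstacle is the other sign, for example a convex $p_1$ with $y=1$. Then both one-sided flattenings lower $g$ on one side: the left one lowers it on the left, where the labels are $1$, and the right one lowers it everywhere to the right. Here I would combine shrinking with a translation of the kink. Moving $p_1$ to the left by $\delta$ (changing only $b_j$ for $j\in J_{p_1}$) changes $g$ by $\Delta_1\delta\,\mathbf 1_{x>p_1}$ to first order, and costs nothing in the bias-free regularizer. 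Together with a skip correction, the resulting change on the left is an affine function of our choosing that vanishes at $p_1$. The aim is to choose the translation and shrinkage ratios so that the net change is nonnegative on $(-\infty,p_1]$ and zero on $[p_1,\infty)$. If that balance cannot be achieved in this subcase, I would fall back to pairing $p_1$ with $p_2$, as in case \ref{l:cc:opp.conv}, using that $g$ must turn back before the first label switch. Verifying the sign bookkeeping in this subcase, including inputs coinciding with $p_1$ through \cref{l:up.down}, is the step I expect to be the main obstacle.
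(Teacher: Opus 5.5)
\textbf{Gap in case (i).} With only output-weight shrinkage at $p_k$ and $p_{k+1}$ plus the skip connection, every first-order change of $f_{\theta,a_0,b_0}$ is a continuous CPA whose only kinks are at $p_k$ and $p_{k+1}$. Suppose such a change vanishes on $(-\infty,p_k]$ and on $[p_{k+1},\infty)$. Then on $[p_k,p_{k+1}]$ it is affine with zero values at both endpoints, hence identically zero. In your framework this forces $\varepsilon_k=\varepsilon_{k+1}=0$; your own phrase ``an affine function vanishing at both ends'' already says this. Concretely, your ``tent'' $-\varepsilon_k\Delta_k\,\sigma(x-p_k)+\varepsilon_k\Delta_k\,\sigma(x-p_{k+1})$ is really a clipped ramp. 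On $(p_{k+1},\infty)$ it equals the nonzero constant $-\varepsilon_k\Delta_k(p_{k+1}-p_k)$, which has the same sign as the change on the middle piece. It therefore strictly decreases the margin of any input right of $p_{k+1}$ with label $-y$, and property (i) allows such inputs. The ``inward-pointing'' variant only moves $f$ outside $[p_k,p_{k+1}]$, where labels are unconstrained. The ``chord-like modification'' is not a specified direction. So case (i), as written, yields no valid direction.

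\textbf{The missing ingredient, and the outer cases.} You need the tool you bring in only for the outer cases: moving a kink by perturbing a hidden bias. This contributes a step $a_{j'}b'_{j'}\mathbf{1}_{x>p_{k+1}}$, so the first-order change may be discontinuous at a kink, which is harmless by \cref{l:up.down}. The paper orients neurons $j$ at $p_k$ and $j'$ at $p_{k+1}$ so that $w_j,w_{j'}>0$. It then takes $u=(a_{j'}w_{j'},\,-a_{j'}w_j,\,w_jb_{j'}-w_{j'}b_j)$ in the coordinates $(a_j,a_{j'},b_{j'})$. The first two components shrink both kinks in the ratio that cancels the slope change to the right of $p_{k+1}$. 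The third slides $p_{k+1}$ to cancel the leftover constant. The resulting first-order change is $a_{j'}w_{j'}(w_jx+b_j)$ on $(p_k,p_{k+1})$ and zero elsewhere, and the regularizer derivative is $a_ja_{j'}w_{j'}-a_{j'}^2w_j<0$ because $a_ja_{j'}<0$. This rotates the middle piece about $p_k$. Sliding $p_k$ instead rotates it about $p_{k+1}$ and pushes it the opposite way, so the label sign is never an obstruction; the paper covers this by symmetry.

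For the outer cases your plan does work and needs no fallback. Shrinking gives $-\varepsilon\Delta_1\sigma(x-p_1)$ and translating gives $\beta\mathbf{1}_{x>p_1}$. Cancelling on the right with the skip leaves $-\varepsilon\Delta_1(p_1-x)-\beta$ on the left. Choosing $\beta=-\varepsilon\Delta_1(p_1-\widehat p)$ with $\widehat p\le\min\{x_1,p_1\}$ turns this into $\varepsilon\Delta_1(x-\widehat p)$. In your hard subcase $\Delta_1$ has the sign of $y$, so this change has sign $y$ on $(\widehat p,p_1)$. This is exactly the paper's rotation about $\widehat p$. The paper realizes it on a single left-facing neuron via $v=(-a_j,\,w_j\widehat p+b_j)$ in $(a_j,b_j)$, with regularizer derivative $-a_j^2$. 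Translate only one neuron at $p_1$, so that \cref{l:up.down} applies if $p_1$ is a training input.
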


\begin{proof}
In case property~\ref{l:cc:opp.conv} in \cref{l:cc} holds, without loss of generality, $p_k$~is convex, $p_{k + 1}$~is concave, and $y_i = -1$ for all $x_i \in [p_k, p_{k + 1}]$.  Then $\theta$~must contain some neurons $(a_j, w_j, b_j)$ and $(a_{j'}, w_{j'}, b_{j'})$ such that:
\begin{itemize}
\item
$-b_j / w_j = p_k$ and $a_j > 0$;
\item
$-b_{j'} / w_{j'} = p_{k + 1}$ and $a_{j'} < 0$.
\end{itemize}
By \cref{l:rev}, after reversing the relevant neurons and absorbing the resulting affine terms into the skip connection, we may assume without loss of generality that $w_j > 0$ and $w_{j'} > 0$. Any direction constructed for the resulting parameterization pulls back, by \cref{l:rev}~\ref{l:rev:D}, to a direction with the same relevant Clarke and regularizer derivatives at the original parameterization. Moreover, by \cref{l:up.down}, the claimed nonpositivity of the Clarke directional derivatives of $-y_i \, f_{\theta, a_0, b_0}(x_i)$ is implied by nonpositivity of the corresponding one-sided limits of the ordinary directional derivatives. Therefore, letting $u \coloneqq (a_{j'} w_{j'}, -a_{j'} w_j, w_j b_{j'} - w_{j'} b_j)$, it suffices to show that:
\begin{itemize}
\item
$\mathrm{D}_{a_j, a_{j'}, b_{j'}}
 \bigl(\frac{1}{2}
       \sum_{j'' = 1}^m (a_{j''}^2 + w_{j''}^2)\bigr)
 [u] < 0$;
\item
$\lim_{x \uparrow x_i}
 \mathrm{D}_{a_j, a_{j'}, b_{j'}}
 \bigl(y_i \, f_{\theta, a_0, b_0}(x)\bigr)
 [u] \geq 0$
for all $i \in [n]$;
\item
$\lim_{x \downarrow x_i}
 \mathrm{D}_{a_j, a_{j'}, b_{j'}}
 \bigl(y_i \, f_{\theta, a_0, b_0}(x)\bigr)
 [u] \geq 0$
for all $i \in [n]$.
\end{itemize}
For the first claim, we have
\[\mathrm{D}_{a_j, a_{j'}, b_{j'}}
  \biggl(\frac{1}{2}
         \sum_{j'' = 1}^m (a_{j''}^2 + w_{j''}^2)\biggr)
  [u]
= a_j a_{j'} w_{j'} - a_{j'}^2 w_j
< 0\]
because $a_j > 0$, $a_{j'} < 0$, $w_j > 0$, and $w_{j'} > 0$.
For the second and third claims, since $y_i = -1$ for all $x_i \in [p_k, p_{k + 1}]$, in turn it suffices to observe that:
\begin{itemize}
\item
for all $x \in (-\infty, p_k)$ we have
\begin{align}
& \mathrm{D}_{a_j, a_{j'}, b_{j'}}
  f_{\theta, a_0, b_0}(x)
  [u] \\
& = \mathrm{D}_{a_j, a_{j'}, b_{j'}}
    \bigl(a_j \, \sigma(w_j x + b_j)\bigr)
    [u]
  + \mathrm{D}_{a_j, a_{j'}, b_{j'}}
    \bigl(a_{j'} \, \sigma(w_{j'} x + b_{j'})\bigr)
    [u] \\
& = 0;
\end{align}
\item
for all $x \in (p_k, p_{k + 1})$ we have
\begin{align}
& \mathrm{D}_{a_j, a_{j'}, b_{j'}}
  f_{\theta, a_0, b_0}(x)
  [u] \\
& = \mathrm{D}_{a_j, a_{j'}, b_{j'}}
    \bigl(a_j \, \sigma(w_j x + b_j)\bigr)
    [u]
  + \mathrm{D}_{a_j, a_{j'}, b_{j'}}
    \bigl(a_{j'} \, \sigma(w_{j'} x + b_{j'})\bigr)
    [u] \\
& = (w_j x + b_j) a_{j'} w_{j'} \\
& < 0;
\end{align}
\item
for all $x \in (p_{k + 1}, \infty)$ we have
\begin{align}
& \mathrm{D}_{a_j, a_{j'}, b_{j'}}
  f_{\theta, a_0, b_0}(x)
  [u] \\
& = \mathrm{D}_{a_j, a_{j'}, b_{j'}}
    \bigl(a_j \, \sigma(w_j x + b_j)\bigr)
    [u]
  + \mathrm{D}_{a_j, a_{j'}, b_{j'}}
    \bigl(a_{j'} \, \sigma(w_{j'} x + b_{j'})\bigr)
    [u] \\
& = (w_j x + b_j) a_{j'} w_{j'}
  - (w_{j'} x + b_{j'}) a_{j'} w_j
  + a_{j'} (w_j b_{j'} - w_{j'} b_j) \\
& = 0.
\end{align}
\end{itemize}

The remaining two cases, when property~\ref{l:cc:m.infty} or property~\ref{l:cc:p.infty} in \cref{l:cc} holds, are symmetric so we show how to handle the former, where without loss of generality, $y_i = -1$ for all $x_i \in (-\infty, p_1]$.  Then we have two subcases depending on the convexity of~$p_1$, where we consider the less straightforward one, namely when $p_1$~is concave.  Hence, and by the orientation-reversal reparameterization in \cref{l:rev}, for some neuron $(a_j, w_j, b_j)$ in~$\theta$ we have
\begin{align}
-b_j / w_j & = p_1 &
       a_j & < 0 &
       w_j & < 0.
\end{align}
Now pick $\widehat{p} \leq p_1$ such that $\widehat{p} < x_1$, and let $v \coloneqq (-a_j, w_j \widehat{p} + b_j)$.  By \cref{l:up.down} again for reducing our proof obligations with Clarke directional derivatives to sign properties of one-sided limits of ordinary directional derivatives, it suffices to show that:
\begin{itemize}
\item
$\mathrm{D}_{a_j, b_j}
 \bigl(\frac{1}{2}
       \sum_{j' = 1}^m (a_{j'}^2 + w_{j'}^2)\bigr)
 [v] < 0$;
\item
$\lim_{x \uparrow x_i}
 \mathrm{D}_{a_j, b_j}
 \bigl(y_i \, f_{\theta, a_0, b_0}(x)\bigr)
 [v] \geq 0$
for all $i \in [n]$;
\item
$\lim_{x \downarrow x_i}
 \mathrm{D}_{a_j, b_j}
 \bigl(y_i \, f_{\theta, a_0, b_0}(x)\bigr)
 [v] \geq 0$
for all $i \in [n]$.
\end{itemize}
For the first claim, we have
\[\mathrm{D}_{a_j, b_j}
  \biggl(\frac{1}{2}
         \sum_{j' = 1}^m (a_{j'}^2 + w_{j'}^2)\biggr)
  [v]
= -a_j^2
< 0.\]
For the second and third claims, since $\widehat{p} < x_1$ and $y_i = -1$ for all $x_i \in (-\infty, p_1]$, in turn it suffices to observe that:
\begin{itemize}
\item
for all $x \in (\widehat{p}, p_1)$ we have
\begin{align}
\mathrm{D}_{a_j, b_j}
f_{\theta, a_0, b_0}(x)
[v]
& = \mathrm{D}_{a_j, b_j}
    \bigl(a_j \, \sigma(w_j x + b_j)\bigr)
    [v] \\
& = -a_j (w_j x + b_j) + a_j (w_j \widehat{p} + b_j) \\
& = -a_j w_j (x - \widehat{p}) \\
& < 0;
\end{align}
\item
for all $x \in (p_1, \infty)$ we have
\begin{align}
\mathrm{D}_{a_j, b_j}
f_{\theta, a_0, b_0}(x)
[v]
& = \mathrm{D}_{a_j, b_j}
    \bigl(a_j \, \sigma(w_j x + b_j)\bigr)
    [v] \\
& = 0.
\ifopt\tag*{\jmlrBlackBox}\else\qedhere\fi
\end{align}
\end{itemize}
\ifopt\let\jmlrBlackBox\relax\else\fi
\end{proof}

\begin{figure}[t]
\centering
\begin{tikzpicture}[xscale=\ifopt .75\else\ifarxiv .8\else .67\fi\fi,yscale=.67]
\draw         (-5,  0) node[left]  {$0$} -- (3.75,  0);
\draw[dashed] (-5,  1) node[left]  {$1$} -- (3.75,  1);
\draw[dashed] (-5, -1) node[left] {$-1$} -- (3.75, -1);
\draw[very thick]
(-5   ,  2  ) --
( 0   , -3  ) --
( 1.75,  0.5) --
( 3.75,  1.5);
\node[point] at (-4   ,  1) {};
\node[point] at (-2   , -1) {};
\node[point] at ( 1   , -1) {};
\node[point] at ( 2.75,  1) {};
\draw[thin,dotted] (0, -1) node[point] {} -- (0, -3);
\draw[very thick,dotted] (0, -3) -- (2.35, 0.8);
\draw[->,orange,ultra thick] (1.75, 0.5) -- (2.35, 0.8);
\pic at (0   , -3  ) {plus};
\pic at (1.75,  0.5) {plus};
\end{tikzpicture}
\hspace{1em}
\begin{tikzpicture}[xscale=\ifopt .75\else\ifarxiv .8\else .67\fi\fi,yscale=.67]
\draw         (-5,  0) node[left]  {$0$} -- (3.75,  0);
\draw[dashed] (-5,  1) node[left]  {$1$} -- (3.75,  1);
\draw[dashed] (-5, -1) node[left] {$-1$} -- (3.75, -1);
\draw[very thick]
(-5   ,  2   ) --
(-2.67, -1.5 ) --
(-0.67, -2   ) --
( 3.75,  2.42);
\node[point] at (-4.33,  1) {};
\node[point] at (-3   , -1) {};
\node[point] at ( 0.33, -1) {};
\node[point] at ( 2.33,  1) {};
\draw[very thick,dotted] (-2.67, -1.5) -- (-1.07, -2.4);
\draw[->,orange,ultra thick] (-0.67, -2) -- (-1.07, -2.4);
\pic at (-2.67, -1.5) {plus};
\pic at (-0.67, -2  ) {plus};
\end{tikzpicture}
\caption{\textbf{Left:}  Here we illustrate the proof of \cref{l:all.P1} for the case of property~\ref{l:cc:opp.conv} in \cref{l:cc}; in this example, one of the kinks of the network function happens to coincide with an input in the dataset.  \textbf{Right:}  Here we illustrate the proof of \cref{l:all.Pb1} for the case of property~\ref{l:st:conv} in \cref{l:st}.  \textbf{All:}  The kinks $p_k$ and~$p_{k + 1}$ in the proof are indicated by the green pluses.  The proof identifies a direction for infinitesimally perturbing the network parameters so that the portion of the network function between $p_k$ and~$p_{k + 1}$ moves as indicated by the orange arrow and the dotted line segment.}
\label{f:cc:opp.conv.st:conv}
\end{figure}

\begin{figure}[t]
\centering
\begin{tikzpicture}[xscale=\ifopt .75\else\ifarxiv .8\else .67\fi\fi,yscale=.67]
\draw         (-5,  0) node[left]  {$0$} -- (3.75,  0);
\draw[dashed] (-5,  1) node[left]  {$1$} -- (3.75,  1);
\draw[dashed] (-5, -1) node[left] {$-1$} -- (3.75, -1);
\draw[very thick]
(-5   , -3   ) --
(-1.5 ,  0.5 ) --
( 3.75,  1.55);
\node[point] at (-4  , -1) {};
\node[point] at (-3  , -1) {};
\node[point] at ( 1  ,  1) {};
\node[point] at ( 2.5,  1) {};
\node[point] at ( 3  ,  1) {};
\draw[thin,dotted] (-4.25, -1) -- (-4.25, -2.25);
\draw[very thick,dotted] (-5, -2.84) -- (-0.5, 0.7);
\draw[->,orange,ultra thick] (-1.5, 0.5) -- (-0.5, 0.7);
\pic at (-4.25, -2.25) {plus};
\pic at (-1.5 ,  0.5 ) {plus};
\end{tikzpicture}
\hspace{1em}
\begin{tikzpicture}[xscale=\ifopt .75\else\ifarxiv .8\else .67\fi\fi,yscale=.6]
\draw         (-5,  0) node[left]  {$0$} -- (3.75,  0);
\draw[dashed] (-5,  1) node[left]  {$1$} -- (3.75,  1);
\draw[dashed] (-5, -1) node[left] {$-1$} -- (3.75, -1);
\draw[very thick]
(-5   , -2.6) --
( 1.5 ,  2.6) --
( 3.75,  1.7);
\node[point]            at (-4  , -1) {};
\node[point]            at (-3  , -1) {};
\node[point,color=gray] at ( 1  ,  1) {};
\node[point]            at ( 2.5,  1) {};
\node[point]            at ( 3  ,  1) {};
\draw[thin,dotted] (-4.25, -1) -- (-4.25, -2);
\draw[very thick,dotted] (-5, -2.5) -- (2.25, 2.3);
\draw[->,orange,ultra thick] (1.5, 2.6) -- (2.25, 2.3);
\pic at (-4.25, -2  ) {plus};
\pic at ( 1.5 ,  2.6) {plus};
\end{tikzpicture}
\caption{\textbf{Left:}  Here we illustrate the proofs of \cref{l:all.P1,l:all.Pb1} for the case of property~\ref{l:cc:m.infty} in \cref{l:cc}.  \textbf{Right:}  Here we illustrate the proofs of \cref{l:tight.P1,l:tight.Pb1} for the case of property~\ref{l:sh:m.infty} in \cref{l:sh}; this example also shows, by the dataset point colored gray, how the directional derivative of $-y_i \, f_{\theta, a_0, b_0}(x_i)$ may be positive when the interpolation at~$x_i$ is not tight (i.e., $f_{\theta, a_0, b_0}(x_i) \neq y_i$).  \textbf{All:}  The first kink~$p_1$ and the center of rotation~$\widehat{p}$ in the proofs are indicated by the green pluses.  The proof identifies a direction for infinitesimally perturbing the network parameters so that the portion of the network function before~$p_1$ moves as indicated by the orange arrow and the dotted half line.}
\label{f:cc:m.infty.sh:m.infty}
\end{figure}

Next, we identify the local geometric irregularities that capture the failures of switch hugging for a convexity-correct interpolating CPA when there are at least two label switches\dots

\begin{lem}
\label{l:sh}
Suppose the dataset has at least two label switches, i.e., $r \geq 2$.  If a CPA $g \colon \mathbb{R} \to \mathbb{R}$ whose kinks are $(p_k)_{k = 1}^q$ interpolates the dataset and is convexity correct, but it is not switch hugging, then at least one of the following holds.
\begin{enumerate}[(i)]
\item
\label{l:sh:opp.conv}
Some kinks $p_k$ and~$p_{k + 1}$ are of opposite convexity and such that either $g(x_i) \neq 1$ for all $x_i \in [p_k, p_{k + 1}]$ or $g(x_i) \neq -1$ for all $x_i \in [p_k, p_{k + 1}]$.
\item
\label{l:sh:m.infty}
We have $p_1 \in (-\infty, x_{\iota(2)}]$, and either $g(x_i) \neq 1$ for all $x_i \in (-\infty, p_1]$ or $g(x_i) \neq -1$ for all $x_i \in (-\infty, p_1]$.
\item
\label{l:sh:p.infty}
We have $p_q \in [x_{\iota(r - 1) + 1}, \infty)$, and either $g(x_i) \neq 1$ for all $x_i \in [p_q, \infty)$ or $g(x_i) \neq -1$ for all $x_i \in [p_q, \infty)$.
\end{enumerate}
\end{lem}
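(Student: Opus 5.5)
The plan is to locate the label switch at which switch hugging fails, and to read off the required pair of kinks (or the outer kink) from the fact that $g$~is affine between consecutive kinks.

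\emph{Structural observation.} For every intermediate segment $l \in [r - 1]$ (indices $\iota(l) + 1, \dots, \iota(l + 1)$), $g$~has at least one kink in $[x_{\iota(l) + 1}, x_{\iota(l + 1)}]$, and all such kinks have convexity $-y_{\iota(l) + 1}$. To see this, take label~$1$ without loss of generality. Interpolation gives $g \leq -1$ at $x_{\iota(l)}$ and at $x_{\iota(l + 1) + 1}$, and $g \geq 1$ on the segment, so $g$~cannot be affine on $[x_{\iota(l)}, x_{\iota(l + 1) + 1}]$. Convexity correctness forces every kink to lie inside some intermediate segment with the prescribed convexity. Hence the kink lies in segment~$l$ itself, and it is concave. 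A consequence is that on each gap $[x_{\iota(l)}, x_{\iota(l) + 1}]$ around a label switch, $g$~has no kink. Also, since $r \geq 2$, at least one intermediate segment exists, so $q \geq 1$.

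\emph{Case split.} Since $g$~is not switch hugging, pick $l \in [r]$ with $g(x_{\iota(l)}) \neq y_{\iota(l)}$ or $g(x_{\iota(l) + 1}) \neq y_{\iota(l) + 1}$. By symmetry take $y_{\iota(l)} = 1$, so $y_{\iota(l) + 1} = -1$. Let $p_k$~be the last kink $\leq x_{\iota(l)}$ (if any), and let $p_{k + 1}$~be the first kink $\geq x_{\iota(l) + 1}$ (if any). By the observation, $g$~is affine on $[p_k, p_{k + 1}]$, interpreting a missing kink as $\mp\infty$. On this interval, $g$~goes from $\geq 1$ at $x_{\iota(l)}$ to $\leq -1$ at $x_{\iota(l) + 1}$, so it is strictly decreasing. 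Two sub-cases arise.
\begin{itemize}
\item If $g(x_{\iota(l)}) > 1$: every input in $[p_k, x_{\iota(l)}]$ lies in segment~$l$ and has $g \geq g(x_{\iota(l)}) > 1$. Every input in $[x_{\iota(l) + 1}, p_{k + 1}]$ has $g \leq -1$. So $g(x_i) \neq 1$ for all inputs in $[p_k, p_{k + 1}]$.
\item If $g(x_{\iota(l) + 1}) < -1$: symmetrically, $g(x_i) \neq -1$ for all inputs in $[p_k, p_{k + 1}]$.
\end{itemize}

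\emph{Matching the three alternatives.}
\begin{itemize}
\item If $2 \leq l \leq r - 1$, both adjacent segments are intermediate, so both kinks exist. The kink $p_k$~lies in segment~$l$ and is concave, while $p_{k + 1}$~lies in segment $l + 1$ (label $-1$) and is convex. They are consecutive and of opposite convexity, so property~\ref{l:sh:opp.conv} holds.
\item If $l = 1$, there is no kink before $x_{\iota(1)}$, so the interval is $(-\infty, p_1]$. The kink $p_1$~lies in the second segment, which is intermediate since $r \geq 2$, so $p_1 \leq x_{\iota(2)}$. The dichotomy above gives property~\ref{l:sh:m.infty}.
\item If $l = r$, the mirror argument with $p_q \geq x_{\iota(r - 1) + 1}$ gives property~\ref{l:sh:p.infty}.
\end{itemize}
When $r = 2$ both outer cases can occur simultaneously, but that is harmless.

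The only delicate step is the structural observation that every intermediate segment carries a kink and that no kink lies in a switch gap. It is this observation that makes $p_k$ and~$p_{k + 1}$ adjacent, puts them on opposite sides of the switch, and ensures $g$~is monotone between them, which in turn yields the uniform $\neq \pm 1$ conclusion. The rest is bookkeeping of labels and signs under the stated symmetry reductions.
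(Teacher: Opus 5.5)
Your proof is correct and follows essentially the same route as the paper's. You split on whether the non-tight switch is the first, the last, or an intermediate one, use convexity correctness to locate the neighbouring kinks, and use monotonicity of the affine piece crossing the switch to get the uniform $g(x_i) \neq \pm 1$ conclusion. Your version adds detail the paper leaves implicit: you prove that every intermediate segment carries a kink, and you handle a non-tight right endpoint $x_{\iota(l)+1}$ directly rather than by mirror symmetry.

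One small slip: a kink can sit at an endpoint $x_{\iota(l)}$ or $x_{\iota(l)+1}$ of a switch gap when that endpoint belongs to an intermediate segment. So your claim of no kink on $[x_{\iota(l)}, x_{\iota(l)+1}]$ holds only for the open gap $(x_{\iota(l)}, x_{\iota(l)+1})$. That is all your argument actually uses, because $p_k \leq x_{\iota(l)}$ and $p_{k+1} \geq x_{\iota(l)+1}$ are defined with non-strict inequalities.
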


\begin{proof}
Without loss of generality, there exists $l \in [r]$ such that $y_{\iota(l)} = 1$ and $g(x_{\iota(l)}) > 1$.

If $l = 1$, then since $r \geq 2$, and $g$~has a positive margin and is convexity correct, we have that $p_1 \in [x_{\iota(1) + 1}, x_{\iota(2)}]$, and so $g(x_i) \neq 1$ for all $x_i \in (-\infty, p_1]$, which means that property~\ref{l:sh:m.infty} holds.

If $l = r$, then since $r \geq 2$, and $g$~has a positive margin and is convexity correct, we have that $p_q \in [x_{\iota(r - 1) + 1}, x_{\iota(r)}]$, and so $g(x_i) \neq 1$ for all $x_i \in [p_q, \infty)$, which means that property~\ref{l:sh:p.infty} holds.

Otherwise $1 < l < r$, so $g$~has a last kink $p_k \leq x_{\iota(l)}$ which is necessarily concave, and the next kink~$p_{k + 1}$ exists and is necessarily convex.  Moreover $g(x_i) \neq 1$ for all $x_i \in [p_k, p_{k + 1}]$, so property~\ref{l:sh:opp.conv} holds.
\end{proof}

\dots and for each of those irregularities, identify a direction along which the bias-free square $\ell_2$-norm of the network parameters has a negative derivative while, for each training point at which interpolation is tight, its negative margin has a nonpositive Clarke derivative.  This lemma and the previous one, in contrast to the preceding two lemmas, are used only in the proof of \cref{th:inter} (i.e., not also in the proof of \cref{th:min.reg.loss}).

\begin{lem}
\label{l:tight.P1}
If $g = f_{\theta, a_0, b_0}$, whose kinks are $(p_k)_{k = 1}^q$, interpolates the dataset and satisfies at least one of properties \ref{l:sh:opp.conv}, \ref{l:sh:m.infty}, or~\ref{l:sh:p.infty} in \cref{l:sh}, then there exists a direction $\theta', a'_0, b'_0$ such that $\mathrm{D} \bigl(\frac{1}{2} \sum_{j = 1}^m (a_j^2 + w_j^2)\bigr)[\theta', a'_0, b'_0] < 0$ and $\mathrm{D}^\circ \bigl(-y_i \, f_{\theta, a_0, b_0}(x_i)\bigr)[\theta', a'_0, b'_0] \leq 0$ for all $i \in [n]$ with $g(x_i) = y_i$.
\end{lem}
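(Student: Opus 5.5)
I would mirror the proof of \cref{l:all.P1}, reusing the same two perturbations. The only change is in which training points need sign control: the nonpositivity requirement is now imposed only at tight inputs, where $g(x_i) = y_i$. As before, I would first use \cref{l:rev} to orient the relevant neurons conveniently, absorbing affine terms into the skip connection. Any direction built for the reoriented parameterization pulls back with the same Clarke and regularizer derivatives. I would then use \cref{l:up.down} to reduce each Clarke derivative at an input to the maximum of the two one-sided ordinary directional derivatives. Both constructions move the kink of at most one neuron, and that neuron is nonzero, so the hypotheses of \cref{l:up.down} hold.

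\emph{Case \ref{l:sh:opp.conv}.} Suppose without loss of generality that $p_k$ is convex, $p_{k+1}$ is concave, and $g(x_i) \neq -1$ for all $x_i \in [p_k, p_{k+1}]$. Take neurons $j, j'$ with kinks $p_k, p_{k+1}$, with $a_j > 0$, $a_{j'} < 0$, and $w_j, w_{j'} > 0$. Use exactly the direction $u = (a_{j'} w_{j'}, -a_{j'} w_j, w_j b_{j'} - w_{j'} b_j)$ on $(a_j, a_{j'}, b_{j'})$. The computation in \cref{l:all.P1} shows three things:
\begin{itemize}
\item the regularizer derivative is $a_j a_{j'} w_{j'} - a_{j'}^2 w_j < 0$;
\item the output derivative vanishes outside $(p_k, p_{k+1})$, including in the one-sided limits at $p_k$ and $p_{k+1}$;
\item on $(p_k, p_{k+1})$ the output derivative is strictly negative, so the function is pushed downward there.
\end{itemize}
A tight input in $[p_k, p_{k+1}]$ cannot have $y_i = -1$, because then $g(x_i) = -1$, contradicting the hypothesis of the case. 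So every tight input there has $y_i = 1$, and pushing the function down would violate its constraint. I therefore flip the orientation instead. By the symmetric choice (replace $u$ by the analogous direction that moves the middle piece upward, namely the mirror case where the kinks are concave-then-convex), we push toward $+1$ on the interval. For this to decrease the norm, the pushing direction must flatten the bump: a convex-then-concave pair bends the middle piece upward, so shrinking it moves the graph downward.

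This reveals an apparent mismatch, which I would resolve by choosing the hypothesis variant correctly. For a convex $p_k$ followed by a concave $p_{k+1}$, the reducing move lowers the graph on the interval. That move is harmless exactly when no tight input there has $y_i = 1$, i.e.\ when $g(x_i) \neq 1$ on $[p_k, p_{k+1}]$. The other alternative in \ref{l:sh:opp.conv}, $g(x_i) \neq -1$, should be paired with the concave-then-convex orientation. Each of the four combinations is therefore handled by the symmetric version of $u$. Points outside the interval have zero derivative regardless of tightness.

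\emph{Cases \ref{l:sh:m.infty} and \ref{l:sh:p.infty}.} These are symmetric, so I would treat \ref{l:sh:m.infty}. I use the rotation direction $v = (-a_j, w_j \widehat{p} + b_j)$ on $(a_j, b_j)$ for the neuron at $p_1$, oriented with $w_j < 0$ so that it is active to the left of $p_1$. The regularizer derivative is then $-a_j^2 < 0$. The output derivative is $-a_j w_j (x - \widehat{p})$ for $x < p_1$, and zero for $x > p_1$. Its sign therefore changes at $\widehat{p}$, which is precisely why tightness matters: in \cref{f:cc:m.infty.sh:m.infty} (right), non-tight points may receive a positive derivative.

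The main obstacle is choosing $\widehat{p}$. Without loss of generality, suppose no tight input in $(-\infty, p_1]$ has $g(x_i) = 1$, so every tight input there has label $-1$; the case where none has $g(x_i) = -1$ is the mirror image, with the opposite sign of $a_j$. We need the rotation to move the graph downward at every tight input left of $p_1$. I would pick $\widehat{p} \le p_1$ below all tight inputs if the sign of $-a_j w_j$ makes the derivative negative to the right of $\widehat{p}$. Otherwise I would pick $\widehat{p}$ at or just above the largest input in $(-\infty, p_1]$, with $\widehat{p} \le p_1$; if that is impossible, take $\widehat{p} = p_1$, which degenerates to a pure scaling of $a_j$ with derivative $-a_j w_j (x - p_1)$. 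The assumption $p_1 \le x_{\iota(2)}$ together with convexity correctness should fix the convexity of $p_1$ relative to the labels, and hence the sign of $a_j w_j$, so I expect one of these two placements to work. Verifying this sign bookkeeping across the subcases, including inputs coinciding with $p_1$, handled via the one-sided limits of \cref{l:up.down}, is the step needing care.
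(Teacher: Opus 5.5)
Your overall plan is the paper's: its proof of this lemma is only the remark that properties \ref{l:sh:opp.conv}--\ref{l:sh:p.infty} of \cref{l:sh} are those of \cref{l:cc} with the label conditions imposed only at tight inputs, so the directions from \cref{l:all.P1} work unchanged. Your case \ref{l:sh:m.infty} is essentially fine, but your case \ref{l:sh:opp.conv} rests on a false claim and does not cover all configurations.

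\textbf{The flaw in case \ref{l:sh:opp.conv}.} You assert that for a convex $p_k$ followed by a concave $p_{k+1}$ the norm-reducing move must lower the middle piece (``shrinking the bump moves the graph downward''). You then dispose of the mismatched combinations by ``pairing'' the alternative $g(x_i)\neq -1$ with a concave-then-convex pair. That is not a valid proof step. The convexity pattern and the alternative that holds are both given to you, and all four combinations occur. For instance, the proof of \cref{l:sh} with $1<l<r$, $y_{\iota(l)}=1$ and $g(x_{\iota(l)})>1$ produces a concave $p_k$, a convex $p_{k+1}$, and $g(x_i)\neq 1$ on $[p_k,p_{k+1}]$. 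By your own reasoning such a pair could only be raised, which would break the tight constraints with label $-1$.

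\textbf{The fix.} The direction $u$ fixes the hidden parameters of the neuron at $p_k$ and moves the kink at $p_{k+1}$, so it pivots the middle piece about $(p_k,g(p_k))$. Its mirror image pivots about $(p_{k+1},g(p_{k+1}))$ instead: orient both neurons with negative hidden weights via \cref{l:rev}, fix the neuron at $p_{k+1}$, and move the kink of the neuron at $p_k$. This also shrinks both slope changes, so the regularizer derivative stays negative, but it moves the middle piece in the opposite vertical direction. Equivalently, use two symmetries. The reflection $x\mapsto -x$ swaps the order of the two convexities while keeping labels. The negation $(f,y)\mapsto(-f,-y)$ swaps both convexities and labels. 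Together they reduce all four combinations to the base case, which is what the ``without loss of generality'' in \cref{l:all.P1} means. With this, your closing sentence that each combination is handled by a symmetric version of $u$ is true, but it does not follow from what you wrote. A minor point: the left one-sided limit of the output derivative of $u$ at $p_{k+1}$ is $(w_jp_{k+1}+b_j)a_{j'}w_{j'}<0$, not $0$. This is harmless, since \cref{l:up.down} takes the maximum of the two limits.

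\textbf{Case \ref{l:sh:m.infty}.} Your sign split on $a_jw_j$ already works unconditionally. You need neither convexity correctness, which is not a hypothesis of this lemma, nor $p_1\le x_{\iota(2)}$, which matters only in the penalized-bias analogue. Suppose all tight inputs in $(-\infty,p_1]$ have label $-1$ and $w_j<0$. If $a_jw_j>0$ (concave contribution), take $\widehat p\le p_1$ with $\widehat p<x_1$. If $a_jw_j<0$, take $\widehat p=p_1$, which is a pure rescaling of $a_j$. The regularizer derivative is $-a_j^2$ either way. Also, tightness does not matter because of the sign change at $\widehat p$: with $\widehat p<x_1$, no input lies to its left. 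It matters because non-tight inputs of the opposite label may lie in $(-\infty,p_1]$ and get pushed the wrong way, like the gray point in \Cref{f:cc:m.infty.sh:m.infty}.
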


\begin{proof}
Properties \ref{l:sh:opp.conv}, \ref{l:sh:m.infty}, and~\ref{l:sh:p.infty} in \cref{l:sh} are identical to, respectively, properties \ref{l:cc:opp.conv}, \ref{l:cc:m.infty}, or~\ref{l:cc:p.infty} in \cref{l:cc}, except that each condition that all training inputs from an interval have a fixed label is weakened to apply only to points at which the corresponding interpolation constraint is tight.  The proof of \cref{l:tight.P1} can therefore be obtained from the proof of \cref{l:all.P1}, by restricting the reasoning to the latter training points.
\end{proof}

Next, we consider convexity-correct CPAs that fail to be single turning.

\begin{lem}
\label{l:st}
If a CPA $g \colon \mathbb{R} \to \mathbb{R}$ whose kinks are $(p_k)_{k = 1}^q$ is convexity correct, but it is not single turning, then at least one of the following holds.
\begin{enumerate}[(i)]
\item
\label{l:st:conv}
Some kinks $p_k$ and~$p_{k + 1}$ are convex, and $y_i = -1$ for all $x_i \in [p_k, p_{k + 1}]$.
\item
\label{l:st:conc}
Some kinks $p_k$ and~$p_{k + 1}$ are concave, and $y_i = 1$ for all $x_i \in [p_k, p_{k + 1}]$.
\end{enumerate}
\end{lem}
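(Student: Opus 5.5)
Convexity correctness places every kink of $g$ inside some intermediate same-label segment $[x_{\iota(l)+1}, x_{\iota(l+1)}]$ with $l \in [r-1]$, and fixes its convexity as opposite to that segment's label. So a failure of single turning means that some intermediate segment contains either zero kinks or at least two. The plan is to rule out the zero-kink case, so that only the "two kinks" case remains. In that case I take two consecutive kinks and read off properties \ref{l:st:conv} or \ref{l:st:conc}.

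First, I will handle an intermediate segment $l$ with no kink. This case needs a positive margin, as in the setting where the lemma is applied (interpolation, or positive-margin stationary points), and I will use that implicitly. Say the segment's label is $1$, so $y_{\iota(l)} = -1 = y_{\iota(l+1)+1}$. Consider the interval $(x_{\iota(l)}, x_{\iota(l+1)+1})$. The function $g$ is negative at both endpoints and positive on the segment's inputs, so it must have a concave kink strictly inside this interval. By convexity correctness, a kink inside this interval cannot lie in segment $l-1$ or segment $l+1$, since those segments lie outside it; segment $l$ is the only candidate. Hence segment $l$ contains a kink, and this subcase cannot occur. I expect this implicit use of a positive margin, which may need to be stated explicitly, to be the only real subtlety. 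The same sign-change argument already appears in the proof of \cref{l:cc}.

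So some intermediate segment, without loss of generality with label $-1$ (the label-$1$ case is symmetric and gives property \ref{l:st:conc}), contains at least two kinks. By convexity correctness all of them are convex. Among the kinks in this segment I choose two consecutive ones in the global ordering. This is possible because no kink of $g$ lies strictly between two kinks of the same segment, as the segment is an interval. Call them $p_k$ and $p_{k+1}$; both lie in $[x_{\iota(l)+1}, x_{\iota(l+1)}]$.

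Then $[p_k, p_{k+1}] \subseteq [x_{\iota(l)+1}, x_{\iota(l+1)}]$. The training inputs in this interval are therefore all in segment $l$ and have label $-1$, which is exactly property \ref{l:st:conv}. The argument is purely combinatorial and does not use anything beyond the definitions and the interval structure of the segments.
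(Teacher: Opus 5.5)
Your argument for an intermediate segment containing two or more kinks is exactly the paper's argument, which the paper dismisses as immediate from the definitions. One wording fix: the interval property does not say that no kink lies between two kinks of the segment. It says that any kink lying between two kinks of the segment is itself in the segment. So the segment's kinks are consecutive in the global enumeration, and you may take the first two.

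Your zero-kink step goes beyond the paper, and it is a needed correction rather than a side remark. Without a margin hypothesis the lemma as stated is false. Take an affine $g$:
\begin{itemize}
\item it is vacuously convexity correct;
\item it fails single turning, since under \cref{ass:data} there are $r - 1 \geq 2$ intermediate segments and none contains a kink;
\item it has no kinks at all, so neither property~(i) nor property~(ii) can hold.
\end{itemize}

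You should therefore add ``$g$ has a positive margin'' to the hypotheses explicitly, as \cref{l:cc} does, rather than use it implicitly. With that hypothesis, your sign-change argument correctly rules out empty intermediate segments. The hypothesis also holds everywhere the paper invokes \cref{l:st}: for minimizers and KKT points of \cref{eq:Pb1}, which are feasible and so have margin at least $1$, and for the positive-margin minimizers and Clarke stationary points in the proof of \cref{th:min.reg.loss}.
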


\begin{proof}
This is immediate from the definitions of convexity correctness and single turning.
\end{proof}

Before proceeding to handle the two local geometric irregularities identified in \cref{l:st}, we show that, if the network has any neuron which is imbalanced in the sense that the $\ell_2$-norm of its output weight differs from that of its hidden weight and bias, then there exists a direction along which the square $\ell_2$-norm of the network parameters has a negative derivative while, for each training point, its negative margin has a nonpositive Clarke derivative.

\begin{lem}
\label{l:bal.b}
If $\theta$~contains a neuron $(a_j, w_j, b_j)$ such that $a_j^2 \neq w_j^2 + b_j^2$, then there exists a direction $\theta', a'_0, b'_0$ such that $\mathrm{D} \bigl(\frac{1}{2} \sum_{j = 1}^m (a_j^2 + w_j^2 + b_j^2)\bigr)[\theta', a'_0, b'_0] < 0$ and $\mathrm{D}^\circ \bigl(-y_i \, f_{\theta, a_0, b_0}(x_i)\bigr)[\theta', a'_0, b'_0] \leq 0$ for all $i \in [n]$.
\end{lem}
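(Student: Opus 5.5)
The plan is to use the positive-homogeneity rescaling of the imbalanced neuron $j$ and leave every other parameter (including the skip connection) fixed. For $t$ in a neighbourhood of~$0$, consider the curve
\[
(a_j, w_j, b_j) \mapsto \bigl(e^{-t} a_j, \, e^{t} w_j, \, e^{t} b_j\bigr).
\]
Its velocity at $t = 0$ is the direction
\[
\theta' \coloneqq \bigl(a'_{j'}, w'_{j'}, b'_{j'}\bigr)_{j' = 1}^m, \qquad
(a'_j, w'_j, b'_j) \coloneqq s \, (-a_j, w_j, b_j), \qquad
(a'_{j'}, w'_{j'}, b'_{j'}) \coloneqq (0, 0, 0) \text{ for } j' \neq j,
\]
together with $a'_0 \coloneqq 0$ and $b'_0 \coloneqq 0$. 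The sign $s \in \{\pm 1\}$ is chosen below.

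For the regularizer, the derivative along this direction is
\[
s \bigl(-a_j^2 + w_j^2 + b_j^2\bigr).
\]
The hypothesis $a_j^2 \neq w_j^2 + b_j^2$ makes this nonzero, so I take $s \coloneqq -\sgn\bigl(w_j^2 + b_j^2 - a_j^2\bigr)$, which makes the derivative strictly negative.

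For the constraints, I show that the Clarke directional derivative of $x \mapsto f_{\theta, a_0, b_0}(x)$ at every input along $\theta', a'_0, b'_0$ is exactly~$0$. It then follows that the Clarke derivative of $-f$ is also~$0$, so the Clarke derivative of $-y_i \, f_{\theta, a_0, b_0}(x_i)$ is~$0$ for every $i$, and in particular nonpositive. I would use the explicit formula for the Clarke directional derivative of the network output that appears in the proof of \cref{l:rev}. In that formula, neurons other than~$j$ contribute nothing, since their direction components vanish, and the skip connection contributes nothing as well. The computation then splits into three cases according to the sign of $w_j x + b_j$.

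\begin{itemize}
\item If $w_j x + b_j < 0$, neuron $j$ contributes~$0$.
\item If $w_j x + b_j > 0$, neuron $j$ contributes $a'_j (w_j x + b_j) + a_j (w'_j x + b'_j) = s\bigl(-a_j (w_j x + b_j) + a_j (w_j x + b_j)\bigr) = 0$.
\item If $w_j x + b_j = 0$, neuron $j$ contributes $\max\{0, a_j (w'_j x + b'_j)\} = \max\{0, s \, a_j (w_j x + b_j)\} = 0$.
\end{itemize}

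The same case analysis applied to $-f$ gives~$0$ there as well.

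The only delicate point is the kink case $w_j x_i + b_j = 0$. It works because the direction moves $(w_j, b_j)$ parallel to itself, so $w'_j x + b'_j$ vanishes exactly where $w_j x + b_j$ does. For this reason I do not need \cref{l:up.down}; the direct formula suffices. The degenerate case $w_j = b_j = 0$ with $a_j \neq 0$ is also covered: there $w'_j = b'_j = 0$, so the neuron's contribution is identically~$0$, and the regularizer derivative is $-s \, a_j^2 < 0$ with $s = 1$.
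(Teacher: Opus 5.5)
Your proof is correct and is essentially the paper's argument: the paper uses the same positive-rescaling direction $(-a_j, w_j, b_j)$ on the imbalanced neuron, with its negative in the opposite imbalance case, and checks that the neuron's output derivative vanishes wherever it is active or inactive. Your single sign $s$ and your direct check of the kink case via the Clarke formula from \cref{l:rev} make explicit what the paper dismisses as ``analogous or simpler.''
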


\begin{proof}
We consider the case $a_j^2 > w_j^2 + b_j^2$, and either $w_j \neq 0$ or $b_j \neq 0$; the other cases are analogous or simpler.  Letting $v \coloneqq (-a_j, w_j, b_j)$, we are done by observing that
\[\mathrm{D}_{a_j, w_j, b_j}
  \biggl(\frac{1}{2}
         \sum_{j' = 1}^m (a_{j'}^2 + w_{j'}^2 + b_{j'}^2)\biggr)
  [v]
= -a_j^2 + w_j^2 + b_j^2
< 0,\]
that $w_j x + b_j > 0$ implies
\begin{align}
\mathrm{D}_{a_j, w_j, b_j}
f_{\theta, a_0, b_0}(x)
[v]
& = \mathrm{D}_{a_j, w_j, b_j}
    \bigl(a_j \, \sigma(w_j x + b_j)\bigr)
    [v] \\
& = -a_j (w_j x + b_j) + a_j w_j x + a_j b_j \\
& = 0,
\end{align}
and that $w_j x + b_j < 0$ implies
\begin{align}
\mathrm{D}_{a_j, w_j, b_j}
f_{\theta, a_0, b_0}(x)
[v]
& = \mathrm{D}_{a_j, w_j, b_j}
    \bigl(a_j \, \sigma(w_j x + b_j)\bigr)
    [v] \\
& = -a_j 0 + w_j 0 + b_j 0 \\
& = 0.
\ifopt\tag*{\jmlrBlackBox}\else\qedhere\fi
\end{align}
\ifopt\let\jmlrBlackBox\relax\else\fi
\end{proof}

Now we handle the two irregularities identified in \cref{l:st} which arise from failures of single turning, and also the three irregularities identified in \cref{l:cc} which capture failures of convexity correctness.  In doing so, thanks to the previous lemma, we can assume that all neurons in the network are balanced.

\begin{lem}
\label{l:all.Pb1}
Under \cref{ass:data}, if $g = f_{\theta, a_0, b_0}$, whose kinks are $(p_k)_{k = 1}^q$, satisfies at least one of properties \ref{l:cc:opp.conv}, \ref{l:cc:m.infty}, or~\ref{l:cc:p.infty} in \cref{l:cc}, or \ref{l:st:conv} or~\ref{l:st:conc} in \cref{l:st}, then there exists a direction $\theta', a'_0, b'_0$ such that $\mathrm{D} \bigl(\frac{1}{2} \sum_{j = 1}^m (a_j^2 + w_j^2 + b_j^2)\bigr)[\theta', a'_0, b'_0] < 0$ and $\mathrm{D}^\circ \bigl(-y_i \, f_{\theta, a_0, b_0}(x_i)\bigr)[\theta', a'_0, b'_0] \leq 0$ for all $i \in [n]$.
\end{lem}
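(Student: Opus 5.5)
Throughout, I would first invoke \cref{l:bal.b}: if some neuron is imbalanced, that lemma already gives the required direction. So I would assume every neuron satisfies $a_j^2 = w_j^2 + b_j^2$. By \cref{l:rev} I may also fix the orientations of the finitely many neurons I touch, since the skip connection absorbs the affine correction and the relevant directional derivatives are preserved. By \cref{l:up.down} it then suffices to check signs of one-sided ordinary directional derivatives of $y_i f_{\theta, a_0, b_0}(x)$ near each $x_i$, as in \cref{l:all.P1}.

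The key bookkeeping device I would use is this. For a balanced neuron with kink $p_j = -b_j / w_j$, its penalized cost is $\lvert a_j \rvert \sqrt{w_j^2 + b_j^2}$. This equals $\lVert (a_j w_j, a_j b_j) \rVert_2$, the Euclidean norm of the change it induces in the pair (slope, intercept at~$0$) of the affine piece across its kink. A first-order perturbation that moves $(a_j w_j, a_j b_j)$ in a direction $d$ therefore changes the regularizer at rate $\langle (a_j w_j, a_j b_j), d \rangle / \lVert (a_j w_j, a_j b_j) \rVert$ (using balancedness, the factor $\frac12$ cancels correctly). All the cases become statements about these vectors.

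\textbf{Case \ref{l:st:conv} (and symmetrically \ref{l:st:conc}).} Let $p_k < p_{k+1}$ be consecutive convex kinks, carried by neurons $j, j'$, with all labels in $[p_k, p_{k+1}]$ equal to~$-1$. Their vectors $v = (a_j w_j, a_j b_j)$ and $v' = (a_{j'} w_{j'}, a_{j'} b_{j'})$ both have positive first coordinate and are not parallel, since $p_k \neq p_{k+1}$. Merging the two kinks into one at the intersection of the outer lines replaces $v + v'$ by a single vector, with strictly smaller norm by the strict triangle inequality, while lowering $g$ on $[p_k, p_{k+1}]$ and leaving it unchanged outside. I would realise this infinitesimally. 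Rotate neuron~$j$ toward direction $v + v'$ by moving its kink rightwards, rescaling it so that $g$ is unchanged left of $p_k$. Shrink neuron~$j'$ by the amount needed to keep $g$ unchanged right of $p_{k+1}$. Concretely, this is a direction in $(a_j, w_j, b_j, a_{j'})$ of the type $u$ in \cref{l:all.P1}, with the extra bias coordinate. Its regularizer derivative is $-\lVert v' \rVert$ times a positive rate, plus $\lVert v \rVert$ times a derivative of the angle type, which the strict triangle inequality makes negative in total. The function derivative is $\le 0$ on $(p_k, p_{k+1})$ and $0$ elsewhere, as required.

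\textbf{Case \ref{l:cc:opp.conv}.} Here $p_k$ is convex and $p_{k+1}$ concave with labels $-1$ between, so $v$ and $v'$ have first coordinates of opposite sign. I would shrink both neurons simultaneously. Removing $\varepsilon$ of slope change at each kink leaves the function right of $p_{k+1}$ off by the constant $-\varepsilon(p_{k+1} - p_k)$. I would correct this not with the skip connection but by moving one of the two kinks, exactly as $u$ in \cref{l:all.P1} moves $b_{j'}$. To control the new bias term, I would choose which kink to move according to which of $p_k, p_{k+1}$ is closer to the origin, so that the motion reduces $\sqrt{1 + p^2}$ or at worst costs less than the $\sqrt{1 + p^2}$ savings. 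Equivalently, in vector form, I want the decrease to be $v + v' \mapsto$ a shorter vector, which holds because $v$ and $v'$ point into opposite half-planes.

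\textbf{Cases \ref{l:cc:m.infty} and \ref{l:cc:p.infty}.} These are symmetric, so I would treat the left outer kink only. I would rotate the half-line left of $p_1$ about a point $\widehat{p}$, using the direction $(-a_j, w_j \widehat{p} + b_j)$ of \cref{l:all.P1}, now with $w_j$ possibly varied as well. By \cref{ass:data}, $p_1 \le x_{\iota(2)} < 0$ in the relevant configurations. The bias-penalized regularizer derivative comes out proportional to $-w_j^2 (p_1 \widehat{p} + 1)$, so choosing $\widehat{p} \le \min\{p_1, x_1\}$ close enough to $p_1$ makes $p_1 \widehat{p} + 1 > 0$. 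Since $p_1 < 0$, this is compatible with $\widehat{p} \le p_1$. The function derivative is zero right of $p_1$ and has the correct sign on $(\widehat{p}, p_1)$.

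I expect the main obstacle to be the opposite-convexity case. There the extra bias term makes the unpenalized direction $u$ of \cref{l:all.P1} have indefinite regularizer derivative, and one must pick the moved kink (or a combination of both moves) so that the Euclidean-norm accounting above yields a strict decrease.
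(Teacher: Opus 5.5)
Your same-convexity construction (sliding $p_k$ rightwards with $p_{k+1}$ fixed, the mirror image of the paper's move) is sound, and so is your outer-kink construction. Your observation that at a bias-balanced neuron the regularizer has the same derivative as $\lVert (a_j w_j, a_j b_j) \rVert_2$ is a good device. The genuine gap is the opposite-convexity case: you leave it open, and the rule you sketch fails there.

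Suppose you compensate the offset $-\varepsilon(p_{k+1} - p_k)$ through the bias of the neuron at the convex kink~$p_k$. Then the correcting constant is added on all of $(p_k, \infty)$, so on the middle piece $g$ changes by $-\varepsilon(x - p_k) + \varepsilon(p_{k+1} - p_k) = \varepsilon(p_{k+1} - x) > 0$. That raises $g$ where the labels are $-1$ and violates the constraint at every training input in $[p_k, p_{k+1})$. Sliding $p_k$ the other way (rightwards) does lower $g$, but it enlarges both slope changes and strictly increases the cost. So choosing the kink by its distance to the origin fails whenever it selects~$p_k$. Your closing vector remark is also off: $v + v'$ must stay fixed, and it is $\lVert v \rVert + \lVert v' \rVert$ that must decrease.

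The obstacle you anticipate does not exist. The direction~$u$ of \cref{l:all.P1} shrinks neuron~$j$ in place and slides the concave kink~$p_{k+1}$ rightwards, and it works verbatim with biases penalized. At bias-balanced neurons its regularizer derivative is $w_{j'}(a_j a_{j'} - w_j w_{j'} - b_j b_{j'})$. With $a_j = \lVert (w_j, b_j) \rVert$ and $a_{j'} = -\lVert (w_{j'}, b_{j'}) \rVert$ this equals $-w_{j'} \bigl(\lVert (w_j, b_j) \rVert \, \lVert (w_{j'}, b_{j'}) \rVert + \langle (w_j, b_j), (w_{j'}, b_{j'}) \rangle\bigr)$. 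That is strictly negative by Cauchy--Schwarz, since two vectors with positive first coordinates are never antiparallel, wherever the kinks lie relative to the origin.

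In your notation, with $e = v / \lVert v \rVert$ and $e' = v' / \lVert v' \rVert$, transferring $c \, e$ from $v$ to~$v'$ does three things:
\begin{itemize}
\item it keeps $v + v'$ fixed;
\item it changes $g$ on the middle piece by $-c (x - p_k) / \sqrt{1 + p_k^2} \leq 0$;
\item it changes the cost at rate $-(1 - \langle e, e' \rangle) < 0$ per unit of~$c$, because $e \neq e'$.
\end{itemize}
This one direction handles same- and opposite-convexity pairs alike. That is why the paper merges property~(i) of \cref{l:cc} with both properties of \cref{l:st} into a single condition and treats it with one positively rescaled~$u$; the sign of $a_{j'}$ only flips the sign in front of the inner product. \cref{ass:data} enters only for the outer kinks. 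There, any $\widehat{p} \leq p_1$ works because $p_1 < 0$, so "close enough" is unnecessary. The convex subcase needs the plain shrink $\widehat{p} = p_1$, since rotating about $\widehat{p} < p_1$ would raise $g$ on $(\widehat{p}, p_1)$.
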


\begin{proof}
First we observe that the disjunction of properties \ref{l:cc:opp.conv} in \cref{l:cc}, and \ref{l:st:conv} and~\ref{l:st:conc} in \cref{l:st}, is equivalent to the following.
\begin{enumerate}[(I)]
\item
\label{any.conv}
Some kinks $p_k$ and~$p_{k + 1}$ are such that either at least one of $p_k, p_{k + 1}$ is convex and $y_i = -1$ for all $x_i \in [p_k, p_{k + 1}]$, or at least one of $p_k, p_{k + 1}$ is concave and $y_i = 1$ for all $x_i \in [p_k, p_{k + 1}]$.
\end{enumerate}

Now the reasoning follows the same pattern as the proof of \cref{l:all.P1}, so we focus on the modifications and additions, which are to handle the replacement of property~\ref{l:cc:opp.conv} in \cref{l:cc} by the weaker property~\ref{any.conv} above (i.e., now only one convexity combination is excluded depending on the labels) and the enlargement of the cost by including the biases.

In case property~\ref{any.conv} above holds, without loss of generality, $p_k$~is convex and $y_i = -1$ for all $x_i \in [p_k, p_{k + 1}]$.  Then $\theta$~must contain some neurons $(a_j, w_j, b_j)$ and $(a_{j'}, w_{j'}, b_{j'})$ such that:
\begin{itemize}
\item
$-b_j / w_j = p_k$ and $a_j > 0$;
\item
$-b_{j'} / w_{j'} = p_{k + 1}$ and $a_{j'} \neq 0$.
\end{itemize}
By \cref{l:bal.b} we may restrict attention to balanced neurons (in the sense that includes the biases), and by \cref{l:rev} we may orient the relevant neurons so that their hidden weights are positive, without changing the represented function or any of the directional-derivative inequalities to be established. Thus for the following construction we may assume that:
\begin{itemize}
\item
$w_j > 0$ and $a_j^2 = w_j^2 + b_j^2$;
\item
$w_{j'} > 0$ and $a_{j'}^2 = w_{j'}^2 + b_{j'}^2$.
\end{itemize}
Let $u \coloneqq \bigl(-w_{j'}, w_j, (w_{j'} b_j - w_j b_{j'}) / a_{j'}\bigr)$.  Here vector~$u$ has the same direction as in the proof of \cref{l:all.P1}.  Its different scaling helps to handle the unknown sign of~$a_{j'}$ in the following derivation of the negativity of the directional derivative of the cost along~$u$:
\begin{align}
& \mathrm{D}_{a_j, a_{j'}, b_{j'}}
  \biggl(\frac{1}{2}
         \sum_{j'' = 1}^m (a_{j''}^2 + w_{j''}^2 + b_{j''}^2)\biggr)
  [u] \\
& =    -a_j w_{j'} + a_{j'} w_j + \frac{w_{j'} b_j b_{j'} - w_j b_{j'}^2}{a_{j'}} \\
& =    -a_j w_{j'} + \frac{b_j b_{j'} w_{j'} + w_j w_{j'}^2}{a_{j'}} \\
& =    -\frac{w_{j'}}{a_{j'}} (a_j a_{j'} - w_j w_{j'} - b_j b_{j'}) \\
& =    -\frac{w_{j'}}{\lvert a_{j'} \rvert}
       \bigl(a_j \lvert a_{j'} \rvert - \sgn(a_{j'}) (w_j w_{j'} + b_j b_{j'})\bigr) \\
& =    -\frac{w_{j'}}{\lvert a_{j'} \rvert}
       \Bigl(\sqrt{w_j^2 + b_j^2} \sqrt{w_{j'}^2 + b_{j'}^2}
           - \sgn(a_{j'}) (w_j w_{j'} + b_j b_{j'})\Bigr) \\
& < 0,
\end{align}
where the Cauchy-Bunyakovsky-Schwarz inequality is strict thanks to $p_k \neq p_{k + 1}$.

The remaining two cases, when property~\ref{l:cc:m.infty} or property~\ref{l:cc:p.infty} in \cref{l:cc} holds, are symmetric so we consider the former, where without loss of generality, $y_i = -1$ for all $x_i \in (-\infty, p_1]$.  Then we have two subcases depending on the convexity of~$p_1$, where again we consider the less straightforward one, namely when $p_1$~is concave.  Hence, and by \cref{l:rev,l:bal.b}, for some neuron $(a_j, w_j, b_j)$ in~$\theta$ we have
\begin{align}
-b_j / w_j & = p_1 &
       a_j & < 0 &
       w_j & < 0 &
     a_j^2 & = w_j^2 + b_j^2.
\end{align}
As before, pick $\widehat{p} \leq p_1$ with $\widehat{p} < x_1$, and let $v \coloneqq (-a_j, w_j \widehat{p} + b_j)$.  It remains to observe that
\[\mathrm{D}_{a_j, b_j}
  \biggl(\frac{1}{2}
         \sum_{j' = 1}^m (a_{j'}^2 + w_{j'}^2 + b_{j'}^2)\biggr)
  [v]
= -a_j^2 + w_j b_j \widehat{p} + b_j^2
= -w_j^2 (p_1 \widehat{p} + 1)
< 0,\]
where the inequality follows from $\widehat{p} \leq p_1$, since $p_1 < 0$ by \cref{ass:data}.
\end{proof}

Like \cref{l:tight.P1}, the next lemma handles the three irregularities identified in \cref{l:sh} which arise from failures of switch hugging, the difference being that here the biases are included in the square $\ell_2$-norm of the network parameters.  Also like \cref{l:tight.P1}, this result is used only in the proof of \cref{th:inter}.

\begin{lem}
\label{l:tight.Pb1}
Under \cref{ass:data}, if $g = f_{\theta, a_0, b_0}$, whose kinks are $(p_k)_{k = 1}^q$, interpolates the dataset and satisfies at least one of properties \ref{l:sh:opp.conv}, \ref{l:sh:m.infty}, or~\ref{l:sh:p.infty} in \cref{l:sh}, then there exists a direction $\theta', a'_0, b'_0$ such that $\mathrm{D} \bigl(\frac{1}{2} \sum_{j = 1}^m (a_j^2 + w_j^2 + b_j^2)\bigr)[\theta', a'_0, b'_0] < 0$ and $\mathrm{D}^\circ \bigl(-y_i \, f_{\theta, a_0, b_0}(x_i)\bigr)[\theta', a'_0, b'_0] \leq 0$ for all $i \in [n]$ with $g(x_i) = y_i$.
\end{lem}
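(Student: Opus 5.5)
The plan is to reduce \cref{l:tight.Pb1} to the proof of \cref{l:all.Pb1}, in the same way that \cref{l:tight.P1} was reduced to \cref{l:all.P1}. Properties \ref{l:sh:opp.conv}, \ref{l:sh:m.infty}, and~\ref{l:sh:p.infty} in \cref{l:sh} have the same geometric shape as properties \ref{l:cc:opp.conv}, \ref{l:cc:m.infty}, and~\ref{l:cc:p.infty} in \cref{l:cc}. The difference is that the hypothesis ``all training inputs in the interval have a fixed label'' is replaced by ``at all training inputs in the interval, the constraint for one of the two labels is not tight''. Since we only need nonpositive Clarke derivatives of $-y_i \, f_{\theta, a_0, b_0}(x_i)$ at tight points, the direction can move the function arbitrarily at non-tight points. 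The sign requirement therefore only has to be checked where $g(x_i) = y_i$.

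The first step is to normalize the parameterization. If some neuron is imbalanced, \cref{l:bal.b} already supplies a descent direction with nonpositive Clarke derivatives at all points, hence at the tight ones. So we may assume $a_j^2 = w_j^2 + b_j^2$ for all $j$. By \cref{l:rev} we may also orient the relevant neurons as convenient, and by \cref{l:up.down} we may verify signs via one-sided ordinary derivatives.

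In case \ref{l:sh:opp.conv}, say $p_k$ is convex, $p_{k+1}$ is concave, and $g(x_i) \neq 1$ on $[p_k, p_{k+1}]$. The tight points there then all have label $-1$, possibly together with points of label $1$ that are not tight. We take the same direction $u$ as in \cref{l:all.Pb1}, applied to the neuron at $p_k$ with $a_j > 0$ and the neuron at $p_{k+1}$. The cost derivative is negative by the strict Cauchy--Schwarz computation there, which uses only balancedness and $p_k \neq p_{k+1}$. The output derivative is zero outside $(p_k, p_{k+1})$ and negative inside, so it is nonpositive for $y_i = -1$ and irrelevant elsewhere. The case $g(x_i) \neq -1$ is symmetric.

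In the outer cases, take \ref{l:sh:m.infty} with $p_1 \leq x_{\iota(2)}$ and $g(x_i) \neq 1$ on $(-\infty, p_1]$; the tight points there have label $-1$. We use the rotation direction $v = (-a_j, w_j \widehat{p} + b_j)$ about $\widehat{p} \leq p_1$, $\widehat{p} < x_1$, with the subcase split on the convexity of $p_1$ as in \cref{l:all.Pb1}. This moves only the part of the graph left of $p_1$, in the favorable direction. The cost derivative is $-w_j^2(p_1 \widehat{p} + 1)$. This is the step I expect to be the main obstacle, because its negativity needs $p_1 < 0$. Here that follows from $p_1 \leq x_{\iota(2)} < 0$ by \cref{ass:data}, which is exactly why \cref{l:sh:m.infty} includes the location condition on $p_1$. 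The right outer case is symmetric, using $p_q \geq x_{\iota(r-1)+1} > 0$ and $\widehat{p} \geq p_q$ with $\widehat{p} > x_n$.
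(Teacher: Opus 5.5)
Your proposal is correct and follows the paper's own proof, which likewise reruns the argument of \cref{l:all.Pb1} with only the tight training points taken into account, the opposite-convexity case being a special case of property~(I) there. You also note that the location condition $p_1 \leq x_{\iota(2)}$ in \cref{l:sh}~\ref{l:sh:m.infty}, together with \cref{ass:data}, supplies the $p_1 < 0$ needed for the penalized-bias rotation cost; this is exactly the role the paper assigns to that assumption in \cref{s:ass:data}.
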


\begin{proof}
As with the proofs of \cref{l:all.P1,l:tight.P1}, the proof of \cref{l:tight.Pb1} proceeds analogously to the proof of \cref{l:all.Pb1}, with a further simplification due to property~\ref{any.conv} in the latter proof not requiring the two kinks to have opposite convexity whereas property~\ref{l:sh:opp.conv} in \cref{l:sh} requires it.
\end{proof}

The following three lemmas handle further local internal irregularities in the network: first, where some neuron is imbalanced in the sense that the absolute values of its output and hidden weights differ (this mode of balancedness, which is in contrast to that in \cref{l:bal.b}, is appropriate when the biases are not penalized)\dots

\begin{lem}
\label{l:bal}
If $\theta$~contains a neuron $(a_j, w_j, b_j)$ such that $\lvert a_j \rvert \neq \lvert w_j \rvert$, then there exists a direction $\theta', a'_0, b'_0$ such that $\mathrm{D} \bigl(\frac{1}{2} \sum_{j = 1}^m (a_j^2 + w_j^2)\bigr)[\theta', a'_0, b'_0] < 0$ and $\mathrm{D}^\circ \bigl(-y_i \, f_{\theta, a_0, b_0}(x_i)\bigr)[\theta', a'_0, b'_0] \leq 0$ for all $i \in [n]$.
\end{lem}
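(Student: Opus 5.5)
The plan is to mirror the proof of \cref{l:bal.b}, with a rescaling direction adapted to the bias-free regularizer. The natural choice is the infinitesimal generator of the positive-homogeneity orbit $(a_j, w_j, b_j) \mapsto (c^{-1} a_j, c\, w_j, c\, b_j)$. Along this orbit the function $a_j \, \sigma(w_j x + b_j)$ is unchanged for every $x$, so every training output, and hence every margin, stays constant. The derivative of $\frac{1}{2}(a_j^2 + w_j^2)$ along the orbit has a sign determined by comparing $a_j^2$ with $w_j^2$.

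Consider first the case $\lvert a_j \rvert > \lvert w_j \rvert$ with $(w_j, b_j) \neq (0, 0)$. Take $v \coloneqq (-a_j, w_j, b_j)$ on the coordinates $(a_j, w_j, b_j)$ and zero elsewhere, including the skip parameters. Then
\[\mathrm{D} \biggl(\frac{1}{2} \sum_{j' = 1}^m (a_{j'}^2 + w_{j'}^2)\biggr)[v] = -a_j^2 + w_j^2 < 0.\]
Note that $b_j$ does not enter the cost, but moving it along with $w_j$ keeps the kink fixed. For each $x$ with $w_j x + b_j \neq 0$, the ordinary directional derivative of $f_{\theta, a_0, b_0}(x)$ along $v$ is $-a_j(w_j x + b_j) + a_j(w_j x + b_j) = 0$, exactly as in \cref{l:bal.b}. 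At an input with $w_j x_i + b_j = 0$, the Clarke directional derivative contributes $\max\{0, a_j(w_j x_i + b_j)\} = 0$, since the inner-weight perturbation vanishes there. Thus all Clarke derivatives of $\pm f_{\theta, a_0, b_0}(x_i)$ along $v$ are $0$, which handles every $i \in [n]$.

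Now consider the case $\lvert a_j \rvert < \lvert w_j \rvert$, so that $w_j \neq 0$. Take $v \coloneqq (a_j, -w_j, -b_j)$, which gives cost derivative $a_j^2 - w_j^2 < 0$. The output derivative is again $0$ off the kink. At the kink it is $\max\{0, a_j(-w_j x_i - b_j)\} = 0$.

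The remaining degenerate cases are simpler. If $w_j = b_j = 0$ and $a_j \neq 0$, the neuron contributes $a_j \sigma(0) = 0$ identically, so decreasing $a_j$ alone along $(-a_j, 0, 0)$ gives cost derivative $-a_j^2 < 0$ and leaves the outputs unchanged. The case $w_j = 0 \neq b_j$ is covered by the first case, because then $\lvert a_j \rvert > 0 = \lvert w_j \rvert$ whenever $a_j \neq 0$. If also $a_j = 0$, the hypothesis $\lvert a_j \rvert \neq \lvert w_j \rvert$ fails. In that subcase the neuron is constant in $x$, and the scaling direction still gives zero output derivatives.

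The only delicate point is the Clarke derivative at an input lying exactly on the kink, which is also the only place where \cref{l:up.down} might otherwise be needed. It is dispatched by the direct formula used in the proof of \cref{l:rev}: there the perturbation of $w_j x_i + b_j$ is proportional to $w_j x_i + b_j = 0$. Since all derivatives are exactly zero, the sign requirement $\leq 0$ holds for both $f$ and $-f$, and hence for $-y_i f$ at every $i$.
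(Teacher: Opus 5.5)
Your proposal is correct and follows the paper's own proof: the paper uses exactly the rescaling directions $(-a_j, w_j, b_j)$ when $\lvert a_j \rvert > \lvert w_j \rvert$ and $(a_j, -w_j, -b_j)$ when $\lvert a_j \rvert < \lvert w_j \rvert$, and, as in \cref{l:bal.b}, all output Clarke derivatives along these directions vanish. Your explicit treatment of an input lying on the kink and of the degenerate neuron with $w_j = b_j = 0$ just spells out details that the paper leaves implicit.
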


\begin{proof}
This is similar to and simpler than the proof of \cref{l:bal.b}.  If $\lvert a_j \rvert > \lvert w_j \rvert$ (resp., $\lvert a_j \rvert < \lvert w_j \rvert$), it suffices to consider directional derivatives at $(a_j, w_j, b_j)$ along $(-a_j, w_j, b_j)$ (resp., $(a_j, -w_j, -b_j)$).
\end{proof}

\dots second, where some two neurons are aligned in the sense that their kinks are the same, but the signs of their output weights are conflicting\dots

\begin{lem}
\label{l:opp}
If $\theta$~contains neurons $(a_j, w_j, b_j)$ and $(a_{j'}, w_{j'}, b_{j'})$ such that all of $a_j$, $w_j$, $a_{j'}$, and~$w_{j'}$ are nonzero, $-b_j / w_j = -b_{j'} / w_{j'}$, and $\sgn a_j \neq \sgn a_{j'}$, then there exists a direction $\theta', a'_0, b'_0$ such that $\mathrm{D} \bigl(\frac{1}{2} \sum_{j = 1}^m (a_j^2 + w_j^2)\bigr)[\theta', a'_0, b'_0] < 0$, $\mathrm{D} \bigl(\frac{1}{2} \sum_{j = 1}^m (a_j^2 + w_j^2 + b_j^2)\bigr)[\theta', a'_0, b'_0] < 0$, and $\mathrm{D}^\circ \bigl(-y_i \, f_{\theta, a_0, b_0}(x_i)\bigr)[\theta', a'_0, b'_0] \leq 0$ for all $i \in [n]$.
\end{lem}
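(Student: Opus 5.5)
We plan to exhibit an explicit direction that changes only the two output weights $a_j$ and~$a_{j'}$. Along it the represented function stays fixed, while both regularizers strictly decrease. Intuitively, two neurons with the same kink but output weights of opposite signs partially cancel each other. So we can shrink both of them simultaneously while keeping the net slope change at their common kink unchanged.

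First we normalize orientations. Let $p \coloneqq -b_j / w_j = -b_{j'} / w_{j'}$. By \cref{l:rev}, reversing a neuron flips the signs of its hidden weight and bias but leaves its output weight $a$ unchanged. It also transports directions while preserving the Clarke directional derivatives of $\pm f_{\theta, a_0, b_0}(x_i)$ and the ordinary directional derivatives of both regularizers. Hence we may assume $w_j > 0$ and $w_{j'} > 0$. The hypothesis $\sgn a_j \neq \sgn a_{j'}$ is unaffected by the reversal, and so is the nonvanishing of all four quantities. Then $\sigma(w_j x + b_j) = w_j \, \sigma(x - p)$ and $\sigma(w_{j'} x + b_{j'}) = w_{j'} \, \sigma(x - p)$ for all $x$.

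Next we define the direction. All coordinates are zero except
\[a'_j \coloneqq -\sgn(a_j) \, w_{j'}, \qquad a'_{j'} \coloneqq -\sgn(a_{j'}) \, w_j.\]
Since $\sgn a_{j'} = -\sgn a_j$, we get $a'_j w_j + a'_{j'} w_{j'} = -\sgn(a_j) \, w_j w_{j'} + \sgn(a_j) \, w_j w_{j'} = 0$. The change of the network function along this direction is $(a'_j w_j + a'_{j'} w_{j'}) \, \sigma(x - p) \equiv 0$. Both regularizers have directional derivative
\[a_j a'_j + a_{j'} a'_{j'} = -\lvert a_j \rvert \, w_{j'} - \lvert a_{j'} \rvert \, w_j < 0,\]
because the direction does not move any $w$ or $b$ coordinate. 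This one computation therefore covers both the unpenalized and the penalized-bias costs.

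The only point requiring care is the Clarke derivative of $-y_i \, f_{\theta, a_0, b_0}(x_i)$ when $x_i = p$ coincides with the shared kink. We will use the explicit formula from the proof of \cref{l:rev}. For each neuron whose kink is at~$x$, the Clarke term is $\max\{0, a_k (w'_k x + b'_k)\}$, and here it vanishes because $w'_k = b'_k = 0$. The remaining terms are ordinary derivatives over active neurons, $a'_k (w_k x + b_k)$, and these sum to $(a'_j w_j + a'_{j'} w_{j'}) \, \sigma(x - p) = 0$. The same holds for $-f$. Equivalently, \cref{l:up.down} applies, since no hidden weights or biases are displaced, and both one-sided limits are~$0$. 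So every Clarke derivative of $-y_i \, f_{\theta, a_0, b_0}(x_i)$ equals $0 \leq 0$. Pulling the direction back through \cref{l:rev}~\ref{l:rev:D} completes the proof. I expect no real obstacle beyond this check at the kink.
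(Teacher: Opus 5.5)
Your proposal is correct and follows essentially the same route as the paper. Both orient the two neurons via \cref{l:rev} so that $w_j, w_{j'} > 0$ and perturb only the output weights along $(-w_{j'}, w_j)$; your sign-generic formula reduces to exactly this when $a_j > 0 > a_{j'}$. Both then observe that the network output is unchanged on both sides of the shared kink, that both regularizers decrease at rate $-\lvert a_j \rvert w_{j'} - \lvert a_{j'} \rvert w_j$, and that \cref{l:up.down} handles training inputs lying at the kink.
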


\begin{proof}
Without loss of generality, we have $a_j > 0$ and $a_{j'} < 0$.  Moreover, by the orientation-reversal reparameterization in \cref{l:rev}, we may assume $w_j > 0$ and $w_{j'} > 0$.  Letting $v \coloneqq (-w_{j'}, w_j)$ and verifying that \cref{l:up.down} applies because we are not perturbing any hidden-layer parameters, we are done by observing that
\begin{align}
& \mathrm{D}_{a_j, a_{j'}}
  \biggl(\frac{1}{2}
         \sum_{j'' = 1}^m (a_{j''}^2 + w_{j''}^2 + b_{j''}^2)\biggr)
  [v] \\
& = \mathrm{D}_{a_j, a_{j'}}
    \biggl(\frac{1}{2}
           \sum_{j'' = 1}^m (a_{j''}^2 + w_{j''}^2)\biggr)
    [v] \\
& = -a_j w_{j'} + a_{j'} w_j \\
& < 0,
\end{align}
that for all $x < -b_j / w_j$ we have
\begin{align}
\mathrm{D}_{a_j, a_{j'}}
f_{\theta, a_0, b_0}(x)
[v]
& = \mathrm{D}_{a_j, a_{j'}}
    \bigl(a_j    \, \sigma(w_j    x + b_j)
        + a_{j'} \, \sigma(w_{j'} x + b_{j'})\bigr)
    [v] \\
& = -w_{j'} \, \sigma(w_j x + b_j) + w_j \, \sigma(w_{j'} x + b_{j'}) \\
& = -w_{j'} 0 + w_j 0 \\
& = 0,
\end{align}
and that for all $x > -b_j / w_j$ we have
\begin{align}
\mathrm{D}_{a_j, a_{j'}}
f_{\theta, a_0, b_0}(x)
[v]
& = \mathrm{D}_{a_j, a_{j'}}
    \bigl(a_j    \, \sigma(w_j    x + b_j)
        + a_{j'} \, \sigma(w_{j'} x + b_{j'})\bigr)
    [v] \\
& = -w_{j'} \, \sigma(w_j x + b_j) + w_j \, \sigma(w_{j'} x + b_{j'}) \\
& = -w_{j'} (w_j x + b_j) + w_j (w_{j'} x + b_{j'}) \\
& = -w_{j'} b_j + w_j b_{j'} \\
& = 0.
\ifopt\tag*{\jmlrBlackBox}\else\qedhere\fi
\end{align}
\ifopt\let\jmlrBlackBox\relax\else\fi
\end{proof}

\dots and third, where the hidden weight of some neuron is zero (this is relevant when the biases are penalized, in which case by \cref{l:bal.b}, we can assume that the neuron's output weight and hidden bias have equal absolute values).

\begin{lem}
\label{l:bias}
If $\theta$~contains $(a_j, w_j, b_j)$ such that $\lvert a_j \rvert = \lvert b_j \rvert > 0$ and $w_j = 0$, then there exists $\theta', a'_0, b'_0$ such that $\mathrm{D} \bigl(\frac{1}{2} \sum_{j = 1}^m (a_j^2 + w_j^2 + b_j^2)\bigr)[\theta', a'_0, b'_0] < 0$ and $\mathrm{D}^\circ \bigl(-y_i \, f_{\theta, a_0, b_0}(x_i)\bigr)[\theta', a'_0, b'_0] \leq 0$ for all $i \in [n]$.
\end{lem}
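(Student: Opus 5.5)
The neuron $(a_j, 0, b_j)$ contributes the constant $a_j \, \sigma(b_j)$ to the network output. This is $a_j b_j$ if $b_j > 0$ and $0$ if $b_j < 0$ (note $b_j \neq 0$ since $\lvert b_j \rvert > 0$). The plan is to shrink this neuron to zero and compensate its constant contribution exactly with the free skip bias~$b_0$. The function, and hence every margin, is then unchanged to first order, while the cost strictly decreases.

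Concretely, we take the direction that is zero in all coordinates except $a'_j \coloneqq -a_j$, $b'_j \coloneqq -b_j$, $w'_j \coloneqq 0$, and $b'_0 \coloneqq a_j \, \sigma(b_j)$. The cost derivative is
\[\mathrm{D}\Bigl(\tfrac{1}{2} \textstyle\sum_{j''} (a_{j''}^2 + w_{j''}^2 + b_{j''}^2)\Bigr)[\theta', a'_0, b'_0] = -a_j^2 - b_j^2 < 0.\]
For the outputs, the key observation is that $w_j x + b_j = b_j \neq 0$ for every $x$, and the direction keeps $w'_j = 0$. Hence along this direction the neuron never sits at a ReLU kink, and $f_{\theta, a_0, b_0}(x)$ is continuously differentiable in the perturbed coordinates. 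No other neuron's parameters move, so the Clarke directional derivative of $\pm y_i \, f(x_i)$ is the ordinary one; this also follows from \cref{l:up.down}, whose hypotheses hold trivially since only a neuron with $w_j = 0$, $b_j \neq 0$ is displaced.

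We then split into two cases. If $b_j > 0$, the derivative of the output is
\[a'_j b_j + a_j b'_j + b'_0 = -a_j b_j - a_j b_j + a_j b_j = -a_j b_j,\]
which is wrong as it stands. The fix is to scale the direction along the curve $t \mapsto ((1 - t) a_j, 0, (1 - t) b_j)$, whose output is $(1 - t)^2 a_j b_j$ with derivative $-2 a_j b_j$ at $t = 0$; so we set $b'_0 \coloneqq 2 a_j b_j$. Equivalently, $b'_0 = -(a'_j b_j + a_j b'_j)$ in general. If $b_j < 0$, the neuron is inactive near this point and we set $b'_0 = 0$. In both cases the output derivative vanishes identically in~$x$, giving Clarke derivatives $0 \leq 0$ for all $i$, while the cost derivative stays $-a_j^2 - b_j^2 < 0$.

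I expect no real obstacle here. The only care needed is to get the compensating $b'_0$ right, using the product-rule derivative rather than the naive value, and to check that no kink issue arises. The latter holds because $w_j = 0$ and $b_j \neq 0$ keep the pre-activation bounded away from $0$ uniformly in~$x$.
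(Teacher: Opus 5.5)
Your proof is correct and is essentially the paper's argument: shrink the bias-only neuron and absorb its constant output into the free skip bias. After your self-correction, the direction $(a'_j, b'_j, b'_0) = (-a_j, -b_j, 2 a_j \, \sigma(b_j))$ is exactly $\lvert a_j \rvert$ times the paper's $(-\sgn a_j, -\sgn b_j, 2 \sgn(a_j) \, \sigma(b_j))$. Your proportional form has the minor advantage of not using the hypothesis $\lvert a_j \rvert = \lvert b_j \rvert$, and in a clean write-up you should state the correct $b'_0$ from the start rather than the initial $a_j \, \sigma(b_j)$.
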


\begin{proof}
Again this is similar to the proof of \cref{l:bal.b}.  It suffices to consider directional derivatives at $(a_j, b_j, b_0)$ along $(-\sgn a_j, -\sgn b_j, 2 \sgn(a_j) \sigma(b_j))$.
\end{proof}

The final lemma in the sequence shows that MFCQ is satisfied everywhere in the feasible sets of the four interpolator norm minimization problems we consider.  The proof is based on the familiar phenomenon that increasing the magnitudes of an interpolating network's parameters increases the margin, however some care is required here due to the failure of homogeneity when the skip connection is present.

\begin{lem}
\label{l:feas.MFCQ}
For each of \cref{eq:P,,eq:P1,,eq:Pb,,eq:Pb1}, every feasible point satisfies MFCQ.
\end{lem}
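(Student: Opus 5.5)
Fix a feasible point $v = (\theta, a_0, b_0)$ (with $a_0 = b_0 = 0$ held fixed for \cref{eq:P,eq:Pb}). The constraints are $G_i(v) \coloneqq 1 - y_i \, f_{\theta, a_0, b_0}(x_i) \leq 0$, so we need a direction $u$ with $\mathrm{D}^\circ\bigl(-y_i \, f_{\theta, a_0, b_0}(x_i)\bigr)[u] < 0$ for every tight $i$. We will in fact get this for all $i \in [n]$. The candidate is the radial scaling direction. Without the skip connection we take $u \coloneqq (\theta, 0, 0)$, i.e.\ $(a_j', w_j', b_j') = (a_j, w_j, b_j)$. With the skip connection we take $u \coloneqq (\theta, 2 a_0, 2 b_0)$, which scales the neurons by $t$ and the skip by $t^2$. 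This makes the whole network function scale quadratically: $f_{(1+t)\theta, (1+t)^2 a_0, (1+t)^2 b_0} = (1+t)^2 f_{\theta, a_0, b_0}$. This is the care needed because the skip term is not homogeneous of degree two in the same parameters.

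The key step is computing the Clarke directional derivative. We use the formula from the proof of \cref{l:rev}. A neuron with kink at $x_i$ contributes $\max\{0, a_j (w_j x_i + b_j)\} = 0$. An active neuron contributes $a_j (w_j x_i + b_j) + a_j (w_j x_i + b_j) = 2 a_j \sigma(w_j x_i + b_j)$. Inactive neurons contribute nothing. The skip contributes $2 a_0 x_i + 2 b_0$. Hence
\[\mathrm{D}^\circ\bigl(f_{\theta, a_0, b_0}(x_i)\bigr)[u] = 2 f_{\theta, a_0, b_0}(x_i).\]
The same computation for $-f$ gives
\[\mathrm{D}^\circ\bigl(-y_i \, f_{\theta, a_0, b_0}(x_i)\bigr)[u] = -2 y_i \, f_{\theta, a_0, b_0}(x_i) \leq -2 < 0,\]
using feasibility $y_i \, f(x_i) \geq 1$. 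Because $\max\{0, \cdot\}$ terms at kinks vanish for radial perturbation, no case analysis over coinciding kinks is needed. This is the only subtle point; the main obstacle is just checking it rigorously. We must confirm that the Clarke derivative of a sum reduces to the displayed formula here. This holds because each kink term equals $\max$ over $[0,1]$ of $\lambda\, a_j (w_j' x_i + b_j')$, and $w_j' x_i + b_j' = w_j x_i + b_j = 0$.

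The argument is identical for penalized and unpenalized biases, since MFCQ concerns only the constraints. So all four problems are covered.
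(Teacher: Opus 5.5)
Your proof is correct and is essentially the paper's argument. The paper uses the direction $(\frac{1}{2}\theta, a_0, b_0)$, a positive multiple of your $(\theta, 2a_0, 2b_0)$, and checks it via Euler's identity $\langle \theta, \gamma \rangle = 2 f_\theta(x_i)$ for every Clarke subgradient $\gamma$, whereas you compute $\mathrm{D}^\circ$ neuron by neuron. For that computation, subadditivity of the Clarke derivative over the separate neuron terms already gives the upper bound you need.
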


\begin{proof}
We show the claim for \cref{eq:Pb1}, the cases of \cref{eq:P,,eq:P1,,eq:Pb} can be handled analogously.

Suppose $\theta, a_0, b_0$ is feasible.  For all $i \in [n]$ (we do not need to consider only the margin) and all $\gamma \in \partial_\theta f_{\theta, a_0, b_0}(x_i)$, we have
\begin{align}
& \Bigl\langle \Bigl(\frac{1}{2} \theta, a_0, b_0 \Bigr),
               y_i \bigl(\gamma,
                         \nabla_{a_0} f_{\theta, a_0, b_0}(x_i),
                         \nabla_{b_0} f_{\theta, a_0, b_0}(x_i)\bigr) \Bigr\rangle \\
& = \Bigl\langle \Bigl(\frac{1}{2} \theta, a_0, b_0 \Bigr),
                 y_i (\gamma, x_i, 1) \Bigr\rangle \\
& = y_i \Bigl(\frac{1}{2} \langle \theta, \gamma \rangle
            + a_0 x_i + b_0 \Bigr) \\
& = y_i \bigl(f_\theta(x_i)
            + a_0 x_i + b_0 \bigr) \\
& = y_i \, f_{\theta, a_0, b_0}(x_i) \\
& \geq 1
\end{align}
as required, where the second last equality is by Euler's homogeneous function theorem.
\end{proof}

\begin{lem}
\label{l:unique.st}
For each $k \in [r]$, let $h_k \colon \mathbb{R} \to \mathbb{R}$ be the affine function whose graph passes through $\bigl(x_{\iota(k)}, y_{\iota(k)}\bigr)$ and $\bigl(x_{\iota(k) + 1}, y_{\iota(k) + 1}\bigr)$. There exists a unique CPA $g \colon \mathbb{R} \to \mathbb{R}$ that is switch hugging, convexity correct, and single turning. This~$g$ has exactly $r - 1$ kinks: for every $k \in [r - 1]$, its unique kink in the $k$-th intermediate same-label segment is the unique intersection of $h_k$ and~$h_{k + 1}$.
\end{lem}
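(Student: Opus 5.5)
The proof splits into uniqueness and existence. Uniqueness comes from the fact that on each same-label segment the three properties force $g$ to be one specific affine function on each side of the single kink. Existence amounts to checking that the intersection of consecutive affine pieces lands in the right segment and produces the right convexity.

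\emph{Uniqueness and form.} Let $g$ be switch hugging, convexity correct and single turning. Convexity correctness puts every kink in some intermediate segment $[x_{\iota(l)+1},x_{\iota(l+1)}]$, $l\in[r-1]$. Single turning then says there is exactly one kink per such segment, so $g$ has exactly $r-1$ kinks $p_1<\dots<p_{r-1}$, with $p_l$ in the $l$-th intermediate segment.

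Next we identify the pieces. On $[p_{k-1},p_k]$ (with $p_0=-\infty$, $p_r=+\infty$) the function $g$ is affine, and this interval contains both $x_{\iota(k)}$ and $x_{\iota(k)+1}$, since these lie between the $(k-1)$-th and $k$-th intermediate segments. Switch hugging fixes the values of $g$ at these two distinct points, so $g=h_k$ on that piece. Continuity at $p_k$ then forces $h_k(p_k)=h_{k+1}(p_k)$. We must also have $h_k\neq h_{k+1}$: otherwise $p_k$ would not be a kink. Indeed they differ, because $h_k$ and $h_{k+1}$ have slopes of opposite signs. For example, if the $k$-th intermediate segment has label $+1$, then $h_k$ rises from $-1$ to $1$ and $h_{k+1}$ falls from $1$ to $-1$. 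Hence $p_k$ is the unique intersection point of $h_k$ and $h_{k+1}$, and $g$ is determined.

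\emph{Existence.} Define $p_k$ as the intersection of $h_k$ and $h_{k+1}$, and define $g$ piecewise as $h_k$ on $[p_{k-1},p_k]$. The main step is to show that $p_k\in[x_{\iota(k)+1},x_{\iota(k+1)}]$.

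Take the label-$+1$ case. Then $h_k$ is strictly increasing with $h_k(x_{\iota(k)+1})=1$, and $h_{k+1}$ is strictly decreasing with $h_{k+1}(x_{\iota(k+1)})=1$. Consider $d=h_k-h_{k+1}$, which is strictly increasing. At the left endpoint,
\[d(x_{\iota(k)+1}) = 1-h_{k+1}(x_{\iota(k)+1}) \le 0,\]
because $h_{k+1}$ is decreasing and $x_{\iota(k)+1}\le x_{\iota(k+1)}$. Symmetrically $d(x_{\iota(k+1)})\ge 0$. So the zero $p_k$ of $d$ lies in the segment. Since the segments are disjoint and ordered, it follows that $p_1<\dots<p_{r-1}$.

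With the ordering in hand, $g$ is a well-defined CPA, continuous at each $p_k$ by construction. It is switch hugging because $x_{\iota(k)},x_{\iota(k)+1}\in[p_{k-1},p_k]$ and $g=h_k$ there. For convexity correctness, the slope at $p_k$ changes from positive to negative, so the kink is concave, and the segment label is $+1$, which is what is required. The label $-1$ case is symmetric. Single turning holds by construction.

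The main obstacle is the localization of the intersection inside the segment. The monotonicity argument above handles it, including the degenerate case where the segment consists of a single input.
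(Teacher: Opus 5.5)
Your proof is correct and follows essentially the same route as the paper. Existence glues $h_1,\dots,h_r$ at the intersections of consecutive pieces. Uniqueness holds because switch hugging and convexity correctness force the crossing pieces to be the $h_k$, and continuity then forces each kink to be the corresponding intersection. You also supply the monotonicity argument placing each intersection inside its intermediate segment, which the paper leaves as ``straightforward to verify''.
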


\begin{proof}
For every $k \in [r - 1]$, since labels $y_{\iota(k)}$ and~$y_{\iota(k) + 1}$ incident to the $k$-th label switch are opposite, labels $y_{\iota(k) + 1}$ and~$y_{\iota(k + 1)}$ of the $k$-th intermediate same-label segment are the same, and labels $y_{\iota(k + 1)}$ and~$y_{\iota(k + 1) + 1}$ incident to the $(k + 1)$-th label switch are opposite, we have that the affine functions $h_k$ and~$h_{k + 1}$ have a unique intersection; we denote its $x$-coordinate as~$p_k$.

Letting
\[g(x) \coloneqq
  \begin{cases}
  h_1(x) & \text{if } x \leq p_1, \\
  h_k(x) & \text{if } p_{k - 1} \leq x \leq p_k, \\
  h_r(x) & \text{if } p_{r - 1} \leq x,
  \end{cases}\]
it is straightforward to verify that $g$~is switch hugging, convexity correct, and single turning.

Conversely, switch hugging and convexity correctness fix the crossing pieces of any such CPA to be $h_1, \dots, h_r$, and continuity forces its unique kink in the $k$-th intermediate same-label segment to be the intersection~$p_k$. Therefore $g$~is unique.
\end{proof}

\begin{proof}%
\ifopt
\textbf{of \cref{th:inter}}
\else
[Proof of \cref{th:inter}]
\fi
For part~\ref{th:inter:P.P1}, suppose $g = f_{\theta, a_0, b_0}$ is a minimizer of \cref{eq:P1} in function space.  By \cref{l:feas.MFCQ,th:loc.min.MFCQ.KKT}, $\theta, a_0, b_0$ is a KKT point, so by \cref{pr:dd.not.KKT} and \cref{l:cc,,l:all.P1,,l:sh,,l:tight.P1}, $g$~is switch hugging and convexity correct.

Conversely, suppose $g$~is switch hugging and convexity correct, but not a minimizer of \cref{eq:P1} in function space.  By \cref{l:attain.int}, for some $m \in \mathbb{N}$ and some $\theta', a'_0, b'_0$ which is a minimizer of \cref{eq:P1} restricted to network width~$m$, we have that $\mathrm{R_1}(f_{\theta', a'_0, b'_0}) < \mathrm{R_1}(g)$.  Reasoning like above, $f_{\theta', a'_0, b'_0}$~is switch hugging and convexity correct.  But then by \cref{th:repr} we have
\[\mathrm{R_1}(f_{\theta', a'_0, b'_0})
= \sum_{k = 1}^{r - 1}
  \biggl(\frac{2}{x_{\iota(k)     + 1} - x_{\iota(k)}}
       + \frac{2}{x_{\iota(k + 1) + 1} - x_{\iota(k + 1)}}\biggr)
= \mathrm{R_1}(g),\]
which is a contradiction.

The remainder of part~\ref{th:inter:P.P1}, which is for \cref{eq:P}, now follows by \cref{l:RR1.RbRb1}.

For part~\ref{th:inter:Pb.Pb1}, the reasoning follows the same pattern, with \cref{l:st} added for the single turning property and with \cref{l:all.P1,l:tight.P1} replaced by \cref{l:all.Pb1,l:tight.Pb1}. \cref{l:unique.st} gives a unique switch-hugging, convexity-correct, and single-turning interpolant, with exactly one kink in each intermediate same-label segment. The preceding necessity argument shows that every minimizer has these properties, while the same representor-norm comparison as above shows that this canonical interpolant is a minimizer. It is therefore the unique minimizer in function space and has exactly $r - 1$ kinks.

For part~\ref{th:inter:KKT}, suppose $(\theta, a_0, b_0) \in \mathbb{R}^{3 m + 2}$ is a KKT point of \cref{eq:P1}.  Then by \cref{pr:dd.not.KKT} and \cref{l:cc,,l:all.P1,,l:sh,,l:tight.P1} we have that $f_{\theta, a_0, b_0}$~is switch hugging and convexity correct, and moreover by \cref{th:repr,l:bal,l:opp} we have that $\frac{1}{2} \sum_{j = 1}^m (a_j^2 + w_j^2) = \mathrm{R_1}(f_{\theta, a_0, b_0})$, so by part~\ref{th:inter:P.P1} we conclude that $\theta, a_0, b_0$ is a minimizer of \cref{eq:P1}.

Part~\ref{th:inter:KKT} for \cref{eq:Pb1} follows analogously, with \cref{l:st} added for the single turning property, with \cref{l:all.P1,l:tight.P1} replaced by \cref{l:all.Pb1,l:tight.Pb1}, and with \cref{l:bal} replaced by \cref{l:bal.b,l:bias}.
\end{proof}

\clearpage
\section{Counterexamples without the skip connection}

In this section, we show that the skip connection is essential for the benign properties of the optimization landscapes in \cref{th:inter}~\ref{th:inter:KKT} to hold.  Namely, in the next theorem, we demonstrate that, if the skip connection is absent, then there exist KKT points that are not (global) minimizers, regardless of whether the biases are penalized.

In addition, for the unpenalized-bias example, \Cref{f:land.norm} displays a two-dimensional landscape profile with multiple valleys and several suboptimal KKT plateaus, together with the corresponding profile after adding a skip connection. When biases are penalized, the explicit KKT examples are isolated rather than forming the same plateaus; \cref{rem:KKT.not} records the additional observation that the identified points are strict local minimizers.

\begin{thm}
\label{th:KKT.not}
\begin{enumerate}[(i)]
\item
\label{th:KKT.not:P}
For the dataset
\[(-2, 1), (-1, -1),
  ( 1, 1), ( 2, -1),\]
there exists a two-layer ReLU network~$\theta$ of width~$4$ and without a skip connection, which is a KKT point but not a minimizer of \cref{eq:P}, and for which the interpolant~$f_\theta$ is not convexity correct.
\item
\label{th:KKT.not:Pb}
For the dataset
\[(-10, -1), (-6,  1), (-5, -1), (-4,  1),
  (  4,  1), ( 5, -1), ( 6,  1), (10, -1),\]
there exists a two-layer ReLU network~$\theta$ of width~$6$ and without a skip connection, which is a KKT point but not a minimizer of \cref{eq:Pb}, and for which the interpolant~$f_\theta$ is not single turning and not sparsest.
\end{enumerate}
\end{thm}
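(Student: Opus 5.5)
The plan is to prove both parts by explicit construction and direct verification of the KKT conditions. Since the objectives are smooth, a feasible $\theta$ is a KKT point if and only if there exist multipliers $\mu_i \geq 0$, supported on the tight constraints, and subgradients $\gamma_i \in \partial_\theta f_\theta(x_i)$ with $\theta' = \sum_i \mu_i y_i \gamma_i$. Here $\theta'$ denotes the gradient of the objective: $(a_j, w_j, 0)_j$ for \cref{eq:P} and $(a_j, w_j, b_j)_j$ for \cref{eq:Pb}. Written neuron by neuron, with $\chi_{ij} \in \partial\sigma(w_j x_i + b_j)$, this says
\[a_j = \sum_i \mu_i y_i \, \sigma(w_j x_i + b_j), \qquad w_j = \sum_i \mu_i y_i a_j \chi_{ij} x_i, \qquad \beta_j = \sum_i \mu_i y_i a_j \chi_{ij},\]
where $\beta_j = 0$ for \cref{eq:P} and $\beta_j = b_j$ for \cref{eq:Pb}. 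Non-optimality will then follow by comparing objective values with the global minimum given by \cref{th:inter}, or directly from the failure of convexity correctness, respectively of single turning.

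For part~\ref{th:KKT.not:P}, I would first compute the optimum. The unique switch-hugging, convexity-correct interpolant has slopes $-2, 1, -2$ with kinks at $\mp 1/3$. Hence, by \cref{th:repr} and \cref{l:RR1.RbRb1}, the minimum of \cref{eq:P} is $\max\{6, \lvert -4 \rvert\} = 6$. Guided by \Cref{f:land.norm:main}, I would take a network symmetric under $x \mapsto -x$ combined with the sign change of labels, since the dataset is invariant under this map. The network will have four neurons: two with kinks at $\pm 1$ (coinciding with training inputs) and two with outer kinks outside $[x_1, x_n]$, the latter chosen so that on the data they act like the affine term the missing skip connection cannot supply. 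Their value is then $\max\{\mathrm{TV}, \lvert s_0 + s_q \rvert\} = 7$ (or $11$), matching the magenta (or red) plateau.

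The symmetry reduces the unknowns to a couple of neuron parameters and two multipliers, $\mu_{\pm 1} = \mu_{\mp 2}$ say. I would take all four constraints tight and solve the linear equations above for the multipliers, checking $\mu \geq 0$. Where kinks sit exactly at $x_i = \pm 1$, the freedom $\chi_{ij} \in [0,1]$ is what I expect to make the bias equation $0 = \sum_i \mu_i y_i a_j \chi_{ij}$ solvable. Balancedness $\lvert a_j \rvert = \lvert w_j \rvert$ is forced by combining the first two equations, so I would impose it from the start. Finally I would confirm that $f_\theta$ has an outer kink, so it is not convexity correct, and that its value exceeds $6$.

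For part~\ref{th:KKT.not:Pb}, I would proceed the same way. The dataset is symmetric under $x \mapsto -x$ (labels preserved) and has $r = 5$. By \cref{l:unique.st} and \cref{th:inter}~\ref{th:inter:Pb.Pb1}, the unique minimizer has exactly $4$ kinks, and its value is computable from \cref{th:repr}. The candidate is a symmetric width-$6$ network whose function has two kinks of the same convexity inside the long middle segment $[-4, 4]$ (say at $\pm c$), alongside kinks in the other intermediate segments, so it is neither single turning nor sparsest. I would impose balancedness $a_j^2 = w_j^2 + b_j^2$, write the three KKT equations per neuron (now with $b_j$ on the left), reduce by symmetry, and solve for $c$, the slopes, and nonnegative multipliers on the tight points. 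Near the origin the weighting $\sqrt{1 + p^2}$ is flat, which is what should allow two kinks at $\pm c$ to be first-order stationary even though merging them is globally better.

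I expect the main obstacle to be this last step: finding parameter values for which the multiplier system is consistent with $\mu \geq 0$ and all relevant constraints tight. In particular, the bias equations must hold with the Clarke freedom at kinks lying on data points. I would first try placing kinks exactly at training inputs to gain the interval-valued $\chi_{ij}$, and resort to numerically locating a stationary point in the symmetric family (as in the landscape plots) before verifying it exactly.
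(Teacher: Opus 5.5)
Your strategy matches the paper's. You build an explicit network with every interpolation constraint tight and place kinks on training inputs, so that the subgradient parameters $\chi_{ij}\in[0,1]$ make the bias equations solvable. You impose balancedness, which the stationarity equations force, as you observe. You then solve a linear system for the multipliers and read off non-optimality from \cref{th:inter}.

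Part~(i) goes through, and both candidate values you mention give KKT points:
\begin{itemize}
\item the paper verifies the value-$11$ family, with the neurons at $\pm1$ facing inward, $\mu=(2,\tfrac72,\tfrac72,2)$ and subgradient parameters $\tfrac37$;
\item your value-$7$ variant, with the neurons at $\pm1$ facing outward and outer slopes $\tfrac12$, works with $\mu=(1,\tfrac52,\tfrac52,1)$ and $\chi=\tfrac25$.
\end{itemize}
There are two slips in part~(i). First, the intermediate segments are the singletons $\{-1\}$ and $\{1\}$, so the optimal interpolant has its kinks at $\pm1$, not $\mp\tfrac13$; the value $6$ is unaffected. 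Second, the symmetry pairs $x=-2$ with $x=2$ and $x=-1$ with $x=1$, so the relation should read $\mu_{x=-2}=\mu_{x=2}$ and $\mu_{x=-1}=\mu_{x=1}$. The multiplier solution is unique, so the identification $\mu_{\pm1}=\mu_{\mp2}$ as written would make the system inconsistent.

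The genuine gap is in part~(ii). The statement is purely existential, and you neither exhibit the witness nor show that the multiplier system has a solution with $\mu\ge0$ and subgradient parameters in $[0,1]$. You flag this yourself, and it is not automatic.

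Your count is also wrong. The labels $-1,1,-1,1,1,-1,1,-1$ give $r=6$, not $5$. There are five intermediate segments, and the unique minimizer has $5$ kinks, at $-6,-5,0,5,6$. A width-$6$ witness therefore has two kinks in $[-4,4]$ and one on each of the training inputs $\pm5$ and $\pm6$.

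Your heuristic about where to look is misleading. Flatness of $\sqrt{1+p^2}$ near the origin is not what allows stationarity. With a skip connection, \cref{l:all.Pb1} gives a strict descent direction for two consecutive same-convexity kinks in one segment, wherever they sit, because the Cauchy--Schwarz step is strict whenever the kinks differ. That direction is built after using \cref{l:rev} to give both neurons positive hidden weights, which needs the skip connection to absorb the resulting affine term. In a symmetric network without a skip connection, the neurons at $\pm c$ necessarily have opposite orientations, so that direction is unavailable. What you must choose is therefore the orientation pattern; $c$ is then forced by interpolation rather than free.

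The paper's witness is as follows.
\begin{itemize}
\item The neurons at $-6,-5,-\tfrac12$ have $w_j>0$, and those at $\tfrac12,5,6$ have $w_j<0$.
\item Switch hugging then forces $t=(-\tfrac52,4,-2,2,-4,\tfrac52)$ and $c=\tfrac12$.
\item After balancing, the twelve hidden-weight and bias equations have a unique solution. It has $\mu_1=\mu_8=(5\sqrt{37}-12\sqrt5)/20$, which is positive but only barely, with the remaining $\mu_i$ clearly positive. The subgradient parameters are $\rho_1=\rho_6\approx0.97$ and $\rho_2=\rho_5\approx0.36$.
\item Suboptimality follows from $2\sqrt5>4$.
\end{itemize}
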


\begin{figure}[tp]
\centering
{\tikzexternalenable
\tikzsetnextfilename{cache_norm_a0_pq}
\tikzpicturedependsonfile{plot_dat_tex/obj_norm_a0_pq.dat}
\begin{tikzpicture}
\begin{axis}
[width=\ifopt .475\else\ifarxiv .475\else .5\fi\fi\textwidth,
 height=\ifopt .475\else\ifarxiv .475\else .5\fi\fi\textwidth,
 xmin=-3, xmax=3,
 ymin=-3, ymax=3,
 view={0}{90},
 colormap name=warpedviridis,
 point meta min=5,
 point meta max=95,
 unbounded coords=jump,
 title={skip connection absent}]
\addplot3
[surf,
 shader=interp,
 mesh/cols=101]
table
[x=p,
 y=q,
 z=value_capped]
{plot_dat_tex/obj_norm_a0_pq.dat};
\addplot3[red, line width=1.4pt, no marks] coordinates {(-3,-1,11) (-2,-1,11)};
\addplot3[red, line width=1.4pt, no marks] coordinates {( 1, 3,11) ( 1, 2,11)};
\addplot3[magenta, line width=1.4pt, no marks] coordinates {( 1,-3, 7) ( 1,-2, 7)};
\addplot3[magenta, line width=1.4pt, no marks] coordinates {( 3,-1, 7) ( 2,-1, 7)};
\addplot3[pink, only marks, mark=*, mark size=1.4pt] coordinates {( 1,-1, 6)};
\input{plot_dat_tex/boundary_a0_pq}
\input{plot_dat_tex/cont_norm_a0_pq}
\end{axis}
\end{tikzpicture}
\tikzsetnextfilename{cache_norm_opt_a_pq}
\tikzpicturedependsonfile{plot_dat_tex/obj_norm_opt_a_pq.dat}
\begin{tikzpicture}
\begin{axis}
[width=\ifopt .475\else\ifarxiv .475\else .5\fi\fi\textwidth,
 height=\ifopt .475\else\ifarxiv .475\else .5\fi\fi\textwidth,
 xmin=-3, xmax=3,
 ymin=-3, ymax=3,
 view={0}{90},
 colormap name=warpedviridis,
 colorbar,
 point meta min=5,
 point meta max=95,
 unbounded coords=jump,
 title={skip connection present}]
\addplot3
[surf,
 shader=interp,
 mesh/cols=101]
table
[x=p,
 y=q,
 z=value_capped]
{plot_dat_tex/obj_norm_opt_a_pq.dat};
\addplot3[pink, line width=1.4pt, no marks] coordinates {(-3,-1, 6) ( 3,-1, 6)};
\addplot3[pink, line width=1.4pt, no marks] coordinates {( 1,-3, 6) ( 1, 3, 6)};
\input{plot_dat_tex/boundary_opt_a_pq}
\input{plot_dat_tex/cont_norm_opt_a_pq}
\end{axis}
\end{tikzpicture}}
\caption{\ifopt\else (This is \Cref{f:land.norm:main}, reproduced here for the reader's convenience, and with a more detailed caption.)  \fi\textbf{Left:}  For the dataset in \cref{th:KKT.not}~\ref{th:KKT.not:P}, here we depict the norm landscape of interpolating two-layer ReLU networks of width~$4$ and without a skip connection, i.e., the objective landscape of \cref{eq:P} with $m = 4$.  The symmetry of the dataset allows us to obtain an informative two-dimensional plot as follows.  The network consists of: a pair of balanced neurons $(a_\dag, w_\dag, b_\dag), (a'_\dag, w'_\dag, b'_\dag)$ of opposite signs ($a_\dag < 0$ and $a'_\dag > 0$), same slopes ($a_\dag w_\dag = t_\dag = a'_\dag w'_\dag$), and symmetric kinks ($-b_\dag / w_\dag = p_\dag$ and $-b'_\dag / w'_\dag = -p_\dag$); and another pair of balanced neurons $(a_\ddag, w_\ddag, b_\ddag), (a'_\ddag, w'_\ddag, b'_\ddag)$ of opposite signs ($a_\ddag > 0$ and $a'_\ddag < 0$), same slopes ($a_\ddag w_\ddag = t_\ddag = a'_\ddag w'_\ddag$), and symmetric kinks ($-b_\ddag / w_\ddag = p_\ddag$ and $-b'_\ddag / w'_\ddag = -p_\ddag$).  The horizontal and vertical axes of the plot are parameterized by the kink positions $p_\dag$~and~$p_\ddag$ respectively, and the slope values $t_\dag$~and~$t_\ddag$ are optimized.  The boundary of the feasible set is depicted in orange.  Objective values, which diverge to infinity on approaches to the feasibility boundary, are capped at~$95$ in this plot.  For greater visibility, the contour lines are placed at values $6 + k (k + 1) / 2$ for $k \in [12]$, and the color map for the objective is also quadratic.  Colored red are two plateaus with value~$11$ (at $p_\dag \leq -2, p_\ddag = -1$ and at $p_\dag = 1, p_\ddag \geq 2$), colored magenta are two plateaus with value~$7$ (at $p_\dag = 1, p_\ddag \leq -2$ and at $p_\dag \geq 2, p_\ddag = -1$), and colored pink is the minimizer whose value is~$6$ (at $p_\dag = 1, p_\ddag = -1$).  The red plateaus depicted here are symmetric one-dimensional slices of two-dimensional plateaus as in \Cref{f:KKT.not:P}~top, and the magenta plateaus depicted here are symmetric one-dimensional slices of two-dimensional plateaus as in \Cref{f:KKT.not:P}~middle.  \textbf{Right:}  This is the corresponding symmetric profile when a skip connection is added, i.e., here we depict the objective landscape of \cref{eq:P1} with $m = 4$.  The symmetry of the dataset allows us to consider only a skip connection $a_0 x$ with zero bias, and its weight~$a_0$ is optimized.  The feasible set becomes connected, and the suboptimal plateaus disappear.  Colored pink is the plateau of minimizers whose value is~$6$ (at $p_\dag = 1$ and at $p_\ddag = -1$).}
\label{f:land.norm}
\end{figure}

\begin{figure}[t]
\centering
\begin{tikzpicture}[xscale=\ifopt 1.75\else\ifarxiv 1.85\else 1.56\fi\fi,yscale=.5]
\draw         (-4,  0) node[left]  {$0$} -- (4,  0);
\draw[dashed] (-4,  1) node[left]  {$1$} -- (4,  1);
\draw[dashed] (-4, -1) node[left] {$-1$} -- (4, -1);
\draw[red,very thick]
(-4   ,  1.89) --
(-2.67,  2.33);
\draw[very thick]
(-2.67,  2.33) --
(-1   , -1   ) --
( 1   ,  1   ) --
( 2.33, -1.67);
\draw[red,very thick]
( 2.33, -1.67) --
( 4   , -0.56);
\draw[dotted] (-2,  1) node[point] {} -- (-2, 0) node[below] {$x_1$};
\draw[dotted] (-1, -1) node[point] {} -- (-1, 0) node[above] {$x_2$};
\draw[dotted] ( 1,  1) node[point] {} -- ( 1, 0) node[below] {$x_3$};
\draw[dotted] ( 2, -1) node[point] {} -- ( 2, 0) node[above] {$x_4$};
\end{tikzpicture}
\\[1ex]
\begin{tikzpicture}[xscale=\ifopt 1.75\else\ifarxiv 1.85\else 1.56\fi\fi,yscale=.5]
\draw         (-4,  0) node[left]  {$0$} -- (4,  0);
\draw[dashed] (-4,  1) node[left]  {$1$} -- (4,  1);
\draw[dashed] (-4, -1) node[left] {$-1$} -- (4, -1);
\draw[magenta,very thick]
(-4  ,  5.88) --
(-2.5,  2   );
\draw[very thick]
(-2.5,  2   ) --
(-1  , -1   ) --
( 1  ,  1   ) --
( 3.5, -4   );
\draw[magenta,very thick]
( 3.5, -4   ) --
( 4  , -5.21);
\draw[dotted] (-2,  1) node[point] {} -- (-2, 0) node[below] {$x_1$};
\draw[dotted] (-1, -1) node[point] {} -- (-1, 0) node[above] {$x_2$};
\draw[dotted] ( 1,  1) node[point] {} -- ( 1, 0) node[below] {$x_3$};
\draw[dotted] ( 2, -1) node[point] {} -- ( 2, 0) node[above] {$x_4$};
\end{tikzpicture}
\\[-10ex]
\begin{tikzpicture}[xscale=\ifopt 1.75\else\ifarxiv 1.85\else 1.56\fi\fi,yscale=.5]
\draw         (-4,  0) node[left]  {$0$} -- (4,  0);
\draw[dashed] (-4,  1) node[left]  {$1$} -- (4,  1);
\draw[dashed] (-4, -1) node[left] {$-1$} -- (4, -1);
\draw[very thick]
(-4,  5) --
(-1, -1) --
( 1,  1) --
( 4, -5);
\draw[dotted] (-2,  1) node[point] {} -- (-2, 0) node[below] {$x_1$};
\draw[dotted] (-1, -1) node[point] {} -- (-1, 0) node[above] {$x_2$};
\draw[dotted] ( 1,  1) node[point] {} -- ( 1, 0) node[below] {$x_3$};
\draw[dotted] ( 2, -1) node[point] {} -- ( 2, 0) node[above] {$x_4$};
\end{tikzpicture}
\caption{\textbf{Top:}  Here we plot in function space a KKT point of \cref{eq:P} constructed in the proof of \cref{th:KKT.not}~\ref{th:KKT.not:P}, which is not a minimizer.  The red coloring indicates where this interpolant of the depicted dataset fails to be convexity correct, which by \cref{th:inter}~\ref{th:inter:P.P1} means that this example is suboptimal already in function space.  This is one from a two-dimensional plateau of KKT points of cost~$11$, which can be obtained by placing the first kink arbitrarily in the interval $(-\infty, x_1]$ and placing the last kink arbitrarily in the interval $[x_4, \infty)$ (those choices then determine the initial and final slopes of the KKT point in function space).  \textbf{Middle:}  Here we plot another KKT point of \cref{eq:P} which is not a minimizer, and where the failures of convexity correctness of this interpolant are indicated in magenta.  This KKT point is from a different two-dimensional plateau in which the cost is~$7$.  \textbf{Bottom:}  Here we plot the unique in function space minimizer of \cref{eq:P} for the depicted dataset, whose cost is~$6$.}
\label{f:KKT.not:P}
\end{figure}

\begin{figure}[t]
\centering
\begin{tikzpicture}[xscale=\ifopt .55\else\ifarxiv .6\else .5\fi\fi,yscale=.5]
\draw         (-12.5,  0) node[left]  {$0$} -- (12.5,  0);
\draw[dashed] (-12.5,  1) node[left]  {$1$} -- (12.5,  1);
\draw[dashed] (-12.5, -1) node[left] {$-1$} -- (12.5, -1);
\draw[very thick]
(-12.5, -2.25) --
( -6  ,  1   ) --
( -5  , -1   ) --
( -0.5,  8   );
\draw[red,very thick]
( -0.5,  8   ) --
(  0.5,  8   );
\draw[very thick]
( 12.5, -2.25) --
(  6  ,  1   ) --
(  5  , -1   ) --
(  0.5,  8   );
\draw[dotted] (-10, -1) node[point] {} -- (-10, 0) node[above] {$x_1$};
\draw[dotted] ( -6,  1) node[point] {} -- ( -6, 0) node[below] {$x_2$};
\draw[dotted] ( -5, -1) node[point] {} -- ( -5, 0) node[above] {$x_3$};
\draw[dotted] ( -4,  1) node[point] {} -- ( -4, 0) node[below] {$x_4$};
\draw[dotted] (  4,  1) node[point] {} -- (  4, 0) node[below] {$x_5$};
\draw[dotted] (  5, -1) node[point] {} -- (  5, 0) node[above] {$x_6$};
\draw[dotted] (  6,  1) node[point] {} -- (  6, 0) node[below] {$x_7$};
\draw[dotted] ( 10, -1) node[point] {} -- ( 10, 0) node[above] {$x_8$};
\end{tikzpicture}
\caption{This shows in function space the KKT point of \cref{eq:Pb} constructed in the proof of \cref{th:KKT.not}~\ref{th:KKT.not:Pb}, which is not a minimizer.  The red coloring indicates where this interpolant of the depicted dataset fails to be single turning, which by \cref{th:inter}~\ref{th:inter:Pb.Pb1} means that this example is suboptimal already in function space (and not sparsest).}
\label{f:KKT.not:Pb}
\end{figure}

\begin{proof}
For part~\ref{th:KKT.not:P}, let us write $(x_i, y_i)_{i = 1}^4$ for the dataset and $\theta = (a_j, w_j, b_j)_{j = 1}^4$ for the network.

Given any $p_1 < x_1$ and $p_4 > x_4$, we will define~$\theta$ so that
\begin{itemize}
\item
$f_\theta$~is switch hugging, i.e., $f_\theta(x_i) = y_i$ for all $i \in [4]$,
\item
the kinks of the four neurons are at
\begin{align}
p_1 & &
p_2 & \coloneqq x_2 &
p_3 & \coloneqq x_3 &
p_4 &
\end{align}
respectively, i.e., $-b_j / w_j = p_j$ for all $j \in [4]$, and
\item
the first (resp., last) two neurons are active in the positive (resp., negative) direction, i.e., $w_1, w_2 > 0$ and $w_3, w_4 < 0$.
\end{itemize}
With the latter two properties, the switch hugging constraints are a system of four linear equations in the slopes $t_j \coloneqq a_j w_j$ of the four neurons, which has the following unique solution:
\begin{align}
t_1 & = -\frac{5 p_4}{p_4 - p_1} &
t_2 & =  3 &
t_3 & =  3 &
t_4 & =  \frac{5 p_1}{p_4 - p_1}.
\end{align}
To obtain a KKT point, we need to have $a_j^2 = w_j^2$ for all four neurons, so
\begin{align}
a_1 & = -\sqrt{\frac{ 5 p_4}{p_4 - p_1}} &
a_2 & =  \sqrt{3} &
a_3 & = -\sqrt{3} &
a_4 & =  \sqrt{\frac{-5 p_1}{p_4 - p_1}} \\
w_1 & =  \sqrt{\frac{ 5 p_4}{p_4 - p_1}} &
w_2 & =  \sqrt{3} &
w_3 & = -\sqrt{3} &
w_4 & = -\sqrt{\frac{-5 p_1}{p_4 - p_1}},
\end{align}
and the biases are determined by the positions $-b_j / w_j = p_j$ of the kinks, so
\begin{align}
b_1 & = -p_1 \sqrt{\frac{ 5 p_4}{p_4 - p_1}} &
b_2 & = \sqrt{3} &
b_3 & = \sqrt{3} &
b_4 & =  p_4 \sqrt{\frac{-5 p_1}{p_4 - p_1}}.
\end{align}

We already have that $\theta$~is a feasible point of \cref{eq:P}, and all four interpolation constraints are tight, so to show that $\theta$~is a KKT point, it suffices to exhibit multipliers $\mu_i \geq 0$ such that
\[  \partial \biggl(\frac{1}{2} \sum_{j = 1}^4 (a_j^2 + w_j^2)\biggr)
=   (a_j, w_j, 0)_{j = 1}^4
\in \sum_{i = 1}^4 \mu_i \, \partial (y_i \, f_\theta(x_i)).\]
Thanks to the balancedness $a_j^2 = w_j^2$, in turn it suffices to restrict our attention to the hidden weights and biases, so after a little simplification, the equilibrium inclusion becomes the following system of eight equations in $(\mu_i)_{i = 1}^4$ and $\rho_2, \rho_3$, where the latter variables determine the subgradients in the Clarke subdifferentials of the two ReLUs at $x_2, x_3$ respectively:
\begin{align}
\begin{bmatrix}
\mu_1 \\ \mu_2 \\ \mu_3 \\ \mu_4
\end{bmatrix}^\top
\begin{bmatrix*}[r]
-y_1 x_1 &              0 &        y_1 x_1 & -y_1 x_1 \\
-y_2 x_2 & \rho_2 y_2 x_2 &        y_2 x_2 & -y_2 x_2 \\
-y_3 x_3 &        y_3 x_3 & \rho_3 y_3 x_3 & -y_3 x_3 \\
-y_4 x_4 &        y_4 x_4 &              0 & -y_4 x_4
\end{bmatrix*}
& =
\begin{bmatrix}
1 \\ 1 \\ 1 \\ 1
\end{bmatrix}^\top
& 
\begin{bmatrix}
\mu_1 \\ \mu_2 \\ \mu_3 \\ \mu_4
\end{bmatrix}^\top
\begin{bmatrix*}[r]
-y_1 &          0 &        y_1 & -y_1 \\
-y_2 & \rho_2 y_2 &        y_2 & -y_2 \\
-y_3 &        y_3 & \rho_3 y_3 & -y_3 \\
-y_4 &        y_4 &          0 & -y_4
\end{bmatrix*}
& =
\begin{bmatrix}
0 \\ 0 \\ 0 \\ 0
\end{bmatrix}^\top.
\end{align}
This system has a unique solution
\begin{align}
\mu_1 & = 2 &
\mu_2 & = \frac{7}{2} &
\mu_3 & = \frac{7}{2} &
\mu_4 & = 2 \\
       & &
\rho_2 & = \frac{3}{7} &
\rho_3 & = \frac{3}{7}, &
       &
\end{align}
in which all~$\mu_i$ are nonnegative and both of $\rho_2, \rho_3$ are in the closed interval $[0, 1]$, as required.

Using \cref{th:repr} and \cref{th:inter}~\ref{th:inter:P.P1}, it is straightforward to compute that the minimum of \cref{eq:P} for our dataset is $2 (1 + 2) = 6$, however the norm of~$\theta$ (without the biases) is
\[\sum_{j = 1}^4 \lvert t_j \rvert
= \frac{5 p_4}{p_4 - p_1} + 3 + 3 - \frac{5 p_1}{p_4 - p_1}
= 11,\]
so it is not a minimizer.  The gap of~$5$ is exactly the cost of the two kinks of~$f_\theta$ at~$p_1$ and~$p_4$, and is constant on the two-dimensional plateau obtained by varying~$p_1$ and~$p_4$.

That $f_\theta$~is not convexity correct is because of the two kinks at~$p_1$ and~$p_4$, which are not in any intermediate same-label segment of the dataset.

For part~\ref{th:KKT.not:Pb}, let us write $(x_i, y_i)_{i = 1}^8$ for the dataset and $\theta = (a_j, w_j, b_j)_{j = 1}^6$ for the network.  We will follow the same pattern as for part~\ref{th:KKT.not:P}, however the details are more involved.

We will define~$\theta$ so that
\begin{itemize}
\item
$f_\theta$~is switch hugging, i.e., $f_\theta(x_i) = y_i$ for all $i \in [8]$,
\item
the kinks of the six neurons are at
\begin{align}
p_1 & \coloneqq x_2 &
p_2 & \coloneqq x_3 &
p_3 &               &
p_4 &               &
p_5 & \coloneqq x_6 &
p_6 & \coloneqq x_7
\end{align}
respectively, where $p_3, p_4 \in (x_4, x_5)$ are to be determined, and
\item
the first (resp., last) three neurons are active in the positive (resp., negative) direction, i.e., $w_1, w_2, w_3 > 0$ and $w_4, w_5, w_6 < 0$.
\end{itemize}
With the latter two properties, the switch hugging constraints are a system of eight equations in the slopes $t_j \coloneqq a_j w_j$ of the six neurons and in the two kinks $p_3$~and~$p_4$, which has the following unique solution:
\begin{align}
t_1 & = -\frac{5}{2} &
t_2 & =  4 &
t_3 & = -2 &
t_4 & =  2 &
t_5 & = -4 &
t_6 & =  \frac{5}{2} \\
& & & &
p_3 & = -\frac{1}{2} &
p_4 & =  \frac{1}{2}.
& & & &
\end{align}
This satisfies $p_3, p_4 \in (x_4, x_5) = (-4, 4)$, and we remark that the six signs of $a_j = t_j / w_j$ are such that $f_\theta$~is convexity correct.

To obtain a KKT point, we need to have $a_j^2 = w_j^2 + b_j^2$ for all $j \in [6]$, so from the values of~$p_j$ and~$t_j$ we define
\begin{align}
\lvert a_j \rvert & \coloneqq
\sqrt[4]{1 + p_j^2}
\sqrt{\lvert t_j \rvert} &
\lvert w_j \rvert & \coloneqq
\frac{\sqrt{\lvert t_j \rvert}}
     {\sqrt[4]{1 + p_j^2}},
\end{align}
then set the sign of~$w_j$ as we specified above, set the sign of~$a_j$ to satisfy $t_j = a_j w_j$, and define $b_j \coloneqq -w_j p_j$.

We already have that $\theta$~is a feasible point of \cref{eq:Pb}, and all eight interpolation constraints are tight, so to show that $\theta$~is a KKT point, it suffices to exhibit multipliers $\mu_i \geq 0$ such that
\[  \partial \biggl(\frac{1}{2} \sum_{j = 1}^6 (a_j^2 + w_j^2 + b_j^2)\biggr)
=   (a_j, w_j, b_j)_{j = 1}^6
\in \sum_{i = 1}^8 \mu_i \, \partial (y_i \, f_\theta(x_i)).\]
Thanks to the balancedness $a_j^2 = w_j^2 + b_j^2$, in turn it suffices to restrict our attention to the hidden weights and biases, so after a little simplification, the equilibrium inclusion becomes the following system of 12~equations in $(\mu_i)_{i = 1}^8$ and $\rho_1, \rho_2, \rho_5, \rho_6$, where the latter variables determine the subgradients in the Clarke subdifferentials of the first two and the last two ReLUs at $x_2, x_3, x_6, x_7$ respectively:
\begin{align}
\begin{bmatrix}
\mu_1 \\ \mu_2 \\ \mu_3 \\ \mu_4 \\ \mu_5 \\ \mu_6 \\ \mu_7 \\ \mu_8
\end{bmatrix}^\top
\begin{bmatrix*}[r]
              0 &              0 &        0 & y_1 x_1 &        -y_1 x_1 &        y_1 x_1 \\
-\rho_1 y_2 x_2 &              0 &        0 & y_2 x_2 &        -y_2 x_2 &        y_2 x_2 \\
       -y_3 x_3 & \rho_2 y_3 x_3 &        0 & y_3 x_3 &        -y_3 x_3 &        y_3 x_3 \\
       -y_4 x_4 &        y_4 x_4 &        0 & y_4 x_4 &        -y_4 x_4 &        y_4 x_4 \\
       -y_5 x_5 &        y_5 x_5 & -y_5 x_5 &       0 &        -y_5 x_5 &        y_5 x_5 \\
       -y_6 x_6 &        y_6 x_6 & -y_6 x_6 &       0 & -\rho_5 y_6 x_6 &        y_6 x_6 \\
       -y_7 x_7 &        y_7 x_7 & -y_7 x_7 &       0 &               0 & \rho_6 y_7 x_7 \\
       -y_8 x_8 &        y_8 x_8 & -y_8 x_8 &       0 &               0 &              0
\end{bmatrix*}
& =
\begin{bmatrix}
\frac{1}{\sqrt{1 + p_1^2}} \\
\frac{1}{\sqrt{1 + p_2^2}} \\
\frac{1}{\sqrt{1 + p_3^2}} \\
\frac{1}{\sqrt{1 + p_4^2}} \\
\frac{1}{\sqrt{1 + p_5^2}} \\
\frac{1}{\sqrt{1 + p_6^2}}
\end{bmatrix}^\top
\\
\begin{bmatrix}
\mu_1 \\ \mu_2 \\ \mu_3 \\ \mu_4 \\ \mu_5 \\ \mu_6 \\ \mu_7 \\ \mu_8
\end{bmatrix}^\top
\begin{bmatrix*}[r]
          0 &          0 &    0 & y_1 &        -y_1 &        y_1 \\
-\rho_1 y_2 &          0 &    0 & y_2 &        -y_2 &        y_2 \\
       -y_3 & \rho_2 y_3 &    0 & y_3 &        -y_3 &        y_3 \\
       -y_4 &        y_4 &    0 & y_4 &        -y_4 &        y_4 \\
       -y_5 &        y_5 & -y_5 &   0 &        -y_5 &        y_5 \\
       -y_6 &        y_6 & -y_6 &   0 & -\rho_5 y_6 &        y_6 \\
       -y_7 &        y_7 & -y_7 &   0 &           0 & \rho_6 y_7 \\
       -y_8 &        y_8 & -y_8 &   0 &           0 &          0
\end{bmatrix*}
& =
\begin{bmatrix}
-\frac{p_1}{\sqrt{1 + p_1^2}} \\
-\frac{p_2}{\sqrt{1 + p_2^2}} \\
-\frac{p_3}{\sqrt{1 + p_3^2}} \\
-\frac{p_4}{\sqrt{1 + p_4^2}} \\
-\frac{p_5}{\sqrt{1 + p_5^2}} \\
-\frac{p_6}{\sqrt{1 + p_6^2}}
\end{bmatrix}^\top.
\end{align}
This system has a unique solution
\begin{align}
\mu_1 = \mu_8 & =
\frac{5 \sqrt{37}                - 12 \sqrt{5}}{20} &
\mu_2 = \mu_7 & =
\frac{5 \sqrt{37} +  4 \sqrt{26} -  4 \sqrt{5}}{4} \\
\mu_3 = \mu_6 & =
\frac{5 \sqrt{37} + 10 \sqrt{26} +  6 \sqrt{5}}{5} &
\mu_4 = \mu_5 & =
\frac{               5 \sqrt{26} +  7 \sqrt{5}}{5} \\
\rho_1 = \rho_6 & =
\frac{ 4 \bigl(31 \sqrt{37} + 37 \sqrt{26}             \bigr)}
     {37 \bigl( 5 \sqrt{37} +  4 \sqrt{26} - 4 \sqrt{5}\bigr)} &
\rho_2 = \rho_5 & =
\frac{ 3 \bigl(               35 \sqrt{26} + 52 \sqrt{5}\bigr)}
     {26 \bigl( 5 \sqrt{37} + 10 \sqrt{26} +  6 \sqrt{5}\bigr)},
\end{align}
in which all~$\mu_i$ are nonnegative and each of $\rho_1, \rho_2, \rho_5, \rho_6$ is in the closed interval $[0, 1]$, as required.

That $\theta$~is not a minimizer of \cref{eq:Pb} is due to the suboptimality of the two kinks at~$p_3$ and~$p_4$, whose combined cost of $2 \cdot 2 \sqrt{1 + (1 / 2)^2} = 2 \sqrt{5}$ is greater than the cost of~$4$ for the optimal one kink at~$0$ with slope change of~$4$.

Finally, $f_\theta$~is not single turning because of the two kinks at~$p_3$ and~$p_4$, and not sparsest because it has $6$~kinks rather than the minimal~$5$.
\end{proof}

\begin{rem}
\label{rem:KKT.not}
\cref{th:KKT.not} is robust to sufficiently small independent perturbations of the input locations in both parts; the symmetric datasets are used only to simplify the constructions. We also record the following additional properties, whose direct but lengthy verifications we omit because they are not used elsewhere: the KKT plateau constructed in part~\ref{th:KKT.not:P} consists of local minimizers of \cref{eq:P}; within that family, only the point whose outer kinks are at~$x_1$ and~$x_4$ is KKT for \cref{eq:Pb}, and it is suboptimal; and this point and the KKT point constructed in part~\ref{th:KKT.not:Pb} are strict local minimizers of \cref{eq:Pb}.
\end{rem}

\clearpage
\section{Proofs for minimizing \texorpdfstring{$\ell_2$}{l2}-regularized logistic loss}
\label{s:min.reg.loss}

In this section, we develop several auxiliary results that, together with some of the lemmas from \cref{s:inter}, contribute to the proof of \cref{th:min.reg.loss}, which is our next main result and the counterpart of \cref{th:inter}, where here the focus is on minimizing $\ell_2$-regularized logistic loss instead of minimizing interpolator norm.

Before that effort, we provide in \Cref{f:land.reg} optimization landscape plots that correspond to those in \Cref{f:land.norm} and therefore provide numerical evidence that an analogous qualitative landscape phenomenon occurs for the small-$\lambda$ regularized loss.

\begin{figure}[t]
\centering
{\tikzexternalenable
\tikzsetnextfilename{cache_reg_a0_pq}
\tikzpicturedependsonfile{plot_dat_tex/obj_reg_a0_pq.dat}
\begin{tikzpicture}
\begin{axis}
[width=\ifopt .475\else\ifarxiv .475\else .5\fi\fi\textwidth,
 height=\ifopt .475\else\ifarxiv .475\else .5\fi\fi\textwidth,
 xmin=-3, xmax=3,
 ymin=-3, ymax=3,
 view={0}{90},
 colormap name=warped3viridis,
 point meta min=0.04,
 point meta max=2.96,
 point meta=explicit,
 unbounded coords=jump,
 title={skip connection absent}]
\addplot3
[surf,
 shader=interp,
 mesh/cols=76]
table
[x=p,
 y=q,
 z=value,
 meta expr={min(\thisrow{value},2.96)}]
{plot_dat_tex/obj_reg_a0_pq.dat};
\input{plot_dat_tex/cont_reg_a0_pq}
\end{axis}
\end{tikzpicture}
\tikzsetnextfilename{cache_reg_opt_a_pq}
\tikzpicturedependsonfile{plot_dat_tex/obj_reg_opt_a_pq.dat}
\begin{tikzpicture}
\begin{axis}
[width=\ifopt .475\else\ifarxiv .475\else .5\fi\fi\textwidth,
 height=\ifopt .475\else\ifarxiv .475\else .5\fi\fi\textwidth,
 xmin=-3, xmax=3,
 ymin=-3, ymax=3,
 view={0}{90},
 colormap name=warped3viridis,
 colorbar,
 point meta min=0.04,
 point meta max=2.96,
 point meta=explicit,
 unbounded coords=jump,
 title={skip connection present}]
\addplot3
[surf,
 shader=interp,
 mesh/cols=76]
table
[x=p,
 y=q,
 z=value,
 meta expr={min(\thisrow{value},2.96)}]
{plot_dat_tex/obj_reg_opt_a_pq.dat};
\input{plot_dat_tex/cont_reg_opt_a_pq}
\end{axis}
\end{tikzpicture}}
\caption{For the dataset in \cref{th:KKT.not}~\ref{th:KKT.not:P} and two-layer ReLU networks of width~$4$, here we depict the landscape of the empirical logistic loss with $\ell_2$~regularization of strength $\lambda = 0.001$ of network weights (biases are not regularized).  The remainder of the setup is analogous to that for \Cref{f:land.norm}: the network is symmetric thanks to the symmetry of the dataset, with neuron kinks at $\pm p_\dag$~and~$\pm p_\ddag$, which parameterize the axes in the plots.  The neuron slopes and the skip connection weight (if present) are optimized.  For greater visibility, the contour lines and the color map follow superlinear progressions.  \textbf{Left:}  The skip connection is absent, so the landscape is of~\hyperref[eq:L]{$L\mathrm{^{0.001}}$} with $m = 4$.  \textbf{Right:}  The skip connection is present, so the landscape is of~\hyperref[eq:L1]{$L\mathrm{^{0.001}_1}$} with $m = 4$.  Within this two-dimensional profile, adding the skip connection merges the three visible valleys and removes the suboptimal flat regions, paralleling the qualitative change in \Cref{f:land.norm}.}
\label{f:land.reg}
\end{figure}

\paragraph{Losses and regularizations.}

Let 
$\ell(z) \coloneqq
 \log(1 + \e^{-z})$
denote the logistic loss function, and
\[L(\theta, a_0, b_0) \coloneqq
  \frac{1}{n}
  \sum_{i = 1}^n
  \ell\bigl(y_i \, f_{\theta, a_0, b_0}(x_i)\bigr)\]
denote the logistic loss over the dataset $(x_i, y_i)_{i = 1}^n$.  When the skip connection is absent, instead of $L(\theta, 0, 0)$ we may write just $L(\theta)$.

We consider the following four variants of the empirical logistic loss with $\ell_2$~regularization of strength $\lambda > 0$, depending on whether biases are not [\cref{eq:L,eq:L1}] or are [\cref{eq:Lb,eq:Lb1}] regularized, and on whether a skip connection is absent [\cref{eq:L,eq:Lb}] or present [\cref{eq:L1,eq:Lb1}].  The skip connection is always ``free'', i.e., its parameters $a_0$~and~$b_0$ are never regularized.
\begin{align}
L\mathrm{^\lambda}(\theta) & \coloneqq
L(\theta) +
\frac{\lambda}{2}
\sum_{j = 1}^m (a_j^2 + w_j^2)
\tag{$\mathrm{L^\lambda}$}\label{eq:L} \\
L\mathrm{^\lambda_1}(\theta, a_0, b_0) & \coloneqq
L(\theta, a_0, b_0) +
\frac{\lambda}{2}
\sum_{j = 1}^m (a_j^2 + w_j^2)
\tag{$\mathrm{L^\lambda_1}$}\label{eq:L1} \\
L\mathrm{^{\lambda, b}}(\theta) & \coloneqq
L(\theta) +
\frac{\lambda}{2}
\sum_{j = 1}^m (a_j^2 + w_j^2 + b_j^2)
\tag{$\mathrm{L^{\lambda, b}}$}\label{eq:Lb} \\
L\mathrm{^{\lambda, b}_1}(\theta, a_0, b_0) & \coloneqq
L(\theta, a_0, b_0) +
\frac{\lambda}{2}
\sum_{j = 1}^m (a_j^2 + w_j^2 + b_j^2)
\tag{$\mathrm{L^{\lambda, b}_1}$}\label{eq:Lb1}
\end{align}

Our first auxiliary result is a basic proposition that spells out a sufficient condition for impossibility that a network with the skip connection is a Clarke stationary point of the regularized loss (the biases can optionally be penalized).  In contrast to showing impossibility of KKT points for interpolator norm minimization, where the directional derivative of the square $\ell_2$-norm needed to be negative and Clarke directional derivatives of the negative margins needed to be nonpositive only at training points where interpolation is tight (cf.\ \cref{pr:dd.not.KKT}), here all training points are involved and it suffices that any one of the derivatives is negative (i.e., not necessarily that of the $\ell_2$-norm).

\begin{prop}
\label{pr:dd.not.Clarke}
For any regularization strength $\lambda > 0$, if there exists a direction $\theta', a'_0, b'_0$ such that $\mathrm{D} \bigl(\frac{1}{2} \sum_{j = 1}^m (a_j^2 + w_j^2)\bigr)[\theta', a'_0, b'_0] \leq 0$ (resp., $\mathrm{D} \bigl(\frac{1}{2} \sum_{j = 1}^m (a_j^2 + w_j^2 + b_j^2)\bigr)[\theta', a'_0, b'_0] \leq 0$), $\mathrm{D}^\circ \bigl(-y_i \, f_{\theta, a_0, b_0}(x_i)\bigr)[\theta', a'_0, b'_0] \leq 0$ for all $i \in [n]$, and at least one of those $1 + n$ inequalities is strict, then $\theta, a_0, b_0$ is not a Clarke stationary point of~\hyperref[eq:L1]{$L\mathrm{^\lambda_1}$} (resp.,~\hyperref[eq:Lb1]{$L\mathrm{^{\lambda, b}_1}$}).
\end{prop}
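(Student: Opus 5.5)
The plan is to argue by contradiction directly from the definition of Clarke stationarity, in the same style as \cref{pr:dd.not.KKT}. Write $F \coloneqq L\mathrm{^\lambda_1}$ (the penalized-bias case is identical with the other regularizer). Suppose $\mathbf{0} \in \partial F(\theta, a_0, b_0)$. Then the Clarke directional derivative satisfies $\mathrm{D}^\circ F(\theta, a_0, b_0)[u] \geq \langle \mathbf{0}, u \rangle = 0$ for every direction~$u$. So it suffices to show that, for the given direction $u = (\theta', a'_0, b'_0)$, we have $\mathrm{D}^\circ F(\theta, a_0, b_0)[u] < 0$.

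To bound $\mathrm{D}^\circ F$, I will use the standard Clarke calculus rules: subadditivity of the Clarke subdifferential under sums, and the chain rule for a $C^1$ outer function composed with a locally Lipschitz inner function. The loss is $L = \frac{1}{n} \sum_i \ell(z_i)$ with $z_i \coloneqq y_i \, f_{\theta, a_0, b_0}(x_i)$. Since $\ell$ is $C^1$ with $\ell'(z) = -1/(1 + \e^{z}) < 0$, the chain rule gives $\partial\bigl(\ell \circ z_i\bigr) \subseteq \ell'(z_i) \, \partial z_i = \lvert \ell'(z_i) \rvert \, \partial(-z_i)$. Hence
\[\mathrm{D}^\circ(\ell \circ z_i)[u] \leq \lvert \ell'(z_i) \rvert \, \mathrm{D}^\circ\bigl(-y_i \, f_{\theta, a_0, b_0}(x_i)\bigr)[u].\]
The regularizer is $C^1$, so its Clarke derivative is the ordinary one. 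By the sum rule,
\[\mathrm{D}^\circ F[u] \leq \frac{1}{n} \sum_{i = 1}^n \lvert \ell'(z_i) \rvert \, \mathrm{D}^\circ\bigl(-y_i f(x_i)\bigr)[u] + \lambda \, \mathrm{D}\Bigl(\tfrac{1}{2} \textstyle\sum_j (a_j^2 + w_j^2)\Bigr)[u].\]

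Every coefficient here is strictly positive: $\lvert \ell'(z_i) \rvert > 0$ for every $z_i$, and $\lambda > 0$. By hypothesis, each of the $1 + n$ derivative terms is $\leq 0$ and at least one is strict. The right-hand side is therefore strictly negative, giving the contradiction.

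The only step requiring care is justifying the two Clarke calculus inequalities, i.e.\ the sum rule $\partial(F_1 + F_2) \subseteq \partial F_1 + \partial F_2$ and the chain rule with a strictly decreasing $C^1$ outer function. Both are standard (Clarke 1983). If needed, the chain rule step can be checked directly from the definition via limits of gradients, since $\ell'$ is continuous and nonvanishing.
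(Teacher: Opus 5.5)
Your proposal is correct and is essentially the paper's argument: assuming $\mathbf{0}$ is a Clarke subgradient of the regularized loss, both use the sum and chain rules with the strictly positive weights $-\ell'\bigl(y_i \, f_{\theta, a_0, b_0}(x_i)\bigr)/n$ and $\lambda$ to reach a strictly negative quantity, giving a contradiction. The only cosmetic difference is that the paper picks explicit subgradients and pairs them with the direction, whereas you bound $\mathrm{D}^\circ$ of the objective directly via subadditivity, which is the same argument in support-function form.
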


\begin{proof}
We consider~\hyperref[eq:Lb1]{$L\mathrm{^{\lambda, b}_1}$}, the proof for~\hyperref[eq:L1]{$L\mathrm{^\lambda_1}$} is analogous and simpler.  For a contradiction, suppose that $\mathbf{0} \in \partial L\mathrm{^{\lambda, b}_1}(\theta, a_0, b_0)$.  Then there exist subgradients $\bigl(\theta^{(i)}, a^{(i)}_0, b^{(i)}_0\bigr) \in \partial \bigl(-y_i \, f_{\theta, a_0, b_0}(x_i)\bigr)$ for each $i \in [n]$ such that
\[\mathbf{0}
= \frac{1}{n}
  \sum_{i = 1}^n
  \Bigl(-\ell'\bigl(y_i \, f_{\theta, a_0, b_0}(x_i)\bigr)\Bigr)
  \bigl(\theta^{(i)}, a^{(i)}_0, b^{(i)}_0\bigr)
+ \lambda
  \nabla \biggl(\frac{1}{2}
                \sum_{j = 1}^m (a_j^2 + w_j^2 + b_j^2)\biggr),\]
so recalling that the derivative of the logistic loss function is negative, we have
\begin{align}
0 & =    \Biggl\langle
         \begin{aligned}[t]
  &      \frac{1}{n}
         \sum_{i = 1}^n
         \Bigl(-\ell'\bigl(y_i \, f_{\theta, a_0, b_0}(x_i)\bigr)\Bigr)
         \bigl(\theta^{(i)}, a^{(i)}_0, b^{(i)}_0\bigr)
       + \lambda
         \nabla \biggl(\frac{1}{2}
                       \sum_{j = 1}^m (a_j^2 + w_j^2 + b_j^2)\biggr), \\
  &      (\theta', a'_0, b'_0)
         \Biggr\rangle
         \end{aligned} \\
  & =    \begin{aligned}[t]
  &      \frac{1}{n}
         \sum_{i = 1}^n
         \Bigl(-\ell'\bigl(y_i \, f_{\theta, a_0, b_0}(x_i)\bigr)\Bigr)
         \Bigl\langle
         \bigl(\theta^{(i)}, a^{(i)}_0, b^{(i)}_0\bigr),
         (\theta', a'_0, b'_0)
         \Bigr\rangle \\
  &    + \lambda \,
         \Biggl\langle
         \nabla \biggl(\frac{1}{2}
                       \sum_{j = 1}^m (a_j^2 + w_j^2 + b_j^2)\biggr),
         (\theta', a'_0, b'_0)
         \Biggr\rangle
         \end{aligned} \\
  & \leq \begin{aligned}[t]
  &      \frac{1}{n}
         \sum_{i = 1}^n
         \Bigl(-\ell'\bigl(y_i \, f_{\theta, a_0, b_0}(x_i)\bigr)\Bigr)
         \mathrm{D}^\circ \bigl(-y_i \, f_{\theta, a_0, b_0}(x_i)\bigr)
         [\theta', a'_0, b'_0] \\
  &    + \lambda \,
         \mathrm{D} \biggl(\frac{1}{2}
                           \sum_{j = 1}^m (a_j^2 + w_j^2 + b_j^2)\biggr)
         [\theta', a'_0, b'_0]
         \end{aligned} \\
  & <    0.
\ifopt\tag*{\jmlrBlackBox}\else\qedhere\fi
\end{align}
\ifopt\let\jmlrBlackBox\relax\else\fi
\end{proof}

\paragraph{Minimizers of the regularized losses.}

We start with a key lemma that, provided the regularization strength~$\lambda$ is sufficiently small depending on the dataset (and on whether the biases are penalized), shows that the optimization problems that determine the pairs of margins incident to all label switches (cf.\ the switch optimal property in \cref{th:min.reg.loss}, defined on \cpageref{sw.opt}) have strictly convex objectives, convex feasibility sets, and unique minimizers.  Before stating and proving the lemma, we define the optimization problems.

Given $\tau_k > 0$ and $\pi_k > 0$ for all $k \in [r]$, let $g_{\tau, \pi} \colon \mathbb{R} \to \mathbb{R}$ denote the CPA that has exactly~$r$ slopes and, for all $k \in [r]$, passes through points $(x_{\iota(k)}, y_{\iota(k)} \tau_k)$ and $(x_{\iota(k) + 1}, y_{\iota(k) + 1} \pi_k)$; this is well defined if and only if vectors $\tau$~and~$\pi$ satisfy
\begin{align}
\Phi(\tau, \pi) \coloneqq
\forall k \in [r - 1], \,
&     -\frac{\tau_k + \pi_k}
            {x_{\iota(k) + 1} - x_{\iota(k)}}
       (x_{\iota(k + 1)} - x_{\iota(k) + 1}) \\
& \leq \pi_k - \tau_{k + 1} \\
& \leq \frac{\tau_{k + 1} + \pi_{k + 1}}
            {x_{\iota(k + 1) + 1} - x_{\iota(k + 1)}}
       (x_{\iota(k + 1)} - x_{\iota(k) + 1}).
\end{align}

Recalling from \cref{s:inter} the CPA norms $\mathrm{R_1}$~and~$\mathrm{R^b_1}$, we consider the following two variants of the problem of minimizing the empirical logistic loss with norm regularization of strength $\lambda > 0$.
\begin{align}
\inf_{\tau, \pi \in \mathbb{R}_{> 0}^r}
\Biggl(
\frac{1}{n}
\sum_{i = 1}^n
\ell\bigl(y_i \, g_{\tau, \pi}(x_i)\bigr) +
\lambda \,
\mathrm{R_1}(g_{\tau, \pi})
\Biggr)
& \quad \text{such that }
\Phi(\tau, \pi)
\tag{$\mathrm{O_1}$}\label[prob]{eq:O1} \\
\inf_{\tau, \pi \in \mathbb{R}_{> 0}^r}
\Biggl(
\frac{1}{n}
\sum_{i = 1}^n
\ell\bigl(y_i \, g_{\tau, \pi}(x_i)\bigr) +
\lambda \,
\mathrm{R^b_1}(g_{\tau, \pi})
\Biggr)
& \quad \text{such that }
\Phi(\tau, \pi)
\tag{$\mathrm{O^b_1}$}\label[prob]{eq:Ob1}
\end{align}

\begin{lem}
\label{l:O1.Ob1}
Suppose the dataset has at least two label switches, i.e., $r \geq 2$, and let:
\begin{align}
d_{\min} & \coloneqq \min_{k \in [r]} (x_{\iota(k) + 1} - x_{\iota(k)}) &
x_{\max} & \coloneqq \max_{i \in [n]} \lvert x_i \rvert.
\end{align}
We have that each of \cref{eq:O1,eq:Ob1} has a strictly convex objective and a convex feasible set, and moreover that:
\begin{enumerate}[(i)]
\item
\label{l:O1.Ob1:O1}
provided $\lambda \leq d_{\min} / 24 (r - 1) n (\log n)$, \cref{eq:O1} has a unique minimizer;
\item
\label{l:O1.Ob1:Ob1}
provided $\lambda \leq d_{\min} / 24 (r - 1) n (\log n) \sqrt{1 + x_{\max}^2}$, \cref{eq:Ob1} has a unique minimizer.
\end{enumerate}
\end{lem}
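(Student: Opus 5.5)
The plan is to reduce everything to the explicit piecewise-affine structure of $g_{\tau,\pi}$. For each $k \in [r]$ let $h_k$ be the affine \emph{crossing piece} through $(x_{\iota(k)}, y_{\iota(k)} \tau_k)$ and $(x_{\iota(k)+1}, y_{\iota(k)+1} \pi_k)$. Its slope $s_k$ and intercept $c_k$ are linear functions of $(\tau_k, \pi_k)$, and $\lvert s_k \rvert = (\tau_k + \pi_k)/d_k$ with $d_k \coloneqq x_{\iota(k)+1} - x_{\iota(k)}$.

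\textbf{Feasible set.} The condition $\Phi(\tau, \pi)$ says exactly that the intersection of $h_k$ and $h_{k+1}$ lies in $[x_{\iota(k)+1}, x_{\iota(k+1)}]$. It is a finite system of linear inequalities in $(\tau, \pi)$, so together with $\tau, \pi > 0$ the feasible set is convex. It is nonempty because $\tau = \pi = \mathbf{1}$ satisfies all the inequalities: the middle term is then $0$ and the outer terms have the right signs.

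\textbf{Convexity of the objective.} Under $\Phi$, consider an intermediate segment whose label is $y$. On it, $y \, g_{\tau,\pi}$ equals $\min\{y h_k, y h_{k+1}\}$, because the kink there has convexity opposite to $y$. Hence for every training input $x_i$, the quantity $y_i \, g_{\tau,\pi}(x_i)$ is either affine or a minimum of two affine functions of $(\tau, \pi)$; in both cases it is concave. Since $\ell$ is convex and decreasing, each term $\ell(y_i \, g_{\tau,\pi}(x_i))$ is convex.

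For the regularizers I use \cref{th:repr}. Consecutive crossing slopes have opposite signs, so the slope change at the $k$-th kink is $\lvert s_k - s_{k+1} \rvert = \lvert s_k \rvert + \lvert s_{k+1} \rvert$. Therefore $\mathrm{R_1}(g_{\tau,\pi}) = \sum_k \kappa_k (\tau_k + \pi_k)/d_k$ with $\kappa_k \in \{1, 2\}$, which is linear. For $\mathrm{R^b_1}$, the kink between lines $(s_k, c_k)$ and $(s_{k+1}, c_{k+1})$ sits at $p$ with $\Delta c = -p \, \Delta s$. Hence
\[\sqrt{1+p^2} \, \lvert \Delta s \rvert = \bigl\lVert (s_{k+1} - s_k, \, c_{k+1} - c_k) \bigr\rVert_2,\]
which is a Euclidean norm of a linear map of $(\tau, \pi)$, and so convex.

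\textbf{Strict convexity.} At the switch inputs themselves, $y_{\iota(k)} \, g_{\tau,\pi}(x_{\iota(k)}) = \tau_k$ and $y_{\iota(k)+1} \, g_{\tau,\pi}(x_{\iota(k)+1}) = \pi_k$. So the objective contains $\frac{1}{n} \sum_k \bigl(\ell(\tau_k) + \ell(\pi_k)\bigr)$, which is strictly convex in every coordinate jointly because $\ell'' > 0$. Adding convex terms keeps the objective strictly convex, so any minimizer is unique.

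\textbf{Existence.} The regularizer is bounded below by $\lambda \sum_k (\tau_k + \pi_k)/d_{\max}$, so the objective is coercive. Its continuous extension to the closed feasible set (allowing zero coordinates) therefore attains its infimum. It remains to show this minimizer has all coordinates strictly positive; this is where the bound on $\lambda$ enters, and I expect it to be the main obstacle.

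My first attempt is a directional argument. Suppose the minimizer has, say, $\tau_k = 0$. Increase $\tau_k$ together with whatever neighbouring coordinates keep $\Phi$ valid, for instance raising the adjacent intermediate-segment pieces proportionally.
\begin{itemize}
\item The loss derivative along this move is at most $-\ell'(0)/n = -1/(2n)$, coming from the $\tau_k$ term.
\item The regularizer derivative is at most about $\lambda \cdot c(r-1)/d_{\min}$ (times $\sqrt{1 + x_{\max}^2}$ in the penalized case, since $\lvert p \rvert \leq x_{\max}$).
\end{itemize}
The assumed bound on $\lambda$ should make the net derivative negative, contradicting minimality.

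The $\log n$ factor suggests an additional preliminary step. Compare with the candidate $\tau = \pi = (\log n) \mathbf{1}$, whose objective is at most about $1/n + \lambda \cdot 4(r-1)\log n / d_{\min}$. This would a priori bound the minimizer's coordinates, so that the perturbation needed to fix $\Phi$ stays controlled. The delicate part is making the compensating moves of neighbouring coordinates preserve $\Phi$ while the regularizer's increase stays within the budget set by $d_{\min}/24(r-1)n\log n$.
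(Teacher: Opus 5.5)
Your convexity arguments match the paper: margins on intermediate segments as minima of two affine functions of $(\tau,\pi)$ composed with the convex decreasing $\ell$, $\mathrm{R_1}(g_{\tau,\pi})$ linear, $\mathrm{R^b_1}(g_{\tau,\pi})$ a sum of Euclidean norms of linear maps, and strict convexity from the switch-input terms $\frac{1}{n}\ell(\tau_k)$ and $\frac{1}{n}\ell(\pi_k)$. The gap is the step you flag yourself and leave open. The problems are posed over the open orthant $\mathbb{R}_{>0}^{2r}$, so existence of a minimizer requires showing that the minimizer of your closed extension has no zero coordinate. This is exactly where the bound on $\lambda$ enters, and your proposal does not establish it. Your directional argument leaves the compensating moves unspecified. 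The comparison you suggest is used only for an upper bound on the coordinates, which coercivity already gives, and it is also too weak quantitatively. At $\tau=\pi=(\log n)\mathbf{1}$ the loss term alone can equal $\log(1+1/n)$ (e.g.\ when every input is incident to a switch), and this exceeds $\ell(0)/n=(\log 2)/n$. (Also, the relevant derivative is $\ell'(0)/n=-1/(2n)$, not $-\ell'(0)/n$.)

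The missing idea is a direct comparison that gives a \emph{lower} bound on the coordinates. All loss terms are nonnegative, and the switch inputs contribute $\frac{1}{n}\ell(\tau_k)$ and $\frac{1}{n}\ell(\pi_k)$. Hence every feasible point has objective at least $\frac{1}{n}\ell(\min_k\{\tau_k,\pi_k\})$. So a single feasible point with objective strictly below $\ell(0)/n$ yields, by continuity of $\ell$, an $\varepsilon>0$ such that every point with a coordinate below $\varepsilon$ is worse. Together with coercivity, this confines the minimization to a compact convex subset. The paper takes $\tau=\pi=z\mathbf{1}$ with $z=\log(2n/\ell(0))$. All margins are then at least $z$, so the loss is below $e^{-z}=\ell(0)/(2n)$. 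Meanwhile $\lambda\mathrm{R_1}\le 4\lambda z(r-1)/d_{\min}\le z/(6n\log n)<1/(3n)$ for $n\ge 3$, so the total is below $(\ell(0)+2/3)/(2n)<\ell(0)/n$. For the penalized case, the kinks lie in $[x_1,x_n]$, so $\mathrm{R^b_1}\le\sqrt{1+x_{\max}^2}\,\mathrm{R_1}$ there.

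Alternatively, you can complete your directional argument with the single direction $\mathbf{1}$, i.e., add $t$ to every $\tau_k$ and $\pi_k$:
\begin{itemize}
\item It preserves $\Phi$, because each middle term $\pi_k-\tau_{k+1}$ is unchanged while both outer bounds move outward.
\item Every margin is nondecreasing along it, since the affine margin expressions have nonnegative coefficients.
\item A zero switch margin contributes a one-sided derivative of at most $-1/(2n)$.
\item The regularizer grows at rate at most $4(r-1)/d_{\min}$ (times $\sqrt{1+x_{\max}^2}$ for $\mathrm{R^b_1}$), so $\lambda$ times this rate is at most $1/(6n\log n)<1/(2n)$.
\end{itemize}
The net one-sided derivative is therefore negative at any point with a zero coordinate, so the minimizer lies in the open orthant.
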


\begin{proof}
We first consider part~\ref{l:O1.Ob1:O1}, where by \cref{th:repr} we have that
\[\mathrm{R_1}(g_{\tau, \pi}) =
  \frac{\tau_1 + \pi_1}
       {x_{\iota(1) + 1} - x_{\iota(1)}}
+ 2 \sum_{k = 2}^{r - 1}
  \frac{\tau_k + \pi_k}
       {x_{\iota(k) + 1} - x_{\iota(k)}}
+ \frac{\tau_r + \pi_r}
       {x_{\iota(r) + 1} - x_{\iota(r)}},\]
so the regularizing term is linear in vectors $\tau$~and~$\pi$, and thus convex.

For every input~$x_i$ that is in an intermediate same-label segment of the dataset, i.e., such that $i \in \{\iota(k) + 1, \dots, \iota(k + 1)\}$ for some $k \in [r - 1]$, we have that
\begin{align}
\ell\bigl(y_i \, g_{\tau, \pi}(x_i)\bigr) =
\max \Biggl\{& \ell\biggl(
               \pi_k + 
               \frac{\tau_k + \pi_k}
                    {x_{\iota(k) + 1} - x_{\iota(k)}}
               (x_i - x_{\iota(k) + 1})\biggr), \\
             & \ell\biggl(
               \tau_{k + 1} +
               \frac{\tau_{k + 1} + \pi_{k + 1}}
                    {x_{\iota(k + 1) + 1} - x_{\iota(k + 1)}}
               (x_{\iota(k + 1)} - x_i)\biggr)\Biggr\},
\end{align}
which is the maximum of two compositions of the logistic loss function~$\ell$ with linear functions in $\tau$~and~$\pi$, and thus also convex.  For inputs~$x_i$ that are in the first or last same-label segments of the dataset, the argument is simpler as no maximum and only one composition is involved.

Hence the objective of \cref{eq:O1} is convex, and moreover strictly convex since, for every variable~$\tau_k$, it contains the term $\frac{1}{n} \ell(\tau_k)$ (at~$x_{\iota(k)}$), and for every variable~$\pi_k$, it contains the term $\frac{1}{n} \ell(\pi_k)$ (at~$x_{\iota(k) + 1}$).

Since the feasibility condition~$\Phi(\tau, \pi)$ is a conjunction of linear inequalities, the feasible set of \cref{eq:O1} is convex.

To complete the proof for part~\ref{l:O1.Ob1:O1}, it suffices to show that the feasible set has a compact convex subset such that the infimum of the objective outside that subset is greater than the value of the objective at a point in the subset.  We will obtain such a compact convex subset by bounding all variables~$\tau_k$ and~$\pi_k$ away from zero and away from infinity; that it is compact and convex will follow from its boundedness together with nonstrictness and affinity of all its defining inequalities.

Let $z \coloneqq \log(2 n / \ell(0))$, and let $\tau^\dag_k \coloneqq z$ and $\pi^\dag_k \coloneqq z$ for all $k \in [r]$.  Then the objective of \cref{eq:O1} at $\tau^\dag, \pi^\dag$ is less than
\begin{multline}
  \ell(z)             + \frac{4 \lambda z (r - 1)}{d_{\min}}
< \e^{-z}             + \frac{z}{6 n (\log n)}
= \frac{\ell(0)}{2 n} + \frac{\log(n) + \log(2 / \ell(0))}{6 n (\log n)} \\
< \frac{\ell(0)}{2 n} + \frac{2 \log(n)}{6 n (\log n)}
= \frac{\ell(0) + 2 / 3}{2 n}
< \frac{\ell(0)}{n}.
\end{multline}
Since the objective of \cref{eq:O1} at any feasible point $\tau, \pi$ is greater than $\ell\bigl(\min \{\tau_k, \pi_k \mid k \in [r]\}\bigr) / n$, it follows that there exists $\varepsilon > 0$ such that the infimum over all feasible points $\tau, \pi$ with $\tau_k < \varepsilon$ or $\pi_k < \varepsilon$ for some $k \in [r]$ is greater than the value at $\tau^\dag, \pi^\dag$.

It remains to observe that $R_1(g_{\tau, \pi})$~is linear in every variable~$\tau_k$ and every variable~$\pi_k$ with a positive coefficient, and so there exists $N > 0$ such that the infimum of the objective of \cref{eq:O1} over all feasible points $\tau, \pi$ with $\tau_k > N$ or $\pi_k > N$ for some $k \in [r]$ is greater than the value at $\tau^\dag, \pi^\dag$.

For part~\ref{l:O1.Ob1:Ob1}, the reasoning follows the same pattern.  From \cref{th:repr}, it is straightforward to derive that
\[\mathrm{R^b_1}(g_{\tau, \pi}) =
  \sum_{k = 1}^{r - 1}
  \sqrt{\begin{gathered}
        \Biggl(\frac{\tau_k + \pi_k}
                    {x_{\iota(k) + 1} - x_{\iota(k)}}
             + \frac{\tau_{k + 1} + \pi_{k + 1}}
                    {x_{\iota(k + 1) + 1} - x_{\iota(k + 1)}}\Biggr)^2 + \\
        \Biggl(\frac{x_{\iota(k) + 1} \tau_k + x_{\iota(k)} \pi_k}
                    {x_{\iota(k) + 1} - x_{\iota(k)}}
             + \frac{x_{\iota(k + 1) + 1} \tau_{k + 1} + x_{\iota(k + 1)} \pi_{k + 1}}
                    {x_{\iota(k + 1) + 1} - x_{\iota(k + 1)}}\Biggr)^2
        \end{gathered}}\]
and so the unscaled regularizer is a sum of $r - 1$ Euclidean lengths of vectors that are linear in $\tau$~and~$\pi$, thus convex.

The only other significant difference is in bounding the variables away from infinity; for that, it suffices to recall that $\mathrm{R^b_1}(g_{\tau, \pi}) \geq \mathrm{R_1}(g_{\tau, \pi})$.
\end{proof}

\begin{cor}
\label{c:pos.margin}
Suppose $r \geq 2$ and $\lambda$~satisfies the upper bound in part~\ref{l:O1.Ob1:O1} (resp.,~\ref{l:O1.Ob1:Ob1}) of \cref{l:O1.Ob1}. Then the unique minimizer of \cref{eq:O1} (resp., \cref{eq:Ob1}) has value less than $\ell(0) / n$ and lies in $\mathbb{R}_{> 0}^{2 r}$.  Also networks whose \hyperref[eq:L1]{$L\mathrm{^\lambda_1}$} (resp.,~\hyperref[eq:Lb1]{$L\mathrm{^{\lambda, b}_1}$}) values are less than $\ell(0) / n$ exist, and every such network has a positive margin.
\end{cor}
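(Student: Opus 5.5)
The first claim comes straight out of the proof of \cref{l:O1.Ob1}. There, under the stated bound on~$\lambda$, we evaluated the objective of \cref{eq:O1} (resp., \cref{eq:Ob1}) at the feasible point $\tau^\dag = \pi^\dag = (z, \dots, z)$ with $z = \log(2 n / \ell(0))$ and showed it is strictly less than $\ell(0)/n$. For \cref{eq:Ob1} this needs the extra estimate $\mathrm{R^b_1}(g_{\tau^\dag,\pi^\dag}) \leq \sqrt{1 + x_{\max}^2}\,\mathrm{R_1}(g_{\tau^\dag,\pi^\dag})$. It holds because, for a canonical CPA $g_{\tau,\pi}$ with all $\tau_k, \pi_k > 0$, the kinks lie in $[x_1, x_n]$, so each factor $\sqrt{1+p_k^2}$ is at most $\sqrt{1+x_{\max}^2}$. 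That factor is exactly what the bound in part~\ref{l:O1.Ob1:Ob1} absorbs. Hence the unique minimizer has value at most the value at $(\tau^\dag,\pi^\dag)$, which is below $\ell(0)/n$. Membership in $\mathbb{R}^{2r}_{>0}$ holds by definition of the domain. For the strict version I would point to the $\varepsilon$-bounding step in that proof, which shows the minimizer cannot sit on the boundary: near the boundary the objective exceeds the value at $(\tau^\dag,\pi^\dag)$.

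For existence of networks, take the minimizer $(\tau^\star,\pi^\star)$, or just $(\tau^\dag,\pi^\dag)$. By \cref{th:repr}, the minimum of \cref{eq:R1} (resp., \cref{eq:Rb1}) is attained, so there is a network $(\theta, a_0, b_0)$ with $f_{\theta,a_0,b_0} = g_{\tau,\pi}$ and $\frac12\sum_j(a_j^2+w_j^2) = \mathrm{R_1}(g_{\tau,\pi})$ (resp., with the biases included, $\mathrm{R^b_1}$). Its value of $L^\lambda_1$ (resp., $L^{\lambda,b}_1$) equals the \cref{eq:O1} (resp., \cref{eq:Ob1}) objective, so it is below $\ell(0)/n$.

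Positivity of the margin is the elementary step. Since $\ell \geq 0$ and the regularizer is nonnegative, any network with value below $\ell(0)/n$ satisfies $\frac1n \ell(y_i f(x_i)) < \ell(0)/n$ for each~$i$. Because $\ell$ is strictly decreasing, this gives $y_i f(x_i) > 0$ for all $i$, so $M > 0$.

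The only delicate point is the Euclidean-norm estimate for $\mathrm{R^b_1}$, namely checking that the kink locations of $g_{\tau,\pi}$ stay within $[x_1,x_n]$. I would verify this from $\Phi$: each kink of $g_{\tau,\pi}$ is the intersection of two consecutive crossing lines, and this intersection lies in the intermediate segment between them.
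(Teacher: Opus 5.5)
Your proposal is correct and follows essentially the same route as the paper. You evaluate at the constant point $\tau^\dag = \pi^\dag = (z, \dots, z)$, bound $\mathrm{R^b_1}$ by $\sqrt{1 + x_{\max}^2}\,\mathrm{R_1}$ using that the kinks lie in $[x_1, x_n]$ (which you verify explicitly from $\Phi$, whereas the paper only asserts it), realize $g_{\tau, \pi}$ by a norm-attaining representor via \cref{th:repr}, and get positive margin from nonnegativity of all summands and strict monotonicity of $\ell$; for strict positivity of the minimizer, your appeal to the $\varepsilon$-bounding step is the same idea as the paper's direct remark that every boundary point of the closed orthant has objective at least $\ell(0)/n$.
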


\begin{proof}
Let $z \coloneqq \log(2 n / \ell(0))$, and let $\tau^\dag_k \coloneqq z$ and $\pi^\dag_k \coloneqq z$ for all $k \in [r]$. Then $\Phi(\tau^\dag, \pi^\dag)$ holds, and the calculation in the proof of \cref{l:O1.Ob1} gives $\frac{1}{n} \sum_{i = 1}^n \ell\bigl(y_i \, g_{\tau^\dag, \pi^\dag}(x_i)\bigr) + \lambda \mathrm{R_1}(g_{\tau^\dag, \pi^\dag}) < \ell(0) / n$. Under the bound in \cref{l:O1.Ob1}~\ref{l:O1.Ob1:Ob1}, the same inequality holds with~$\mathrm{R^b_1}$, because $\mathrm{R^b_1}(g_{\tau^\dag, \pi^\dag}) \leq \sqrt{1 + x_{\max}^2} \, \mathrm{R_1}(g_{\tau^\dag, \pi^\dag})$ as every kink of~$g_{\tau^\dag, \pi^\dag}$ belongs to $[x_1, x_n]$. By \cref{th:repr}, $g_{\tau^\dag, \pi^\dag}$~has a network representor attaining the relevant regularizer.

On the other hand, if a network function~$f_{\theta, a_0, b_0}$ has a nonpositive margin, then $y_i \, f_{\theta, a_0, b_0}(x_i) \leq 0$ for some $i \in [n]$, and therefore
\[L\mathrm{^{\lambda, b}_1}(\theta, a_0, b_0)
  \geq L\mathrm{^\lambda_1}(\theta, a_0, b_0)
  \geq \frac{1}{n}
       \sum_{i' = 1}^n
       \ell\bigl(y_{i'} \, f_{\theta, a_0, b_0}(x_{i'})\bigr)
  \geq \frac{\ell(0)}{n}.\]

Finally, if one allows the closed domain $\tau, \pi \in \mathbb{R}_{\geq 0}^r$ in the finite-dimensional problems, any boundary point has objective at least $\ell(0) / n$, because some switch-incident margin is zero. The strict comparison above therefore places the unique minimizer in~$\mathbb{R}_{> 0}^{2 r}$.
\end{proof}

We recall from \cref{s:inter} two properties of a CPA $g \colon \mathbb{R} \to \mathbb{R}$ with respect to the dataset: convexity correct and single turning; and we define the following two properties:
\begin{description}
\item[(switch optimal)]
\label{sw.opt}
depending on whether biases are not or are regularized, for all $k \in [r]$, we have $g(x_{\iota(k)}) = y_{\iota(k)} \tau^\star_k$ and $g(x_{\iota(k) + 1}) = y_{\iota(k) + 1} \pi^\star_k$, where $\tau^\star, \pi^\star$ is the unique minimizer of \cref{eq:O1} or \cref{eq:Ob1} (respectively);
\item[(interval turning)]
\label{int.turn}
for all $k \in [r - 1]$, letting~$p$ and~$p'$ be the first and last (respectively) kinks of~$g$ \ifarxiv within\else in\fi\ $[x_{\iota(k) + 1}, x_{\iota(k + 1)}]$, the interval $(p, p')$ contains no inputs from the dataset.
\end{description}

\begin{rem}
If $g$~is single turning, then it is also interval turning: since within every intermediate segment of the dataset there is exactly one kink, the first kink~$p$ and the last kink~$p'$ coincide, so the interval $(p, p')$ is empty.
\end{rem}

\begin{thm}
\label{th:min.reg.loss}
Under \cref{ass:data}, and provided $\lambda \leq d_{\min} / 24 (r - 1) n (\log n)$ if biases are not regularized (i.e., for losses~\hyperref[eq:L]{$L\mathrm{^\lambda}$} and~\hyperref[eq:L1]{$L\mathrm{^\lambda_1}$}) or $\lambda \leq d_{\min} / 24 (r - 1) n (\log n) \sqrt{1 + x_{\max}^2}$ if biases are regularized (i.e., for losses~\hyperref[eq:Lb]{$L\mathrm{^{\lambda, b}}$} and~\hyperref[eq:Lb1]{$L\mathrm{^{\lambda, b}_1}$}), all of the following hold.
\begin{enumerate}[(i)]

\item
\label{th:min.reg.loss:L.L1}
Losses~\hyperref[eq:L]{$L\mathrm{^\lambda}$} and~\hyperref[eq:L1]{$L\mathrm{^\lambda_1}$} have the same minimizers in function space (over all~$m$), which are exactly all CPAs $g \colon \mathbb{R} \to \mathbb{R}$ that are switch optimal, convexity correct, and interval turning.

\item
\label{th:min.reg.loss:Lb.Lb1}
Losses~\hyperref[eq:Lb]{$L\mathrm{^{\lambda, b}}$} and~\hyperref[eq:Lb1]{$L\mathrm{^{\lambda, b}_1}$} have the same unique minimizer in function space (over all~$m$), which is switch optimal, convexity correct, and single turning; and thus with $r - 1$ kinks in total.

\item
\label{th:min.reg.loss:stat}
For losses~\hyperref[eq:L1]{$L\mathrm{^\lambda_1}$} and~\hyperref[eq:Lb1]{$L\mathrm{^{\lambda, b}_1}$}, we have that every Clarke stationary point $(\theta, a_0, b_0) \in \mathbb{R}^{3 m + 2}$ whose margin is positive (with any network width~$m$ for which such points exist) is a minimizer (globally, including over all~$m$).
\end{enumerate}
\end{thm}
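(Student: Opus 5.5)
The argument mirrors that of \cref{th:inter}, with \cref{pr:dd.not.Clarke} playing the role of \cref{pr:dd.not.KKT}. First, by \cref{c:pos.margin}, any minimizer (over any width) has objective below $\ell(0)/n$ and hence a positive margin. A minimizer is Clarke stationary, so it suffices to characterize positive-margin Clarke stationary points of \hyperref[eq:L1]{$L\mathrm{^\lambda_1}$} and \hyperref[eq:Lb1]{$L\mathrm{^{\lambda, b}_1}$}. For such a point, \cref{l:bal,l:opp} (resp.\ \cref{l:bal.b,l:opp,l:bias}), combined with \cref{pr:dd.not.Clarke}, force balancedness and alignment. The parameter norm therefore equals $\mathrm{R_1}(g)$ (resp.\ $\mathrm{R^b_1}(g)$) for $g=f_{\theta,a_0,b_0}$. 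Since $g$ has positive margin, \cref{l:cc} applies, and \cref{l:all.P1} (resp.\ \cref{l:all.Pb1}, together with \cref{l:st}) gives convexity correctness (resp.\ also single turning). These lemmas give directions with strictly negative regularizer derivative and nonpositive Clarke derivatives of all negative margins, which is exactly what \cref{pr:dd.not.Clarke} needs.

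Interval turning in the unpenalized case is new. Suppose that, within some intermediate segment with label, say, $-1$, the first and last kinks $p<p'$ enclose an input $x_i$. Convexity correctness makes all these kinks convex. I would take a neuron at $p$ and a neuron at $p'$, and rotate the graph piece between them downward, flattening the chord. The direction is the two-neuron direction $u$ from \cref{l:all.P1}, built from the neurons at $p_k=p$ and at the next kink. If intermediate kinks exist, I would first treat adjacent kinks $p_k<p_{k+1}$ that enclose an input. This case must occur, unless all inputs lie exactly at intermediate kinks. For that degenerate case I would use \cref{l:up.down} with one-sided limits. This direction has regularizer derivative zero, not negative, because the two slope changes are of the same sign and the total variation is unchanged. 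It moves $f$ strictly toward $-1$ on $(p_k,p_{k+1})$ and leaves $f$ unchanged elsewhere. The strict decrease of $-y_i f(x_i)$ at the enclosed input then supplies the strict inequality in \cref{pr:dd.not.Clarke}. I expect this step to be the most delicate, because the sign bookkeeping in $u$ must be rechecked for same-convexity kinks.

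Next comes switch optimality, via the finite problems \cref{eq:O1,eq:Ob1} and \cref{l:O1.Ob1}. Given a convexity-correct and interval-turning (resp.\ single-turning) $g$, I would show that $L_1^\lambda$ at a balanced representor equals the \cref{eq:O1} objective at $(\tau,\pi)$, with $\tau_k=y_{\iota(k)}g(x_{\iota(k)})$ and $\pi_k=y_{\iota(k)+1}g(x_{\iota(k)+1})$. This uses the telescoping of total variation from the proof of \cref{th:inter} and the fact that interval turning makes the losses inside segments agree with those of $g_{\tau,\pi}$. For the penalized case the kink of $g_{\tau,\pi}$ is the crossing-line intersection, as in \cref{l:unique.st}. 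If $(\tau,\pi)\neq(\tau^\star,\pi^\star)$, I would build a network perturbation realizing the first-order move of the crossing pieces toward $(\tau^\star,\pi^\star)$. This perturbation uses the neurons at the turning kinks of each segment together with the skip connection to adjust the outer pieces. Its directional derivative equals that of the convex \cref{eq:O1} objective along the segment, which is negative by strict convexity. This contradicts stationarity. Consequently every positive-margin stationary point has the value of the unique minimizer of \cref{eq:O1} (resp.\ \cref{eq:Ob1}), which proves part~\ref{th:min.reg.loss:stat}. Conversely, every switch-optimal, convexity-correct, interval-turning CPA attains this same minimal value, and in the penalized case only the canonical single-turning CPA does, by \cref{l:unique.st}. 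This gives parts \ref{th:min.reg.loss:L.L1} and \ref{th:min.reg.loss:Lb.Lb1} with the skip connection. Removing the skip connection preserves both regularizer values for positive-margin $g$ by \cref{l:RR1.RbRb1}, and attainment follows as in \cref{l:attain.int}. The realizing perturbation for switch optimality is the main obstacle, since it must control Clarke derivatives at inputs coinciding with kinks; I would try \cref{l:rev,l:up.down} to reduce this to one-sided ordinary derivatives.
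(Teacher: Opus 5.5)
Your proposal is correct and follows the paper's proof essentially step for step. The ingredients line up as follows: positive margin from \cref{c:pos.margin}; convexity correctness and single turning from \cref{l:cc,l:all.P1,l:st,l:all.Pb1} fed into \cref{pr:dd.not.Clarke}; interval turning from the same-convexity two-neuron direction of \cref{l:L1}; switch optimality from a perturbation pointing toward $(\tau^\star, \pi^\star)$ as in \cref{l:so}; the norm identity from \cref{l:bal,l:opp} (resp.\ \cref{l:bal.b,l:opp,l:bias}); fixed-width attainment (\cref{l:attain.r.l}) to identify the global minimum with that of \cref{eq:O1} (resp.\ \cref{eq:Ob1}); and \cref{l:RR1.RbRb1} to drop the skip connection.

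One simplification in the interval-turning step: perturbing only the neurons at the first and last kinks $p < p'$ leaves $f$ unchanged outside $(p, p')$ whatever unperturbed kinks lie in between, so no adjacency reduction is needed. This also removes your ``degenerate'' case, where an adjacent pair with the input sitting at a perturbed kink would give only a zero, not negative, Clarke derivative.
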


Having now stated our main result in this section, we continue the development toward its proof.  The following lemma shows that the optima of the regularized loss minimization problems with the skip connection are attained, even for any fixed network width.  There need to be at least two label switches in the dataset (which is implied by \cref{ass:data}), however there is no constraint on~$\lambda$.

\begin{lem}
\label{l:attain.r.l}
Suppose the dataset has at least two label switches, i.e., $r \geq 2$.  For each of~\hyperref[eq:L1]{$L\mathrm{^\lambda_1}$} and~\hyperref[eq:Lb1]{$L\mathrm{^{\lambda, b}_1}$} restricted to any network width $m \in \mathbb{N}$, and with any fixed regularization strength $\lambda > 0$, the minimum is attained.
\end{lem}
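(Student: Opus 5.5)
The plan mirrors the proof of \cref{l:attain.int}: take a minimizing sequence, replace each term by an equivalent parameterization with no larger objective, show that the normalized sequence is bounded, and conclude by compactness and continuity. Fix $m$ and $\lambda$, and let $\bigl(\theta^{(k)}, a_0^{(k)}, b_0^{(k)}\bigr)_k$ be a minimizing sequence for \hyperref[eq:L1]{$L\mathrm{^\lambda_1}$}. Its objective values are bounded by some $N$, and since $L \geq 0$ this gives $\frac{\lambda}{2} \sum_j (a_j^2 + w_j^2) \leq N$.

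The normalization proceeds in three steps.
\begin{enumerate}[(a)]
\item For every neuron whose kink lies outside $[x_1, x_n]$, or with $w_j = 0$, absorb its contribution on $[x_1, x_n]$ into the free skip connection and replace it by the zero neuron. This leaves the outputs at all training inputs, and hence $L$, unchanged, and does not increase the regularizer.
\item Rescale each remaining neuron so that $\lvert a_j \rvert = \lvert w_j \rvert$; this does not increase the regularizer.
\item Conclude that $\lvert a_j \rvert, \lvert w_j \rvert \leq \sqrt{2N/\lambda}$ and $\lvert b_j \rvert \leq \sqrt{2N/\lambda}\, x_{\max}$.
\end{enumerate}
So $\theta^{(k)}$ is bounded, and the pre-skip outputs $h^{(k)}$ on the training inputs are bounded. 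For \hyperref[eq:Lb1]{$L\mathrm{^{\lambda, b}_1}$}, the penalty on the biases bounds all of $\theta^{(k)}$ directly once each neuron is balanced as $a_j^2 = w_j^2 + b_j^2$, so step (a) is not even needed there.

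The main step is bounding the skip parameters, and it is where the argument differs from \cref{l:attain.int}. There, feasibility supplied margins of at least $1$; here we only have $\ell\bigl(y_i f(x_i)\bigr) \leq nN$ for every $i$. This gives a uniform lower bound $y_i \bigl(a_0^{(k)} x_i + b_0^{(k)} + h_i^{(k)}\bigr) \geq -C$ for some constant $C$, since $\ell(z) \to \infty$ as $z \to -\infty$.

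Suppose $\bigl\lVert a_0^{(k)}, b_0^{(k)} \bigr\rVert_2 \to \infty$ along a subsequence. Dividing by this norm and passing to a limit direction $(\alpha_0, \beta_0)$ yields $y_i (\alpha_0 x_i + \beta_0) \geq 0$ for all $i$, because both $C$ and $h^{(k)}$ are bounded. This is the same limit inequality as before. Since $r \geq 2$, there are three ordered inputs with alternating labels. The convex-combination argument of \cref{l:attain.int} then forces $\alpha_0 = \beta_0 = 0$, which contradicts $(\alpha_0, \beta_0)$ being a unit vector. Hence the skip parameters are bounded.

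With the whole sequence now in a compact subset of $\mathbb{R}^{3m+2}$, pass to a convergent subsequence. The objective is continuous, since $\ell$ and the network outputs are continuous in the parameters, so the limit attains the infimum, with no feasibility set to check. The only delicate point is replacing the hard margin lower bound by the bounded-loss bound, and that is handled above.
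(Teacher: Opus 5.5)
Your proposal is correct and follows essentially the same route as the paper. Both arguments normalize the minimizing sequence in the same way: neurons with zero hidden weight or kinks outside $[x_1, x_n]$ are absorbed into the free skip connection, and the remaining neurons are balanced. Both then replace the hard margin bound by the bounded-loss lower bound $y_i f(x_i) \geq -C$, bound the skip parameters with the same three-alternating-inputs argument, and conclude by compactness and continuity. One small redundancy: for $L\mathrm{^{\lambda, b}_1}$ even the balancing step is unnecessary, since the bias-inclusive regularizer already bounds every coordinate of $\theta^{(k)}$ directly, as the paper notes.
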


\begin{proof}
Fix $m \in \mathbb{N}$ and $\lambda > 0$. We first consider~\hyperref[eq:L1]{$L\mathrm{^\lambda_1}$}. Let $\bigl(\theta^{(k)}, a_0^{(k)}, b_0^{(k)}\bigr)_{k \in \mathbb{N}}$ be a minimizing sequence, then the maximum~$N$ of its \hyperref[eq:L1]{$L\mathrm{^\lambda_1}$}~values is finite. Since both the empirical logistic loss and the regularization term are nonnegative, it follows that
\begin{equation}
\frac{1}{2}
\sum_{j = 1}^m
\Bigl(\bigl(a_j^{(k)}\bigr)^2 + \bigl(w_j^{(k)}\bigr)^2\Bigr)
\leq \frac{N}{\lambda}.
\label{eq:attain.r.l}
\end{equation}

As in the proof of \cref{l:attain.int}, we may alter each parameterization without changing its outputs at the training inputs and without increasing the regularization term. If the kink of a neuron lies outside $[x_1, x_n]$, then the contribution of that neuron is affine on all training inputs. Its contribution there may therefore be absorbed into the free affine skip connection, after which the neuron may be replaced by the zero neuron. A neuron with zero hidden weight contributes a constant and can be treated in the same way. We may thus assume that every nonzero neuron has its kink in $[x_1, x_n]$. By positive homogeneity of the ReLU, each remaining nonzero neuron can be balanced so that $\lvert a_j^{(k)} \rvert = \lvert w_j^{(k)} \rvert$ without changing the represented function or increasing the regularization term. Inequality~(\ref{eq:attain.r.l}) then bounds $a_j^{(k)}$ and~$w_j^{(k)}$. Also $\lvert b_j^{(k)} \rvert \leq \lvert w_j^{(k)} \rvert \max_{i = 1}^n \lvert x_i \rvert$.

Thus the parameter vectors~$\theta^{(k)}$ lie in a bounded set. In particular, the pre-skip output vectors $h^{(k)} \coloneqq \bigl(f_{\theta^{(k)}, 0, 0}(x_i)\bigr)_{i = 1}^n$ form a bounded sequence. We now show that the skip parameters are bounded.

Suppose, to the contrary, that after passing to a subsequence, $\bigl\lVert a_0^{(k)}, b_0^{(k)}\bigr\lVert_2 \to \infty$. Passing to a further subsequence gives $(a_0^{(k)}, b_0^{(k)}) / \bigl\lVert a_0^{(k)}, b_0^{(k)}\bigr\lVert_2 \to (\alpha_0, \beta_0)$ for some unit vector $(\alpha_0, \beta_0)$. For every $i \in [n]$, nonnegativity of the other summands implies $\ell\Bigl(y_i \bigl(a_0^{(k)} x_i + b_0^{(k)} + h_i^{(k)}\bigr)\Bigr) \leq n N$. Since $\ell(z) = \log(1 + \e^{-z})$ is strictly decreasing and tends to infinity as $z \to -\infty$, there exists a finite~$Q$, depending only on $n$ and~$N$, such that $y_i \bigl(a_0^{(k)} x_i + b_0^{(k)} +h_i^{(k)}\bigr) \geq -Q$ for all $i \in [n]$ and all $k \in \mathbb{N}$. Dividing by $\bigl\lVert a_0^{(k)}, b_0^{(k)}\bigr\lVert_2$ and letting $k \to \infty$, using boundedness of~$h^{(k)}$, yields $y_i (\alpha_0 x_i + \beta_0) \geq 0$ for every $i \in [n]$. Because the dataset has at least two label switches, there exist three training inputs $x_{i_1} < x_{i_2} < x_{i_3}$ whose labels alternate. After multiplying $x \mapsto \alpha_0 x + \beta_0$ by the common label of the first and third points if necessary, we obtain
\begin{align}
\alpha_0 x_{i_1} + \beta_0 & \geq 0 &
\alpha_0 x_{i_2} + \beta_0 & \leq 0 &
\alpha_0 x_{i_3} + \beta_0 & \geq 0.
\end{align}
Since $x_{i_2}$~is a strict convex combination of $x_{i_1}$ and~$x_{i_3}$, the affine value $\alpha_0 x_{i_2} + \beta_0$ is the same strict convex combination of the two outer values. It is therefore nonnegative, and hence it must be zero. The two nonnegative outer values must then both be zero as well. Consequently, the affine function vanishes at two distinct points and is identically zero. This gives $\alpha_0 = \beta_0 =0$, contradicting that $(\alpha_0, \beta_0)$ is a unit vector. The skip parameters are therefore bounded.

Hence the minimizing sequence is contained in a fixed compact subset of~$\mathbb{R}^{3 m + 2}$. Passing to a convergent subsequence and using continuity of the ReLU, the logistic loss, and the regularization term, we conclude that its limit attains the infimum of~\hyperref[eq:L1]{$L\mathrm{^\lambda_1}$}.

For~\hyperref[eq:Lb1]{$L\mathrm{^{\lambda, b}_1}$}, the argument is the same, except that the boundedness of the parameters follows immediately from the bounded regularization term. Indeed, we now have
\begin{equation}
\frac{1}{2}
\sum_{j = 1}^m
\Bigl(\bigl(a_j^{(k)}\bigr)^2 + \bigl(w_j^{(k)}\bigr)^2 + \bigl(b_j^{(k)}\bigr)^2\Bigr)
\leq \frac{N}{\lambda}.
\label{eq:attain.r.l.b}
\end{equation}
Thus all coordinates of~$\theta^{(k)}$ are bounded, and hence so are the pre-skip outputs on the finite training set. The preceding normalization argument, which uses the existence of three ordered inputs with alternating labels, then shows that the skip parameters are bounded as well. Compactness and continuity therefore imply that the infimum of~\hyperref[eq:Lb1]{$L\mathrm{^{\lambda, b}_1}$} is attained.
\end{proof}

Next, we identify two local geometric irregularities that capture the failures of interval turning for a convexity-correct CPA\dots

\begin{lem}
\label{l:it}
If a CPA $g \colon \mathbb{R} \to \mathbb{R}$ whose kinks are $(p_k)_{k = 1}^q$ is convexity correct, but it is not interval turning, then at least one of the following holds.
\begin{enumerate}[(i)]
\item
\label{l:it:conv}
Some kinks $p_k$ and~$p_{k'}$ with $k < k'$ are convex and such that $y_i = -1$ for all $x_i \in [p_k, p_{k'}]$, and there exists $x_i \in (p_k, p_{k'})$.
\item
\label{l:it:conc}
Some kinks $p_k$ and~$p_{k'}$ with $k < k'$ are concave and such that $y_i = 1$ for all $x_i \in [p_k, p_{k'}]$, and there exists $x_i \in (p_k, p_{k'})$.
\end{enumerate}
\end{lem}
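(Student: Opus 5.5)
The plan is to unpack the failure of interval turning directly, in the same way as the proof of \cref{l:st}. Since $g$ is not interval turning, there is some $l \in [r - 1]$ for which the following holds. Let $p_k$ and $p_{k'}$ (with $k < k'$) be the first and last kinks of~$g$ in the intermediate same-label segment $[x_{\iota(l) + 1}, x_{\iota(l + 1)}]$. Then the open interval $(p_k, p_{k'})$ contains some training input~$x_i$. In particular $k < k'$ genuinely holds, because otherwise that interval would be empty.

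First, we use convexity correctness of~$g$. Every kink of~$g$ lying in this segment has convexity opposite to the segment's label $y_{\iota(l) + 1}$. This applies in particular to $p_k$ and~$p_{k'}$. Note that a kink lying in the segment cannot also be attributed to a different intermediate segment, since the same-label segments are disjoint closed intervals. So the convexity requirement is unambiguous.

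Second, we check the label condition on $[p_k, p_{k'}]$. This interval is contained in $[x_{\iota(l) + 1}, x_{\iota(l + 1)}]$, and every training input in the latter interval belongs to the $l$-th segment. Hence $y_i = y_{\iota(l) + 1}$ for all $x_i \in [p_k, p_{k'}]$.

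Third, we split on the segment's label. If $y_{\iota(l) + 1} = -1$, then both kinks are convex and property~\ref{l:it:conv} holds. If $y_{\iota(l) + 1} = 1$, then both kinks are concave and property~\ref{l:it:conc} holds. In either case the input $x_i \in (p_k, p_{k'})$ found at the start supplies the required witness.

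There is no real obstacle in this argument. The only point needing care is the bookkeeping that the first and last kinks ``within'' the segment are indexed consistently with the global enumeration $(p_k)_{k = 1}^q$. Also, the conclusion does not require $p_k$ and $p_{k'}$ to be consecutive, which is why the statement uses an arbitrary pair $k < k'$ rather than $k, k + 1$.
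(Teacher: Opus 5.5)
Your proposal is correct and follows the paper's approach. The paper's proof simply says the lemma is immediate from the definitions of convexity correctness and interval turning; you carry out that unpacking explicitly. Your details are right: the first and last kinks in the offending segment must differ, convexity correctness gives both of them the same convexity opposite to the segment label, and all training inputs in $[p_k, p_{k'}]$ carry that label.
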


\begin{proof}
This is immediate from the definitions of convexity correctness and interval turning.
\end{proof}

\dots and for either of those irregularities, identify a direction along which the derivative of the bias-free square $\ell_2$-norm of the network parameters is nonpositive, the Clarke derivative of the negative margin of each training point is nonpositive, and at least one of the latter is negative.  For an illustration of a part of the proof, we refer the reader to \Cref{f:it:conv.so} left.

\begin{lem}
\label{l:L1}
If $g = f_{\theta, a_0, b_0}$, whose kinks are $(p_k)_{k = 1}^q$, satisfies at least one of properties \ref{l:it:conv} or~\ref{l:it:conc} in \cref{l:it}, then there exists a direction $\theta', a'_0, b'_0$ such that $\mathrm{D} \bigl(\frac{1}{2} \sum_{j = 1}^m (a_j^2 + w_j^2)\bigr)[\theta', a'_0, b'_0] \leq 0$, $\mathrm{D}^\circ \bigl(-y_i \, f_{\theta, a_0, b_0}(x_i)\bigr)[\theta', a'_0, b'_0] \leq 0$ for all $i \in [n]$, and at least one of the latter $n$~inequalities is strict.
\end{lem}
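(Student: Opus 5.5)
The plan is to reuse the two-neuron rotation from the proof of \cref{l:all.P1} for property~\ref{l:cc:opp.conv}, with its sign flipped. When both kinks have the same convexity and the neurons are balanced, that direction leaves the bias-free regularizer unchanged to first order. It also moves the graph strictly toward the common label on the whole open interval $(p_k, p_{k'})$. The training input inside that interval then supplies the required strict inequality.

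\emph{Reductions.} First I reduce to a balanced network. If some neuron has $\lvert a_j \rvert \neq \lvert w_j \rvert$, then \cref{l:bal} gives a direction in which every $\mathrm{D}^\circ(-y_i f(x_i)) \leq 0$ and the regularizer derivative is strictly negative. Strictly speaking, this is not the form demanded by \cref{l:L1}, which asks for strictness among the $n$ margin inequalities. However, \cref{pr:dd.not.Clarke} accepts strictness in any of the $1 + n$ inequalities. So I would either state \cref{l:L1} under a balancedness assumption, or note in its use that the imbalanced case is already settled by \cref{l:bal}. Hence assume $\lvert a_j \rvert = \lvert w_j \rvert$ for all $j$.

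Next I fix the orientation. Consider case~\ref{l:it:conv}, the case~\ref{l:it:conc} being symmetric. Since $p_k$ and $p_{k'}$ are convex kinks, $\theta$ contains a neuron $j$ with kink $p_k$ and $a_j > 0$, and a neuron $j'$ with kink $p_{k'}$ and $a_{j'} > 0$. By \cref{l:rev} I may assume $w_j, w_{j'} > 0$, and directions pull back with all relevant derivatives preserved. Balancedness then gives $a_j = w_j$ and $a_{j'} = w_{j'}$.

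\emph{The direction.} I perturb only $(a_j, a_{j'}, b_{j'})$, along
\[u \coloneqq -\bigl(a_{j'} w_{j'},\, -a_{j'} w_j,\, w_j b_{j'} - w_{j'} b_j\bigr).\]
The computations in \cref{l:all.P1} give the following, up to the overall sign:
\begin{itemize}
\item on $(-\infty, p_k)$ the output derivative is $0$;
\item on $(p_k, p_{k'})$ it is $-(w_j x + b_j)\, a_{j'} w_{j'} < 0$;
\item on $(p_{k'}, \infty)$ it is $0$, by the same cancellation as before.
\end{itemize}
The kinks strictly between $p_k$ and $p_{k'}$ are irrelevant, because only neurons $j$ and $j'$ move. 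The regularizer derivative is $-(a_j a_{j'} w_{j'} - a_{j'}^2 w_j) = -(a_j a_{j'}^2 - a_{j'}^2 a_j) = 0$, using balancedness.

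Since $y_i = -1$ for every $x_i \in [p_k, p_{k'}]$, the one-sided limits of $\mathrm{D}(-y_i f)$ are $\leq 0$ at every training input. At an input $x_i \in (p_k, p_{k'})$, which exists by hypothesis, the limit is strictly negative. Only the bias of $j'$ among hidden parameters is perturbed, and it is nonzero, so the hypotheses of \cref{l:up.down} hold. That lemma converts these one-sided statements into the required Clarke statements, including the strict one.

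\emph{Expected obstacle.} The main subtlety is the verification at the endpoints $p_k$ and $p_{k'}$ when they coincide with training inputs. There I must check that the maximum of the two one-sided limits is $\leq 0$. At $p_k$ the limits are $0$ and a value tending to $0$. At $p_{k'}$ they are a nonpositive value and $0$. Both are fine, but this is exactly where \cref{l:up.down} is needed.
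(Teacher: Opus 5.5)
Your direction $u$ is the one the paper uses, and for balanced neurons your verification is correct. The output derivative is $0$, then $-(w_jx+b_j)a_{j'}w_{j'}<0$, then $0$ on the three intervals, and the regularizer derivative vanishes. \cref{l:up.down} handles training inputs at $p_k$ and $p_{k'}$.

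The gap is the reduction ``hence assume $\lvert a_j\rvert=\lvert w_j\rvert$ for all $j$''. \cref{l:L1} has no balancedness hypothesis. \cref{l:bal} puts the strict inequality in the regularizer, not in one of the $n$ margin inequalities. So, as you concede, you prove only a weaker lemma. Weakening the statement is not harmless. In \cref{th:min.reg.loss}, \cref{pr:dd.not.Clarke} would absorb the imbalanced case. But the same construction is reused in the proof of \cref{th:lim}~\ref{th:lim:it} at $\zeta_k$-stationary points. Those points need not be balanced, and there the contradiction must come from a strictly improved margin.

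The missing observation is that balancedness is unnecessary. Along your $u$, the bias-free regularizer has derivative $a_{j'}(a_{j'}w_j-a_jw_{j'})$. This is $\le 0$ whenever $a_j/w_j\ge a_{j'}/w_{j'}$, which is what the paper assumes ``by symmetry''.

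If instead $a_j/w_j<a_{j'}/w_{j'}$, use the mirror-image construction:
\begin{itemize}
\item via \cref{l:rev}, orient both neurons to be active to the left of their kinks;
\item decrease $a_{j'}$ by $a_j\lvert w_j\rvert$;
\item increase $a_j$ by $a_j\lvert w_{j'}\rvert$;
\item perturb $b_j$ so the change cancels on $(-\infty,p_k)$.
\end{itemize}
The output derivative is then negative on $(p_k,p_{k'})$ and zero elsewhere. The regularizer derivative is $a_j\bigl(a_j\lvert w_{j'}\rvert-a_{j'}\lvert w_j\rvert\bigr)\le 0$.

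Alternatively, keep your $u$ and add $s\,(a_{j''},-w_{j''},-b_{j''})$ for an imbalanced $j''\in\{j,j'\}$. This rescaling direction has Clarke derivative $0$ for every $\pm f_{\theta,a_0,b_0}(x_i)$, with either sign of $s$. Its regularizer derivative is $s\,(a_{j''}^2-w_{j''}^2)$. Since $\mathrm{D}^\circ$ is subadditive in the direction, all margin inequalities survive, including the strict one. Meanwhile the regularizer derivative can be made $\le 0$; if both $j$ and $j'$ are balanced, it is already $0$. Either fix proves the lemma as stated.
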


\begin{proof}
We consider property~\ref{l:it:conv} in \cref{l:it}; since property~\ref{l:it:conc} is symmetric, this is without loss of generality.  Then $\theta$~must contain some neurons $(a_j, w_j, b_j)$ and $(a_{j'}, w_{j'}, b_{j'})$ such that:
\begin{itemize}
\item
$-b_j / w_j = p_k$ and $a_j > 0$;
\item
$-b_{j'} / w_{j'} = p_{k'}$ and $a_{j'} > 0$.
\end{itemize}
By the orientation-reversal reparameterization in \cref{l:rev}, for the following construction we may assume that $w_j > 0$ and $w_{j'} > 0$; and by symmetry, also that $a_j / w_j \geq a_{j'} / w_{j'}$. Moreover, by \cref{l:up.down}, the claimed nonpositivity of the Clarke directional derivatives of $-y_i \, f_{\theta, a_0, b_0}(x_i)$ is implied by nonpositivity of the corresponding one-sided limits of the ordinary directional derivatives, where also the claimed strictness is implied by strictness on both sides. Therefore, letting $u \coloneqq (-a_{j'} w_{j'}, a_{j'} w_j, w_{j'} b_j - w_j b_{j'})$, it suffices to show that:
\begin{itemize}
\item
$\mathrm{D}_{a_j, a_{j'}, b_{j'}}
 \bigl(\frac{1}{2}
       \sum_{j'' = 1}^m (a_{j''}^2 + w_{j''}^2)\bigr)
 [u] \leq 0$;
\item
for all $i \in [n]$ we have
\begin{align}
\lim_{x \uparrow x_i}
\mathrm{D}_{a_j, a_{j'}, b_{j'}}
\bigl(y_i \, f_{\theta, a_0, b_0}(x)\bigr)
[u] & \geq 0 \\
\lim_{x \downarrow x_i}
\mathrm{D}_{a_j, a_{j'}, b_{j'}}
\bigl(y_i \, f_{\theta, a_0, b_0}(x)\bigr)
[u] & \geq 0,
\end{align}
and there exists $i \in [n]$ for which both latter inequalities are strict.
\end{itemize}
For the first claim, we have
\[\mathrm{D}_{a_j, a_{j'}, b_{j'}}
  \biggl(\frac{1}{2}
         \sum_{j'' = 1}^m (a_{j''}^2 + w_{j''}^2)\biggr)
  [u]
=    a_{j'} (-a_j w_{j'} + a_{j'} w_j)
\leq 0.\]
For the second claim, since $y_i = -1$ for all $x_i \in [p_k, p_{k'}]$, and there exists $x_i \in (p_k, p_{k'})$, in turn it suffices to observe that:
\begin{itemize}
\item
for all $x \in (-\infty, p_k)$ we have
\begin{align}
& \mathrm{D}_{a_j, a_{j'}, b_{j'}}
  f_{\theta, a_0, b_0}(x)
  [u] \\
& = \mathrm{D}_{a_j, a_{j'}, b_{j'}}
    \bigl(a_j \, \sigma(w_j x + b_j)\bigr)
    [u]
  + \mathrm{D}_{a_j, a_{j'}, b_{j'}}
    \bigl(a_{j'} \, \sigma(w_{j'} x + b_{j'})\bigr)
    [u] \\
& = 0;
\end{align}
\item
for all $x \in (p_k, p_{k'})$ we have
\begin{align}
& \mathrm{D}_{a_j, a_{j'}, b_{j'}}
  f_{\theta, a_0, b_0}(x)
  [u] \\
& = \mathrm{D}_{a_j, a_{j'}, b_{j'}}
    \bigl(a_j \, \sigma(w_j x + b_j)\bigr)
    [u]
  + \mathrm{D}_{a_j, a_{j'}, b_{j'}}
    \bigl(a_{j'} \, \sigma(w_{j'} x + b_{j'})\bigr)
    [u] \\
& = -(w_j x + b_j) a_{j'} w_{j'} \\
& < 0;
\end{align}
\item
for all $x \in (p_{k'}, \infty)$ we have
\begin{align}
& \mathrm{D}_{a_j, a_{j'}, b_{j'}}
  f_{\theta, a_0, b_0}(x)
  [u] \\
& = \mathrm{D}_{a_j, a_{j'}, b_{j'}}
    \bigl(a_j \, \sigma(w_j x + b_j)\bigr)
    [u]
  + \mathrm{D}_{a_j, a_{j'}, b_{j'}}
    \bigl(a_{j'} \, \sigma(w_{j'} x + b_{j'})\bigr)
    [u] \\
& = - (w_j x + b_j) a_{j'} w_{j'}
    + (w_{j'} x + b_{j'}) a_{j'} w_j
    + a_{j'} (w_{j'} b_j - w_j b_{j'}) \\
& = 0.
\ifopt\tag*{\jmlrBlackBox}\else\qedhere\fi
\end{align}
\end{itemize}
\ifopt\let\jmlrBlackBox\relax\else\fi
\end{proof}

\begin{figure}[t]
\centering
\begin{tikzpicture}[xscale=\ifopt .75\else\ifarxiv .8\else .67\fi\fi,yscale=1.5]
\draw         (-5,  0) node[left]  {$0$} -- (3.75,  0);
\draw[very thick]
(-5   ,  1   ) --
(-1   , -2   ) --
( 0.33, -2   ) --
( 1.75, -1.25) --
( 3.75,  1.25);
\node[point,color=blue] at (-4.33, 0) {};
\node[point,color=red]  at (-3   , 0) {};
\draw[thin,dotted]         ( 0.33, 0) node[point,color=red] {} -- (0.33, -2);
\node[point,color=red]  at ( 2.33, 0) {};
\node[point,color=blue] at ( 3.25, 0) {};
\draw[very thick,dotted] (-1, -2) -- (0.33, -2.12) -- (1.45, -1.625);
\draw[->,orange,ultra thick] (1.75, -1.25) -- (1.45, -1.625);
\pic at (-1   , -2   ) {plus};
\pic at ( 1.75, -1.25) {plus};
\end{tikzpicture}
\hspace{1em}
\begin{tikzpicture}[xscale=\ifopt .75\else\ifarxiv .8\else .67\fi\fi,yscale=.75]
\draw         (-5,  0) node[left]  {$0$} -- (3.75,  0);
\draw[very thick]
(-5   ,  1.5 ) --
(-2   , -1.5 ) --
( 1   , -2.25) --
( 3.75,  3.25);
\draw[thin,dotted]
(-2   , -1.5 ) --
( 0.25, -3.75) --
( 1   , -2.25);
\node[point,color=blue] at (-4   , 0) {};
\draw[thin,dotted]         (-2   , 0) node[point,color=red] {} -- (-2, -1.5 );
\draw[thin,dotted]         ( 1   , 0) node[point,color=red] {} -- ( 1, -2.25);
\node[point,color=blue] at ( 2.75, 0) {};
\draw[violet!67,very thick,dashed]
(-5   ,  3.05 ) --
(-1   , -3.95 ) --
( 3.75,  1.75 );
\draw[->,orange,ultra thick] (-4   ,  0.5 ) -- (-4   ,  1.3 );
\draw[->,orange,ultra thick] (-2   , -1.5 ) -- (-2   , -2.2 );
\draw[->,orange,ultra thick] ( 1   , -2.25) -- ( 1   , -1.55);
\draw[->,orange,ultra thick] ( 2.75,  1.25) -- ( 2.75,  0.55);
\pic at (-2, -1.5 ) {plus};
\pic at ( 1, -2.25) {plus};
\end{tikzpicture}
\caption{Since the focus in this section is on the regularized losses, we indicate the labels of the inputs in the dataset by the colors blue for~$1$, and red for~$-1$.  \textbf{Left:}  Here we illustrate the proof of \cref{l:L1} for the case of property~\ref{l:it:conv} in \cref{l:it}; in this example, one of the kinks of the network function happens to coincide with an input in the dataset.  The kinks $p_k$ and~$p_{k'}$ in the proof are indicated by the green pluses.  The proof identifies a direction for infinitesimally perturbing the network parameters so that the portion of the network function between $p_k$ and~$p_{k'}$ moves as indicated by the orange arrow and the dotted half line.  \textbf{Right:}  Here we illustrate the proof of \cref{l:so} for the case when the biases are not regularized, i.e., for the loss~\hyperref[eq:L1]{$L\mathrm{^\lambda_1}$}; in this example, two of the kinks of the network function happen to coincide with inputs in the dataset.  The two kinks are also indicated by the green pluses because they are the first and last in the intermediate dataset segment between the two red inputs, and so two neurons that contribute to those kinks are perturbed infinitesimally in the proof.  The perturbations across all the intermediate dataset segments (and of the skip connection) work together to move (as indicated by the orange arrows) all pieces of the network function that cross the $y = 0$ line toward the corresponding ones of the switch optimal function~$g_{\tau^\star, \pi^\star}$, which is plotted in violet and dashed.}
\label{f:it:conv.so}
\end{figure}

The final lemma has technically the most involved proof.  At its center are constructions, based on the failure of switch optimality, of directions for the network parameters along which the Clarke derivative of the regularized loss is negative.  In contrast to the infinitesimal perturbations that we have seen so far in this work, which are local, here we perturb simultaneously one or two neurons per each intermediate same-label segment of the dataset (and also the skip connection).  The directions are constructed so that, when they are mapped to the pairs of margins incident to all label switches, they point to the unique minimizer of the corresponding optimization problem considered in \cref{l:O1.Ob1}.  For an illustration of a part of the proof, we refer the reader to \Cref{f:it:conv.so} right.

\begin{lem}
\label{l:so}
Suppose the dataset has at least two label switches, i.e., $r \geq 2$, and suppose $\lambda \leq d_{\min} / 24 (r - 1) n (\log n)$ (resp., $\lambda \leq d_{\min} / 24 (r - 1) n (\log n) \sqrt{1 + x_{\max}^2}$).  If $f_{\theta, a_0, b_0}$~has a positive margin, and it is convexity correct and interval turning (resp., single turning), but it is not switch optimal, then $\theta, a_0, b_0$ is not a Clarke stationary point of~\hyperref[eq:L1]{$L\mathrm{^\lambda_1}$} (resp.,~\hyperref[eq:Lb1]{$L\mathrm{^{\lambda, b}_1}$}).
\end{lem}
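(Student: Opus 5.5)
The plan is to show that the switch-optimality defect of $g \coloneqq f_{\theta, a_0, b_0}$ is, to first order, a defect of the finite-dimensional convex problem \cref{eq:O1} (resp.\ \cref{eq:Ob1}), and then lift a descent direction of that problem to parameter space. First I would normalise the network. If some neuron is imbalanced, or two neurons share a kink with conflicting output signs, or (penalized case) a neuron has zero hidden weight, then \cref{l:bal,l:opp} (resp.\ \cref{l:bal.b,l:opp,l:bias}) give a direction with strictly negative regularizer derivative and nonpositive Clarke derivatives of all negative margins. By \cref{pr:dd.not.Clarke}, $\theta, a_0, b_0$ is then not Clarke stationary. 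So I may assume all neurons are balanced and aligned, and as in the proof of \cref{th:inter}~\ref{th:inter:KKT} the regularizer then equals $\mathrm{R_1}(g)$ (resp.\ $\mathrm{R^b_1}(g)$). By \cref{l:rev} I may also orient the relevant neurons so that their hidden weights are positive.

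Next, let $\tau_k \coloneqq y_{\iota(k)} \, g(x_{\iota(k)})$ and $\pi_k \coloneqq y_{\iota(k) + 1} \, g(x_{\iota(k) + 1})$; these are positive by the margin assumption. Let $h_k$ be the $k$-th crossing piece of $g$. By convexity correctness, $g$ has no kinks outside the intermediate segments. By interval turning (resp.\ single turning), all kinks of $g$ in the $k$-th intermediate segment lie in a closed interval $[p, p']$ whose interior contains no training inputs. Hence at every training input of that segment $g$ coincides with $h_k$ (left of $p$) or $h_{k+1}$ (right of $p'$). Because the kinks all have the correct convexity, the intersection of $h_k$ and $h_{k+1}$ lies in $[p, p']$. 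This yields three facts: $\Phi(\tau, \pi)$ holds, $g$ and $g_{\tau, \pi}$ agree on all training inputs, and the slope changes telescope so that $\mathrm{R_1}(g) = \mathrm{R_1}(g_{\tau, \pi})$. In the single-turning case $g = g_{\tau, \pi}$ outright, so also $\mathrm{R^b_1}(g) = \mathrm{R^b_1}(g_{\tau, \pi})$. Thus the network objective equals the \cref{eq:O1} (resp.\ \cref{eq:Ob1}) objective $F(\tau, \pi)$. Since $g$ is not switch optimal, $(\tau, \pi) \neq (\tau^\star, \pi^\star)$, and \cref{l:O1.Ob1} with strict convexity gives a negative one-sided directional derivative $F'\bigl((\tau, \pi); d\bigr) < 0$ for $d \coloneqq (\tau^\star - \tau, \pi^\star - \pi)$.

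I would lift $d$ as follows. Let $\delta_k$ be the affine first-order change of the $k$-th crossing piece of $g_{\tau, \pi}$ along $d$. Put $\delta_1$ into the skip connection $(a'_0, b'_0)$. In the $k$-th intermediate segment, perturb only the neuron at the first kink $p$. Its derivative $a'_j (w_j x + b_j) + a_j (w'_j x + b'_j)$ is an arbitrary affine function on its active side and zero on the other, so I choose it to equal $\delta_{k+1} - \delta_k$ to the right of $p$. Since $(p, p')$ contains no inputs, every training output then moves exactly as in $g_{\tau + t d_\tau, \pi + t d_\pi}$. Among the many choices of $(a'_j, w'_j, b'_j)$, I choose one such that, using balancedness $\lvert a_j \rvert = \lvert w_j \rvert$ (resp.\ $a_j^2 = w_j^2 + b_j^2$), the regularizer derivative $a_j a'_j + w_j w'_j$ (resp.\ $+\, b_j b'_j$) equals the derivative of the corresponding kink term of $\mathrm{R_1}$ (resp.\ of $\sqrt{1 + p^2} \, \lvert \Delta s \rvert$, including the movement of the kink).

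It remains to compare the Clarke derivative of the network objective with $F'$. At inputs that are not kinks, the output derivatives are ordinary and match $\delta_k$ or $\delta_{k+1}$. At an input coinciding with a perturbed kink, I would use \cref{l:up.down}, which applies because only one neuron per kink is moved. The Clarke derivative of $-y_i f(x_i)$ is then the larger of the two one-sided values. Since $\ell$ is decreasing, this is exactly the worse of $\delta_k(x_i)$ and $\delta_{k+1}(x_i)$ in margin terms, which matches the max-of-two-losses expression for $\ell\bigl(y_i \, g_{\tau, \pi}(x_i)\bigr)$ in the proof of \cref{l:O1.Ob1}. Hence the Clarke directional derivative of the full objective is at most $F'\bigl((\tau, \pi); d\bigr) < 0$. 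This rules out $\mathbf{0}$ in the Clarke subdifferential, by the same inner-product argument as in \cref{pr:dd.not.Clarke}.

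I expect the main obstacle to be the penalized-bias regularizer matching. There the required first-order change both moves the kink and changes the slope jump, and one must verify that for a balanced neuron the derivative $a_j a'_j + w_j w'_j + b_j b'_j$ can be made equal to $\mathrm{D}\bigl(\sqrt{1 + p^2} \, \lvert \Delta s \rvert\bigr)$ along $d$. I would first try scaling the perturbation along the balanced manifold, i.e.\ take $(a'_j, w'_j, b'_j)$ to keep $a_j^2 = w_j^2 + b_j^2$ to first order, and check the identity by direct computation.
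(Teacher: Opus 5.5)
There is a genuine gap in the step comparing Clarke derivatives. It arises in the unpenalized-bias case when the $k$-th intermediate segment contains several kinks, $p<p'$.

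Your lift makes the perturbed neuron at $p$ contribute $\delta_{k+1}-\delta_k$ on all of $(p,\infty)$. Its right limit at $p$ is then $(\delta_{k+1}-\delta_k)(p)$, which is generically nonzero. So the kink at $p$ moves, and the output derivative jumps there.

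Interval turning only keeps the \emph{open} interval $(p,p')$ free of inputs, so $p$ may itself be a training input $x_i$. Then \cref{l:up.down} gives $\mathrm{D}^\circ\bigl(-y_i f(x_i)\bigr)=\max\{-y_i\delta_k(x_i),-y_i\delta_{k+1}(x_i)\}$.

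However, when $p<p'$ the kink $\gamma$ of $g_{\tau,\pi}$ lies strictly inside $(p,p')$. To see this for a convex segment, suppose $h_k(p)=h_{k+1}(p)$. Convexity on the segment gives $f\geq h_{k+1}$, with equality at $p$ and to the right of $p'$. This forces $f=h_{k+1}$ on $[p,p']$, so $p'$ would not be a kink.

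Hence at $x_i=p$ only the $k$-th branch of the max-of-two-losses expression is active. A one-sided directional derivative of a pointwise maximum is a maximum over the active branches only. So the corresponding term of $F'\bigl((\tau,\pi);d\bigr)$ involves only $\delta_k(x_i)$. The network's Clarke derivative of that loss term can therefore exceed it by $-\ell'\bigl(y_i f(x_i)\bigr)\cdot\max\{0,-y_i(\delta_{k+1}-\delta_k)(x_i)\}$, and nothing in your argument controls this excess.

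Perturbing the neuron at $p'$ instead has the symmetric problem. Both $p$ and $p'$ can be training inputs, as in \Cref{f:it:conv.so}, right.

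The missing idea, which is how the paper proceeds, is to keep the kinks fixed when $p<p'$. Perturb the neurons at both $p$ and $p'$ along directions proportional to $(1/w_j,1/a_j,-p/a_j)$ (respectively with $p'$). These satisfy $w'_jp+b'_j=0$ and add $c\,\sigma(x-p)$ and $c'\,\sigma(x-p')$ to the output derivative. This works for three reasons:
\begin{itemize}
\item Because $p\neq p'$, suitable $c,c'$ reproduce any affine function at all inputs to the right of $p'$.
\item The output derivative is continuous at $p$ and $p'$, so \cref{l:up.down} yields ordinary derivatives at any input located there.
\item By balancedness, the regularizer derivative is $c+c'$ times the sign of $a_j$, which is the derivative of $\lvert\alpha^{(k+1)}_{\tau,\pi}-\alpha^{(k)}_{\tau,\pi}\rvert$ along $d$.
\end{itemize}

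Your kink-moving single neuron is correct exactly when $p=p'$, since then $p=\gamma$ and both branches are active. This covers the whole penalized-bias case, where your plan coincides with the paper's.

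There, the regularizer matching you flagged as the main obstacle is automatic. For a balanced neuron with $w_j>0$, any $(a'_j,w'_j,b'_j)$ producing the affine change $Ax+B$ gives $a_ja'_j+w_jw'_j+b_jb'_j=(w_jA+b_jB)/a_j$. This equals the derivative along $d$ of $\sqrt{(\Delta\alpha)^2+(\Delta\beta)^2}=\sqrt{1+\gamma^2}\,\lvert\Delta\alpha\rvert$. Here $\Delta\alpha$ and $\Delta\beta$ are the slope and intercept differences of consecutive crossing pieces. The identity follows because $\gamma=-\Delta\beta/\Delta\alpha$, and $a_j$ has the sign of $\Delta\alpha$ by convexity correctness.
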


\begin{proof}
For all $k \in [r]$, let
\begin{align}
\tau_k & \coloneqq y_{\iota(k)}     \, f_{\theta, a_0, b_0}(x_{\iota(k)}) &
\pi_k  & \coloneqq y_{\iota(k) + 1} \, f_{\theta, a_0, b_0}(x_{\iota(k) + 1}).
\end{align}
Since $f_{\theta, a_0, b_0}$~has a positive margin, and it is convexity correct and interval turning (resp., single turning), we have that $g_{\tau, \pi}$~coincides with~$f_{\theta, a_0, b_0}$ except possibly on the open intervals between the first and last kinks of~$f_{\theta, a_0, b_0}$ in intermediate same-label segments of the dataset.  Hence $\tau, \pi$ is feasible for \cref{eq:O1} (resp., \cref{eq:Ob1}) and
\[f_{\theta, a_0, b_0}(x_i) =
  g_{\tau, \pi}(x_i)
  \quad\text{for all } i \in [n].\]

But $f_{\theta, a_0, b_0}$~is not switch optimal, so recalling \cref{l:O1.Ob1} and letting $\tau^\star, \pi^\star$ denote the unique minimizer of \cref{eq:O1} (resp., \cref{eq:Ob1}), we have that $(\tau', \pi') \coloneqq (\tau^\star - \tau, \pi^\star - \pi) \in \mathbb{R}^{2 r}$ is a nonzero vector.

Some more notation that is going to be useful in the remainder of the proof is the following.  For each $k \in [r]$, let us denote the slope and the bias of the $k$-th piece of~$g_{\tau, \pi}$ as
\begin{align}
\alpha_{\tau, \pi}^{(k)} & \coloneqq
-y_{\iota(k)}
\frac{\tau_k + \pi_k}
     {x_{\iota(k) + 1} - x_{\iota(k)}} &
\beta_{\tau, \pi}^{(k)} & \coloneqq
y_{\iota(k)}
\frac{x_{\iota(k) + 1} \tau_k + x_{\iota(k)} \pi_k}
     {x_{\iota(k) + 1} - x_{\iota(k)}},
\end{align}
and for each $k \in [r - 1]$, let us denote the $k$-th kink of~$g_{\tau, \pi}$ as
\[\gamma_{\tau, \pi}^{(k)} \coloneqq
  -\frac{\beta_{\tau, \pi}^{(k + 1)} - \beta_{\tau, \pi}^{(k)}}
        {\alpha_{\tau, \pi}^{(k + 1)} - \alpha_{\tau, \pi}^{(k)}}\]
(which is necessarily either in the open interval between the first and last kinks of~$f_{\theta, a_0, b_0}$ in the $k$-th intermediate same-label segment, or it is the only kink of~$f_{\theta, a_0, b_0}$ in that segment).  Also, for each $i \in [n]$, let $\widehat{k}(i) \in [r]$ be such that $x_i$~is in the left-closed right-open domain of the $\widehat{k}(i)$-th piece of~$g_{\tau, \pi}$.

To proceed, we consider the losses~\hyperref[eq:L1]{$L\mathrm{^\lambda_1}$} and~\hyperref[eq:Lb1]{$L\mathrm{^{\lambda, b}_1}$} as two cases.

\subparagraph{Case \hyperref[eq:L1]{$L\mathrm{^\lambda_1}$}.}

First we define a direction $\theta' = (a'_j, w'_j, b'_j)_{j = 1}^m, a'_0, b'_0$ such that perturbing the network along it has the same effect on~$f_{\theta, a_0, b_0}$ outside the open intervals between the first and last kinks in intermediate same-label segments as perturbing the vector $\tau, \pi$ of margins incident to the label switches in the direction $\tau', \pi'$ has on~$g_{\tau, \pi}$.  To do this, for each $k \in [r - 1]$, we identify as follows either two or one neurons to perturb whose kinks are in the $k$-th intermediate same-label segment; all other neurons are left fixed.

To begin the definition, we set $\theta'_{(0)} \coloneqq (0, 0, 0)_{j = 1}^m$ and
\begin{align}
a'_0 & \coloneqq
\mathrm{D}
\bigl(\alpha_{\tau, \pi}^{(1)}\bigr)
[\tau', \pi'] &
b'_0 & \coloneqq
\mathrm{D}
\bigl(\beta_{\tau, \pi}^{(1)}\bigr)
[\tau', \pi'].
\end{align}
The rest of the definition is inductive, where we consider $k \in [r - 1]$ and assume that, for all $i \in [n]$, we have
\begin{equation}
\mathrm{D}^\circ
\bigl(-y_i \, f_{\theta, a_0, b_0}(x_i)\bigr)
[\theta'_{(k - 1)}, a'_0, b'_0] =
\begin{cases}
\mathrm{D}
\bigl(-y_i \, g_{\tau, \pi}(x_i)\bigr)
[\tau', \pi']
& \text{if } \widehat{k}(i) \leq k, \\[.5ex]
\mathrm{D}
\Bigl(-y_i \bigl(\alpha_{\tau, \pi}^{(k)} x_i + \beta_{\tau, \pi}^{(k)}\bigr)\Bigr)
[\tau', \pi']
& \text{if } \widehat{k}(i) > k.
\end{cases}
\label{eq:i.h}
\end{equation}
The values of the skip connection parameters $a'_0$~and~$b'_0$ above ensure that this inductive hypothesis is satisfied initially, i.e., for $k = 1$.

Let~$p^\dag_k$ and~$p^\ddag_k$ denote respectively the first and last kinks of~$f_{\theta, a_0, b_0}$ in the $k$-th intermediate same-label segment.  We distinguish two subcases depending on whether these coincide.

\subparagraph{Subcase $p^\dag_k < p^\ddag_k$.}

This subcase is the more complex one, and it involves identifying two neurons to perturb, whose kinks are at $p^\dag_k$~and~$p^\ddag_k$.  By $f_{\theta, a_0, b_0}$~having a positive margin and being convexity correct, and by \cref{l:rev,l:bal} and \cref{pr:dd.not.Clarke}, there exist $j^\dag_k, j^\ddag_k \in [m]$ for which we may assume:
\begin{itemize}
\item
$a_{j^\dag_k} > 0$ and $a_{j^\ddag_k} > 0$ (resp., $a_{j^\dag_k} < 0$ and $a_{j^\ddag_k} < 0$) if the $k$-th intermediate dataset segment has label~$-1$ (resp.,~$1$);
\item
$w_{j^\dag_k} = \Bigl\lvert a_{j^\dag_k} \Bigr\rvert$ and $w_{j^\ddag_k} = \Bigl\lvert a_{j^\ddag_k} \Bigr\rvert$;
\item
$-b_{j^\dag_k} \big/ w_{j^\dag_k} = p^\dag_k$ and $-b_{j^\ddag_k} \big/ w_{j^\ddag_k} = p^\ddag_k$.
\end{itemize}
Now let
\begin{align}
\delta^\dag_k & \coloneqq
\frac{\mathrm{D}
      \Bigl(
      \bigl(\alpha_{\tau, \pi}^{(k + 1)} - \alpha_{\tau, \pi}^{(k)}\bigr)
      p^\ddag_k
    + \bigl(\beta_{\tau, \pi}^{(k + 1)} - \beta_{\tau, \pi}^{(k)}\bigr)
      \Bigr)
      [\tau', \pi']}
     {p^\ddag_k - p^\dag_k} \\
\delta^\ddag_k & \coloneqq
-\frac{\mathrm{D}
       \Bigl(
       \bigl(\alpha_{\tau, \pi}^{(k + 1)} - \alpha_{\tau, \pi}^{(k)}\bigr)
       p^\dag_k
     + \bigl(\beta_{\tau, \pi}^{(k + 1)} - \beta_{\tau, \pi}^{(k)}\bigr)
       \Bigr)
       [\tau', \pi']}
      {p^\ddag_k - p^\dag_k},
\end{align}
and let $\theta'_{(k)}$~be obtained from~$\theta'_{(k - 1)}$ by setting
\begin{align}
a'_{j^\dag_k} & \coloneqq
\frac{\delta^\dag_k}
     {2 w_{j^\dag_k}} &
w'_{j^\dag_k} & \coloneqq
\frac{\delta^\dag_k}
     {2 a_{j^\dag_k}} &
b'_{j^\dag_k} & \coloneqq
-\frac{\delta^\dag_k p^\dag_k}
      {2 a_{j^\dag_k}} \\
a'_{j^\ddag_k} & \coloneqq
\frac{\delta^\ddag_k}
     {2 w_{j^\ddag_k}} &
w'_{j^\ddag_k} & \coloneqq
\frac{\delta^\ddag_k}
     {2 a_{j^\ddag_k}} &
b'_{j^\ddag_k} & \coloneqq
-\frac{\delta^\ddag_k p^\ddag_k}
      {2 a_{j^\ddag_k}}.
\end{align}
These perturbations to the two newly identified neurons $j^\dag_k$~and~$j^\ddag_k$ have the following contributions to the directional derivative of $f_{\theta, a_0, b_0}(x)$: for $x \in (-\infty, p^\dag_k)$, zero; for $x \in (p^\ddag_k, \infty)$, subtracting
$\mathrm{D}
 \bigl(\alpha_{\tau, \pi}^{(k)} x + \beta_{\tau, \pi}^{(k)}\bigr)
 [\tau', \pi']$
and adding
$\mathrm{D}
 \bigl(\alpha_{\tau, \pi}^{(k + 1)} x + \beta_{\tau, \pi}^{(k + 1)}\bigr)
 [\tau', \pi']$;
and for $x \in (p^\dag_k, p^\ddag_k)$, the affine interpolation between the former and latter.  Moreover, our definitions give
\[w'_{j^\dag_k}  p^\dag_k  + b'_{j^\dag_k}  = 0
  \quad\text{and}\quad
  w'_{j^\ddag_k} p^\ddag_k + b'_{j^\ddag_k} = 0,\]
so the two spatial one-sided limits of the output directional derivative agree at each of the two selected kinks.  Hence, since there is no training input in $(p^\dag_k, p^\ddag_k)$ because $f_{\theta, a_0, b_0}$~is interval turning, and by applying \cref{l:up.down} if either of the two selected kinks is a training input, it follows that the inductive hypothesis \cref{eq:i.h} is satisfied for $k + 1$ and all $i \in [n]$.

\subparagraph{Subcase $p^\dag_k = p^\ddag_k$.}

Here $f_{\theta, a_0, b_0}$~has only this one kink in the $k$-th intermediate same-label segment, which means that it equals~$g_{\tau, \pi}$ everywhere on that segment.  By $f_{\theta, a_0, b_0}$~having a positive margin and being convexity correct, and by \cref{l:rev,l:bal} and \cref{pr:dd.not.Clarke}, there exists $j^\dag_k \in [m]$ for which we may assume:
\begin{itemize}
\item
$a_{j^\dag_k} > 0$ (resp., $a_{j^\dag_k} < 0$) if the $k$-th intermediate dataset segment has label~$-1$ (resp.,~$1$);
\item
$w_{j^\dag_k} = \Bigl\lvert a_{j^\dag_k} \Bigr\rvert$;
\item
$-b_{j^\dag_k} \big/ w_{j^\dag_k} = p^\dag_k$.
\end{itemize}
Now let $\theta'_{(k)}$~be obtained from~$\theta'_{(k - 1)}$ by setting
\begin{align}
a'_{j^\dag_k} & \coloneqq
\frac{\mathrm{D}
      \bigl(\alpha_{\tau, \pi}^{(k + 1)} - \alpha_{\tau, \pi}^{(k)}\bigr)
      [\tau', \pi']}
     {2 w_{j^\dag_k}} \\
w'_{j^\dag_k} & \coloneqq
\frac{\mathrm{D}
      \bigl(\alpha_{\tau, \pi}^{(k + 1)} - \alpha_{\tau, \pi}^{(k)}\bigr)
      [\tau', \pi']}
     {2 a_{j^\dag_k}} \\
b'_{j^\dag_k} & \coloneqq
\frac{\mathrm{D}
      \bigl(\alpha_{\tau, \pi}^{(k + 1)} - \alpha_{\tau, \pi}^{(k)}\bigr)
      [\tau', \pi'] \,
      p^\dag_k}
     {2 a_{j^\dag_k}} +
\frac{\mathrm{D}
      \bigl(\beta_{\tau, \pi}^{(k + 1)} - \beta_{\tau, \pi}^{(k)}\bigr)
      [\tau', \pi']}
     {a_{j^\dag_k}}.
\end{align}
This perturbation to the single newly identified neuron~$j^\dag_k$ does not change the directional derivative of $f_{\theta, a_0, b_0}(x)$ for $x < p^\dag_k$, whereas for $x > p^\dag_k$ it subtracts
$\mathrm{D}
 \bigl(\alpha_{\tau, \pi}^{(k)} x + \beta_{\tau, \pi}^{(k)}\bigr)
 [\tau', \pi']$
and adds
$\mathrm{D}
 \bigl(\alpha_{\tau, \pi}^{(k + 1)} x + \beta_{\tau, \pi}^{(k + 1)}\bigr)
 [\tau', \pi']$.
Hence, and by \cref{l:up.down}, if $p^\dag_k$~equals a training input $x_i$, then $\mathrm{D}^\circ
 \bigl(-y_i \, f_{\theta, a_0, b_0}(x_i)\bigr)
 [\theta'_{(k)}, a'_0, b'_0]$
and
$\mathrm{D}
 \bigl(-y_i \, g_{\tau, \pi}(x_i)\bigr)
 [\tau', \pi']$
both equal
\[\max
  \Bigl\{
  \mathrm{D}
  \Bigl(-y_i \bigl(\alpha_{\tau, \pi}^{(k)} x_i
                 + \beta _{\tau, \pi}^{(k)}\bigr)\Bigr)
  [\tau', \pi'],
  \mathrm{D}
  \Bigl(-y_i \bigl(\alpha_{\tau, \pi}^{(k + 1)} x_i
                 + \beta _{\tau, \pi}^{(k + 1)}\bigr)\Bigr)
  [\tau', \pi']
  \Bigr\},\]
and so the inductive hypothesis \cref{eq:i.h} is satisfied for $k + 1$ and all $i \in [n]$.

\vspace{2ex}
Having completed the two subcases and the induction, letting $\theta' \coloneqq \theta'_{(r - 1)}$ we have that
\[\mathrm{D}^\circ_{\theta, a_0, b_0}
  \bigl(-y_i \, f_{\theta, a_0, b_0}(x_i)\bigr)
  [\theta', a'_0, b'_0] =
  \mathrm{D}_{\tau, \pi}
  \bigl(-y_i \, g_{\tau, \pi}(x_i)\bigr)
  [\tau', \pi']
  \quad \text{for all } i \in [n],\]
and hence also
\[\mathrm{D}^\circ_{\theta, a_0, b_0}
  \ell\bigl(y_i \, f_{\theta, a_0, b_0}(x_i)\bigr)
  [\theta', a'_0, b'_0] =
  \mathrm{D}_{\tau, \pi}
  \ell\bigl(y_i \, g_{\tau, \pi}(x_i)\bigr)
  [\tau', \pi']
  \quad \text{for all } i \in [n].\]

Also observe that
\begin{align}
& \mathrm{D}_{\theta, a_0, b_0}
  \biggl(\frac{1}{2}
         \sum_{j = 1}^m (a_j^2 + w_j^2)\biggr)
  [\theta', a'_0, b'_0] \\
& = \sum_{k \mid p^\dag_k < p^\ddag_k}
    \Bigl(a_{j^\dag_k}  a'_{j^\dag_k}  + w_{j^\dag_k}  w'_{j^\dag_k}
        + a_{j^\ddag_k} a'_{j^\ddag_k} + w_{j^\ddag_k} w'_{j^\ddag_k}\Bigr)
  + \sum_{k \mid p^\dag_k = p^\ddag_k}
    \Bigl(a_{j^\dag_k} a'_{j^\dag_k} + w_{j^\dag_k} w'_{j^\dag_k}\Bigr) \\
& = \sum_{k = 1}^{r - 1}
    \sgn\Bigl(a_{j^\dag_k}\Bigr)
    \mathrm{D}_{\tau, \pi}
    \bigl(\alpha_{\tau, \pi}^{(k + 1)} - \alpha_{\tau, \pi}^{(k)}\bigr)
    [\tau', \pi'] \\
& = \mathrm{D}_{\tau, \pi}
    \sum_{k = 1}^{r - 1}
    \Biggl(\frac{\tau_k + \pi_k}
                {x_{\iota(k) + 1} - x_{\iota(k)}}
         + \frac{\tau_{k + 1} + \pi_{k + 1}}
                {x_{\iota(k + 1) + 1} - x_{\iota(k + 1)}}\Biggr)
    [\tau', \pi'] \\
& = \mathrm{D}_{\tau, \pi}
    \mathrm{R_1}(g_{\tau, \pi})
    [\tau', \pi'].
\end{align}
Therefore, we have
\[\mathrm{D}^\circ
  L\mathrm{^\lambda_1}(\theta, a_0, b_0)
  [\theta', a'_0, b'_0]
  = \mathrm{D}_{\tau, \pi}
    \biggl(
    \frac{1}{n}
    \sum_{i = 1}^n
    \ell\bigl(y_i \, g_{\tau, \pi}(x_i)\bigr) +
    \lambda \,
    \mathrm{R_1}(g_{\tau, \pi})
    \biggr)
    [\tau', \pi']
  < 0,\]
where the latter inequality is by the strict convexity of the objective and the convexity of the feasible set of \cref{eq:O1} (cf.\ \ref{l:O1.Ob1:O1}).  Thus $\theta, a_0, b_0$ is not Clarke stationary, as required.

\subparagraph{Case \hyperref[eq:Lb1]{$L\mathrm{^{\lambda, b}_1}$}.}

By $f_{\theta, a_0, b_0}$~having a positive margin, and being convexity correct and single turning, and by \cref{l:rev,l:bal.b} and \cref{pr:dd.not.Clarke}, it follows that for all $k \in [r - 1]$ there exists $j^\dag_k \in [m]$ such that, without loss of generality:
\begin{itemize}
\item
$a_{j^\dag_k} > 0$ (resp., $a_{j^\dag_k} < 0$) if the $k$-th intermediate dataset segment has label~$-1$ (resp.,~$1$);
\item
$w_{j^\dag_k} > 0$;
\item
$a_{j^\dag_k}^2 = w_{j^\dag_k}^2 + b_{j^\dag_k}^2$;
\item
$-b_{j^\dag_k} \big/ w_{j^\dag_k} = \gamma_{\tau, \pi}^{(k)}$.
\end{itemize}

Next, for each $k \in [r - 1]$, we define the perturbation of neuron~$j^\dag_k$ exactly as in subcase $p^\dag_k = p^\ddag_k$ above, and define the skip-connection perturbation as before.  We then argue like in the former case to conclude that $\theta, a_0, b_0$ is not Clarke stationary, except that we show the equality of the directional derivatives of the unscaled regularizers as follows:
\begin{align}
& \mathrm{D}_{\theta, a_0, b_0}
  \biggl(\frac{1}{2}
         \sum_{j = 1}^m (a_j^2 + w_j^2 + b_j^2)\biggr)
  [\theta', a'_0, b'_0] \\
& = \sum_{k = 1}^{r - 1}
    \Bigl(a_{j^\dag_k} a'_{j^\dag_k}
        + w_{j^\dag_k} w'_{j^\dag_k}
        + b_{j^\dag_k} b'_{j^\dag_k}\Bigr) \\
& = \sum_{k = 1}^{r - 1}
    \Biggl[
    \Biggl(\frac{a_{j^\dag_k}}{2 w_{j^\dag_k}}
         + \frac{w_{j^\dag_k}}{2 a_{j^\dag_k}}
         - \frac{b_{j^\dag_k}^2}{2 a_{j^\dag_k} w_{j^\dag_k}}\Biggr)
    \mathrm{D}
    \bigl(\alpha_{\tau, \pi}^{(k + 1)} - \alpha_{\tau, \pi}^{(k)}\bigr)
    [\tau', \pi']
  + \frac{b_{j^\dag_k}}{a_{j^\dag_k}} \,
    \mathrm{D}
    \bigl(\beta_{\tau, \pi}^{(k + 1)} - \beta_{\tau, \pi}^{(k)}\bigr)
    [\tau', \pi']
    \Biggr] \\
& = \sum_{k = 1}^{r - 1}
    \Biggl[
    \frac{w_{j^\dag_k}}{a_{j^\dag_k}} \,
    \mathrm{D}
    \bigl(\alpha_{\tau, \pi}^{(k + 1)} - \alpha_{\tau, \pi}^{(k)}\bigr)
    [\tau', \pi']
  + \frac{b_{j^\dag_k}}{a_{j^\dag_k}} \,
    \mathrm{D}
    \bigl(\beta_{\tau, \pi}^{(k + 1)} - \beta_{\tau, \pi}^{(k)}\bigr)
    [\tau', \pi']
    \Biggr] \\
& = \sum_{k = 1}^{r - 1}
    \frac{\mathrm{D}
          \bigl(\alpha_{\tau, \pi}^{(k + 1)} - \alpha_{\tau, \pi}^{(k)}\bigr)
          [\tau', \pi']
        - \gamma_{\tau, \pi}^{(k)} \,
          \mathrm{D}
          \bigl(\beta_{\tau, \pi}^{(k + 1)} - \beta_{\tau, \pi}^{(k)}\bigr)
          [\tau', \pi']}
         {\sgn\Bigl(a_{j^\dag_k}\Bigr)
          \sqrt{1 + \bigl(\gamma_{\tau, \pi}^{(k)}\bigr)^2}} \\
& = \sum_{k = 1}^{r - 1}
    \frac{\bigl(\alpha_{\tau, \pi}^{(k + 1)}
              - \alpha_{\tau, \pi}^{(k)}\bigr)
          \mathrm{D}
          \bigl(\alpha_{\tau, \pi}^{(k + 1)} - \alpha_{\tau, \pi}^{(k)}\bigr)
          [\tau', \pi']
        + \bigl(\beta_{\tau, \pi}^{(k + 1)}
              - \beta_{\tau, \pi}^{(k)}\bigr)
          \mathrm{D}
          \bigl(\beta_{\tau, \pi}^{(k + 1)} - \beta_{\tau, \pi}^{(k)}\bigr)
          [\tau', \pi']}
         {\sqrt{\bigl(\alpha_{\tau, \pi}^{(k + 1)}
                    - \alpha_{\tau, \pi}^{(k)}\bigr)^2
              + \bigl(\beta_{\tau, \pi}^{(k + 1)}
                    - \beta_{\tau, \pi}^{(k)}\bigr)^2}} \\
& = \mathrm{D}
    \Biggl(
    \sum_{k = 1}^{r - 1}
    \sqrt{\bigl(\alpha_{\tau, \pi}^{(k + 1)}
              - \alpha_{\tau, \pi}^{(k)}\bigr)^2
        + \bigl(\beta_{\tau, \pi}^{(k + 1)}
              - \beta_{\tau, \pi}^{(k)}\bigr)^2}
    \Biggr)
    [\tau', \pi'] \\
& = \mathrm{D}_{\tau, \pi}
    \mathrm{R^b_1}(g_{\tau, \pi})
    [\tau', \pi'].
\ifopt\tag*{\jmlrBlackBox}\else\qedhere\fi
\end{align}
\ifopt\let\jmlrBlackBox\relax\else\fi
\end{proof}

\begin{proof}%
\ifopt
\textbf{of \cref{th:min.reg.loss}}
\else
[Proof of \cref{th:min.reg.loss}]
\fi
For part~\ref{th:min.reg.loss:L.L1} (resp., part~\ref{th:min.reg.loss:Lb.Lb1}), suppose $f_{\theta, a_0, b_0}$~is a minimizer of~\hyperref[eq:L1]{$L\mathrm{^\lambda_1}$} (resp.,~\hyperref[eq:Lb1]{$L\mathrm{^{\lambda, b}_1}$}) in function space.  Then $\theta, a_0, b_0$ is a Clarke stationary point, and its margin is positive by \cref{c:pos.margin}.  By \cref{pr:dd.not.Clarke}, \cref{l:cc,l:all.P1}, and \cref{l:it,l:L1} (resp., \cref{l:st,l:all.Pb1}), $f_{\theta, a_0, b_0}$~is convexity correct and interval turning (resp., single turning), and so by \cref{l:so}, $f_{\theta, a_0, b_0}$~is also switch optimal.

Conversely, suppose $h$~is switch optimal, convexity correct, and interval turning (resp., single turning), but not a minimizer of~\hyperref[eq:L1]{$L\mathrm{^\lambda_1}$} (resp.,~\hyperref[eq:Lb1]{$L\mathrm{^{\lambda, b}_1}$}) in function space.  By \cref{c:pos.margin,l:attain.r.l}, for some $m \in \mathbb{N}$ and some $\theta, a_0, b_0$ which is a minimizer of~\hyperref[eq:L1]{$L\mathrm{^\lambda_1}$} (resp.,~\hyperref[eq:Lb1]{$L\mathrm{^{\lambda, b}_1}$}) restricted to network width~$m$ and whose margin is positive, we have that
$L\mathrm{^\lambda_1}(\theta, a_0, b_0) <
 \frac{1}{n}
 \sum_{i = 1}^n
 \ell\bigl(y_i \, h(x_i)\bigr) +
 \lambda \,
 \mathrm{R_1}(h)$
(resp.,
$L\mathrm{^{\lambda, b}_1}(\theta, a_0, b_0) <
 \frac{1}{n}
 \sum_{i = 1}^n
 \ell\bigl(y_i \, h(x_i)\bigr) +
 \lambda \,
 \mathrm{R^b_1}(h)$).
Reasoning like above, $f_{\theta, a_0, b_0}$~is switch optimal, convexity correct, and interval turning (resp., single turning).  But then we infer that $f_{\theta, a_0, b_0}$~and~$h$ have equal values on every input $x_i$, and have equal $\mathrm{R_1}$~(resp.,~$\mathrm{R^b_1}$) norms, which is a contradiction.

Since switch optimality and convexity correctness uniquely fix all crossing pieces, a straightforward positive-margin adaptation of the proof of \cref{l:unique.st} shows that the single-turning minimizer is unique in function space.

The remainders of parts~\ref{th:min.reg.loss:L.L1} and~\ref{th:min.reg.loss:Lb.Lb1}, which are for the losses~\hyperref[eq:L]{$L\mathrm{^\lambda}$} and~\hyperref[eq:Lb]{$L\mathrm{^{\lambda, b}}$} (respectively), now follow by \cref{l:RR1.RbRb1}.

For part~\ref{th:min.reg.loss:stat}, suppose $(\theta, a_0, b_0) \in \mathbb{R}^{3 m + 2}$ is a Clarke stationary point of~\hyperref[eq:L1]{$L\mathrm{^\lambda_1}$} (resp.,~\hyperref[eq:Lb1]{$L\mathrm{^{\lambda, b}_1}$}) and its margin is positive.  Reasoning like above, $f_{\theta, a_0, b_0}$~is switch optimal, convexity correct, and interval turning (resp., single turning).  Moreover, by \cref{th:repr} and \cref{l:bal,l:opp} (resp., \cref{l:bal.b,,l:opp,,l:bias}), we have that $\frac{1}{2} \sum_{j = 1}^m (a_j^2 + w_j^2) = \mathrm{R_1}(f_{\theta, a_0, b_0})$ (resp., $\frac{1}{2} \sum_{j = 1}^m (a_j^2 + w_j^2 + b_j^2) = \mathrm{R^b_1}(f_{\theta, a_0, b_0})$), so by part~\ref{th:min.reg.loss:L.L1} (resp.,~\ref{th:min.reg.loss:Lb.Lb1}), we conclude that $\theta, a_0, b_0$ is a minimizer.
\end{proof}

\clearpage
\section{Proofs for limits of normalized approximate Clarke stationary points}

Here we state and prove our final main theoretical result which establishes that, for each of the four variants of the $\ell_2$-regularized logistic loss we consider, limits of margin-normalized approximate Clarke stationary points as the regularization strength~$\lambda$ tends to~$0$ are KKT points of the corresponding interpolator norm minimization problem, provided the approximation tightness~$\zeta$ tends to~$0$ sufficiently fast.  It also shows that, when the biases are not penalized and the skip connection is present, and again provided $\zeta$~tends to~$0$ sufficiently fast, the limits are necessarily interval turning in function space; therefore, recalling \cref{th:inter}, most KKT points cannot be obtained in this way.  Before stating and proving the theorem, we define margin normalization, show a simple proposition about it, and define approximate Clarke stationary points.

Provided the margin involved is positive, let
\begin{align}
\widetilde{\theta} & \coloneqq
\frac{\theta}{\sqrt{M(\theta)}} &
\widetilde{\theta, a_0, b_0} & \coloneqq
\frac{\theta}{\sqrt{M(\theta, a_0, b_0)}},
\frac{a_0}{M(\theta, a_0, b_0)},
\frac{b_0}{M(\theta, a_0, b_0)}
\end{align}
denote scalings of the parameters that normalize the margin.

\begin{prop}
\label{pr:margin}
If $M(\theta, a_0, b_0) > 0$ then:
\[f_{\widetilde{\theta, a_0, b_0}}(x) =
  \frac{f_{\theta, a_0, b_0}(x)}{M(\theta, a_0, b_0)}
  \quad \text{for all } x \in \mathbb{R},
  \qquad \text{and} \qquad
  M\bigl(\widetilde{\theta, a_0, b_0}\bigr) = 1.\]
\end{prop}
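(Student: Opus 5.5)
The argument is a direct computation from the definition of the margin-normalized parameters, using positive homogeneity of the ReLU. Write $M \coloneqq M(\theta, a_0, b_0) > 0$ and let the normalized network consist of the hidden neurons $(a_j, w_j, b_j)/\sqrt{M}$ for $j \in [m]$ together with the skip parameters $a_0/M$ and $b_0/M$. The first step is to treat each hidden neuron separately. Since $\sigma(c z) = c \, \sigma(z)$ for every $c > 0$, for each $j \in [m]$ and each $x \in \mathbb{R}$ we have
\[\frac{a_j}{\sqrt{M}} \, \sigma\Bigl(\frac{w_j}{\sqrt{M}} x + \frac{b_j}{\sqrt{M}}\Bigr)
= \frac{a_j}{\sqrt{M}} \cdot \frac{1}{\sqrt{M}} \, \sigma(w_j x + b_j)
= \frac{a_j \, \sigma(w_j x + b_j)}{M}.\]
In words, each hidden neuron is $2$-homogeneous in its three parameters, so scaling them by $1/\sqrt{M}$ scales its contribution by $1/M$. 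The affine skip term is $1$-homogeneous in $(a_0, b_0)$, so scaling by $1/M$ gives $(a_0 x + b_0)/M$ directly.

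The second step is to sum these contributions as in \cref{eq:f}. This gives $f_{\widetilde{\theta, a_0, b_0}}(x) = f_{\theta, a_0, b_0}(x)/M$ for all $x \in \mathbb{R}$, which is the first claim. This mismatch of homogeneity degrees is exactly why the definition uses different exponents for the hidden parameters and for the skip parameters.

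For the second claim, evaluate at the training inputs and use that $M > 0$, so division by $M$ preserves the minimum:
\[M\bigl(\widetilde{\theta, a_0, b_0}\bigr) = \min_{i \in [n]} y_i \, \frac{f_{\theta, a_0, b_0}(x_i)}{M} = \frac{M}{M} = 1.\]

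I do not expect any real obstacle; the only care needed is with the mismatched exponents, which the first step handles. The case without a skip connection is the special case $a_0 = b_0 = 0$, and it covers $\widetilde{\theta}$ verbatim.
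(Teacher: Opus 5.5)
Your proposal is correct and follows the same route as the paper, which simply says the claim is a direct check from the definition of the network function in \cref{eq:f}. Your explicit use of positive homogeneity of $\sigma$ with the scale factor $1/\sqrt{M} > 0$, together with the $1$-homogeneity of the skip term, is exactly that check written out.
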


\begin{proof}
This is straightforward to check using the definition of the network function in \cref{eq:f}.
\end{proof}

For locally Lipschitz $F \colon \mathbb{R}^N \to \mathbb{R}$ and $\zeta > 0$, we say that $v \in \mathbb{R}^N$ is a $\zeta$-stationary point of~$F$ if and only if
$\zeta \mathbb{B}_2^N \cap
 \partial F(v)
 \neq \emptyset$.

\begin{thm}
\label{th:lim}
Fix a network width~$m$, and suppose $\lambda_k, \zeta_k > 0$ for all $k \in \mathbb{N}$ and $\lambda_k \to 0$ as $k \to \infty$.
\begin{enumerate}[(i)]
\item
\label{th:lim:no.skip}
If $M(\theta^{(k)}) > 0$ and $\theta^{(k)}$~is a $\zeta_k$-stationary point of~$L\mathrm{^{\lambda_k}}$ (resp., $L\mathrm{^{\lambda_k, b}}$) for all $k \in \mathbb{N}$ and $\zeta_k / \bigl(\lambda_k \sqrt{M(\theta^{(k)})}\bigr) \to 0$ as $k \to \infty$, then every limit point of $\bigl\{\widetilde{\theta^{(k)}} \bigm| k \in \mathbb{N}\bigr\}$ is a KKT point of \cref{eq:P} (resp., \cref{eq:Pb}).
\item
\label{th:lim:yes.skip}
If $M\bigl(\theta^{(k)}, a_0^{(k)}, b_0^{(k)}\bigr) \geq 1$ and $\bigl(\theta^{(k)}, a_0^{(k)}, b_0^{(k)}\bigr)$ is a $\zeta_k$-stationary point of~$L\mathrm{^{\lambda_k}_1}$ (resp., $L\mathrm{^{\lambda_k, b}_1}$) for all $k \in \mathbb{N}$ and $\zeta_k / \lambda_k \to 0$ as $k \to \infty$, then every limit point of $\Bigl\{\bigl(\widetilde{\theta^{(k)}, a_0^{(k)}, b_0^{(k)}}\bigr) \Bigm| k \in \mathbb{N}\Bigr\}$ is a KKT point of \cref{eq:P1} (resp., \cref{eq:Pb1}).
\item
\label{th:lim:it}
In part~\ref{th:lim:yes.skip} when biases are not penalized, i.e., for loss~$L\mathrm{^{\lambda_k}_1}$ and \cref{eq:P1}, if \ifarxiv also\else moreover\fi\ $\zeta_k / \Bigl(\lambda_k^F \sqrt{M\bigl(\theta^{(k)}, a_0^{(k)}, b_0^{(k)}\bigr)}\Bigr) \to 0$ as $k \to \infty$ where $F > 1$ depends only on the dataset,%
\footnote{It suffices that, for each $k \in [r - 1]$, we have that $F > 1 + y_{\iota(k)} (x_{\iota(k + 1)} - x_{\iota(k) + 1}) s_{k - 1} s_k / (s_{k - 1} - s_k)$, where $s_{k - 1} \coloneqq 2 y_{\iota(k) + 1} / (x_{\iota(k) + 1} - x_{\iota(k)})$.}
then every limit point of $\Bigl\{\bigl(\widetilde{\theta^{(k)}, a_0^{(k)}, b_0^{(k)}}\bigr) \Bigm| k \in \mathbb{N}\Bigr\}$ is interval turning in function space.
\end{enumerate}
\end{thm}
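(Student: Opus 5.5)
The plan for parts~\ref{th:lim:no.skip} and~\ref{th:lim:yes.skip} is to show that each margin-normalized point $\widetilde{\theta^{(k)}}$ (resp.\ $\widetilde{\theta^{(k)}, a_0^{(k)}, b_0^{(k)}}$) is an $\varepsilon_k, \delta_k$-KKT point of \cref{eq:P} (resp.\ \cref{eq:Pb}, \cref{eq:P1}, \cref{eq:Pb1}) with $\varepsilon_k, \delta_k \to 0$. Then \cref{th:appr.KKT} together with \cref{l:feas.MFCQ} (MFCQ at every feasible point, in particular at any limit point, which is feasible by \cref{pr:margin} and continuity) finishes the argument.

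Write $q_i^{(k)} \coloneqq y_i f^{(k)}(x_i)$, $M_k \coloneqq \min_i q_i^{(k)}$, and $c_i^{(k)} \coloneqq -\ell'(q_i^{(k)})/n = 1/\bigl(n(1 + e^{q_i^{(k)}})\bigr)$. Stationarity gives
\[
\mathbf 0 \in \zeta_k \mathbb B + \sum_i c_i^{(k)} \, \partial\bigl(-y_i f(x_i)\bigr) + \lambda_k \nabla R .
\]
In the no-skip case, $f$ is $2$-homogeneous in $\theta$. Hence, with $\theta = \sqrt{M}\,\widetilde\theta$, the Clarke subdifferentials of $f(x_i)$ at $\theta$ equal $\sqrt{M}$ times those at $\widetilde\theta$, and $\nabla R(\theta) = \sqrt{M}\,\nabla R(\widetilde\theta)$. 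Dividing by $\lambda_k \sqrt{M_k}$ yields the approximate equilibrium inclusion with
\[
\varepsilon_k = \zeta_k \big/ \bigl(\lambda_k \sqrt{M_k}\bigr), \qquad \mu_i^{(k)} = c_i^{(k)} / \lambda_k ,
\]
which is exactly the rate assumed in~\ref{th:lim:no.skip}.

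With the skip connection, the hidden parameters scale by $\sqrt{M}$ but $(a_0, b_0)$ scale by $M$. The skip components of the inclusion, which contain no regularizer term, therefore rescale differently: they read $\bigl|\sum_i \mu_i y_i (x_i, 1)\bigr| \leq \zeta_k/\lambda_k$. This is why part~\ref{th:lim:yes.skip} assumes $\zeta_k/\lambda_k \to 0$ and $M_k \geq 1$, the latter so that the hidden components are also controlled by $\zeta_k/\lambda_k$.

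For approximate complementary slackness I need $\sum_i \mu_i^{(k)}\bigl(q_i^{(k)}/M_k - 1\bigr) \to 0$, following the argument of \citet{LyuL20}. First I show $M_k \to \infty$. Along a convergent subsequence the normalized parameters are bounded and nonzero (feasibility forbids zero), so the equilibrium inclusion forces $\sum_i \mu_i^{(k)}$ to be bounded above and bounded away from~$0$. Hence $e^{-M_k} \asymp \lambda_k$, i.e.\ $M_k \gtrsim \log(1/\lambda_k) \to \infty$. Then I split the indices by a threshold $\eta > 0$. Those with $q_i \leq (1+\eta) M_k$ contribute at most $\eta \sum_i \mu_i = O(\eta)$. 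Those with $q_i > (1+\eta) M_k$ have
\[
\mu_i (q_i/M_k) \lesssim (q_i/M_k)\, e^{-q_i} / \lambda_k \lesssim (q_i/M_k)\, e^{-(q_i - M_k)} ,
\]
which tends to~$0$ uniformly because $q_i - M_k \geq \eta M_k \to \infty$. Letting $\eta \downarrow 0$ gives $\delta_k \to 0$.

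For part~\ref{th:lim:it} I argue by contradiction. Suppose a limit point $(\theta^\star, a_0^\star, b_0^\star)$ is not interval turning. By part~\ref{th:lim:yes.skip} it is a KKT point, so by \cref{th:inter} it is convexity correct. \Cref{l:it} then gives two same-convexity kinks enclosing a training input $x_{i_0}$ inside a same-label segment. For large $k$, the normalized iterates have neurons with kinks near these, and the direction $u$ of \cref{l:L1} (built via \cref{l:rev,l:up.down}) has nonpositive regularizer derivative and nonpositive Clarke derivatives of all negative margins. Moreover, its derivative at $x_{i_0}$ is strictly negative, bounded away from~$0$ uniformly in~$k$ after normalization.

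Pairing this direction with the $\zeta_k$-small subgradient gives
\[
c_{i_0}^{(k)} \cdot \Omega\bigl(\sqrt{M_k}\bigr) \leq \zeta_k \lVert u \rVert .
\]
Here $c_{i_0}^{(k)} \approx e^{-q_{i_0}^{(k)}}$, and $q_{i_0}^{(k)} / M_k$ converges to the normalized value of the limit at $x_{i_0}$. That value is at most some constant $F' < F$ determined by the dataset geometry: a not-interval-turning convexity-correct switch-hugging function cannot rise above the single-turning envelope at $x_{i_0}$, which is what the footnote's bound encodes. Combining this with $e^{-M_k} \asymp \lambda_k$ gives $c_{i_0}^{(k)} \gtrsim \lambda_k^{F}$, which contradicts $\zeta_k / \bigl(\lambda_k^F \sqrt{M_k}\bigr) \to 0$.

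I expect the main obstacle to be this step: making the uniform bound on the output derivative and the exponent comparison precise, including relating $q_{i_0}/M_k$ to the explicit constant~$F$. I would first try bounding $q_{i_0}/M_k$ by the value at $x_{i_0}$ of the extension of the adjacent crossing piece, which is where the slopes $s_{k-1}, s_k$ in the footnote come from.
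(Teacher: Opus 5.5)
Your proposal is correct and follows essentially the paper's proof. For parts~(i) and~(ii) you use the same rescaled multipliers $\mu_i^{(k)} = c_i^{(k)}/\lambda_k$ and the same role for $M \geq 1$ on the hidden components, then \cref{th:appr.KKT} with \cref{l:feas.MFCQ}, and $M_k \to \infty$ together with vanishing slack multipliers to get $\delta_k \to 0$; for part~(iii) you use the \cref{l:L1} direction with an exponential comparison against $\lambda_k^F$. The differences are cosmetic: you split indices by a threshold rather than by tightness at the limit, and in part~(iii) you invoke the lower bound on $\sum_i \mu_i^{(k)}$ (equivalently $e^{-M_k} \gtrsim \lambda_k$) up front, whereas the paper shows all multipliers vanish and then contradicts the same inclusion; your concavity bound on the limit margin at $x_{i_0}$ also makes explicit what the paper leaves to the footnote on~$F$.
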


\begin{proof}
For parts \ref{th:lim:no.skip} and~\ref{th:lim:yes.skip}, we show the claim for \cref{eq:Pb1}, the cases of \cref{eq:P,,eq:P1,,eq:Pb} can be handled analogously.

By considering a convergent subsequence, we may assume that $\bigl(\widetilde{\theta^{(k)}, a_0^{(k)}, b_0^{(k)}}\bigr) \to (\theta^\star, a_0^\star, b_0^\star)$ as $k \to \infty$.  Since each $\widetilde{\theta^{(k)}, a_0^{(k)}, b_0^{(k)}}$ is a feasible point of \cref{eq:Pb1} by \cref{pr:margin} and the feasible set is closed, we have that $\theta^\star, a_0^\star, b_0^\star$ is feasible and thus satisfies MFCQ by \cref{l:feas.MFCQ}.

Since each $\theta^{(k)}, a_0^{(k)}, b_0^{(k)}$ is a $\zeta_k$-stationary point of~\hyperref[eq:Lb1]{$L\mathrm{^{\lambda_k, b}_1}$}, we have
\begin{align}
\emptyset
& \neq \zeta_k \mathbb{B}_2^{3 m + 2}
  \cap \partial L\mathrm{^{\lambda_k, b}_1}
                \bigl(\theta^{(k)}, a_0^{(k)}, b_0^{(k)}\bigr) \\
& =    \zeta_k \mathbb{B}_2^{3 m + 2}
  \cap \Bigl(\partial L\bigl(\theta^{(k)}, a_0^{(k)}, b_0^{(k)}\bigr)
           + \frac{\lambda_k}{2}
             \partial \lVert \theta^{(k)} \rVert_2^2\Bigr) \\
& =    \zeta_k \mathbb{B}_2^{3 m + 2}
  \cap \biggl(\Bigl\{\Bigl(u, \nabla_{a_0}
                              L\bigl(\theta^{(k)}, a_0^{(k)}, b_0^{(k)}\bigr),
                              \nabla_{b_0}
                              L\bigl(\theta^{(k)}, a_0^{(k)}, b_0^{(k)}\bigr)\Bigr) \\
& \qquad\qquad\qquad\qquad\quad
              \Bigm| u \in \partial_\theta
                           L\bigl(\theta^{(k)}, a_0^{(k)}, b_0^{(k)}\bigr)\Bigr\}
           + \frac{\lambda_k}{2}
             \bigl\{\bigl(\nabla_\theta \lVert \theta^{(k)} \rVert_2^2,
                          0, 0\bigr)\bigr\}\biggr) \\
& =    \zeta_k \mathbb{B}_2^{3 m + 2}
  \cap \Bigl\{\Bigl(u
                  + \frac{\lambda_k}{2}
                    \nabla_\theta
                    \lVert \theta^{(k)} \rVert_2^2,
                    \nabla_{a_0}
                    L\bigl(\theta^{(k)}, a_0^{(k)}, b_0^{(k)}\bigr),
                    \nabla_{b_0}
                    L\bigl(\theta^{(k)}, a_0^{(k)}, b_0^{(k)}\bigr)\Bigr) \\
& \qquad\qquad\qquad\qquad
       \Bigm| u \in \partial_\theta L(\theta^{(k)}, a_0^{(k)}, b_0^{(k)})\Bigr\},
\end{align}
so for some decomposition $\zeta_k^2 = (\zeta'_k)^2 + (\zeta''_k)^2$ we have
\begin{align}
\zeta'_k \mathbb{B}_2^2
& \ni \nabla_{a_0, b_0} L\bigl(\theta^{(k)}, a_0^{(k)}, b_0^{(k)}\bigr) \\
& =   \frac{1}{n}
      \sum_{i = 1}^n
      \nabla_{a_0, b_0}
      \ell\bigl(y_i \, f_{\theta^{(k)}, a_0^{(k)}, b_0^{(k)}}(x_i)\bigr) \\
& =   -\frac{1}{n}
       \sum_{i = 1}^n
       \frac{\nabla_{a_0, b_0}
             \bigl(y_i \, f_{\theta^{(k)}, a_0^{(k)}, b_0^{(k)}}(x_i)\bigr)}
            {1 + \exp\bigl(y_i \,
                           f_{\theta^{(k)}, a_0^{(k)}, b_0^{(k)}}(x_i)\bigr)} \\
& =   -\frac{1}{n}
       \sum_{i = 1}^n
       \frac{\nabla_{a_0, b_0}
             \Bigl(y_i \,
               f_{\widetilde{\theta^{(k)}, a_0^{(k)}, b_0^{(k)}}}(x_i)\Bigr)}
            {1 + \exp\bigl(y_i \, f_{\theta^{(k)}, a_0^{(k)}, b_0^{(k)}}(x_i)\bigr)}
\label{eq:zetap}
\end{align}
and also
\begin{align}
\emptyset
& \neq \zeta''_k \mathbb{B}_2^{3 m}
  \cap \Bigl(\partial_\theta L\bigl(\theta^{(k)}, a_0^{(k)}, b_0^{(k)}\bigr)
           + \frac{\lambda_k}{2}
             \{\nabla_\theta \lVert \theta^{(k)} \rVert_2^2\}\Bigr) \\
& =    \zeta''_k \mathbb{B}_2^{3 m}
  \cap \biggl(\frac{1}{n}
              \sum_{i = 1}^n
              \partial_\theta
              \ell\bigl(y_i \, f_{\theta^{(k)}, a_0^{(k)}, b_0^{(k)}}(x_i)\bigr)
            + \frac{\lambda_k}{2}
              \{\nabla_\theta \lVert \theta^{(k)} \rVert_2^2\}\biggr) \\
& =    \zeta''_k \mathbb{B}_2^{3 m}
  \cap \biggl(-\frac{1}{n}
               \sum_{i = 1}^n
               \frac{\partial_\theta
                     \bigl(y_i \, f_{\theta^{(k)}, a_0^{(k)}, b_0^{(k)}}(x_i)\bigr)}
                    {1 + \exp\bigl(y_i \,
                                   f_{\theta^{(k)}, a_0^{(k)}, b_0^{(k)}}(x_i)\bigr)}
            + \frac{\lambda_k}{2}
              \{\nabla_\theta \lVert \theta^{(k)} \rVert_2^2\}\biggr) \\
& =    \zeta''_k \mathbb{B}_2^{3 m}
  \cap \sqrt{M\bigl(\theta^{(k)}, a_0^{(k)}, b_0^{(k)}\bigr)}
       \Biggl(-\frac{1}{n}
               \sum_{i = 1}^n
               \frac{\partial_\theta
                     \Bigl(y_i \,
                       f_{\widetilde{\theta^{(k)}, a_0^{(k)}, b_0^{(k)}}}(x_i)\Bigr)}
                    {1 + \exp\bigl(y_i \,
                               f_{\theta^{(k)}, a_0^{(k)}, b_0^{(k)}}(x_i)\bigr)} \\
& \qquad\qquad\qquad\qquad\qquad\qquad\qquad\quad
            + \frac{\lambda_k}{2}
              \Biggl\{\nabla_\theta
                      \Biggl\lVert
                      \frac{\theta^{(k)}}
                           {\sqrt{M\bigl(\theta^{(k)}, a_0^{(k)}, b_0^{(k)}\bigr)}}
                      \Biggr\rVert_2^2\Biggr\}\Biggr),
\end{align}
which once we recall that $M\bigl(\theta^{(k)}, a_0^{(k)}, b_0^{(k)}\bigr) \geq 1$ gives us
\begin{equation}
\emptyset
\neq \zeta''_k \mathbb{B}_2^{3 m}
\cap \Biggl(-\frac{1}{n}
             \sum_{i = 1}^n
             \frac{\partial_\theta
                   \Bigl(y_i \,
                     f_{\widetilde{\theta^{(k)}, a_0^{(k)}, b_0^{(k)}}}(x_i)\Bigr)}
                  {1 + \exp\bigl(y_i \,
                             f_{\theta^{(k)}, a_0^{(k)}, b_0^{(k)}}(x_i)\bigr)}
          + \frac{\lambda_k}{2}
            \Biggl\{\nabla_\theta
                    \Biggl\lVert
                    \frac{\theta^{(k)}}
                         {\sqrt{M\bigl(\theta^{(k)}, a_0^{(k)}, b_0^{(k)}\bigr)}}
                    \Biggr\rVert_2^2\Biggr\}\Biggr).
\label{eq:zetapp}
\end{equation}
Now putting \cref{eq:zetap,eq:zetapp} together, and defining
\begin{align}
\varepsilon_k & \coloneqq
\frac{\zeta_k}{\lambda_k} &
\mu_i^{(k)}   & \coloneqq
\frac{1}{n \lambda_k
         \Bigl(1 + \exp\bigl(y_i \,
                             f_{\theta^{(k)}, a_0^{(k)}, b_0^{(k)}}(x_i)\bigr)\Bigr)}
\quad\text{for all } i \in [n],
\end{align}
we obtain
\begin{align}
\emptyset
& \neq \varepsilon_k \mathbb{B}_2^{3 m + 2}
  \cap \Biggl(\frac{1}{2}
              \Biggl\{\nabla
                      \Biggl\lVert
                      \frac{\theta^{(k)}}
                           {\sqrt{M\bigl(\theta^{(k)}, a_0^{(k)}, b_0^{(k)}\bigr)}}
                      \Biggr\rVert_2^2\Biggr\}
            - \sum_{i = 1}^n
              \mu_i^{(k)}
              \partial
              \Bigl(y_i \,
                f_{\widetilde{\theta^{(k)}, a_0^{(k)}, b_0^{(k)}}}(x_i)\Bigr)\Biggr),
\label{eq:appr.eq.incl}
\end{align}
so moreover defining
\[\delta_k \coloneqq
  \sum_{i = 1}^n
  \mu_i^{(k)}
  \Bigl(y_i \, f_{\widetilde{\theta^{(k)}, a_0^{(k)}, b_0^{(k)}}}(x_i) - 1\Bigr),\]
we have that $\widetilde{\theta^{(k)}, a_0^{(k)}, b_0^{(k)}}$ is an $\varepsilon_k, \delta_k$-KKT point of \cref{eq:Pb1}.%
\footnote{For the definition of approximate KKT points, we refer the reader to \cref{s:Clarke.KKT.MFCQ}.}

Recalling that $\varepsilon_k \to 0$ as $k \to \infty$, to conclude by \cref{th:appr.KKT} that $\theta^\star, a_0^\star, b_0^\star$ is a KKT point of \cref{eq:Pb1}, it remains to show that $\delta_k \to 0$ as $k \to \infty$.

For each $k \in \mathbb{N}$, reasoning like in the proof of \cref{l:feas.MFCQ}, from \cref{eq:appr.eq.incl} and by \cref{pr:margin} we have
\begin{align}
\sum_{i = 1}^n
\mu_i^{(k)}
& \leq \sum_{i = 1}^n
       \mu_i^{(k)}
       y_i \, f_{\widetilde{\theta^{(k)}, a_0^{(k)}, b_0^{(k)}}}(x_i) \\
& \leq \frac{1}{2}
       \Biggl\lVert
       \frac{\theta^{(k)}}
            {\sqrt{M\bigl(\theta^{(k)}, a_0^{(k)}, b_0^{(k)}\bigr)}}
       \Biggr\rVert_2^2 \\
& \quad
     + \varepsilon_k
       \Biggl\lVert
       \frac{\theta^{(k)}}
            {2 \sqrt{M\bigl(\theta^{(k)}, a_0^{(k)}, b_0^{(k)}\bigr)}},
       \frac{a_0^{(k)}}
            {M\bigl(\theta^{(k)}, a_0^{(k)}, b_0^{(k)}\bigr)},
       \frac{b_0^{(k)}}
            {M\bigl(\theta^{(k)}, a_0^{(k)}, b_0^{(k)}\bigr)}
       \Biggr\rVert_2,
\end{align}
so since $\bigl(\widetilde{\theta^{(k)}, a_0^{(k)}, b_0^{(k)}}\bigr) \to (\theta^\star, a_0^\star, b_0^\star)$ as $k \to \infty$, we have that $\sup_{k \in \mathbb{N}} \mu_i^{(k)} < \infty$ for each $i \in [n]$.  Hence, since also $\lambda_k \to 0$ as $k \to \infty$, we infer that $M\bigl(\theta^{(k)}, a_0^{(k)}, b_0^{(k)}\bigr) \to \infty$ as $k \to \infty$.

Now for all $i, i' \in [n]$ such that $y_i \, f_{\theta^\star, a_0^\star, b_0^\star}(x_i) < y_{i'} \, f_{\theta^\star, a_0^\star, b_0^\star}(x_{i'})$, and all $k \in \mathbb{N}$, we have
\begin{align}
\frac{\mu_{i'}^{(k)}}
     {\mu_i^{(k)}}
& = \frac{1 + \exp\bigl(y_i \,
                    f_{\theta^{(k)}, a_0^{(k)}, b_0^{(k)}}(x_i)\bigr)}
         {1 + \exp\bigl(y_{i'} \,
                    f_{\theta^{(k)}, a_0^{(k)}, b_0^{(k)}}(x_{i'})\bigr)} \\
& < 2 \exp\bigl(y_i \,    f_{\theta^{(k)}, a_0^{(k)}, b_0^{(k)}}(x_i)
              - y_{i'} \, f_{\theta^{(k)}, a_0^{(k)}, b_0^{(k)}}(x_{i'})\bigr) \\
& = 2 \exp\biggl(
          \Bigl(y_i \,
                f_{\widetilde{\theta^{(k)}, a_0^{(k)}, b_0^{(k)}}}(x_i)
              - y_{i'} \,
                f_{\widetilde{\theta^{(k)}, a_0^{(k)}, b_0^{(k)}}}(x_{i'})\Bigr)
          M\bigl(\theta^{(k)}, a_0^{(k)}, b_0^{(k)}\bigr)
          \biggr),
\end{align}
and so $\mu_{i'}^{(k)} \to 0$ as $k \to \infty$.

Therefore, for all $i \in [n]$, we have either $y_i \, f_{\theta^\star, a_0^\star, b_0^\star}(x_i) = 1$ in which case $y_i \, f_{\widetilde{\theta^{(k)}, a_0^{(k)}, b_0^{(k)}}}(x_i) \to 1$ as $k \to \infty$ (and there is necessarily at least one such~$i$), or $y_i \, f_{\theta^\star, a_0^\star, b_0^\star}(x_i) > 1$ in which case $\mu_i^{(k)} \to 0$ as $k \to \infty$.  Hence $\delta_k \to 0$ as $k \to \infty$, as required.

For part~\ref{th:lim:it}, suppose for a contradiction that $f_{\theta^\star, a_0^\star, b_0^\star}$~is not interval turning.  Since $\theta^\star, a_0^\star, b_0^\star$ is a KKT point of \cref{eq:P1}, we have by \cref{th:inter}~\ref{th:inter:KKT} and~\ref{th:inter:P.P1} that $f_{\theta^\star, a_0^\star, b_0^\star}$~is convexity correct.  Hence, by continuity and reasoning as in the proofs of \cref{l:it,l:L1}, there exists $\eta > 0$ such that, for all sufficiently large~$k$, there exists a direction $(\theta', a'_0, b'_0) \in \mathbb{B}_2^{3 m + 2}$ such that:
\begin{itemize}
\item
$\mathrm{D}
 \Bigl(\frac{1}{2}
       \sum_{j = 1}^m
       \bigl((a^{(k)}_j)^2 + (w^{(k)}_j)^2\bigr)\Bigr)
 [\theta', a'_0, b'_0]
 \leq 0$;
\item
$\mathrm{D}^\circ
 \bigl(-y_i \,
        f_{\theta^{(k)}, a_0^{(k)}, b_0^{(k)}}
        (x_i)\bigr)
 [\theta', a'_0, b'_0]
 \leq 0$
for all $i \in [n]$;
\item
$\mathrm{D}
 \bigl(-y_{i^\dag} \,
        f_{\theta^{(k)}, a_0^{(k)}, b_0^{(k)}}
        (x_{i^\dag})\bigr)
 [\theta', a'_0, b'_0]
 \leq -\eta \sqrt{M\bigl(\theta^{(k)}, a_0^{(k)}, b_0^{(k)}\bigr)}$
for some $i^\dag \in [n]$.
\end{itemize}
Then, we have
\begin{align}
-\frac{\zeta_k}
      {\lambda_k^F
       \sqrt{M\bigl(\theta^{(k)}, a_0^{(k)}, b_0^{(k)}\bigr)}}
& \leq \frac{\mathrm{D}^\circ
             L\mathrm{^{\lambda_k}_1}
             \bigl(\theta^{(k)}, a_0^{(k)}, b_0^{(k)}\bigr)
             [\theta', a'_0, b'_0]}
            {\lambda_k^F
             \sqrt{M\bigl(\theta^{(k)}, a_0^{(k)}, b_0^{(k)}\bigr)}} \\
& \leq -\frac{\eta}
             {n
              \lambda_k^F
              \Bigl(1 + \exp\bigl(y_{i^\dag} \,
                                  f_{\theta^{(k)}, a_0^{(k)}, b_0^{(k)}}
                                  (x_{i^\dag})\bigr)\Bigr)},
\end{align}
and so
$\lambda_k^F
 \Bigl(1 + \exp\bigl(y_{i^\dag} \,
                     f_{\theta^{(k)}, a_0^{(k)}, b_0^{(k)}}
                     (x_{i^\dag})\bigr)\Bigr)
 \to \infty$
as $k \to \infty$.

However, for each $i \in [n]$ with $y_i \, f_{\theta^\star, a_0^\star, b_0^\star}(x_i) = 1$, we have
\begin{align}
& \bigl(n \mu_i^{(k)}\bigr)^F
  \lambda_k^F
  \Bigl(1 + \exp\bigl(y_{i^\dag} \,
                      f_{\theta^{(k)}, a_0^{(k)}, b_0^{(k)}}
                      (x_{i^\dag})\bigr)\Bigr) \\
& =   \frac{1 + \exp\bigl(y_{i^\dag} \,
                          f_{\theta^{(k)}, a_0^{(k)}, b_0^{(k)}}
                          (x_{i^\dag})\bigr)}
           {\Bigl(1 + \exp\bigl(y_i \,
                                f_{\theta^{(k)}, a_0^{(k)}, b_0^{(k)}}
                                (x_i)\bigr)\Bigr)^F} \\
& <   2 \frac{\exp\bigl(y_{i^\dag} \,
                        f_{\theta^{(k)}, a_0^{(k)}, b_0^{(k)}}
                        (x_{i^\dag})\bigr)}
             {\exp\bigl(y_i \,
                        f_{\theta^{(k)}, a_0^{(k)}, b_0^{(k)}}
                        (x_i)\bigr)
              ^F} \\
& =   2 \exp\biggl(\Bigl(y_{i^\dag} \,
                         f_{\widetilde{\theta^{(k)}, a_0^{(k)}, b_0^{(k)}}}
                         (x_{i^\dag})
                  - F \, y_i \,
                         f_{\widetilde{\theta^{(k)}, a_0^{(k)}, b_0^{(k)}}}
                         (x_i)\Bigr)
                   M\bigl(\theta^{(k)}, a_0^{(k)}, b_0^{(k)}\bigr)\biggr) \\
& \to 0 \quad\text{as } k \to \infty,
\end{align}
and therefore $\mu_i^{(k)} \to 0$ as $k \to \infty$.  But then $\mu_i^{(k)} \to 0$ as $k \to \infty$ for all $i \in [n]$, which is impossible due to the analogue of \cref{eq:appr.eq.incl} with biases not penalized.
\end{proof}

\clearpage
\ifopt
\section{Experiments}
\else
\section{Further details for and outcomes from the experiments}
\fi
\label{s:exp:app}

\fregloss{\ifopt\else (This is \Cref{f:reg.loss:main}, reproduced here for the reader's convenience.)  \fi}{f:reg.loss}

\newcommand{\SparsityClustering}{medium}

\pgfplotsset{
  sparsity symlog y/.style={
    y coord trafo/.code={\pgfmathparse{sign(##1)*(
      min(abs(##1)/\SparsitySymlogThreshold,1)
      +ln(max(abs(##1)/\SparsitySymlogThreshold,1)))}},
    y coord inv trafo/.code={\pgfmathparse{sign(##1)*\SparsitySymlogThreshold*(
      min(abs(##1),1)+max(exp(abs(##1)-1)-1,0))}},
    ytick={-10,-1,-0.1,0,0.1,1,10},
    yticklabels={{$-10^1$},{$-10^0$},{$-10^{-1}$},{$0$},
                 {$10^{-1}$},{$10^0$},{$10^1$}},
    minor ytick={-9,-8,-7,-6,-5,-4,-3,-2,
      -0.9,-0.8,-0.7,-0.6,-0.5,-0.4,-0.3,-0.2,
       0.2,0.3,0.4,0.5,0.6,0.7,0.8,0.9,
       2,3,4,5,6,7,8,9},
  },
}

\definecolor{clrAW}{HTML}{1B7837}
\definecolor{clrAWB}{HTML}{762A83}


\newcommand{\adamLegend}{%
  \begingroup\small\hspace{4.9em}
  \tikz[baseline=-0.55ex]{\draw[clrAW,solid,line width=0.8pt]
    plot[mark=*,mark size=1.35pt] coordinates {(0,0) (0.65,0)};}
  \ biases not regularized \hspace{2.7em}
  \tikz[baseline=-0.55ex]{\draw[clrAWB,dashed,line width=0.8pt]
    plot[mark=square*,mark size=1.25pt] coordinates {(0,0) (0.65,0)};}
  \ biases regularized
  \endgroup
}

\newcommand{\sparsityPanel}[5]{%
  \begin{tikzpicture}
    \begin{axis}[
      experiment axis, sparsity symlog y, scaled y ticks=false,
      ylabel={\shortstack{kink count (relative excess)\\skip connection #4}},
      title={\twoEquationTitle{#5}},
      /pgfplots/shared bounds/sp/\SparsityClustering/.try,
      /pgfplots/manual bounds/sp/\SparsityClustering/.try,
      #3,
    ]
      \rangeSeries{plot_np_data/panels/sp_\SparsityClustering_#1_F#2.csv}
    {relative_excess_kink_cluster_count_\SparsityClustering_aw}{clrAW}{solid,mark=*,mark size=1.35pt}{sp-#1-#2-aw}
      \rangeSeries{plot_np_data/panels/sp_\SparsityClustering_#1_F#2.csv}
    {relative_excess_kink_cluster_count_\SparsityClustering_awb}{clrAWB}{dashed,mark=square*,mark size=1.25pt}{sp-#1-#2-awb}
      \addplot[black!55,densely dotted,mark=none,forget plot]
        coordinates {(4,0) (12,0)};
    \end{axis}
  \end{tikzpicture}%
}

\newcommand{\unsuccessfulPanel}[3]{%
  \begin{tikzpicture}
    \begin{axis}[
      unbounded coords=discard,filter discard warning=false,
      grid=major,major grid style={draw=black!12,line width=0.2pt},
      axis line style={black!65},tick style={black!65},
      xticklabel style={font=\footnotesize},yticklabel style={font=\footnotesize},
      xlabel style={font=\footnotesize},ylabel style={font=\footnotesize},
      title style={font=\footnotesize,yshift=-1pt},
      every axis plot/.append style={line width=0.75pt},clip mode=individual,
      width=.39\textwidth,height=.2\textheight,
      xlabel={overparameterization factor},ylabel={\shortstack{positive margin\\failure rate}},
      xtick={1,2,3,4,5},xmin=.75,xmax=5.25,scaled y ticks=false,
      /pgfplots/shared bounds/unsuccessful/all/.try,
      /pgfplots/manual bounds/unsuccessful/all/.try,title={#2},#3,
    ]
      \addplot[clrNoskip,solid,mark=*,mark size=1.35pt]
        table[col sep=comma,x=width_factor,y=proportion_unsuccessful_noskip]
        {plot_np_data/panels/unsuccessful_#1.csv};
      \addplot[clrSkip,dashed,mark=square*,mark size=1.25pt]
        table[col sep=comma,x=width_factor,y=proportion_unsuccessful_skip]
        {plot_np_data/panels/unsuccessful_#1.csv};
    \end{axis}
  \end{tikzpicture}%
}

\begin{figure}[t]
  \centering\adamLegend\par\medskip
  \setlength{\tabcolsep}{0.2pt}\renewcommand{\arraystretch}{0.90}
  \begin{tabular}{@{}ccccc@{}}
    \sparsityPanel{skip0}{1}{}{absent}{1}&
    \sparsityPanel{skip0}{2}{hide row y axis}{absent}{2}&
    \sparsityPanel{skip0}{3}{hide row y axis}{absent}{3}&
    \sparsityPanel{skip0}{4}{hide row y axis}{absent}{4}&
    \sparsityPanel{skip0}{5}{hide row y axis}{absent}{5}\\
    \sparsityPanel{skip1}{1}{}{present}{1}&
    \sparsityPanel{skip1}{2}{hide row y axis}{present}{2}&
    \sparsityPanel{skip1}{3}{hide row y axis}{present}{3}&
    \sparsityPanel{skip1}{4}{hide row y axis}{present}{4}&
    \sparsityPanel{skip1}{5}{hide row y axis}{present}{5}
  \end{tabular}
  \caption{We plot the relative excess number of distinct kinks of the network at the end of training, i.e., $Q / (r - 1) - 1$, where $Q$~is the number of distinct kinks and $r$~is the number of label switches in the training dataset (and so $r - 1$ is the minimal number of distinct kinks of any positive-margin classifier).  Rows indicate whether the skip connection is absent or present, columns give the overparameterization factor $\omega = m / r_0$, and the two curves indicate whether biases are regularized.  For each point, we use the runs, among 60 RNG seeds, that reached a positive margin: the curve gives their mean, and the shaded region spans their smallest and largest outcomes (positive-margin failure rates are reported in \Cref{f:inter.fail}).  The shared vertical axis is logarithmic above~$10^{-1}$ and linear below it.  \textbf{Observations:}  When biases are regularized, training consistently reaches or closely approaches the minimal kink count, especially with a skip connection; without bias regularization, the final networks typically have more kinks, particularly when the skip connection is absent. This is consistent with our theoretical result that bias regularization makes every (global) minimizer sparsest, and that a skip connection rules out suboptimal positive-margin stationary points, as well as with the counterexample showing that nonsparse suboptimal KKT points can occur without a skip connection.}
  \label{f:sparsity}
\end{figure}

\begin{figure}[t]
  \centering
  \setlength{\tabcolsep}{3pt}
  \begin{tabular}{@{}cc@{\hspace{2em}}l@{}}
    \unsuccessfulPanel{adam_weights}{biases not regularized}{}&
    \unsuccessfulPanel{adam_weights_biases}{biases regularized}{hide row y axis}&
    \raisebox{15ex}{\begin{tabular}[c]{@{}l@{}}
      \tikz[baseline=-0.55ex]{\draw[clrNoskip,solid,line width=.8pt]
        plot[mark=*,mark size=1.35pt] coordinates {(0,0) (.65,0)};}\\[.5ex]
      {\small skip connection absent}\\[2ex]
      \tikz[baseline=-0.55ex]{\draw[clrSkip,dashed,line width=.8pt]
        plot[mark=square*,mark size=1.25pt] coordinates {(0,0) (.65,0)};}\\[.5ex]
      {\small skip connection present}
    \end{tabular}}
  \end{tabular}
  \caption{We plot the fraction of runs that failed to reach a positive margin on their training dataset, aggregated over 300 runs for each setting, comprising 60 RNG seeds for each of the five numbers of class breaks $r_0 \in \{4, 6, 8, 10, 12\}$.  The plots have the same linear vertical axis.  \textbf{Observations:}  The fractions of failed runs are always (if nonzero) higher when a skip connection is absent, and are significant only for the smallest overparameterization factor of~$1$, which is when the network width is only an additive constant greater than the minimum required for a positive margin.}
  \label{f:inter.fail}
\end{figure}

\paragraph{Datasets.}

We first sample a data distribution, which has a hyperparameter $r_0 \geq 3$, which is the number of class breaks and which we vary in the experiments.  Namely, we sample class breaks $(\beth_k)_{k = 1}^{r_0}$ from the uniform distribution on the interval $(-1, 1)$, and index them so that they are increasing, i.e., $\beth_1 < \dots < \beth_{r_0}$.  Also we write $\beth_0 \coloneqq -1$ and $\beth_{r_0 + 1} \coloneqq 1$.  We then sample the training inputs $(x_i)_{i = 1}^n$ from the uniform distribution on the interval $(-1, 1)$, and assign the labels according to the class breaks, i.e., for each $i \in [n]$, we set $y_i \coloneqq (-1)^k$, where $x_i \in [\beth_k, \beth_{k + 1})$.

Since the training inputs are random, the number~$r$ of label switches in the dataset is at most $r_0$ and possibly smaller.  We set the dataset size as $n \coloneqq (r_0 / 2)^2$, which ensures that, in expectation, $r$~is only a constant below the hyperparameter~$r_0$.

We use rejection sampling to ensure that all training inputs in the first (resp., last) two same-label segments are negative (resp., positive), i.e., that \cref{ass:data} is satisfied; also to ensure that the minimal value of $\beth_{k + 1} - \beth_k$ is at least $1 / (r_0)^2$, so that no class is very unlikely to be populated by a training input; and to ensure that the minimal value of $x_{i + 1} - x_i$ when $y_{i + 1} \neq y_i$ (i.e., at label switches) is at least $2 / n \sqrt{r_0}$, so that reaching a positive margin on the dataset will not require very large network parameters.\footnote{These two lower bounds are multiples of the corresponding asymptotic expectations, by small constants.}

\paragraph{Networks.}

We set the network width as $m \coloneqq \omega \cdot r_0$, where the overparameterization factor~$\omega$ is a hyperparameter which we vary in the experiments.

Since the focus of this work is norm minimization of interpolating networks, we initialize the network in a way that helps to reach a positive margin early in the training.  Namely, following \citet{SafranVL22}, we initialize the output weights small and the hidden weights large: we set~$d_{\min}$ to be the minimal distance between two training inputs with different labels, set~$\varsigma$ to be a constant times $1 / d_{\min}$, sample the output weights $(a_j)_{j = 1}^m$ from the centered normal distribution with variance~$1 / \varsigma^2$, and sample the hidden weights $(w_j)_{j = 1}^m$ from the centered normal distribution with variance~$\varsigma^2$.  Then we initialize the hidden biases $(b_j)_{j = 1}^m$ so that, for each $j \in [m]$, the kink $-b_j / w_j$ is equally likely to be in any of the $n - 1$ intervals $(x_i, x_{i + 1})$ between two consecutive training inputs, and is independently uniformly distributed on that interval.

If a skip connection is present, we initialize its weight~$a_0$ and its bias~$b_0$ both as zero.

\paragraph{Training.}

Depending on whether biases are regularized and whether a skip connection is present, we minimize one of the four variants \hyperref[eq:L]{$L\mathrm{^\lambda}$}, \hyperref[eq:L1]{$L\mathrm{^\lambda_1}$}, \hyperref[eq:Lb]{$L\mathrm{^{\lambda, b}}$}, and~\hyperref[eq:Lb1]{$L\mathrm{^{\lambda, b}_1}$} of the empirical logistic loss with $\ell_2$~regularization of strength~$\lambda$.  We use small~$\lambda$, namely set it to a constant times $1 / m \varsigma^2$, so that before a positive margin is reached, the objective is dominated by the logistic loss term; this also ensured that, in all the experiments we ran, $\lambda$~satisfied the upper bounds in \cref{l:O1.Ob1,th:min.reg.loss}.

We use an in-house implementation in NumPy~\citep{harris2020array} of the Adam optimizer \citep{KingmaB14}.  We leave all of Adam's hyperparameters at their default values, except that before reaching a positive margin the learning rate is $10$~times smaller, and after that $1 - \beta_2$ is $10$~times larger.  To get close to a Clarke stationary point, we run $2 \cdot 10^7$ full-batch iterations per experiment.

\paragraph{Final computations.}

For each experiment, to compare its final value of the regularized loss with the minimum value, we recall that, by \cref{th:min.reg.loss}, the latter equals the minimum of \cref{eq:O1} or \cref{eq:Ob1} (depending on whether biases are regularized) for the same dataset and the same regularization strength~$\lambda$.  Recalling also \cref{l:O1.Ob1}, \Cref{eq:O1,eq:Ob1} have strictly convex objectives on convex feasible sets, and we compute their minima using Clarabel~\citep{GoulartC26} (via CVXPY~\citep{cvxpy16} and using the default settings therein). All runs of Clarabel involved in the experiments to produce the plots in \Cref{f:reg.loss} returned near optimal solutions.  Also we remark that applying a general-purpose convex reformulation~\citep[cf.][]{pilanci2020neural} would provide an alternative computation of the same benchmark rather than an additional learning baseline.

For the sparsity plots in \Cref{f:sparsity}, we compute the number of distinct kinks as the number of clusters of values $-b_j / w_j$. We use SciPy's~\citep{2020SciPy-NMeth} scipy.cluster.hierarchy.fclusterdata function with the ``distance'' criterion and the ``complete'' method with a threshold of $0.02$~times the minimal distance~$d_{\min}$ between two training inputs with different labels. We do not count clusters whose sum of slope changes $a_j \lvert w_j \rvert$ is in the interval $[-0.01,0.01]$.

\paragraph{Hyperparameters and seeds.}

We run experiments for each value combination of the following four hyperparameters (the first two numerical and the last two Boolean): number of class breaks $r_0 \in \{4, 6, 8, 10, 12\}$, overparameterization factor $\omega \in \{1, 2, 3, 4, 5\}$, whether biases are regularized, and whether a skip connection is present.  For each of those 100 combinations, we repeat everything (class breaks sampling, dataset sampling, network initialization, network training, loss baseline computation, and sparsity computation) 60 times for different RNG seeds.%
\footnote{Class breaks sampling and dataset sampling depend only on~$r_0$ and the seed.  Network initialization apart from the skip connection depends additionally on~$\omega$.}
Only runs that succeed in reaching a positive margin contribute to the plots in \Cref{f:reg.loss,f:sparsity}; the fractions of failed runs are plotted in \Cref{f:inter.fail}.

\paragraph{Compute.}

Single experiments (i.e., with all four hyperparameters fixed and for one RNG seed) took under~20min on a single core of an AMD EPYC 9555P 64-core CPU, and all 6000 experiments that we ran to produce the plots in \Cref{f:reg.loss,,f:sparsity,,f:inter.fail} took around~12h using two such CPUs.

\end{document}